\pgfplotsset{compat=1.16}
\tikzset{every edge/.style={draw,->,>=stealth',shorten >=1pt,auto,semithick}}
\tikzset{initial text={},double distance=2pt}
\tikzset{state/.append style={minimum size=18pt,inner sep=1pt}}
\pgfplotsset{cedline/.style={line width=1.5pt}}
\newcommand{\doi}[1]{\href{https://doi.org/#1}{{doi: \urlstyle{rm}\nolinkurl{#1}}}}
\newcommand{\generalnote}[1]{\vspace*{\abovefigureskip}#1}
\newcommand{\term}[1]{\textit{#1}}
\newcommand{\defterm}[1]{\textbf{\term{#1}}}
\newcommand{\dash}{---}
\newcommand{\greektext}[1]{\begingroup\fontencoding{LGR}\selectfont#1\endgroup}
\newcommand{\indicator}[1]{\mathbb{I}[{#1}]}
\newcommand{\funcname}[1]{\textsc{#1}}
\newcommand{\vecvar}[1]{\mathbf{#1}}
\newcommand{\vecvargreek}[1]{\bm{#1}}
\newcommand{\veczero}{\vecvar{0}}
\newcommand{\matvar}[1]{\mathbf{#1}}
\newcommand{\tensorvar}[1]{\mathbf{#1}}
\newcommand{\compositionsoflengthof}[2]{\funcname{Compositions}(#1, #2)}
\newcommand{\logisticletter}{\sigma}
\newcommand{\logistic}[1]{\logisticletter(#1)}
\newcommand{\realset}{\mathbb{R}}
\newcommand{\elementwisemulop}{\odot}
\newcommand{\powerset}[1]{\mathcal{P}(#1)}
\newcommand{\probletter}{p}
\newcommand{\crossentropyletter}{H}
\newcommand{\crossentropyofdistonset}[2]{\crossentropyletter(#2, #1)}
\newcommand{\sampleprobname}[1]{\probletter_{#1}}
\newcommand{\sampleprob}[2]{\sampleprobname{#1}(#2)}
\newcommand{\modelprobname}[1]{\probletter_{#1}}
\newcommand{\bigo}[1]{O(#1)}
\newcommand{\emptystring}{\varepsilon}
\newcommand{\sym}[1]{\texttt{#1}}
\newcommand{\reverse}[1]{#1^R}
\DeclareMathOperator*{\joinwithcommas}{{\ooalign{$\bigcirc$\cr\hidewidth$\raisebox{2pt}{\sym{,}}$\hidewidth\cr}}}
\newcommand{\nonterminalsinstring}[1]{\funcname{Nonterminals}(#1)}
\newcommand{\multisym}[1]{\fbox{\raisebox{0pt}[1.5ex][0.25ex]{\ensuremath{#1}}}}
\newcommand{\concatop}{\cdot}
\newcommand{\stringsoflength}[2]{#1_{#2}}
\newcommand{\pcfgrule}[3]{#1 & \rightarrow #2 && \mathrel/ #3 \\}
\newcommand{\langof}[1]{L(#1)}
\newcommand{\weightoftrans}[2]{\psi(#1, #2)}
\newcommand{\stringsumruns}[2]{\Pi(#1, #2)}
\newcommand{\allsumruns}[1]{\Pi(#1)}
\newcommand{\runweight}[2]{\psi(#1, #2)}
\newcommand{\stringsum}[2]{\psi(#1, #2)}
\newcommand{\allsum}[1]{\psi(#1)}
\newcommand{\natvar}[1]{#1}
\newcommand{\natterm}[1]{#1}
\newcommand{\pdatrans}[5]{#1, #3 \xrightarrow{#2} #4, #5}
\newcommand{\pdatransnoinput}[4]{#1, #2 \rightarrow #3, #4}
\newcommand{\pdatransletter}{\tau}
\newcommand{\pdatranst}[2]{#1_{#2}}
\newcommand{\pdarunletter}{\pi}
\newcommand{\wpdaweight}[5]{\delta(#1, #2, #3, #4, #5)}
\newcommand{\pdaconfig}[3]{(#1, #2, #3)}
\newcommand{\wpdaweightletter}{\psi}
\newcommand{\wpdarunweight}[1]{\wpdaweightletter(#1)}
\newcommand{\pdarunendsin}[4]{#1 \leadsto #2, #3, #4}
\newcommand{\pdarunendsinnot}[3]{#1 \leadsto #2, #3}
\newcommand{\pdanumrunsname}{\textit{ct}}
\newcommand{\pdanumruns}[1]{\pdanumrunsname(#1)}
\newcommand{\pdanumrunsstateset}[2]{\pdanumrunsname(#1, #2)}
\newcommand{\softmax}{\mathrm{softmax}}
\newcommand{\softmaxover}[1]{\underset{#1}{\softmax}}
\newcommand{\affine}[2]{\weightparam{#1} #2 + \biasparam{#1}}
\newcommand{\weightparamletter}{\matvar{W}}
\newcommand{\weightparam}[1]{\weightparamletter_{\mathrm{#1}}}
\newcommand{\biasparamletter}{\vecvar{b}}
\newcommand{\biasparam}[1]{\biasparamletter_{\mathrm{#1}}}
\newcommand{\vectorparam}[2]{\vecvar{#1}_{\mathrm{#2}}}
\newcommand{\unittimesteptag}[3]{#1_{#2}^{\mathrm{#3}}}
\newcommand{\unittimesteptagi}[4]{#1_{#2}^{\mathrm{#3}, #4}}
\newcommand{\probdistmt}[2]{\probletter_{#2}}
\newcommand{\probdistm}[1]{\probletter}
\newcommand{\inputstring}{w}
\newcommand{\inputsymbolt}[1]{w_{#1}}
\newcommand{\rnninputletter}{\vecvar{x}}
\newcommand{\rnninputt}[1]{\rnninputletter_{#1}}
\newcommand{\rnnoutputletter}{\vecvar{y}}
\newcommand{\rnnoutputt}[1]{\rnnoutputletter_{#1}}
\newcommand{\rnnhiddenletter}{\vecvar{h}}
\newcommand{\rnnhiddent}[1]{\rnnhiddenletter_{#1}}
\newcommand{\rnnhiddenunit}[2]{\unittimesteptag{h}{#1}{#2}}
\newcommand{\rnnhiddenuniti}[3]{\unittimesteptagi{h}{#1}{#2}{#3}}
\newcommand{\lstmmemorycellletter}{\vecvar{c}}
\newcommand{\lstmmemorycellt}[1]{\lstmmemorycellletter_{#1}}
\newcommand{\bos}{\textsc{bos}}
\newcommand{\eos}{\textsc{eos}}
\newcommand{\rnnrecognizeroutput}{y}
\newcommand{\rnnrecognizermlpunit}[1]{y_{#1}}
\newcommand{\lstminputgatet}[1]{\vecvar{i}_{#1}}
\newcommand{\lstmforgetgatet}[1]{\vecvar{f}_{#1}}
\newcommand{\lstmcandidatet}[1]{\vecvar{g}_{#1}}
\newcommand{\lstmoutputgatet}[1]{\vecvar{o}_{#1}}
\newcommand{\lstmcandidateunit}[2]{\unittimesteptag{g}{#1}{#2}}
\newcommand{\lstmcandidateuniti}[3]{\unittimesteptagi{g}{#1}{#2}{#3}}
\newcommand{\lstmmemorycellunit}[2]{\unittimesteptag{c}{#1}{#2}}
\newcommand{\lstmmemorycelluniti}[3]{\unittimesteptagi{c}{#1}{#2}{#3}}
\newcommand{\stackactionsletter}{a}
\newcommand{\stackactionst}[1]{\stackactionsletter_{#1}}
\newcommand{\stackreadingletter}{\vecvar{r}}
\newcommand{\stackreadingt}[1]{\stackreadingletter_{#1}}
\newcommand{\stackobjectletter}{s}
\newcommand{\stackobjectt}[1]{\stackobjectletter_{#1}}
\newcommand{\stackactionsfuncname}{\funcname{Actions}}
\newcommand{\stackactionsfunc}[1]{\stackactionsfuncname(#1)}
\newcommand{\stackobjectfuncname}{\funcname{Stack}}
\newcommand{\stackobjectfunc}[2]{\stackobjectfuncname(#1, #2)}
\newcommand{\stackreadingfuncname}{\funcname{Reading}}
\newcommand{\stackreadingfunc}[1]{\stackreadingfuncname(#1)}
\newcommand{\stackvectorsizeletter}{m}
\newcommand{\pushedstackvectorletter}{\vecvar{v}}
\newcommand{\pushedstackvectort}[1]{\pushedstackvectorletter_{#1}}
\newcommand{\stackopweightt}[2]{a_{#2}^{\mathrm{#1}}}
\newcommand{\stackpushweightt}[1]{\stackopweightt{push}{#1}}
\newcommand{\stackpopweightt}[1]{\stackopweightt{pop}{#1}}
\newcommand{\stacknoopweightt}[1]{\stackopweightt{noop}{#1}}
\newcommand{\stratpoporigt}[1]{u_{#1}}
\newcommand{\stratpushorigt}[1]{d_{#1}}
\newcommand{\stratpopt}[1]{\stackpopweightt{#1}}
\newcommand{\stratpusht}[1]{\stackpushweightt{#1}}
\newcommand{\stratstacktensort}[1]{\matvar{V}_{#1}}
\newcommand{\stratstrengthtensort}[1]{\vecvar{s}_{#1}}
\newcommand{\stratstackvector}[2]{\stratstacktensort{#1}[#2]}
\newcommand{\stratstrength}[2]{\stratstrengthtensort{#1}[#2]}
\newcommand{\supprobst}[1]{\vecvar{a}_{#1}}
\newcommand{\suppusht}[1]{\stackpushweightt{#1}}
\newcommand{\suppopt}[1]{\stackpopweightt{#1}}
\newcommand{\supnoopt}[1]{\stacknoopweightt{#1}}
\newcommand{\supstacktensort}[1]{\matvar{V}_{#1}}
\newcommand{\supstackvector}[2]{\supstacktensort{#1}[#2]}
\newcommand{\stackwfastatecontent}[3]{#1, #2, #3}
\newcommand{\stackwfastate}[3]{(\stackwfastatecontent{#1}{#2}{#3})}
\newcommand{\nsactionset}[1]{\funcname{Act}(#1)}
\newcommand{\nstranstensorletter}{\bm{\Delta}}
\newcommand{\nstranst}[1]{\nstranstensorletter_{#1}}
\newcommand{\nstransttype}[2]{\nstranst{#1}[#2]}
\newcommand{\nstransweight}[5]{\nstranst{#2}[#1, #3 \rightarrow #4, #5]}
\newcommand{\nspushweight}[5]{\nstransweight{#1}{#2}{#3}{#4}{#3 #5}}
\newcommand{\nsreplweight}[5]{\nstransweight{#1}{#2}{#3}{#4}{#5}}
\newcommand{\nspopweight}[4]{\nstransweight{#1}{#2}{#3}{#4}{\emptystring}}
\newcommand{\nsinnerweightletter}{\gamma}
\newcommand{\nsinnerweightit}[2]{\nsinnerweightletter[#1 \rightarrow #2]}
\newcommand{\nsinnerweight}[6]{\nsinnerweightit{#1}{#4}[#2, #3 \rightarrow #5, #6]}
\newcommand{\nsinnerweightauxletter}{\nsinnerweightletter'}
\newcommand{\nsinnerweightaux}[5]{\nsinnerweightauxletter[#1 \rightarrow #4][#2, #3 \rightarrow #5]}
\newcommand{\nsforwardweightletter}{\alpha}
\newcommand{\nsforwardweightt}[1]{\nsforwardweightletter[#1]}
\newcommand{\nsforwardweight}[3]{\nsforwardweightt{#1}[#2, #3]}
\newcommand{\nsstackreadingnostate}[2]{\stackreadingt{#1}[#2]}
\newcommand{\nsstackreading}[3]{\stackreadingt{#1}[#2, #3]}
\newcommand{\vrnsbottomvector}{\pushedstackvectort{0}}
\newcommand{\topvectorofrun}[1]{\pushedstackvectorletter(#1)}
\newcommand{\vrnsstackreadingvec}[3]{\stackreadingt{#1}[#2, #3]}
\newcommand{\vrnsinnervectorletter}{\vecvargreek{\zeta}}
\newcommand{\vrnsinnervector}[6]{\vrnsinnervectorletter[#1 \rightarrow #4][#2, #3 \rightarrow #5, #6]}
\newcommand{\vrnsstackreadingnumerletter}{\vecvargreek{\eta}}
\newcommand{\vrnsstackreadingnumer}[3]{\vrnsstackreadingnumerletter_{#1}[#2, #3]}
\newcommand{\sublayerinputletter}{\vecvar{x}}
\newcommand{\sublayerinputt}[1]{\sublayerinputletter_{#1}}
\newcommand{\sublayeroutputletter}{\vecvar{y}}
\newcommand{\sublayeroutputt}[1]{\sublayeroutputletter_{#1}}
\newcommand{\layerinputletter}{\hat{\sublayerinputletter}}
\newcommand{\layerinputt}[1]{\layerinputletter_{#1}}
\newcommand{\layeroutputletter}{\hat{\sublayeroutputletter}}
\newcommand{\layeroutputt}[1]{\layeroutputletter_{#1}}
\newcommand{\stackactionvecletter}{\vecvar{a}}
\newcommand{\stackactionvect}[1]{\stackactionvecletter_{#1}}
\newcommand{\dmodel}{d_{\mathrm{model}}}
\newcommand{\MarkedReversal}{\ensuremath{w \sym{\#} \reverse{w}}}
\newcommand{\UnmarkedReversal}{\ensuremath{w \reverse{w}}}
\newcommand{\PaddedReversal}{\ensuremath{w a^p \reverse{w}}}
\newcommand{\CountThree}{\ensuremath{\sym{a}^n \sym{b}^n \sym{c}^n}}
\newcommand{\MarkedReverseAndCopy}{\ensuremath{w \sym{\#} \reverse{w} \sym{\#} w}}
\newcommand{\CountAndCopy}{\ensuremath{w \sym{\#}^n w}}
\newcommand{\MarkedCopy}{\ensuremath{w \sym{\#} w}}
\newcommand{\UnmarkedCopyDifferentAlphabets}{\ensuremath{w w'}}
\newcommand{\UnmarkedReverseAndCopy}{\ensuremath{w \reverse{w} w}}
\newcommand{\UnmarkedCopy}{\ensuremath{ww}}
\newtheorem{theorem}{Theorem}
\newtheorem{proposition}[theorem]{Proposition}
\newtheorem{lemma}[theorem]{Lemma}
\theoremstyle{definition}
\newtheorem{definition}{Definition}
\begin{document}

\frontmatter

\title{Nondeterministic Stacks in Neural Networks}
\author{Brian DuSell}
\work{Dissertation}
\degaward{Doctor of Philosophy}
\advisor{David Chiang}
\department{Computer Science and Engineering}
\degdate{April 2023}

\maketitle

\copyrightholder{Brian DuSell}
\copyrightyear{2023}
\copyrightlicense{All Rights Reserved}
\makecopyright

\begin{abstract}
Human language is full of compositional syntactic structures, and although neural networks have contributed to groundbreaking improvements in computer systems that process language, widely-used neural network architectures still exhibit limitations in their ability to process syntax. To address this issue, prior work has proposed adding stack data structures to neural networks, drawing inspiration from theoretical connections between syntax and stacks. However, these methods employ deterministic stacks that are designed to track one parse at a time, whereas syntactic ambiguity, which requires a nondeterministic stack to parse, is extremely common in language. In this dissertation, we remedy this discrepancy by proposing a method of incorporating nondeterministic stacks into neural networks. We develop a differentiable data structure that efficiently simulates a nondeterministic pushdown automaton, representing an exponential number of computations with a dynamic programming algorithm. Since it is differentiable end-to-end, it can be trained jointly with other neural network components using standard backpropagation and gradient descent. We incorporate this module into two predominant architectures: recurrent neural networks (RNNs) and transformers. We show that this raises their formal recognition power to arbitrary context-free languages, and also aids training, even on deterministic context-free languages. Empirically, neural networks with nondeterministic stacks learn context-free languages much more effectively than prior stack-augmented models, including a language with theoretically maximal parsing difficulty. We also show that an RNN augmented with a nondeterministic stack is capable of surprisingly powerful behavior, such as learning cross-serial dependencies, a well-known non-context-free pattern. We demonstrate improvements on natural language modeling and provide analysis on a syntactic generalization benchmark. This work represents an important step toward building systems that learn to use syntax in more human-like fashion.

\end{abstract}

\begin{dedication}
For my grandfather, D. Lee DuSell.
\end{dedication}

\tableofcontents
\listoffigures
\listoftables

\begin{acknowledge}
I would like to thank my advisor, David Chiang, for his mentorship, support, and unyielding patience throughout this project. He is not only a brilliant thinker, but a kind person and a role model of ethics. I have been delighted to work with him. I would also like to thank my colleagues in the Natural Language Processing Group at Notre Dame (Tomer Levinboim, Kenton Murray, Arturo Argueta, Antonis Anastasopoulos, Justin DeBenedetto, Toan Nguyen, Xing Jie Zhong, Darcey Riley, Stephen Bothwell, Aarohi Srivastava, Ken Sible, Chihiro Taguchi) for their mentorship, fellowship, and stimulating discussions over the years. I express my thanks to the other members of my thesis committee, Walter Scheirer, Taeho Jung, and Bob Frank, for their invaluable feedback and encouragement.

I especially thank Richard and Peggy Notebaert, who generously funded my research during my time as a graduate student at Notre Dame. This research was also supported in part by a Google Faculty Research Award to David Chiang.

I thank the Center for Research Computing at Notre Dame, without which none of the experimental work in this thesis would have been possible. Their staff has always responded promptly and helpfully to my questions. I would also like to thank Mimi Beck and Joyce Yeats, who, through their administrative duties, often made my time as a graduate student more comfortable and enjoyable. I thank my students in CSE 30151 Theory of Computing, for entrusting me with their education in a subject I love to teach.

I thank the anonymous reviewers who refereed the papers that eventually became chapters in this thesis. Their constructive comments undoubtedly improved the quality of the work you are about to read. I would also like to thank Justin, Darcey, and Stephen for their comments on earlier drafts of some chapters. I must also thank ChatGPT for invaluable assistance with the \LaTeX{} and TikZ code used to create this document.

As in everything, I thank my parents, whose love, support, and encouragement were indispensable throughout my Ph.D. journey. I also thank my brother, Severin, for his love and friendship. I thank my grandfather, whose great faith and dedication to his craft have always served as an inspiration for my own. I also thank my cat, Shadow, who comforted and amused me for many years, especially during grad school. I must also thank the various dining establishments of the South Bend-Mishawaka area, who fueled this research in a material way. If I am what I eat, then I am a Portillo's chicken sandwich.

Finally, I thank God, from whom all blessings flow, and without whom nothing is possible.

\end{acknowledge}

\begin{symbols}[rp{4.5in}]

\symbolline{\Sigma}{An alphabet of input symbols.}
\symbolline{\Gamma}{An alphabet of stack symbols.}
\symbolline{\sym{a}, \sym{b}, \sym{0}, \sym{1}, \sym{\#}, \ldots}{Symbols, which are written in typewriter font.}
\symbolline{a, b}{Variables for symbols, which are written in italic font.}
\symbolline{w}{A string, especially of input symbols.}
\symbolline{n}{Length of an input string.}
\symbolline{u, v}{Strings.}
\symbolline{\ell}{Length of a string.}
\symbolline{\alpha, \beta, \gamma}{Strings, possibly mixing different alphabets.}
\symbolline{\emptystring}{The empty string.}
\symbolline{L}{A formal language.}
\symbolline{S}{A finite set of strings.}
\symbolline{\langof{\cdot}}{Language of a recognizer or grammar.}

\symbolline{\reverse{w}}{The reverse of string $w$.}
\symbolline{\phi}{A string homomorphism.}

\symbolline{\powerset{A}}{The power set of set $A$, i.e. the set of all subsets of $A$.}

\symbolline{G}{A context-free grammar.}
\symbolline{V}{Set of variables in a context-free grammar.}
\symbolline{R}{Set of production rules in a context-free grammar.}
\symbolline{S}{Start variable in a context-free grammar.}
\symbolline{A, B}{Variables in a context-free grammar.}
\symbolline{X}{Variable or terminal in a context-free grammar.}

\symbolline{P}{A pushdown automaton.}

\symbolline{Q}{A finite set of states in an automaton.}
\symbolline{\delta}{The transition function of an automaton.}
\symbolline{F}{Set of accept states in an automaton.}
\symbolline{q, r, s, u}{States of an automaton.}
\symbolline{\pdatransletter}{Variable for an automaton transition.}
\symbolline{\pdarunletter}{Variable for a run (sequence of transitions) of an automaton.}
\symbolline{\Pi}{A set of runs.}
\symbolline{\wpdaweightletter}{A non-negative weight.}

\symbolline{x, y, z}{Variables for stack symbols of a pushdown automaton.}
\symbolline{\bot}{Special stack symbol signifying the bottom of a stack.}

\symbolline{\realset}{Set of real numbers.}
\symbolline{\realset_{\geq 0}}{Set of non-negative real numbers.}

\symbolline{\probletter}{A probability or probability distribution.}
\symbolline{\crossentropyletter}{Cross-entropy.}

\symbolline{\vecvar{x}[j]}{The $j$th element of vector $\vecvar{x}$.}
\symbolline{\matvar{W}[i, j]}{The element at the $i$th row and $j$th column of matrix $\vecvar{W}$.}
\symbolline{\tensorvar{A}[i, j, k, \ldots]}{The element or sub-tensor at index $i, j, k, \ldots$ of tensor $\tensorvar{A}$.}
\symbolline{\sigma}{The logistic function, $\logistic{x} = \frac{1}{1+e^{-x}}$.}
\symbolline{\indicator{\phi}}{The indicator function, which is 1 if the proposition $\phi$ is true and 0 otherwise.}
\symbolline{t, i, k}{Variables for timesteps, i.e. positions in a sequence of inputs.}
\symbolline{M}{A neural network.}
\symbolline{\weightparamletter}{A matrix of weights in a neural network layer.}
\symbolline{\biasparamletter}{A vector of bias terms in a neural network layer.}
\symbolline{\rnninputletter}{Input vector to a neural network.}
\symbolline{\rnnhiddenletter}{Hidden state vector of an RNN.}
\symbolline{\lstmmemorycellletter}{Memory cell of an LSTM.}
\symbolline{\rnnoutputletter}{Output vector from a neural network.}
\symbolline{\eos}{End of sequence symbol.}

\symbolline{\stackobjectletter}{A differentiable stack.}
\symbolline{\stackactionsletter}{Actions on a differentiable stack.}
\symbolline{\stackreadingletter}{Stack reading from a differentiable stack.}
\symbolline{\pushedstackvectorletter}{A vector pushed to a differentiable stack.}
\symbolline{m}{Stack embedding size, i.e. the size of each vector in a stack of vectors.}

\symbolline{\nstranstensorletter}{Tensor containing transition weights of a weighted pushdown automaton.}
\symbolline{\nsinnerweightletter}{Tensor of inner weights.}
\symbolline{\nsforwardweightletter}{Tensor of forward weights.}
\symbolline{\vrnsinnervectorletter}{Tensor of vector inner weights.}
\symbolline{D}{Maximum window size in a tensor of inner weights.}

\end{symbols}

\mainmatter
\chapter{Introduction}
\label{chap:introduction}

Just about every introductory programming book begins with a version of the following remark: plain English is a poor choice for describing algorithms, because language contains ambiguity. What makes human language ambiguous? One source is \term{syntactic ambiguity}. Consider the following sentence:
\begin{center}
    Mary saw a man in the park with a telescope.
\end{center}
This sentence can mean at least three different things, depending on how its syntactic structure is interpreted:
\begin{enumerate}[label=(\alph*)]
    \item \label{item:ambiguity-example-man} There was a man in the park who had a telescope, and Mary saw him;
    \item \label{item:ambiguity-example-saw} Mary used a telescope to see a man who was in the park; or
    \item \label{item:ambiguity-example-park} Mary saw a man in the park, and this park is known for having a telescope in it.
\end{enumerate}
This sentence is an example of ambiguous prepositional phrase attachment. According to \cref{item:ambiguity-example-man}, the prepositional phrase ``with a telescope'' is attached to the noun phrase ``a man in the park''; according to \cref{item:ambiguity-example-saw}, it is attached to the verb ``saw''; and according to \cref{item:ambiguity-example-park}, it is attached to the noun ``park.'' \Cref{fig:intro-example-parse-trees} shows \textit{parse trees} that illustrate the syntactic structure of each of these three possibilities.

\begin{figure*}
    \centering
    \newcommand{\mysubfigure}[2]{
        \begin{minipage}[b]{0.48\textwidth}
            \centering
            \scalebox{0.8}{\begin{forest}
                #1
            \end{forest}} \\
            \Cref{#2}
        \end{minipage}
    }
    \mysubfigure{
        [ S
            [ NP [ Mary ] ]
            [ VP
                [ V [ saw ] ]
                [ NP
                    [ NP
                        [ NP [ a man ] ]
                        [ PP [ P [ in ] ] [ NP [ the park ] ] ]
                    ]
                    [ PP [ P [ with ] ] [ NP [ a telescope ] ] ]
                ]
            ]
        ]
    }{item:ambiguity-example-man}
    \mysubfigure{
        [ S
            [ NP [ Mary ] ]
            [ VP
                [ VP
                    [ V [ saw ] ]
                    [ NP
                        [ NP [ a man ] ]
                        [ PP [ P [ in ] ] [ NP [ the park ] ] ]
                    ]
                ]
                [ PP [ P [ with ] ] [ NP [ a telescope ] ] ]
            ]
        ]
    }{item:ambiguity-example-saw}
    \mysubfigure{
        [ S
            [ NP [ Mary ] ]
            [ VP
                [ V [ saw ] ]
                [ NP
                    [ NP [ a man ] ]
                    [ PP
                        [ P [ in ] ]
                        [ NP
                            [ NP [ the park ] ]
                            [ PP [ P [ with ] ] [ NP [ a telescope ] ] ]
                        ]
                    ]
                ]
            ]
        ]
    }{item:ambiguity-example-park}
    \caption[Three different ways of interpreting the syntactic structure of the sentence ``Mary saw a man in the park with a telescope.'']{Three different ways of interpreting the syntactic structure of the sentence ``Mary saw a man in the park with a telescope.'' Just as a company has a hierarchy of employees, a sentence contains a hierarchy of \term{grammatical constituents}. The acronyms connected by lines above the sentence represent types of constituent; the lines indicate which words or sub-constituents each one contains. S = sentence, NP = noun phrase, VP = verb phrase, V = verb, PP = prepositional phrase, P = preposition.}
    \label{fig:intro-example-parse-trees}
\end{figure*}

Humans can use context to disambiguate the meaning of sentences like the example above, but for machines, this kind of context might not be available. Some interpretations are arguably more plausible in the absence of context, but it is important that machines not merely assume the speaker intends the most frequent one, as all three possibilities, if not equally likely, are perfectly valid grammatically and would be accepted by a fluent English speaker. Moreover, consider how easily you, the reader, were able to interpret the example sentence three different ways; if a machine cannot do the same, it can hardly be said to process language at a human level. Rather, a machine should be able to entertain the possibility that any of the three interpretations might be what the speaker intended and react accordingly.

In this dissertation, we propose a method for incorporating this ability to entertain multiple possibilities into the predominant paradigm for building computer systems that process human language: neural networks. Neural networks, being a machine learning framework, alleviate the burden of painstakingly devising linguistic rules by hand; rather, programmers can use neural networks to \emph{learn} the rules automatically, simply by exposing the networks to large amounts of text. For example, instead of having a programmer or linguistic expert code the syntactic rules that govern the trees in \cref{fig:intro-example-parse-trees}\dash{}rules which quickly grow to dizzying complexity, and on which even expert linguists do not agree\dash{}we can now \emph{train} a neural network on millions or billions of example English sentences (a recent state-of-the-art neural network, GPT-3, was trained on no fewer than 400 \emph{billion} words \citep{brown-etal-2020-language}).

How can we facilitate neural networks' learning of syntax? One idea, drawing inspiration from the theory of computing, has been to augment them with a mechanism called a \term{stack}. Stacks are a fundamental data structure in computer science. A stack is a container of items that works like a stack of dinner plates: new plates can be added or removed at the top of the stack, but it is impossible to insert or remove plates at any other place. This restriction is referred to as \term{last-in-first-out} (LIFO) order.

The theory of computing tells us that there is a deep connection between syntax and stacks. To get an intuition for this connection, consider \cref{fig:intro-cfg}, which lists the grammatical rules that govern the parse trees in \cref{fig:intro-example-parse-trees}, and compare it with \cref{fig:stack-parsing-example-top-down}, in which a computer system uses a stack to process a sentence while reading it from left to right. In \cref{fig:stack-parsing-example-top-down}, the ``dinner plates'' on the stack are now grammatical constituents (noun phrase, verb phrase, prepositional phrase, etc.), and the LIFO mechanism allows the system to track, at any given point in the sentence, exactly which constituents may appear in the rest of the sentence. For example, thanks to the stack, the system knows that the word ``park'' can be followed by a preposition like ``with'' or by the end of the sentence, but not by a past-tense verb like ``saw.''

\begin{figure*}
    \renewcommand{\rule}[2]{\natvar{#1} &$\rightarrow$ #2 \\}
    \centering
    \begin{tabular}{r@{\hspace{0.3em}}l}
        \rule{S}{\natvar{NP} \natvar{VP}}
        \rule{VP}{\natvar{V} \natvar{NP}}
        \rule{VP}{\natvar{VP} \natvar{PP}}
        \rule{NP}{\natvar{NP} \natvar{PP}}
        \rule{PP}{\natvar{P} \natvar{NP}}
        \rule{NP}{\natterm{Mary}}
        \rule{NP}{\natterm{a man}}
        \rule{NP}{\natterm{the park}}
        \rule{NP}{\natterm{a telescope}}
        \rule{P}{\natterm{in}}
        \rule{P}{\natterm{with}}
    \end{tabular}
    \caption[The set of grammatical rules used in the parse trees shown in \cref{fig:intro-example-parse-trees,fig:stack-parsing-example-top-down}, expressed as a context-free grammar.]{The set of grammatical rules used in the parse trees shown in \cref{fig:intro-example-parse-trees,fig:stack-parsing-example-top-down}, expressed as a context-free grammar. Each line means: the symbol to the left of the $\rightarrow$ may be expanded into the symbols on the right. For example, any \natvar{VP} may be expanded into either \natvar{V} \natvar{NP} or \natvar{VP} \natvar{PP}. The topmost symbol in a parse tree must always be \natvar{S}, and the bottommost symbols must always be words.}
    \label{fig:intro-cfg}
\end{figure*}

\def\stackwidth{0.7em}
\def\stackbottomheight{0.65ex}

\newcommand{\stackbracket}[1]{
    \draw (#1) ++(-\stackwidth,\stackbottomheight) -- ++(0,-\stackbottomheight) -- ++(2*\stackwidth,0) -- ++(0,\stackbottomheight);}

\begin{landscape}

\def\seplength{1.1em}
\def\stackyshift{11ex}
\def\stackxshift{-0.5em}

\newcommand{\stacktoside}[7]{
    \node[
        align=center,
        anchor=south,
        yshift=-\stackyshift,
        xshift={#5 * (-1.6em * #3 + \stackxshift)}
    ] (step#1) at (#2.base #7) {#4};
    \stackbracket{step#1.south}
    \node[
        align=center,
        below=2ex of step#1.south
    ] (stepnum#1) {\small #1};}

\newcommand{\stacktoleft}[4]{
    \stacktoside{#1}{#2}{#3}{#4}{1}{east}{west}}

\newcommand{\stacktoright}[4]{
    \stacktoside{#1}{#2}{#3}{#4}{-1}{west}{east}}

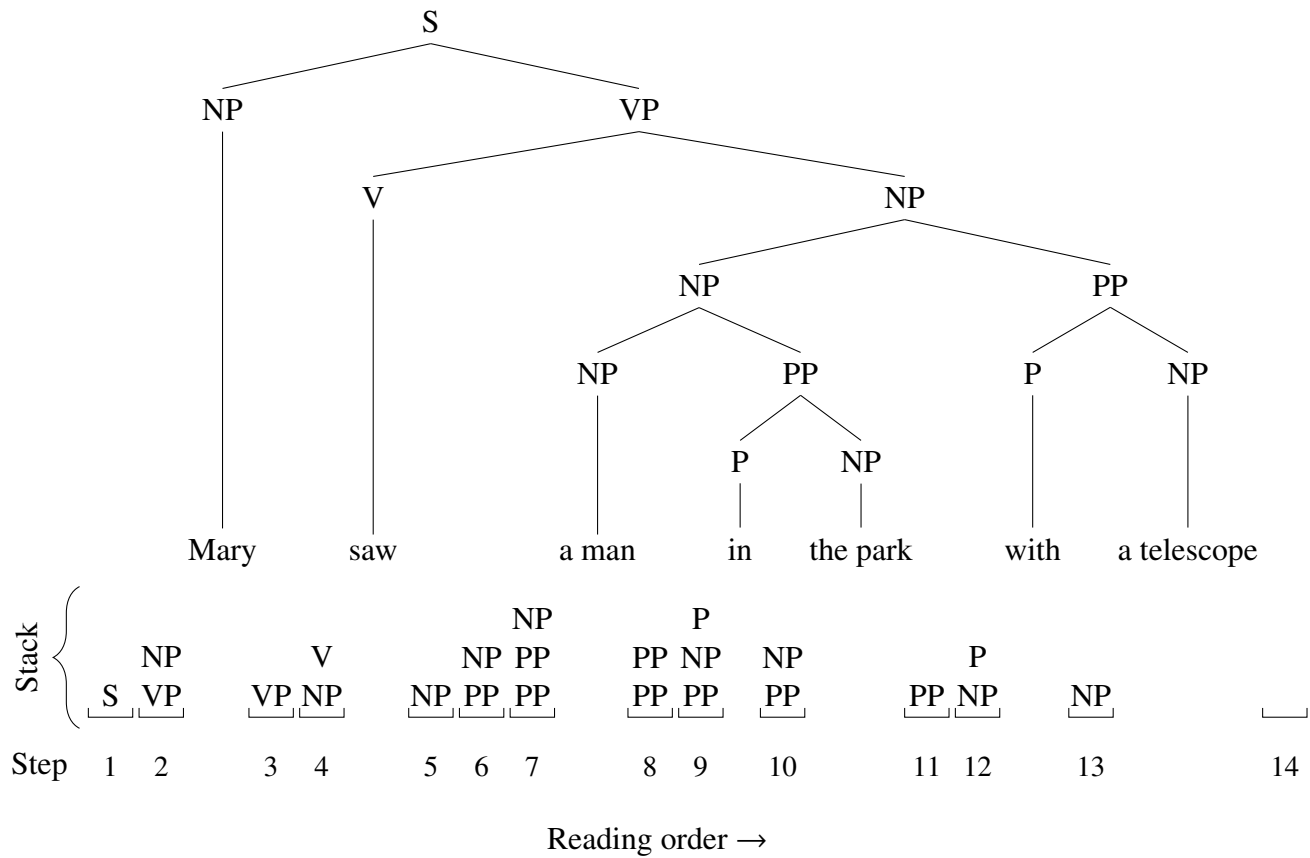
\begin{figure*}
    \centering
    \begin{forest}
        [ S,s sep=2*\seplength
            [ NP [ Mary,name=mary ] ]
            [ VP,s sep=4*\seplength
                [ V [ saw,name=saw ] ]
                [ NP,s sep=2*\seplength
                    [ NP,s sep=2*\seplength
                        [ NP [ a man,name=aman ] ]
                        [ PP,s sep=1*\seplength [ P [ in,name=in ] ] [ NP [ the park,name=thepark ] ] ]
                    ]
                    [ PP,s sep=1*\seplength [ P [ with,name=with ] ] [ NP [ a telescope,name=atelescope ] ] ]
                ]
            ]
        ]
        \stacktoleft{1}{mary}{1}{S}
        \stacktoleft{2}{mary}{0}{NP \\ VP}
        \stacktoleft{3}{saw}{1}{VP}
        \stacktoleft{4}{saw}{0}{V \\ NP}
        \stacktoleft{5}{aman}{2}{NP}
        \stacktoleft{6}{aman}{1}{NP \\ PP}
        \stacktoleft{7}{aman}{0}{NP \\ PP \\ PP}
        \stacktoleft{8}{in}{1}{PP \\ PP}
        \stacktoleft{9}{in}{0}{P \\ NP \\ PP}
        \stacktoleft{10}{thepark}{0}{NP \\ PP}
        \stacktoleft{11}{with}{1}{PP}
        \stacktoleft{12}{with}{0}{P \\ NP}
        \stacktoleft{13}{atelescope}{0}{NP}
        \stacktoright{14}{atelescope}{0}{}
        \node[fit=(step1) (step7)] (stackrow) {};
        \draw[name=brace, decorate,decoration={brace,amplitude=10pt}] (stackrow.south west) -- (stackrow.north west);
        \node[left=20pt of stackrow, anchor=center, rotate=90] {Stack};
        \node[below=2ex] at (current bounding box.south) {Reading order $\rightarrow$};
        \node[left=20pt of stepnum1, anchor=center] {Step};
    \end{forest}
    \caption{An example of top-down parsing using a stack.}
    \label{fig:stack-parsing-example-top-down}
\end{figure*}
\end{landscape}

Let us examine this mechanism in a little more technical detail. The algorithm in \cref{fig:stack-parsing-example-top-down} works as follows. The stack initially contains \natvar{S} (for ``sentence''). According to \cref{fig:intro-cfg}, \natvar{S} may be replaced with \natvar{NP} \natvar{VP}, so the \natvar{S} is removed, and \natvar{NP} \natvar{VP} is added in its place. \natvar{NP} may be replaced with ``\natterm{Mary},'' which is the first word in the sentence, so \natvar{NP} is removed, and the system advances past the word ``\natterm{Mary}.'' The system continues in like fashion by repeatedly expanding the top constituent on the stack according to one of the rules in \cref{fig:intro-cfg}, and removing the top constituent whenever the next word in the sentence matches. In this way, the stack always keeps track of the \emph{unfinished} constituents. The sentence is grammatical only if the stack is empty after reading the last word.

This is not the only way stacks can track syntax; the example just discussed is called ``top-down'' parsing because the system starts with the topmost symbol in the tree (\natvar{S}) and expands downward. On the other hand, in \cref{fig:stack-parsing-example-bottom-up}, the system uses a scheme called ``bottom-up'' parsing, in which it keeps track of \emph{finished} constituents. The stack is initially empty. Every time it reads a word, it adds its corresponding constituent to the stack, and whenever the topmost symbols of the stack can be replaced with a higher-level symbol, the system replaces them on the stack. The sentence is grammatical only if the system ends with a single \natvar{S} on the stack. This is called ``bottom-up'' because the system starts with the lowest-level symbols on the stack and works \emph{upward} in the tree.

\begin{landscape}

\def\seplength{1.1em}
\def\stackyshift{17ex}
\def\stackxshift{-0.5em}

\newcommand{\stacktoside}[7]{
    \node[
        align=center,
        anchor=south,
        yshift=-\stackyshift,
        xshift={#5 * (-1.6em * #3 + \stackxshift)}
    ] (step#1) at (#2.base #7) {#4};
    \stackbracket{step#1.south}
    \node[
        align=center,
        below=2ex of step#1.south
    ] (stepnum#1) {\small #1};}

\newcommand{\stacktoleft}[4]{
    \stacktoside{#1}{#2}{#3}{#4}{1}{east}{west}}

\newcommand{\stacktoright}[4]{
    \stacktoside{#1}{#2}{#3}{#4}{-1}{west}{east}}

\begin{figure*}
    \centering
    \begin{forest}
        [ S,s sep=1*\seplength
            [ NP [ Mary,name=mary ] ]
            [ VP,s sep=1*\seplength
                [ V [ saw,name=saw ] ]
                [ NP,s sep=4*\seplength
                    [ NP,s sep=1*\seplength
                        [ NP [ a man,name=aman ] ]
                        [ PP,s sep=1*\seplength [ P [ in,name=in ] ] [ NP [ the park,name=thepark ] ] ]
                    ]
                    [ PP,s sep=1*\seplength [ P [ with,name=with ] ] [ NP [ a telescope,name=atelescope ] ] ]
                ]
            ]
        ]
        \stacktoleft{1}{mary}{0}{}
        \stacktoright{2}{mary}{0}{NP}
        \stacktoright{3}{saw}{0}{V \\ NP}
        \stacktoright{4}{aman}{0}{NP \\ V \\ NP}
        \stacktoright{5}{in}{0}{P \\ NP \\ V \\ NP}
        \stacktoright{6}{thepark}{0}{NP \\ P \\ NP \\ V \\ NP}
        \stacktoright{7}{thepark}{1}{PP \\ NP \\ V \\ NP}
        \stacktoright{8}{thepark}{2}{NP \\ V \\ NP}
        \stacktoright{9}{with}{0}{P \\ NP \\ V \\ NP}
        \stacktoright{10}{atelescope}{0}{NP \\ P \\ NP \\ V \\ NP}
        \stacktoright{11}{atelescope}{1}{PP \\ NP \\ V \\ NP}
        \stacktoright{12}{atelescope}{2}{NP \\ V \\ NP}
        \stacktoright{13}{atelescope}{3}{VP \\ NP}
        \stacktoright{14}{atelescope}{4}{S}
        \node[fit=(step1) (step6)] (stackrow) {};
        \draw[name=brace, decorate,decoration={brace,amplitude=10pt}] (stackrow.south west) -- (stackrow.north west);
        \node[left=20pt of stackrow, anchor=center, rotate=90] {Stack};
        \node[below=2ex] at (current bounding box.south) {Reading order $\rightarrow$};
        \node[left=20pt of stepnum1, anchor=center] {Step};
    \end{forest}
    \caption{An example of bottom-up parsing using a stack.}
    \label{fig:stack-parsing-example-bottom-up}
\end{figure*}
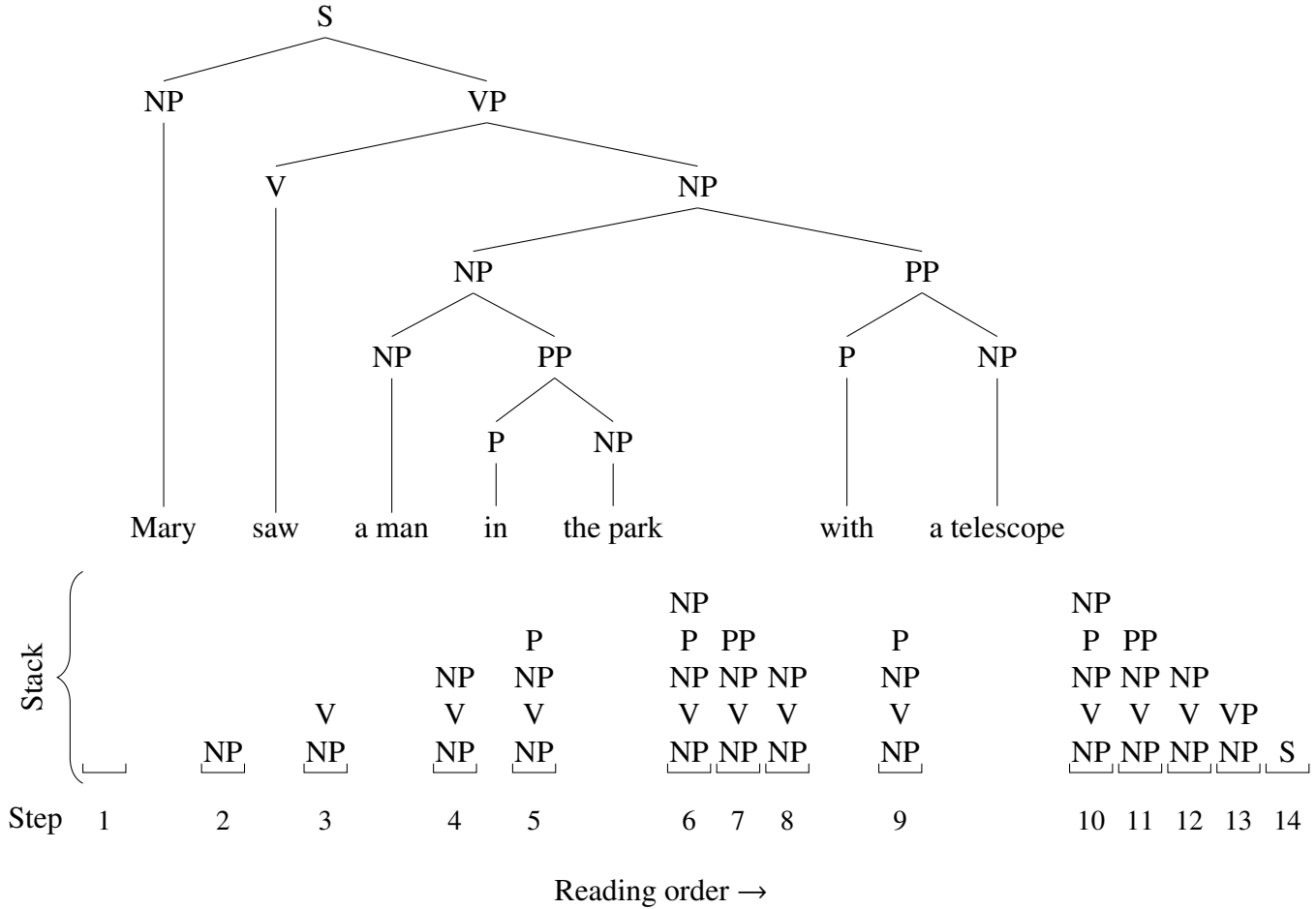
\end{landscape}

Several ways of adding stacks to neural networks have been proposed in past work. However, they have been designed to handle essentially one interpretation of a sentence's grammatical structure at a time\dash{}in other words, they are \term{deterministic}. As seen in \cref{fig:intro-example-parse-trees}, this can be problematic even on extremely simple sentences. To remedy this issue, the stack should be made \term{nondeterministic}\dash{}that is, able to fork into multiple versions in parallel in order to explore all possibilities. To see the problem, consider \cref{fig:stack-parsing-example-top-down}. Right before the system processes ``Mary,'' how does it know to match \natvar{NP} with ``\natterm{Mary}'' instead of expanding it into \natvar{NP} \natvar{PP}? In truth, we know from \cref{fig:intro-example-parse-trees} that \emph{both} choices lead to valid parses, and so the stack should try both possibilities. \Cref{fig:stack-parsing-example-top-down-nondeterministic} shows an example of this mechanism at work for a top-down parser. In fact, some psycholinguistic theories have suggested that humans really do process sentences in similar fashion, based on evidence from eye-tracking experiments \citep{levy-2008-expectation}.

\begin{landscape}    

\def\stackxshift{-0.5em}

\newcommand{\stacktoside}[8]{
    \node[
        align=center,
        anchor=south,
        yshift=2ex - 14ex * #8,
        xshift={#5 * (-1.6em * #3 + \stackxshift)}
    ] (step#8x#1) at (#2.base #7) {#4};
    \stackbracket{step#8x#1.south}
    \node[
        align=center,
        below=2ex of step#8x#1.south
    ] (stepnum#8x#1) {\small #1};}

\newcommand{\stacktoleft}[5]{
    \stacktoside{#2}{#3}{#4}{#5}{1}{east}{west}{#1}}

\newcommand{\stacktoright}[5]{
    \stacktoside{#2}{#3}{#4}{#5}{-1}{west}{east}{#1}}

\newcommand{\wordnodefirst}[3]{\node[text height=0pt, text depth=0pt #3] (#1) {#2};}
\newcommand{\wordnode}[4]{\wordnodefirst{#1}{#2}{, right=#4*1.4em of #3}}

\begin{figure*}
    \centering
    \begin{tikzpicture}
        \wordnodefirst{mary}{Mary}{}
        \wordnode{saw}{saw}{mary}{3}
        \wordnode{aman}{a man}{saw}{3}
        \wordnode{in}{in}{aman}{2}
        \wordnode{thepark}{the park}{in}{2}
        \wordnode{with}{with}{thepark}{2}
        \wordnode{atelescope}{a telescope}{with}{1}
        \stacktoleft{1}{1}{mary}{1}{S}
        \stacktoleft{1}{2}{mary}{0}{NP \\ VP}
        \stacktoleft{1}{3}{saw}{2}{VP}
        \stacktoleft{1}{4}{saw}{1}{V \\ NP}
        \stacktoleft{1}{5}{aman}{2}{NP}
        \stacktoleft{1}{6}{aman}{1}{NP \\ PP}
        \stacktoleft{1}{7}{aman}{0}{NP \\ PP \\ PP}
        \stacktoleft{1}{8}{in}{1}{PP \\ PP}
        \stacktoleft{1}{9}{in}{0}{P \\ NP \\ PP}
        \stacktoleft{1}{10}{thepark}{0}{NP \\ PP}
        \stacktoleft{1}{11}{with}{1}{PP}
        \stacktoleft{1}{12}{with}{0}{P \\ NP}
        \stacktoleft{1}{13}{atelescope}{0}{NP}
        \stacktoright{1}{14}{atelescope}{0}{}
        \stacktoleft{2}{4}{saw}{1}{VP \\ PP}
        \draw[dashed, -stealth] (step1x3) -- (step2x4);
        \stacktoleft{2}{5}{saw}{0}{V \\ NP \\ PP}
        \stacktoleft{2}{6}{aman}{1}{NP \\ PP}
        \stacktoleft{2}{7}{aman}{0}{NP \\ PP \\ PP}
        \stacktoleft{2}{8}{in}{1}{PP \\ PP}
        \stacktoleft{2}{9}{in}{0}{P \\ NP \\ PP}
        \stacktoleft{2}{10}{thepark}{0}{NP \\ PP}
        \stacktoleft{2}{11}{with}{1}{PP}
        \stacktoleft{2}{12}{with}{0}{P \\ NP}
        \stacktoleft{2}{13}{atelescope}{0}{NP}
        \stacktoright{2}{14}{atelescope}{0}{}
        \stacktoleft{3}{7}{in}{1}{PP}
        \stacktoleft{3}{8}{in}{0}{ \\ P \\ NP}
        \stacktoleft{3}{9}{thepark}{1}{NP}
        \stacktoleft{3}{10}{thepark}{0}{NP \\ PP}
        \stacktoleft{3}{11}{with}{1}{PP}
        \stacktoleft{3}{12}{with}{0}{P \\ NP}
        \stacktoleft{3}{13}{atelescope}{0}{NP}
        \stacktoright{3}{14}{atelescope}{0}{}
        \draw[dashed, -stealth] (step1x6) -- (step3x7);
        \node[anchor=south] (label1) at ($(step1x1 |- step1x7.south) - (3em, 0)$) {\rotatebox{90}{Stack \cref{item:ambiguity-example-man}}};
        \node[anchor=south] (label2) at ($(step1x1 |- step2x7.south) - (3em, 0)$) {\rotatebox{90}{Stack \cref{item:ambiguity-example-saw}}};
        \node[anchor=south] (label3) at ($(step1x1 |- step3x8.south) - (3em, 0)$) {\rotatebox{90}{Stack \cref{item:ambiguity-example-park}}};
        \draw[decorate,decoration={brace,amplitude=10pt,raise=-10pt}] (label1.south east) -- (label1.north east);
        \draw[decorate,decoration={brace,amplitude=10pt,raise=-10pt}] (label2.south east) -- (label2.north east);
        \draw[decorate,decoration={brace,amplitude=10pt,raise=-10pt}] (label3.south east) -- (label3.north east);
        \node[below=2ex] at (current bounding box.south) {Reading order $\rightarrow$};
        \node[left=20pt of stepnum1x1, anchor=center] {Step};
    \end{tikzpicture}
    \caption[An example of top-down parsing using a nondeterministic stack.]{An example of top-down parsing using a \emph{nondeterministic} stack. Dashed arrows indicate points where the stack forks into a new version. For simplicity, the parse trees are omitted from this figure.}
    \label{fig:stack-parsing-example-top-down-nondeterministic}
\end{figure*}
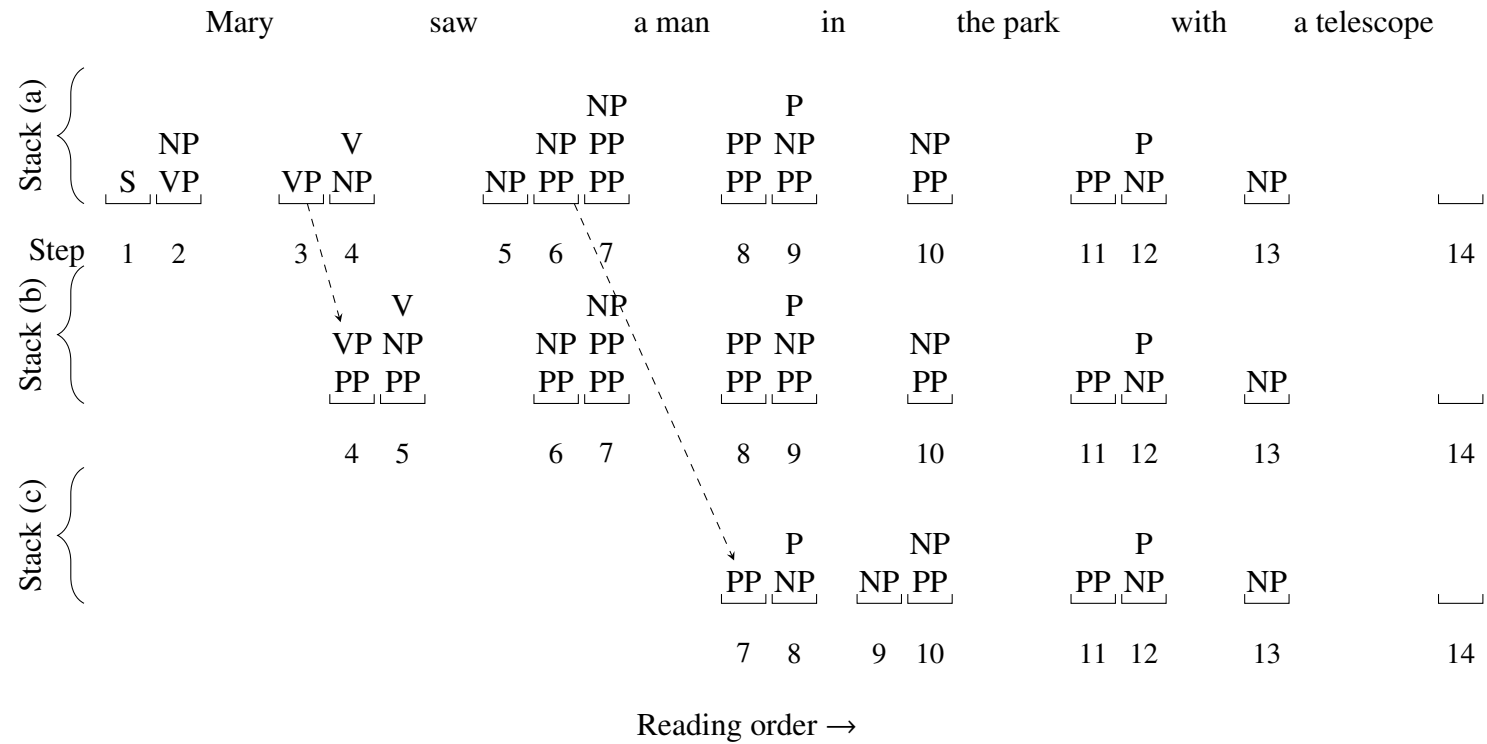
\end{landscape}

This dissertation remedies the discrepancy between the \emph{determinism} of existing stack-augmented neural networks and the necessity of \emph{nondeterminism} in processing human language, by proposing a method of incorporating nondeterministic stacks into neural networks. We show theoretically that our method is powerful enough to express all sets of grammatical rules like the one shown in \cref{fig:intro-cfg}, and we show experimentally that it is more effective than previous stack neural networks at learning both artificial syntax and natural language. We show how to incorporate it into multiple styles of neural network, namely recurrent neural networks (RNNs) and transformers.

The rest of this document is structured as follows.
\begin{itemize}
    \item In \cref{chap:background}, we review technical material that is necessary for understanding the rest of the dissertation.
    \item In \cref{chap:stack-rnns}, we discuss prior methods of adding stacks to neural networks, as well as the general framework of adding differentiable stacks to RNNs.
    \item In \cref{chap:ns-rnn}, we present the main contribution of this dissertation: a method of incorporating nondeterministic stacks into neural networks.
    \item In \cref{chap:learning-cfls}, we show experimentally that our method makes neural networks much more effective at learning context-free languages, especially those with ambiguity, validating our approach.
    \item In \cref{chap:rns-rnn}, we present changes to our method that allow neural networks to learn long-term dependencies more easily and achieve even better results on context-free languages.
    \item In \cref{chap:incremental-execution}, we present a memory-limiting technique that decreases the computational cost of our method and makes it feasible to run on arbitrarily long strings of text. We also include experiments on natural language with analysis on a syntactic benchmark.
    \item In \cref{chap:learning-non-cfls}, we show that our method is not only expressive enough to recognize any context-free language, but that, surprisingly, it can learn many non-context-free phenomena as well, including cross-serial dependencies.
    \item In \cref{chap:vrns-rnn}, we show that our method is able to transmit much more information in the stack than expected. We propose a new version of our method that uses a stack of \emph{vectors} to take advantage of neural networks' ability to pack information into compact embedding vectors. We show strong results on context-free languages.
    \item In \cref{chap:stack-attention}, we propose a method of incorporating nondeterministic stacks into the most empirically successful neural network architecture to date\dash{}the transformer\dash{}as a new kind of attention mechanism.
    \item In \cref{chap:conclusion}, we summarize our findings and discuss directions for future work.
\end{itemize}

\chapter{Background}
\label{chap:background}

In this chapter, we lay the groundwork for discussing nondeterministic stacks by reviewing essential knowledge about formal languages, automata, nondeterminism, stacks, and pushdown automata. We also briefly discuss neural networks.

\section{Formal Languages}
\label{sec:background-formal-languages}

The mathematical theory of computing is built upon the study of \term{formal languages}. Formal languages are mathematical objects used to represent types of problems that computers are tasked with solving. We introduce formal languages by defining a number of technical terms.

An \term{alphabet} is a non-empty finite set of elements called \term{symbols}. Symbols may be any mathematical object; in this document, we denote them using characters in typewriter font, e.g.\ \sym{a}, \sym{b}, \sym{0}, \sym{1}. Examples of alphabets are $\{ \sym{a}, \sym{b} \}$ and $\{ \sym{0}, \sym{1}, \sym{\#} \}$. We often use the variables $\Sigma$ and $\Gamma$ to denote alphabets.

A \term{string} over an alphabet $\Sigma$ is a finite ordered sequence of symbols in $\Sigma$. We usually denote strings by writing their symbols next to each other without spaces. For example, $\sym{abaab}$ denotes a string consisting of the symbols $\sym{a}, \sym{b}, \sym{a}, \sym{a}, \sym{b}$ in that order. We often use the variables $w$, $u$, $v$, or $s$ to denote strings.

For any two strings $u$ and $v$, we denote the concatenation of $u$ and $v$ as $uv$. We sometimes also write $u \concatop v$ to mean the same thing. We use $|w|$ to denote the length of $w$, and $w_i$ to denote the $i$th symbol of $w$. We sometimes write $w = w_1 w_2 \cdots w_n$, where each $w_i \in \Sigma$, to explicitly refer to the symbols that make up $w$. If $w = w_1 w_2 \cdots w_n$, we write $w_{[i:j)}$ to denote the (possibly empty) substring $w_i w_{i+1} \cdots w_{j-1}$. We use the special letter $\emptystring$ to denote an \term{empty string} of length 0 that contains no symbols (the letter $\emptystring$ is not a symbol in the string, but merely a notational convenience). We write $\reverse{w}$ to denote the string formed by ordering the symbols of $w$ in reverse. We write $w^i$ to denote the string $w$ repeated $i$ times.

A \term{language} or \term{formal language} over an alphabet $\Sigma$ is a (possibly infinite) set of strings over $\Sigma$. Because languages are sets, we can describe them with all the usual notation and operations for sets. We often use the variable $L$ to denote languages, and we write $w \in L$ to indicate that $L$ contains string $w$. We write $L^\ast$ to denote the language of all strings formed by concatenating any number of strings in $L$, where ${}^\ast$ is called the \term{Kleene star} operator. Formally, $L^\ast = \{ x_1 x_2 \cdots x_k \mid \text{$k \geq 0$ and each $x_i \in L$} \}$. For any alphabet $\Sigma$, $\Sigma^\ast$ denotes the language of all strings over $\Sigma$.

Several times in this document, we will make use of a particular type of function on strings called a \term{homomorphism}. A homomorphism simply modifies a string by replacing each of its symbols with another string. Formally, a homomorphism $\phi \colon \Sigma \rightarrow \Gamma^\ast$ is a function from symbols in alphabet $\Sigma$ to strings over alphabet $\Gamma$. We extend it to operate on strings over $\Sigma$ by defining $\phi(w) = \phi(w_1) \, \phi(w_2) \cdots \phi(w_n)$, where $w = w_1 w_2 \cdots w_n$ and $w \in \Sigma^\ast$.

\section{Finite Automata and Nondeterminism}
\label{sec:finite-automata}

Just as mathematics studies problems in computing through the lens of formal languages, it studies computing devices using abstract machines called \term{automata}.\footnote{The singular form of this word is \term{automaton}, and the plural form is \term{automata} or \term{automatons}. It comes from the ancient Greek word \greektext{αὐτόματον} \textit{aut{\'o}maton}, meaning ``self-acting thing.''} An automaton is a machine that reads a string as input and, by proceeding through a series of steps called \term{transitions}, produces some output. We refer to the status of the machine at a particular step of computation as a \term{configuration}. A \term{recognizer} is an automaton that outputs a decision to \term{accept} or \term{reject} its input string. The set of strings it accepts is called the \term{language of the machine}, and we say that the machine \term{recognizes} that language.

A \term{weighted automaton} is an automaton whose transitions are associated with non-negative numbers called \term{weights}. Rather than producing a binary accept/reject decision, a weighted automaton outputs a weight associated with the string, derived from the weights of the transitions executed while reading it. A weighted automaton can define a probability distribution over strings if one normalizes the weights of all possible string inputs so that they sum to one.

\subsection{Deterministic Finite Automata}

We introduce automata in more formal detail by first discussing the simplest kind, \term{finite automata}, also called \term{deterministic finite automata (DFAs)}. A finite automaton is a simple computing device whose only form of memory is a finite set of \term{states}. It always starts in a designated \term{start state}, and whenever it reads a symbol from an input string, it transitions to a new state according to a \term{transition function} and advances to the next symbol in the input. It accepts its string if it ends in one of a set of designated \term{accept states}. A finite automaton is defined formally as follows.

\begin{definition}
A \defterm{finite automaton} or \defterm{deterministic finite automaton (DFA)} is a tuple $(Q, \Sigma, \delta, q_0, F)$, where
\begin{itemize}
    \item $Q$ is a finite set of elements called \term{states},
    \item $\Sigma$ is an alphabet of input symbols,
    \item $\delta : Q \times \Sigma \rightarrow Q$ is the \term{transition function},
    \item $q_0 \in Q$ is the \term{start state}, and
    \item $F \subseteq Q$ is the set of \term{accept states}.
\end{itemize}
\end{definition}

We write transitions as $q \xrightarrow{a} r$, where $\delta(q, a) = r$. We can formally define the conditions under which a finite automaton accepts an input string as follows.
\begin{definition}
\label{def:dfa-run}
Let $M = (Q, \Sigma, \delta, q_0, F)$ be a finite automaton, and let $w = w_1 w_2 \cdots w_n$ be a string over $\Sigma$. $M$ \defterm{accepts} $w$ if there is a sequence of states $r_0, r_1, \ldots, r_n \in Q$ such that
\begin{enumerate}
    \item \label{item:dfa-run-init} $r_0 = q_0$,
    \item \label{item:dfa-run-middle} $\delta(r_i, w_{i+1}) = r_{i+1}$ for $i = 0, \ldots, n-1$, and
    \item \label{item:dfa-run-accept} $r_n \in F$.
\end{enumerate}
\end{definition}
This simply restates the informal description given above in more precise terms. We call any pair $(i, q)$ where $0 \leq i \leq n$ and $q \in Q$ a \term{configuration} of $M$, as it is enough to completely describe $M$'s progress when reading a string. If $\delta(r_i, w_{i+1}) = r_{i+1}$, we say that configuration $(i, r_i)$ \term{yields} configuration $(i+1, r_{i+1})$ via transition $\tau = r_i \xrightarrow{w_{i+1}} r_{i+1}$, and we write $(i, r_i) \Rightarrow_\tau (i+1, r_{i+1})$.

We refer to any sequence of configurations and transitions conforming to \cref{item:dfa-run-init,item:dfa-run-middle} as a \term{run} on $w$. More precisely, a run is a sequence $\pi = (0, r_0), \tau_1, (1, r_1), \tau_2, \ldots, \tau_n, (n, r_n)$, where $r_0 = q_0$ and, for each $i < n$, $(i, r_i) \Rightarrow_{\tau_{i+1}} (i+1, r_{i+1})$. We call a run $\pi$ that satisfies \cref{item:dfa-run-accept} an \term{accepting run} on $w$, and we say that $\pi$ \term{scans} $w$. We sometimes treat a run as a sequence of only configurations, transitions, or states as convenient. Let $\stringsumruns{M}{w}$ denote the set of all runs of $M$ that scan $w$, and let $\allsumruns{M}$ denote the set of all runs of $M$ that scan any string.

We call a set of languages a \term{class} of language. DFAs define a class of language called the \term{regular languages} \citep{kleene-1951-representation}.
\begin{definition}
A language $L$ is a \defterm{regular language} if there is a DFA that recognizes $L$.
\end{definition}

\subsection{Nondeterministic Finite Automata}

Because of the way the transition function for DFAs is defined, every configuration yields \emph{exactly one} configuration. For this reason, we call DFAs \term{deterministic}. In contrast, in a \term{nondeterministic} automaton, the same configuration may yield \emph{any} number of configurations (including none, one, or multiple). If a configuration yields none, the run simply terminates without accepting its input. If a configuration yields multiple, the common way of interpreting a nondeterministic machine's behavior in this scenario is that the machine ``forks'' into multiple runs that execute in parallel, each with an independent copy of the whole machine. A nondeterministic recognizer accepts its input iff at least one run accepts the input string. We formalize this notion by defining \term{nondeterministic finite automata (NFAs)}.

\begin{definition}
A \defterm{nondeterministic finite automaton (NFA)} is a tuple $(Q, \Sigma, \delta, q_0, F)$, where
\begin{itemize}
    \item $Q$ is a finite set of \term{states},
    \item $\Sigma$ is an alphabet called the \term{input alphabet},
    \item $\delta : Q \times (\Sigma \cup \{\emptystring\}) \rightarrow \powerset{Q}$ is the \term{transition function},
    \item $q_0 \in Q$ is the \term{start state}, and
    \item $F \subseteq Q$ is the set of \term{accept states}.
\end{itemize}
\end{definition}

Note that the transition function may now specify any number of destination states. We write transitions as $\tau = q \xrightarrow{a} r$, where $r \in \delta(q, a)$. We also write $\tau \in \delta$ if $r \in \delta(q, a)$.

A transition $q \xrightarrow{\emptystring} r$ means that the machine can go from $q$ to $r$ without reading an input symbol. For any transition $q \xrightarrow{a} r$, if $a \neq \emptystring$, we call it \term{scanning}, and if $a = \emptystring$, we call it \term{non-scanning}. An NFA may execute any number of non-scanning transitions in a row.

An automaton that has no non-scanning transitions is called \term{real-time}; it must scan exactly one input symbol per transition.

\begin{definition}
Let $N = (Q, \Sigma, \delta, q_0, F)$ be a nondeterministic finite automaton, and let $w$ be a string over $\Sigma$. $N$ \defterm{accepts} $w$ if we can write $w$ as $w = y_1 y_2 \cdots y_m$, where each $y_i \in \Sigma \cup \{\emptystring\}$, and there is a sequence of states $r_0, r_1, \ldots, r_m \in Q$ where
\begin{itemize}
    \item $r_0 = q_0$,
    \item $r_{i+1} \in \delta(r_i, y_{i+1})$ for $i = 0, \ldots, m-1$, and
    \item $r_m \in F$.
\end{itemize}
\end{definition}
An NFA configuration $(i, q)$ yields configuration $(j, r)$ via transition $q \xrightarrow{w_{[i+1:j+1)}} r$ (note that when $w_{[i+1:j+1)} = \emptystring$, then $i = j$). Unlike configurations in a DFA, the same configuration may yield multiple distinct configurations. Note that since runs can fork into other runs at any time, the number of runs of $N$ that scan $w$ can grow exponentially in the length of $w$. However, when simulating the computation of NFAs, it is relatively easy to avoid this exponential blowup using dynamic programming.

Although NFAs seem to be capable of more advanced computation than DFAs, they also recognize exactly the class of regular languages \citep{hopcroft-ullman-1979-introduction,sipser-2013-introduction}. For this reason, we say that DFAs and NFAs are \term{equivalent} in computational power. More generally, we say that two automata or grammars are equivalent if they recognize or generate the same language, and we say that two kinds of automaton or grammar are equivalent if they recognize the same language class.

\subsection{Weighted Finite Automata}

We can extend nondeterministic finite automata to a version that assigns non-negative weights to transitions, runs, and strings. We call this a \term{weighted finite automaton (WFA)}. As in an NFA, a configuration in a WFA can produce multiple configurations. Additionally, in a WFA, each transition has a weight, and the run proceeds to the next configuration \term{with the weight} of that transition. The weight of a run is the product of the transitions followed in the run. The weight that a WFA assigns to an input string is the sum of the weights of all runs that scan that string. We call this the \term{stringsum} of that string \citep{butoi-etal-2022-algorithms}. WFAs will be especially relevant when discussing our method in \cref{chap:ns-rnn}.

\begin{definition}
A \defterm{weighted finite automaton (WFA)} is a tuple $(Q, \Sigma, \delta, q_0, F)$, where
\begin{itemize}
    \item $Q$ is a finite set of \term{states},
    \item $\Sigma$ is the \term{input alphabet},
    \item $\delta : Q \times (\Sigma \cup \{\emptystring\}) \times Q \rightarrow \realset_{\geq 0}$ is the \term{transition function},
    \item $q_0 \in Q$ is the \term{start state}, and
    \item $F \subseteq Q$ is the set of \term{accept states}.
\end{itemize}
\end{definition}
We say that transition $\tau = q \xrightarrow{a} r$ has weight $\delta(q, a, r)$. If $p = \delta(q, a, r)$, we can also write the transition as $q \xrightarrow{a / p} r$. If $M$ is a WFA, we also write $\weightoftrans{M}{\tau}$ to denote the weight of $\tau$.

\begin{definition}
\label{def:wfa-run-weight}
The \defterm{weight} of a run $\pi = \tau_1, \ldots, \tau_m$ in WFA $M$, denoted $\runweight{M}{\pi}$, is the product of the weights of its transitions.
\begin{equation}
    \runweight{M}{\pi} = \prod_{i=1}^m \weightoftrans{M}{\tau_i} \quad \text{where } \pi = \tau_1, \ldots, \tau_m
\end{equation}
\end{definition}

\begin{definition}
\label{def:wfa-stringsum}
The \defterm{stringsum} of WFA $M$ on string $w$, denoted $\stringsum{M}{w}$, is the sum of the weights of all runs of $M$ that scan $w$.
\begin{equation}
    \stringsum{M}{w} = \sum_{\mathclap{\pi \in \stringsumruns{M}{w}}} \runweight{M}{\pi}
\end{equation}
\end{definition}

\begin{definition}
\label{def:wfa-allsum}
The \defterm{allsum} of WFA $M$, denoted $\allsum{M}$, is the sum of all runs of $M$ that scan any string.
\begin{equation}
    \allsum{M} = \sum_{\mathclap{\pi \in \allsumruns{M}}} \runweight{M}{\pi}
\end{equation}
\end{definition}

A WFA $M$ defines a probability distribution over strings:
\begin{equation}
\label{eq:wfa-probability-distribution}
p_M(w) = \frac{\stringsum{M}{w}}{\allsum{M}}.    
\end{equation}

\section{Stacks and Pushdown Automata}
\label{sec:background-stacks}

Now that we have reviewed the essentials of automata and nondeterminism, we are ready to discuss nondeterministic stacks and their theoretical relationship to syntax.

\subsection{Stacks}

\term{Stacks}, sometimes called \term{pushdown stores}, are a ubiquitous data structure in computer science. A stack is an ordered container of \term{elements}, where elements can only be added or removed in last-in-first-out (LIFO) order. The restriction to LIFO ordering is the defining characteristic of stacks. We refer to the end of the stack with the least recently accessed elements as the \term{bottom}, and the end of the stack with the most recently accessed elements as the \term{top}. In this document, we sometimes write a stack as a list of elements from bottom to top, and if the elements are symbols, we write the stack as a string. For example, the string $\sym{abc}$ denotes a stack containing the symbols $\sym{a}, \sym{b}, \sym{c}$, with $\sym{c}$ on top.

We refer to the insertion of a new element on the top of a stack as a \term{push}, and the removal of the top element from a stack as a \term{pop}. Typically only the top element of a stack can be observed.

\subsection{Pushdown Automata}

It is possible to extend the definition of finite automata by augmenting them with a stack of symbols. We call these machines \term{pushdown automata (PDAs)}. Like an NFA, a PDA has a finite set of states and is nondeterministic. Its transitions, in addition to scanning input symbols and being conditioned on the state machine, also interact with a stack. In fact, when using the term \term{nondeterministic stack} in this document, we are really referring to a PDA, complete with its own finite state control. A PDA always starts in the start state with an empty stack. PDA transitions can be conditioned on the top stack symbol, pop symbols from the stack, and push symbols to the stack. There is no limit to the number of symbols that can be stored in the stack, so, unlike an NFA, a PDA does not have entirely finite memory. This gives PDAs the ability to recognize a larger class of language than DFAs and NFAs. Every time the PDA forks nondeterministically into separate runs, each run maintains its own separate copy of the stack. A PDA accepts its input if there is at least one run that ends in an accept state. According to our definition, we do not require the stack to be empty for the PDA to accept.

\begin{definition}
A \defterm{pushdown automaton (PDA)} is a tuple $(Q, \Sigma, \Gamma, \delta, q_0, F)$, where
\begin{itemize}
    \item $Q$ is a finite set of \term{states},
    \item $\Sigma$ is the \term{input alphabet},
    \item $\Gamma$ is an alphabet called the \term{stack alphabet},
    \item $\delta \colon Q \times \Gamma^\ast \times (\Sigma \cup \{\emptystring\}) \rightarrow \powerset{Q \times \Gamma^\ast}$ is the \term{transition function},
    \item $q_0$ is the \term{start state}, and
    \item $F \subseteq Q$ is the set of \term{accept states}.
\end{itemize}
\end{definition}
We write transitions as $\tau = \pdatrans{q}{a}{u}{r}{v}$, where $(r, v) \in \delta(q, a, u)$. We also write $\tau \in \delta$ if $(r, v) \in \delta(q, a, u)$.

A transition $\pdatrans{q}{a}{u}{r}{v}$ means that if the machine is in state $q$ and the string $u$ is at the top of the stack (in bottom-to-top order), then the machine can scan $a$ (which may be $\emptystring$) from the input string, pop $u$, and push $v$ (in bottom-to-top order) to the stack.

\begin{definition}
Let $P = (Q, \Sigma, \Gamma, \delta, q_0, F)$ be a pushdown automaton, and let $w$ be a string over $\Sigma$. $P$ \defterm{accepts} $w$ if we can write $w$ as $w = y_1 y_2 \cdots y_m$, where each $y_i \in \Sigma \cup \{\emptystring\}$, there is a sequence of states $r_0, r_1, \ldots, r_m \in Q$, and there is a sequence of stacks $s_0, s_1, \ldots, s_m \in \Gamma^\ast$ such that
\begin{itemize}
    \item $r_0 = q_0$ and $s_0 = \emptystring$,
    \item for all $i = 0, \ldots, m-1$, $(r_{i+1}, v) \in \delta(r_i, y_{i+1}, u)$, where $s_i = \beta u$ and $s_{i+1} = \beta v$ for some $u, v, \beta \in \Gamma^\ast$, and
    \item $r_m \in F$.
\end{itemize}
\end{definition}

A configuration of $P$ is a tuple $\pdaconfig{i}{q}{\beta}$, where $0 \leq i \leq n$, $q \in Q$, and $\beta \in \Gamma^\ast$. The string $\beta$ represents the contents of the stack in bottom-to-top order. We have $\pdaconfig{i}{q}{\beta u} \Rightarrow_\tau \pdaconfig{j}{r}{\beta v}$ if $\tau = \pdatrans{q}{w_{[i+1,j+1)}}{u}{r}{v} \in \delta$. The initial configuration is $\pdaconfig{0}{q_0}{\emptystring}$. We write $\pdarunendsin{\pi}{i}{q}{x}$ to indicate that run $\pi$ ends in configuration $\pdaconfig{i}{q}{\beta x}$ for some $\beta \in \Gamma^\ast, x \in \Gamma$.

As with NFAs, runs of a PDA can fork into other runs at any time, so the number of runs of $P$ that scan $w$ can grow exponentially in the length of $w$. The fact that runs maintain their own separate copies of the stack complicates the use of dynamic programming to avoid exponential blowup when simulating PDAs, but as we will see in \cref{sec:langs-algorithm}, it is still possible to do this with cubic time and quadratic space.

Many equivalent variations of the definition of PDAs exist. In \cref{sec:ns-rnn-definition}, we will define an equivalent normal form that is more amenable to inclusion in a neural network.

\subsection{Weighted Pushdown Automata}
\label{sec:wpda}

Just as NFAs can be extended to WFAs, PDAs can be extended to \term{weighted pushdown automata (WPDAs)}.

\begin{definition}
\label{def:wpda}
A \defterm{weighed pushdown automaton (WPDA)} is a tuple $(Q, \Sigma, \Gamma, \delta, q_0, F)$, where
\begin{itemize}
    \item $Q$ is a finite set of \term{states},
    \item $\Sigma$ is the \term{input alphabet},
    \item $\Gamma$ is the \term{stack alphabet},
    \item $\delta \colon Q \times \Gamma^\ast \times (\Sigma \cup \{\emptystring\}) \times Q \times \Gamma^\ast \rightarrow \realset_{\geq 0}$ is the \term{transition function},
    \item $q_0$ is the \term{start state}, and
    \item $F \subseteq Q$ is the set of \term{accept states}.
\end{itemize}
\end{definition}

We say that transition $\tau = \pdatrans{q}{a}{u}{r}{v}$ has weight $\delta(q, a, u, r, v)$. If $p = \delta(q, a, u, r, v)$, we can also write the transition as $\pdatrans{q}{a / w}{u}{r}{v}$. We define the weights of runs, stringsums, allsums, and probability distributions for WPDAs analogously to \cref{def:wfa-run-weight,def:wfa-stringsum,def:wfa-allsum,eq:wfa-probability-distribution}.

\subsection{Context-Free Grammars}
\label{sec:cfgs}

How do PDAs relate to syntax? To answer this question, we first discuss a method of describing languages called \term{context-free grammars (CFGs)}. CFGs capture the essence of the compositional nature of syntax. A CFG consists of two disjoint alphabets designated \term{terminals} and \term{variables}. Terminal symbols correspond to symbols in strings described by the CFG, whereas variable symbols are intermediate symbols that can be replaced with terminals or other variables. A CFG specifies a set of \term{production rules} of the form $A \rightarrow X_1 X_2 \cdots X_m$, which indicates that the variable $A$ can be replaced with the terminal/variable symbols $X_1 X_2 \cdots X_m$. A CFG has a designated \term{start variable}, and we say that it \term{generates} string $w$ if the start variable can be rewritten into $w$ through some sequence of production rules. The order in which the rules are applied describes the syntactic structure of $w$, which can be represented as a parse tree, as in \cref{fig:intro-example-parse-trees}. Multiple parse trees may be possible for the same string. An example of a CFG was given in \cref{fig:intro-cfg}.

\begin{definition}
A \defterm{context-free grammar (CFG)} is a tuple $(V, \Sigma, R, S)$, where
\begin{itemize}
    \item $V$ is a finite set called the \term{variables} or \term{nonterminals},
    \item $\Sigma$ is a finite set disjoint from $V$ called the \term{terminals},
    \item $R$ is a finite set of \term{production rules} of the form $A \rightarrow X_1 X_2 \cdots X_m$, where $A \in V$ and each $X_i \in V \cup \Sigma$, and
    \item $S \in V$ is the \term{start variable}.
\end{itemize}
\end{definition}

For any $u, v, \beta \in (V \cup \Sigma)^\ast$, if $A \rightarrow \beta \in R$, we say that string $uAv$ \term{yields} string $u \beta v$, and we write $uAv \Rightarrow u \beta v$. We say that $u$ \term{derives} $v$ if $u = v$ or if there is a sequence $u_1, u_2, \ldots, u_k$ for $k \geq 0$ where $u \Rightarrow u_1 \Rightarrow u_2 \Rightarrow \cdots \Rightarrow u_k \Rightarrow v$.

\begin{definition}
Let $G = (V, \Sigma, R, S)$ be a context-free grammar, and let $w$ be a string over $\Sigma$. $G$ \defterm{generates} $w$ if $S$ derives $w$.
\end{definition}

The \term{language} of $G$, denoted $\langof{G}$, is the set of all strings it generates. We call a sequence of production rules that rewrites $S$ into $w$ a \term{derivation} of $w$. If a CFG has more than one derivation for $w$, then we say that $G$ generates $w$ \term{ambiguously}, and if a CFG generates any string ambiguously, we call that grammar \term{ambiguous}. Many times, it is possible to rewrite an ambiguous CFG into an equivalent unambiguous one, but sometimes, this is not possible. In this case, we say that the language of $G$ is \term{inherently ambiguous}.

For every CFG $G$, there is an equivalent CFG in \term{Greibach normal form (GNF)} \citep{greibach-1965-new}.
\begin{definition}
A CFG $G = (V, \Sigma, R, S)$ is in \defterm{Greibach normal form (GNF)} if
\begin{itemize}
    \item $S$ does not appear on the right side of any rule, and
    \item every rule has one of the following forms:
    \begin{align*}
    S &\rightarrow \emptystring \\
    A &\rightarrow a B_1 \cdots B_p & p \ge 0, a \in \Sigma, B_i \in V.
    \end{align*}
\end{itemize}
\end{definition}

\subsection{Context-Free Languages}
\label{sec:cfls}

CFGs define a class of language called the \term{context-free languages (CFLs)}, which is a strict superset of the regular languages.

\begin{definition}
A language $L$ is a \defterm{context-free language (CFL)} if there is a CFG that generates it.
\end{definition}

The relationship of PDAs to syntax is this: Any CFG can be converted into an equivalent PDA, and any PDA can be converted into an equivalent CFG \citep{hopcroft-ullman-1979-introduction,sipser-2013-introduction}. In fact, the standard conversion from CFGs to PDAs converts the CFG into a top-down parser like the one described in \cref{fig:stack-parsing-example-top-down}. So, PDAs also recognize the class of context-free languages. The nondeterminism of PDAs is essential for the construction that converts CFGs to PDAs. As shown in \cref{fig:stack-parsing-example-top-down-nondeterministic}, nondeterminism allows the machine to explore all ways of replacing variables on the stack in order to discover all valid parses of the input string.

Nondeterministic PDAs have a deterministic counterpart called \term{deterministic pushdown automata (DPDAs)}. DPDAs are defined so that, for any possible configuration, at most one transition can be taken. Unlike DFAs and NFAs, DPDAs recognize a smaller class of language than PDAs called \term{deterministic context-free languages (DCFLs)}, and so the nondeterminism of PDAs makes them strictly more powerful than deterministic ones. For any CFL $L$, we say that $L$ is \term{deterministic} if there is a DPDA that recognizes it, and we say that $L$ is \term{nondeterministic} if there is not.

When DPDAs are restricted to real-time, they recognize an even smaller class of language called \term{real-time DCFLs} \citep{ginsburg-greibach-1966-deterministic,igarashi-1985-pumping}. As we will see in \cref{sec:prior-stack-rnns}, prior stack-augmented neural networks resemble real-time DPDAs. Allowing a DPDA to execute at most $k$ non-scanning transitions in a row (for some $k \geq 0$) does not help; such a DPDA is called a \term{quasi-real-time DPDA}, and quasi-real-time DPDAs are equivalent to real-time DPDAs \citep{harrison-havel-1972-real,harrison-havel-1972-family}. In contrast, real-time nondeterministic PDAs \emph{are} equivalent to nondeterministic PDAs \citep{greibach-1965-new}, so nondeterminism is sufficient for overcoming the limitations of real-time computation in PDAs. The method we describe in \cref{chap:ns-rnn} gives neural networks the full power of nondeterministic PDAs.

\section{Neural Networks}
\label{sec:neural-networks}

Over the last decade, \term{neural networks} have become the dominant machine learning paradigm in multiple fields of computer science, including natural language processing (NLP) and computer vision (CV). They are the apparent answer to the question: how does one write a computer program that uses human language, or recognizes objects in images? Language and vision are both examples of problems whose underlying rules have a level of complexity and nuance that defy even the most dedicated programmers. Neural networks automate the process of learning complex behavior by observing examples of inputs and expected outputs in a corpus of \term{training data}.

The design of neural networks was inspired by the way neurons communicate in biological brains. Neural networks come in many varieties; two of the most commonly used architectures used for processing sequences of inputs are \term{recurrent neural networks (RNNs)} \citep{elman-1990-finding} and \term{transformers} \citep{vaswani-etal-2017-attention}.

In both cases, several parts of the network consist of an apparatus called a \term{layer}. A neural network typically includes many layers. A layer receives a vector of real numbers $\vecvar{x}$ as input and produces a vector of real numbers $\vecvar{y}$ as output. The elements of these vectors are often called \term{units}, and the elements of $\vecvar{y}$ are often called \term{neurons} or \term{activations} because they resemble the activation of neurons in the brain.

The input units of $\vecvar{x}$ are connected to the output units of $\vecvar{y}$ via \term{connections}. Each connection has a real-valued \term{weight}. Each output unit also has an associated \term{bias term}. Let $\matvar{W}[j, i]$ denote the weight of the connection from unit $\vecvar{x}[i]$ to unit $\vecvar{y}[j]$, and let $\vecvar{b}[j]$ denote the bias term for $\vecvar{y}[j]$. The value of $\vecvar{y}[j]$ is defined as $\vecvar{y}[j] = f(\sum_i \matvar{W}[j, i] \, \vecvar{x}[i] + \vecvar{b}[j])$, where $f$ is some non-linear sigmoid function, typically the logistic function $\sigma$ or $\tanh$. If $\matvar{W}$ is the matrix of the weights of all connections from $\vecvar{x}$ to $\vecvar{y}$, we can express all values of $\vecvar{y}$ at once as $\vecvar{y} = f(\matvar{W} \vecvar{x} + \vecvar{b})$. In this document, whenever the input or output of a layer is a multi-dimensional tensor instead of a one-dimensional vector, we assume that it is implicitly flattened or reshaped to the appropriate dimensions.

Connection weights and bias terms are examples of \term{parameters}. Neural networks with different parameter values exhibit different behaviors; the goal of training a neural network is to find a setting of parameter values that causes it solve a desired task. Neural networks typically consist of millions or billions of parameters. We often denote the set of all parameters in a network as a single vector $\vecvargreek{\theta}$.

During training, the degree to which the network does not conform to the training data is measured with a \term{loss function}. Its output, called the \term{loss}, quantifies the dissimilarity between the network's output and the correct output. Training a neural network consists of decreasing the loss by \term{optimizing} the parameter values. This is done using an algorithm called gradient descent, whereby $\vecvargreek{\theta}$ is incrementally nudged by small amounts in directions that decrease the loss. The amount by which it is nudged is called the \term{learning rate}, and the direction in which it is nudged is determined by the \term{gradient} of the loss with respect to $\vecvargreek{\theta}$. Modern automatic differentiation software libraries like PyTorch \citep{paszke-etal-2019-pytorch} automate the process of computing gradients. Gradient descent is not guaranteed to find an optimal solution that fits the training data; it can get stuck in \term{local minima}.

The gradient of the loss with respect to the parameters is computed using a dynamic programming algorithm called \term{backpropagation}, which is based on the chain rule from differential calculus. In order for backpropagation to compute accurate gradients, all modules of the network should have outputs that are differentiable with respect to their inputs. We call the process of calculating its outputs given its inputs the \term{forward pass}. We call the process of calculating the gradient of the loss with respect to its inputs, given the gradient of the loss with respect to its outputs, the \term{backward pass}. In \cref{chap:stack-rnns}, we will discuss differentiable data structures that implement stacks. PyTorch implements backpropagation by storing operators in a directed acyclic graph called the \term{computation graph} during the forward pass, then accumulating gradients by traversing the graph in topological order in the backward pass.

Training neural networks is computationally expensive, but many of their operations can be implemented with highly parallelizable tensor operations. For this reason, neural networks are typically run on parallel hardware called graphical processing units (GPUs).

All inputs to a neural network must be encoded as numbers or vectors. A \term{one-hot} vector is a vector with exactly one element set to 1 and all others set to 0. One-hot vectors are convenient for encoding symbols of an alphabet as vectors. Let $\Sigma$ be an alphabet, and suppose we arbitrarily define an ordering $a_1, a_2, \ldots, a_m$ of the symbols in $\Sigma$. Then we can encode symbol $a_i$ as a vector $\vecvar{x}_{a_i} \in \realset^m$, where $\vecvar{x}_{a_i}[i'] = \indicator{i' = i}$. For simplicity, we treat a symbol $a_i$ and its index $i$ as interchangeable, e.g.\ $\vecvar{z}[a_i] = \vecvar{z}[i]$. We generally do not bother to explicitly define an ordering.

In this document, we train neural networks as \term{language models}. Given a string $w = w_1 w_2 \cdots w_n$, a language model $M$ is trained, for every position $t = 0, \ldots, n$, to predict the next symbol $w_{t+1}$ given the prefix $w_{[1, t+1)}$. When $t = n$, the model is trained to predict a special symbol called $\eos$ (``end of sequence''). Let $w_{n+1} = \eos$, and let $\Sigma' = \Sigma \cup \{\eos\}$. We often refer to positions as \term{timesteps}. Each $w_t$ is encoded as a vector $\rnninputt{t}$ given as input to the network at timestep $t$. For each $t = 0, 1, \ldots, n$, the network outputs a vector $\rnnoutputt{t} \in \realset^{|\Sigma'|}$ called the \term{logits}. The logits are passed through the \term{softmax} function to define a probability distribution over $\Sigma'$ for $w_{t+1}$:
\begin{equation}
    \probdistmt{M}{t}(a) = \softmax(\rnnoutputt{t})[a] = \frac{\exp(\rnnoutputt{t}[a])}{\sum_{a' \in \Sigma'} \exp(\rnnoutputt{t}[a'])}.
\end{equation}

For a multi-dimensional tensor $\tensorvar{X}$, we write $\softmaxover{a}(\tensorvar{X})$ to indicate that for every value of $a$, $\softmaxover{a}(\tensorvar{X})[a] = \softmax(\tensorvar{X}[a])$.

The loss term at each timestep $t$ is the \term{cross-entropy} $H(\hat{p}_t, \probdistmt{M}{t})$ between the correct output distribution $\hat{p}_t(a) = \indicator{a = w_{t+1}}$ and the model's distribution $\probdistmt{M}{t}$:
\begin{align}
    H(\hat{p}_t, \probdistmt{M}{t}) &= - \sum_{a \in \Sigma'} \hat{p}_t(a) \log \probdistmt{M}{t}(a) \\
        &= -\log \probdistmt{M}{t}(w_{t+1}).
\end{align}
The loss for a whole string $w$ is formed by summing or averaging the loss terms over all timesteps. Cross-entropy is measured in \term{nats} (natural units of information).

We define $M$'s probability distribution over strings in $\Sigma^\ast$ as
\begin{equation}
    \probdistm{M}(w) = \prod_{t=0}^{|w|} \probdistmt{M}{t}(w_{t+1})
\end{equation}
where $w_{|w|+1} = \eos$.

\chapter{Stack RNNs}
\label{chap:stack-rnns}

A \term{recurrent neural network (RNN)} \citep{elman-1990-finding} is a type of neural network that operates on sequences of inputs. Like a finite automaton, it scans inputs one by one and relies on a finite memory store. This memory store is a vector of \term{hidden units} $\rnnhiddenletter$ called the \term{hidden state}. Connections between RNNs and automata have been studied extensively \citep{chen-etal-2018-recurrent,schwartz-etal-2018-bridging,peng-etal-2018-rational,merrill-2019-sequential,merrill-etal-2020-formal,merrill-2021-formal}.

If the analogy between RNNs and finite automata holds true, perhaps adding stacks will increase their computational power, just as adding a stack to a finite automaton allows it to recognize CFLs. Prior work has explored a number of different ways of doing this. In this chapter, we describe an architectural framework for RNNs augmented with stacks, or \term{stack RNNs}, and we discuss two previously proposed styles of stack RNN.

\section{Motivation}

Why is adding stacks to RNNs desirable? For one, stacks can make RNNs more expressive in mathematical terms, for the reasons discussed in \cref{sec:cfls}. A limiting factor of RNNs is their reliance on memory whose size is constant across time. For example, to predict the second half of a string of the form~$w \sym{\#} \reverse{w}$, an RNN would need to store all of $w$ in its hidden state before predicting its reversal $\reverse{w}$. A memory of finite size will inevitably fail to do this for inputs exceeding a certain length. A stack of unbounded size can alleviate this problem.

Moreover, as discussed in \cref{chap:introduction}, stacks share a deep connection with syntax and hierarchical patterns, which appear in natural language \citep{chomsky-1957-syntactic}. Many machine learning problems involve sequential data with hierarchical structures, such as modeling context-free languages \citep{grefenstette-etal-2015-learning,deletang-etal-2023-neural}, evaluating mathematical expressions \citep{nangia-bowman-2018-listops,hao-etal-2018-context}, logical inference \citep{bowman-etal-2015-tree}, and modeling syntax in natural language \citep{dyer-etal-2016-recurrent,shen-etal-2019-ordered-neurons,kim-etal-2019-compound}. However, empirically, RNNs have difficulty learning to solve these tasks, or generalizing to held-out sequences, unless they have syntactic supervision or a hierarchical inductive bias \citep{vanschijndel-etal-2019-quantity,wilcox-etal-2019-hierarchical,mccoy-etal-2020-does}. Stacks might not only improve RNNs' expressivity (i.e.\ the quality of optimal solutions), but also imbue them with a hierarchical inductive bias, which would improve their ability to generalize to unseen syntactic patterns in a compositional way, and to learn from fewer training examples when they are explainable with underlying hierarchical rules. This is important, as research continues to suggest that inductive bias plays a key role in syntactic generalization and data efficiency \citep{vanschijndel-etal-2019-quantity,hu-etal-2020-systematic}.

\section{Controller-Stack Interface}
\label{sec:controller-stack-interface}

We now discuss an architectural framework common to all of the stack RNN varieties we will study in this document (we will discuss transformers later, in \cref{chap:stack-attention}). In all cases the stack is an example of a \term{differentiable data structure}, which has three basic requirements:
\begin{enumerate}
    \item \label{item:diff-requirement-continuous} The stack must have continuous inputs and outputs, rather than discrete operations. This allows it to be connected to units in the neural network, and to be differentiable.
    \item \label{item:diff-requirement-differentiable} The outputs of the stack must be differentiable with respect to its inputs. Typically the output of the stack is its top element, and its inputs are the actions that push and pop elements on the stack. This allows the stack actions to be trained jointly with the rest of the network using standard backpropagation and gradient descent.
    \item \label{item:diff-requirement-tensor} For practical purposes, it should be implemented with tensor operations that can be run efficiently on GPUs.
\end{enumerate}

In this chapter, we discuss two examples of \term{differentiable stack} from prior work, and we start by describing an architectural framework for connecting stacks to RNNs, which we illustrate in \cref{fig:stack-rnn-diagram}. In this architecture, the model consists of a simple RNN, called the \term{controller}, connected to a differentiable stack. These two modules execute side-by-side and communicate with each other at each timestep. The controller can alternatively be a variant of RNN, such as the long short-term memory (LSTM) network \citep{hochreiter-schmidhuber-1997-long}, which is what we will use in all of our experiments.

At each timestep, the stack receives \term{actions} from the controller (e.g.\ to push and pop elements). The stack simulates those actions and produces a \term{reading} vector, which represents the updated top element of the stack. The reading is fed as an extra input to the controller at the next timestep. The actions and reading consist of continuous, differentiable weights so the whole model can be trained end-to-end with backpropagation; their form and meaning vary depending on the particular style of stack.

We assume the RNN reads an input string $\inputstring = \inputsymbolt{1} \cdots \inputsymbolt{n}$ encoded as a sequence of vectors $\rnninputt{1}, \cdots, \rnninputt{n}$. The LSTM controller's memory consists of a hidden state $\rnnhiddent{t}$ and memory cell $\lstmmemorycellt{t}$ (we set $\rnnhiddent{0} = \lstmmemorycellt{0} = \veczero$). The controller computes the next state~$(\rnnhiddent{t}, \lstmmemorycellt{t})$ given the previous state, input vector $\rnninputt{t}$, and stack reading $\stackreadingt{t-1}$:
\begin{equation}
(\rnnhiddent{t}, \lstmmemorycellt{t}) =
    \funcname{LSTM}\left((
        \rnnhiddent{t-1}, \lstmmemorycellt{t-1}),
        \begin{bmatrix} \rnninputt{t} \\ \stackreadingt{t-1} \end{bmatrix}
    \right).
\end{equation}
Note that because the network reads exactly one input per timestep, it resembles a real-time automaton.

The hidden state generates the stack actions $\stackactionst{t}$ and output $\rnnoutputt{t}$. When the RNN is used as a language model, $\rnnoutputt{t}$ is used as the logits for predicting the next symbol~$\inputsymbolt{t+1}$. The previous stack and new actions generate a new stack $\stackobjectt{t}$, which produces a new reading~$\stackreadingt{t}$:
\begin{align*}
\stackactionst{t} &= \stackactionsfunc{\rnnhiddent{t}} \\
\rnnoutputt{t} &= \affine{y}{\rnnhiddent{t}} \\
\stackobjectt{t} &= \stackobjectfunc{\stackobjectt{t-1}}{\stackactionst{t}} \\
\stackreadingt{t} &= \stackreadingfunc{\stackobjectt{t}}.
\end{align*}
Here, $\weightparam{y}$ and $\biasparam{y}$ are learned parameters. Each style of stack differs only in the definitions of $\stackactionsfuncname$, $\stackobjectfuncname$, $\stackreadingfuncname$, and $\stackobjectt{0}$.

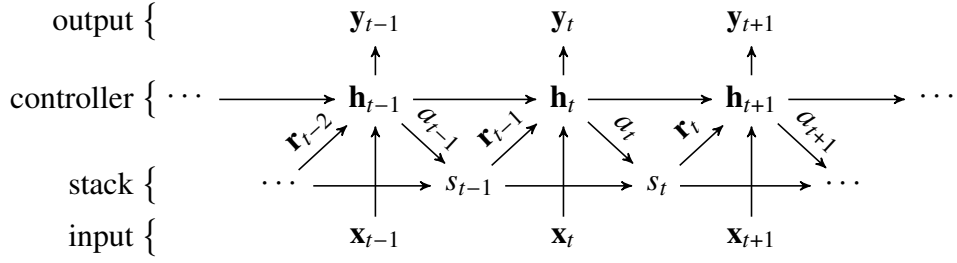
\begin{figure*}
    \centering
    \begin{tikzpicture}[x=2.5cm,y=2.3cm]
        \node (htprev2) at (-2, 0) {$\cdots$};
        \node (stprev2) at (-1.5, -0.5) {$\cdots$};
        \node (htprev) at (-1, 0) {$\rnnhiddent{t-1}$};
        \node (xtprev) at (-1, -0.8) {$\rnninputt{t-1}$};
        \node (ytprev) at (-1, 0.45) {$\rnnoutputt{t-1}$};
        \node (stprev) at (-0.5, -0.5) {$\stackobjectt{t-1}$};
        \node (ht) at (0, 0) {$\rnnhiddent{t}$};
        \node (xt) at (0, -0.8) {$\rnninputt{t}$};
        \node (yt) at (0, 0.45) {$\rnnoutputt{t}$};
        \node (st) at (0.5, -0.5) {$\stackobjectt{t}$};
        \node (htnext) at (1, 0) {$\rnnhiddent{t+1}$};
        \node (xtnext) at (1, -0.8) {$\rnninputt{t+1}$};
        \node (ytnext) at (1, 0.45) {$\rnnoutputt{t+1}$};
        \node (stnext) at (1.5, -0.5) {$\cdots$};
        \node (htnext2) at (2, 0) {$\cdots$};
        \draw[->] (htprev2) edge (htprev);
        \draw[->] (stprev2) edge node[sloped,above] {$\stackreadingt{t-2}$} (htprev);
        \draw[->] (stprev2) edge (stprev);
        \draw[->] (xtprev) edge (htprev);
        \draw[->] (htprev) edge (ytprev);
        \draw[->] (htprev) edge node[sloped,above] {$\stackactionst{t-1}$} (stprev);
        \draw[->] (stprev) edge node[sloped,above] {$\stackreadingt{t-1}$} (ht);
        \draw[->] (htprev) edge (ht);
        \draw[->] (stprev) edge (st);
        \draw[->] (ht) edge (htnext);
        \draw[->] (xt) edge (ht);
        \draw[->] (ht) edge (yt);
        \draw[->] (ht) edge node[sloped,above] {$\stackactionst{t}$} (st);
        \draw[->] (st) edge node[sloped,above] {$\stackreadingt{t}$} (htnext);
        \draw[->] (st) edge (stnext);
        \draw[->] (xtnext) edge (htnext);
        \draw[->] (htnext) edge (ytnext);
        \draw[->] (htnext) edge node[sloped,above] {$\stackactionst{t+1}$} (stnext);
        \draw[->] (htnext) edge (htnext2);
        \node[anchor=east] at (-2.1,0.45) {output \big\{};
        \node[anchor=east] at (-2.1,0) {controller \big\{};
        \node[anchor=east] at (-2.1,-0.5) {stack \big\{};
        \node[anchor=east] at (-2.1,-0.8) {input \big\{};
    \end{tikzpicture}
    \caption[Conceptual diagram of the RNN controller-stack interface.]{Conceptual diagram of the RNN controller-stack interface, unrolled across a portion of time. The LSTM memory cell $\lstmmemorycellt{t}$ is not shown.}
    \label{fig:stack-rnn-diagram}
\end{figure*}

\section{Prior Stack RNNs}
\label{sec:prior-stack-rnns}

In this section, we discuss two previously proposed stack RNNs, each of which uses a different style of differentiable stack: stratification \citep{das-etal-1992-learning,sun-etal-1993-neural,grefenstette-etal-2015-learning} and superposition \citep{joulin-mikolov-2015-inferring}. We make minor changes to the original model definitions given by \citet{grefenstette-etal-2015-learning} and \citet{joulin-mikolov-2015-inferring} to ensure that all stack RNN models conform to the controller-stack interface of \cref{sec:controller-stack-interface}. This allows us to isolate differences in the style of stack data structure employed while keeping other parts of the network the same.

\subsection{Stratification}
\label{sec:stratification-stack}

Based on work by \citet{das-etal-1992-learning} and \citet{sun-etal-1993-neural}, the stack of \citet{grefenstette-etal-2015-learning} relies on a strategy we have dubbed \term{stratification}. The elements of the stack are vectors, each of which is associated with a ``thickness'' between 0 and 1, which represents the degree to which the vector element is present on the stack. A helpful analogy is that of layers of a cake; the stack elements are like cake layers of varying thickness. The stack reading is computed by examining the top slice of unit thickness and interpolating the vectors proportional to their thicknesses within that slice.

In this model, $\stackactionst{t} = (\stratpopt{t}, \stratpusht{t}, \pushedstackvectort{t})$, where the pop signal $\stratpopt{t} \in (0, 1)$ indicates the amount to be removed from the top of the stack, $\pushedstackvectort{t}$ is a learned vector to be pushed as a new element onto the stack, and the push signal $\stratpusht{t} \in (0, 1)$ is the thickness of that newly pushed vector. \Citet{grefenstette-etal-2015-learning} denote $\stratpopt{t}$ and $\stratpusht{t}$ as $\stratpoporigt{t}$ and $\stratpushorigt{t}$, respectively.

We implement the stratification stack of \citet{grefenstette-etal-2015-learning} with the following equations. In the original definition, the controller produces a hidden state $\rnnhiddent{t}$ and a separate output $\vecvar{o}'_t$ that is used to compute $\stackactionst{t}$ and $\rnnoutputt{t}$, but for simplicity and conformity to \cref{sec:controller-stack-interface}, we set $\vecvar{o}'_t = \rnnhiddent{t}$. Let $\stackvectorsizeletter = |\pushedstackvectort{t}|$ be the stack embedding size.
\begin{align*}
    \stackactionst{t} = \stackactionsfunc{\rnnhiddent{t}} &= (\stratpopt{t}, \stratpusht{t}, \pushedstackvectort{t}) \\
    \stratpopt{t} &= \logistic{\affine{push}{\rnnhiddent{t}}} \\
    \stratpusht{t} &= \logistic{\affine{pop}{\rnnhiddent{t}}} \\
    \pushedstackvectort{t} &= \tanh(\affine{v}{\rnnhiddent{t}}) \\
    \stackobjectt{t} = \stackobjectfunc{\stackobjectt{t-1}}{\stackactionst{t}} &= (\stratstacktensort{t}, \stratstrengthtensort{t}) \\
    \stratstackvector{t}{i} &= \begin{cases}
        \stratstackvector{t-1}{i} & 1 \leq i < t \\
        \pushedstackvectort{t} & i = t
    \end{cases} \\
    \stratstrength{t}{i} &= \begin{cases}
        \max(0, \stratstrength{t-1}{i} - \max(0, \stratpopt{t} - \displaystyle \sum_{j=i+1}^{t-1} \stratstrength{t-1}{j})) & 1 \leq i < t \\
        \stratpusht{t} & i = t
    \end{cases} \\
    \stratstacktensort{0} &\text{ is a $0 \times m$ matrix} \\
    \stratstrengthtensort{0} &\text{ is a vector of size 0} \\
    \stackreadingt{t} = \stackreadingfunc{\stackobjectt{t}} &= \sum_{i=1}^t (\min(\stratstrength{t}{i}, \max(0, 1 - \sum_{j=i+1}^t \stratstrength{t}{j}))) \; \stratstackvector{t}{i}
\end{align*}

This stack has quadratic time and space complexity with respect to input length (if only the forward pass is needed, the space complexity is linear). This model affords less opportunity for parallelization than the superposition stack (\cref{sec:superposition-stack}) because of the interdependence of stack elements within the same timestep.

Note that this model relies on $\min$ and $\max$ operations, which are sub-differentiable and can have gradients equal to zero. This can make training difficult, because depending on the state of the stack, their inputs might receive zero gradient and make no progress toward a better solution. In practice, the model can get trapped in local optima and requires random restarts \citep{hao-etal-2018-context}.

\Citet{yogatama-etal-2018-memory} noted that the stratification stack can implement multiple pops per timestep by allowing $\stratpopt{t} > 1$, although the push action immediately following would still be conditioned on the previous stack top $\stackreadingt{t}$. \Citet{hao-etal-2018-context} augmented this model with differentiable queues that allow it to buffer input and output and act as a transducer. \Citet{merrill-etal-2019-finding} experimented with variations of this model where $\stratpopt{t} = 1$ and $\stratpusht{t} \in (0, 4)$, $\stratpusht{t} = 1$ and $\stratpopt{t} \in (0, 4)$, and $\stratpopt{t} \in (0, 4)$ and $\stratpusht{t} \in (0, 1)$.

\subsection{Superposition}
\label{sec:superposition-stack}

The differentiable stack of \citet{joulin-mikolov-2015-inferring} uses a strategy we have dubbed \term{superposition}. It simulates a combination of partial stack actions by computing three new, separate stacks: one with all cells shifted down (push), kept the same (no-op), and shifted up (pop). The new stack is an element-wise interpolation (``superposition'') of these three stacks. Another way of viewing this is that each element is the weighted interpolation of the elements above, at, and below it at the previous timestep, weighted by push, no-op, and pop probabilities respectively. Stack elements are vectors, and $\stackactionst{t} = (\supprobst{t}, \pushedstackvectort{t})$, where the vector $\supprobst{t}$ is a probability distribution over the three stack operations. The push operation pushes vector $\pushedstackvectort{t}$, which can be learned or set to $\rnnhiddent{t}$. The stack reading is the top vector element.

This model has quadratic time and space complexity with respect to input length (if only the forward pass is needed, the space complexity is linear). Because each stack element depends only on a constant number of elements (three) from the previous timestep, the stack update can largely be parallelized.

We implement the superposition stack of \citet{joulin-mikolov-2015-inferring} with the following equations. We deviate slightly from the original definition by adding the bias terms~$\biasparam{a}$ and~$\biasparam{v}$. The original definition also connects the controller to multiple stacks that push \emph{scalars}; instead, we push a vector to a single stack, which is equivalent to multiple scalar stacks whose push/pop actions are synchronized. The original definition includes the top $k$ stack elements in the stack reading, but we only include the top element. We also treat the value of the bottom of the stack as 0 instead of $-1$.
\begin{align*}
    \stackactionst{t} = \stackactionsfunc{\rnnhiddent{t}} &= (\supprobst{t}, \pushedstackvectort{t}) \\
    \supprobst{t} &= \begin{bmatrix}
        \suppusht{t} \\
        \supnoopt{t} \\
        \suppopt{t}
    \end{bmatrix} = \softmax(\affine{a}{\rnnhiddent{t}}) \\
    \pushedstackvectort{t} &= \logistic{\affine{v}{\rnnhiddent{t}}} \\
    \stackobjectfunc{\stackobjectt{t-1}}{\stackactionst{t}} &= \supstacktensort{t} \\
    \supstackvector{t}{i} &= \begin{cases}
        \pushedstackvectort{t+1} & i = 0 \\
        \suppusht{t} \supstackvector{t-1}{i-1} + \supnoopt{t} \supstackvector{t-1}{i} + \suppopt{t} \supstackvector{t-1}{i+1} & 0 < i \leq t \\
        \veczero & i > t
    \end{cases} \\
    \stackreadingt{t} = \stackreadingfunc{\stackobjectt{t}} &= \supstackvector{t}{1}
\end{align*}

\Citet{yogatama-etal-2018-memory} developed an extension to this model called the Multipop Adaptive Computation Stack that executes a variable number of pops per timestep, up to a fixed limit $K$. They also restricted the stack to a maximum size of 10 elements, where the bottom element of a full stack is discarded when a new element is pushed; in other words, $\supstackvector{t}{i} = \veczero$ for $i > K$. \Citet{suzgun-etal-2019-memory} experimented with a modification of the parameterization of $\supprobst{t}$ and different softmax operators for normalizing the weights used to compute $\supprobst{t}$. \Citet{stogin-etal-2020-provably} proved that any real-time DPDA can be converted to a variant of the superposition stack RNN. \Citet{deletang-etal-2023-neural} tested the superposition stack RNN on a variety of string transduction tasks, with an emphasis on generalization to strings longer than those seen in training.

\section{Other Related Work}

\citet{shen-etal-2019-ordered-neurons} proposed another differentiable stack called the ordered neurons LSTM (ON-LSTM). This model modifies the gating mechanism of an LSTM so that neurons are activated in LIFO order. However, the stack is limited to a fixed depth, so it is not expected to handle strings of arbitrary length in a CFL.

Past work has proposed incorporating other data structures specialized for hierarchical patterns into neural networks, including context-free grammars \citep{kim-etal-2019-compound,kim-etal-2019-unsupervised,kim-2021-sequence}, trees \citep{tai-etal-2015-improved,zhu-etal-2015-long,kim-etal-2017-structured,choi-etal-2018-learning,havrylov-etal-2019-cooperative,corro-titov-2019-learning,xu-etal-2021-improved}, chart parsers \citep{le-zuidema-2015-forest,maillard-etal-2017-jointly,drozdov-etal-2019-unsupervised,maveli-cohen-2022-co}, and transition-based parsers \citep{dyer-etal-2015-transition,bowman-etal-2016-fast,dyer-etal-2016-recurrent,shen-etal-2019-ordered-memory}.

\chapter{The Nondeterministic Stack RNN}
\label{chap:ns-rnn}

This chapter presents the main contribution of this dissertation: a method of incorporating nondeterministic stacks into neural networks as a differentiable module. We show how to incorporate it into the stack RNN architecture discussed in \cref{sec:controller-stack-interface}, and we call the resulting model the \term{Nondeterministic Stack RNN (NS-RNN)}.

\section{Motivation}
\label{sec:ns-rnn-motivation}

In both types of stack RNN described in \cref{sec:prior-stack-rnns}, the stack is essentially deterministic in design. Even if they represent a mixture of the results of different stack actions, they represent one version of the stack's contents at a time. In order to model strings in a nondeterministic CFL like $\{w \reverse{w} \mid w \in \{\sym{0}, \sym{1}\}^\ast \}$ scanning from left to right, it must be possible, at each timestep, for the stack to track all prefixes of the input string read so far. None of the foregoing models, to our knowledge, can represent a set of possibilities like this.

In natural language, a sentence's syntactic structure often cannot be fully resolved until its conclusion (if ever), requiring a human listener to track multiple possibilities while hearing the sentence. Past work in psycholinguistics has suggested that models that keep multiple candidate parses in memory at once can explain human reading times better than models which assume harsher computational constraints. This ability also plays an important role in calculating expectations that facilitate more efficient language processing \citep{levy-2008-expectation}. Prior RNNs do not track multiple parses, if they learn syntax generalizations at all \citep{futrell-etal-2019-neural,wilcox-etal-2019-hierarchical,mccoy-etal-2020-does}.

Therefore, we propose a new differentiable stack data structure that explicitly models a nondeterministic stack\dash{}more precisely, a nondeterministic WPDA complete with its own finite state control. We implement this efficiently using a dynamic programming algorithm for PDAs by \citet{lang-1974-deterministic}, which we reformulate in terms of differentiable tensor operations. The algorithm is able to represent an exponential number of stack configurations at once using cubic time and quadratic space complexity. As with existing stack RNN architectures, we combine this differentiable data structure with an RNN controller, and we call the resulting model a \term{Nondeterministic Stack RNN (NS-RNN)}.

We predict that nondeterminism can help language processing in two ways. First, it should improve expressivity, as a nondeterministic stack is able to model concurrent parses in ways that a deterministic stack cannot. This is important linguistically because natural language is high in syntactic ambiguity, and mathematically because CFLs form a larger language class than DCFLs.

Second, nondeterminism should improve trainability, even for DCFLs, because \emph{all} possible sequences of stack operations will contribute to the loss function, and the gradient will reward runs according to the degree to which they provide useful information in the stack reading. Intuitively, in order for a neural network to receive a reward for an action, it must try the action (that is, give it nonzero weight so that it receives gradient during backpropagation). For example, in the digit-recognition task, a classifier tries all ten digits, and is rewarded for the correct one. In a stack-augmented model, however, the space of possible action sequences is very large. Whereas a deterministic stack can only try one of them, a nondeterministic stack can try all of them and always receives a reward for the best ones. Contrast this with the way a deterministic differentiable stack is trained: at each timestep, training can only update the model from the vantage point of a single stack configuration, making the model prone to getting stuck in local minima. In \cref{chap:learning-cfls}, we will demonstrate these claims by comparing the NS-RNN to deterministic stack RNNs on formal language modeling tasks of varying complexity.

\section{Model Definition}
\label{sec:ns-rnn-definition}

The Nondeterministic Stack RNN (NS-RNN) follows the architecture of \cref{sec:controller-stack-interface}, consisting of an RNN controller connected to a differentiable data structure that simulates a WPDA (\cref{sec:wpda}). We call this differentiable data structure a \term{differentiable WPDA}.

It receives, at each timestep $t$, a tensor $\nstranst{t}$ of non-negative weights that specify the transition weights used at timestep $t$. The stack actions $\stackactionst{t}$ are $\nstranst{t}$. We denote the element of $\nstranst{t}$ that corresponds to the weight of transition $\tau = \pdatrans{q}{w_t}{u}{r}{v}$ as $\nstransweight{q}{t}{u}{r}{v}$ or $\nstranst{t}[\tau]$. Note that unlike a WPDA (\cref{def:wpda}), the transition weights of a differentiable WPDA can \emph{change} from timestep to timestep. They also do not depend directly on an input symbol $a \in \Sigma$, but because they can change for each $t$, they can still be switched on the value of $w_t$ just like a WPDA. Similarly to \cref{def:wfa-run-weight}, the values of $\nstranst{1}, \ldots, \nstranst{t}$ determine the weight of each run $\pi$, using $\nstranst{t}[\tau]$ instead of $\weightoftrans{M}{\tau}$ for each transition $\tau$.
\begin{equation}
    \wpdarunweight{\pi} = \prod_{i=1}^m \nstranst{i}[\tau_i] \quad \text{where } \pi = \tau_1, \ldots, \tau_m
\end{equation}

At each timestep, the differentiable WPDA produces a stack reading vector $\stackreadingt{t} \in \realset^{|\Gamma|}$ that depends on $\nstranst{i}$ for all $i = 1, \ldots, t$. The stack reading is the marginal distribution of top stack symbols over all possible runs ending at timestep $t$, and so summarizes all of its nondeterministic branches. Let $\pdarunendsin{\pi}{t}{r}{y}$ mean that run $\pi$ ends at timestep $t$ in state $r$ with top stack symbol $y$.
\begin{equation}
    \nsstackreadingnostate{t}{y} = \frac{
        \sum_{r' \in Q} \sum_{\pdarunendsin{\pi}{t}{r'}{y}} \wpdarunweight{\pi}
    }{
        \sum_{y' \in \Gamma} \sum_{r' \in Q} \sum_{\pdarunendsin{\pi}{t}{r'}{y'}} \wpdarunweight{\pi}
    }
    \label{eq:ns-rnn-reading-definition}
\end{equation}

Note that $\stackreadingt{t}$ is differentiable with respect to $\nstranst{1}, \ldots, \nstranst{t}$, as desired. Although \cref{eq:ns-rnn-reading-definition} appears to require a summation over an exponential number of WPDA runs, we will show in \cref{sec:ns-rnn-langs-algorithm} that it can be computed in polynomial time.

Computing \cref{eq:ns-rnn-reading-definition} under the general definition of WPDA (\cref{def:wpda}), however, presents a number of challenges.
\begin{enumerate}
    \item Since $u, v \in \Gamma^\ast$, and $\Gamma^\ast$ is infinite, $\nstranst{t}$ would need to store an infinite number of weights if it is to be learned automatically. Thus, the WPDA's ability to pop or push arbitrary numbers of symbols per transition is problematic.
    \item If we wish to make the WPDA \term{probabilistic}, it is difficult to normalize weights conditioned on $(q, u)$ when the length of $u$ is unbounded.
    \item A WPDA can execute any number of non-scanning transitions in a row, but the stack RNN architecture reads exactly one symbol per timestep. When non-scanning transitions form a cycle, they produce an infinite number of runs to be summed over. It is possible to compute this infinite summation using matrix inversions \citep{stolcke-1995-efficient}, but the matrix inversion operator could destabilize training because of numerical stability issues when computing its gradient.
    \item If a run has a stack of $\emptystring$, it has no top symbol, so it cannot contribute to \cref{eq:ns-rnn-reading-definition}, and there is no use in allowing such runs.
\end{enumerate}

In order to work around these problems, we define a normal form for PDAs called \term{restricted form}.

\begin{definition}
\label{def:restricted-pda}
A \defterm{restricted PDA} is a PDA whose transitions have one of the following forms, where $q, r \in Q$, $a \in \Sigma$, and $x, y \in \Gamma$:
\begin{align*}
    &\pdatrans{q}{a}{x}{r}{xy} && \text{push $y$ on top of $x$} \\
    &\pdatrans{q}{a}{x}{r}{y} && \text{replace $x$ with $y$} \\
    &\pdatrans{q}{a}{x}{r}{\emptystring} && \text{pop $x$.}
\end{align*}
Additionally, $\Gamma$ contains a designated \term{bottom symbol} $\bot$. The initial configuration is $\pdaconfig{0}{q_0}{\bot}$, and the PDA accepts when the final state is in $F$ \emph{and} the top stack symbol is $\bot$. We say that a PDA that has such a bottom symbol is \term{bottom-marked}.
\end{definition}
Note that according to this definition, the initial $\bot$ symbol may be replaced, and the $\bot$ symbol type may be used freely elsewhere in the stack, e.g.\ a stack of $\sym{a} \sym{b} \bot \sym{a} \bot \sym{b}$ is possible. Thanks to the presence of $\bot$ in the initial configuration, $\stackreadingt{0}$ has a value, namely $\nsstackreadingnostate{0}{y} = \indicator{y = \bot}$. If the initial stack were $\emptystring$, no runs at $t = 0$ would have a top symbol, and the denominator of \cref{eq:ns-rnn-reading-definition} would be 0.

Transitions are now limited to changing the stack size by at most 1. Let $\nsactionset{x} = \{x\}\Gamma \cup \Gamma \cup \{\emptystring\}$. Since transition weights are now restricted to $\nstransweight{q}{t}{x}{r}{v}$ where $v \in \nsactionset{x}$, $\nstranst{t}$ can be represented as a dense tensor of size $|Q| \times |\Gamma| \times |Q| \times (2 |\Gamma| + 1)$. Note that a restricted PDA can perform no-ops on the stack with transitions of the form $\pdatrans{q}{a}{x}{r}{x}$. All of the PDA's transitions are scanning, so restricted PDAs are real-time. As we will show in \cref{sec:restricted-pdas-recognize-all-cfls}, this does not reduce their recognition power.

The differentiable WPDA is based upon the weighted version of restricted PDAs.
\begin{definition}
A \defterm{restricted WPDA} is a WPDA under the same restrictions described in \cref{def:restricted-pda}.
\end{definition}

In the NS-RNN, the WPDA is probabilistic. Since all transitions are conditioned on exactly one stack symbol $x$, it is easy to make the weights of $\nstranst{t}$ probabilistic by normalizing them for each $(q, x) \in Q \times \Gamma$.
\begin{definition}
A \defterm{probabilistic restricted WPDA} is a restricted WPDA where, for all $q \in Q, a \in \Sigma, x \in \Gamma$,
\begin{equation}
    \sum_{r \in Q} \sum_{v \in \nsactionset{x}} \wpdaweight{q}{a}{x}{r}{v} = 1.
\end{equation}
\end{definition}
Whereas many definitions of a probabilistic PDA make the model generate symbols \citep{abney-etal-1999-relating}, our definition makes the PDA operations conditional on the input symbol $a$, as well as the top stack symbol $x$.

The NS-RNN computes $\nstranst{t}$ from the controller's hidden state and normalizes the weights using a $\softmax$.
\begin{equation}
    \stackactionsfunc{\rnnhiddent{t}} = \nstranst{t} = \softmaxover{q, x}(\affine{a}{\rnnhiddent{t}})
    \label{eq:ns-rnn-action-weights}
\end{equation}

\section{Equivalence of PDAs and Restricted PDAs}
\label{sec:restricted-pdas-recognize-all-cfls}

In this section, we prove that despite their restrictions, restricted PDAs lose no recognition power compared to PDAs.

\begin{proposition}
\label{thm:restricted-pdas-recognize-all-cfls}
For any PDA $P$, there is an equivalent restricted PDA.
\end{proposition}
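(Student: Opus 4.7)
The plan is to route through context-free grammars. First, convert the given PDA $P$ into an equivalent CFG $G$ by the standard triple construction, so $\langof{G} = \langof{P}$. Then convert $G$ into an equivalent grammar $G'$ in quadratic Greibach normal form (also called 2-GNF), a known refinement of GNF in which every rule has the form $S \to \emptystring$, $A \to a$, $A \to aB$, or $A \to aBC$ with $a \in \Sigma$ and $A, B, C \in V$. Since each $G'$-derivation step introduces at most two new variables, a top-down parse of $G'$ can be simulated by a restricted PDA that keeps one of those two variables in the finite state (as a ``virtual'' top symbol) and the other on the stack.

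Concretely, I would set the stack alphabet of $P'$ to $V \cup \{\bot\}$ and its state set to $\{q_0\} \cup \{q_X \mid X \in V\}$, where $q_X$ means that $X$ is a virtual top symbol sitting immediately above the physical top $z$. Applying a rule whose left side matches the virtual symbol becomes, respectively, $\pdatrans{q_X}{a}{z}{q_0}{z}$ for $X \to a$ (a no-op that clears the virtual symbol), $\pdatrans{q_X}{a}{z}{q_Y}{z}$ for $X \to aY$ (swap the virtual), and $\pdatrans{q_X}{a}{z}{q_Y}{zZ}$ for $X \to aYZ$ (push $Z$ onto the stack while $Y$ becomes the new virtual top), each defined uniformly for every $z \in \Gamma$; the last of these is precisely the push form permitted by a restricted transition. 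When instead there is no virtual symbol (state $q_0$) and the physical top is $X$, the same three rule shapes become a pop, a replace, and a replace-with-new-virtual $\pdatrans{q_0}{a}{X}{q_Y}{Z}$, respectively. The first move is a rule for $S$ fired from state $q_0$ with $\bot$ on top, so $\bot$ plays the double role of grammar-start marker and required stack bottom; a straightforward induction on leftmost $G'$-derivations shows that $P'$ accepts $w$ iff $S \Rightarrow^* w$ in $G'$, yielding $\langof{P'} = \langof{G'} = \langof{G} = \langof{P}$.

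The main obstacle is the reliance on 2-GNF. Plain GNF permits rules of the form $A \to a B_1 \cdots B_p$ with unbounded $p$, and such a rewrite cannot be compressed into one restricted transition without $\emptystring$-moves, which would take the machine out of real time; it is therefore essential to invoke the refinement bounding $p \leq 2$. Beyond this, the bookkeeping is routine but must be done carefully: the $z$ appearing in the $q_X$-transitions genuinely ranges over all of $\Gamma$, including $z = \bot$ (which can reappear as the physical top once all variables above it have been consumed), and the accept condition must be checked to correspond exactly to both an empty virtual stack (state $q_0$) and an empty above-$\bot$ portion of the physical stack. The edge case $\emptystring \in \langof{P}$ is handled by declaring $q_0 \in F$ exactly when $S \to \emptystring$ is a rule of $G'$, with a minor tweak if one needs to segregate acceptance on the empty input from acceptance after a nontrivial derivation.
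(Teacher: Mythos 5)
Your proof is essentially correct, but it takes a genuinely different route from the paper's. The paper also goes through a CFG, but its ``2-GNF'' is different from yours: it requires \emph{two leading terminals} per rule ($A \rightarrow ab B_1 \cdots B_p$, obtained from ordinary GNF by one substitution step), which buys two scanned symbols per rule application --- enough to split the problematic ``replace the top symbol and then push'' into a replace transition followed by a push transition. It then eliminates multi-symbol pushes by enlarging the stack alphabet to $\Gamma \cup \Gamma^2 \cup \cdots \cup \Gamma^k$. You instead invoke the quadratic (two-nonterminal) Greibach form and cache the leftmost pending nonterminal in the finite state, so that every rule application is a single legal push, replace, or pop. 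Your construction avoids the paper's exponential-looking alphabet blowup and keeps the stack alphabet at $V \cup \{\bot\}$, at the cost of relying on the two-nonterminal bound on GNF, which is a real theorem but one you would need to cite or prove (the paper proves its own normal-form lemma from scratch).

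One concrete flaw to fix: you let $q_0$ serve simultaneously as the start state, the ``no virtual symbol'' state, and (via $F$) the accept state. Because the configuration $(q_0, \bot)$ is re-entered exactly when a derivation completes, and the $S$-rules fire from that same configuration, a run that finishes deriving a prefix in $\langof{G'}$ with input remaining can simply restart; your machine therefore accepts $L^{+}$ rather than $L$ (plus $\emptystring$ when $q_0 \in F$). The remedy is the one the paper uses: keep a dedicated start state with no incoming transitions (its $q_0$ versus $q_{\mathrm{loop}}$), so the $S$-expansion can happen only once, and route ``derivation complete'' into a separate accepting configuration. Your closing remark about a ``minor tweak'' gestures only at the $\emptystring$ edge case, not at this restart issue, so it should be called out and repaired explicitly; once it is, the induction on leftmost derivations goes through.
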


\begin{proof}[Proof sketch]
We know that for every PDA $P$, there is an equivalent CFL, and for every CFL, there is a CFG $G$ in Greibach normal form (GNF) that generates it. We show how to convert $G$ to a special form called 2-GNF, which we use to construct a PDA like \cref{def:restricted-pda}, except transitions can push multiple symbols at once. We convert this to a PDA that pushes at most one symbol by increasing the stack alphabet size.
\end{proof}

The usual construction for removing non-scanning transitions \citep{autebert-etal-1997-context} involves converting to GNF and then converting to a PDA. However, this construction produces transitions of the form $\pdatrans{q}{a}{x}{r}{zy}$ where $x \not= z$, which our restricted form does not allow. Simulating such transitions in a restricted PDA presents a challenge because it requires performing a replace and then a push while scanning only one symbol. To simulate such transitions, we need to use a modified GNF, defined below.

\begin{lemma}
\label{thm:2gnf}
For any CFG $G$, there is a CFG equivalent to $G$ that has the following form (called \term{2-Greibach normal form}):
\begin{itemize}
    \item The start symbol $S$ does not appear on any right-hand side.
    \item Every rule has one of the following forms:
    \begin{align*}
        S &\rightarrow \emptystring \\
        A &\rightarrow a \\
        A &\rightarrow abB_1 \cdots B_p & p \ge 0.
    \end{align*}
\end{itemize}
\end{lemma}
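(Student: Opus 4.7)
The plan is to start from Greibach normal form (GNF), whose existence the paper already states, and then substitute one additional level of left-expansion so that every rule which introduces nonterminals begins with two terminals instead of one. First I would convert $G$ to an equivalent GNF grammar $G_0 = (V, \Sigma, R_0, S)$, in which every rule has the form $S \rightarrow \emptystring$ or $A \rightarrow a B_1 \cdots B_p$ with $p \geq 0$, and $S$ does not appear on any right-hand side. I would also assume without loss of generality that every nonterminal in $V$ is useful (generates at least one terminal string), since useless nonterminals can always be pruned.

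Next I would build $G' = (V, \Sigma, R', S)$ by keeping $S \rightarrow \emptystring$ (if present) and keeping every GNF rule of the form $A \rightarrow a$ (the $p = 0$ case) verbatim, since those already fit 2-GNF. For every GNF rule $A \rightarrow a B_1 B_2 \cdots B_p$ with $p \geq 1$, and every GNF rule $B_1 \rightarrow b C_1 \cdots C_q$, I would add the combined rule $A \rightarrow a b C_1 \cdots C_q B_2 \cdots B_p$ to $R'$ and discard the original rule $A \rightarrow a B_1 \cdots B_p$. Each newly added rule begins with two terminals followed by $q + (p-1) \geq 0$ nonterminals, matching the 2-GNF template $A \rightarrow ab B_1 \cdots B_p$ with $p \geq 0$. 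The start-symbol condition is preserved because every nonterminal appearing on a right-hand side of $G'$ already appeared on a right-hand side of $G_0$, and $S$ did not.

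To verify $\langof{G'} = \langof{G_0}$, I would exhibit a correspondence between leftmost derivations in the two grammars. In any leftmost derivation in $G_0$, a step applying $A \rightarrow a B_1 \cdots B_p$ with $p \geq 1$ makes $B_1$ the new leftmost nonterminal, so the next step must expand $B_1$ using some $B_1 \rightarrow b C_1 \cdots C_q$; this pair of $G_0$ steps collapses into a single $G'$ step using $A \rightarrow ab C_1 \cdots C_q B_2 \cdots B_p$. Conversely, each $G'$ step using such a combined rule can be split back into the original two $G_0$ steps. Steps using $A \rightarrow a$ or $S \rightarrow \emptystring$ carry over unchanged. Pairing the steps greedily from the start of the derivation shows that $G_0$-derivations of a string $w$ translate into $G'$-derivations of the same $w$ and vice versa.

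The main obstacle is the routine but slightly fussy verification that every step with $p \geq 1$ in a leftmost derivation is in fact followed by another step, so that the pairing is exhaustive and never leaves a dangling $p \geq 1$ step at the end. This holds because a leftmost derivation terminates only when no nonterminals remain; hence the freshly introduced $B_1$ must eventually, and in fact immediately, be expanded. The remainder of the argument is a direct instance of the standard substitution lemma for context-free grammars.
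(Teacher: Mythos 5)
Your proposal is correct and follows essentially the same route as the paper: convert to Greibach normal form, then substitute each rule for the leading nonterminal $B_1$ into every rule $A \rightarrow a B_1 \cdots B_p$ with $p \geq 1$ and discard the original, which is exactly the paper's construction. Your additional verification via the leftmost-derivation correspondence is a sound (and more explicit) justification of the substitution step that the paper leaves implicit.
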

\begin{proof}
Convert $G$ to Greibach normal form. Then for every rule $A \rightarrow aA_1 \cdots A_m$ and every rule $A_1 \rightarrow b B_1 \cdots B_\ell$, substitute the second rule into the first to obtain $A \rightarrow ab B_1 \cdots B_\ell A_2 \cdots A_m$. Then discard the first rule.
\end{proof}

\begin{lemma}
For any CFG $G$ in 2-GNF, there is a bottom-marked PDA equivalent to $G$ whose transitions all have one of the forms:
\begin{equation}
    \begin{aligned}
        &\pdatrans{q}{a}{x}{r}{x y_1 \cdots y_k} \\
        &\pdatrans{q}{a}{x}{r}{y} \\
        &\pdatrans{q}{a}{x}{r}{\emptystring}.
    \end{aligned}
    \label{eq:almost-restricted-pda}
\end{equation}
\end{lemma}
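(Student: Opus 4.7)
The plan is to build a top-down parsing PDA $P$ for $G$ in which the application of a single 2-GNF rule $A \rightarrow ab B_1 \cdots B_p$ is simulated by exactly two transitions, one for each of the two leading terminals. I will take the stack alphabet to be $\Gamma = V \cup \{\bot\}$, the state set to consist of a working state $q_\text{main}$, an initial state $q_\text{init}$ (in which the initial $\bot$ tacitly stands for ``unexpanded $S$''), and a mid-rule state $q_{A,a}$ for each $(A,a) \in V \times \Sigma$ at which some rule $A \rightarrow ab\cdots$ starts, and I will set $F = \{q_\text{main}\}$ together with $q_\text{init}$ if and only if $S \rightarrow \emptystring \in R$. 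The target invariant is that, whenever $P$ is in $q_\text{main}$ after scanning prefix $u$, its stack reads $\bot$ followed by the pending sentential form $\alpha$ written in reverse, where $S \Rightarrow^* u \alpha$ is a partial leftmost derivation of $G$ and $\alpha \in V^\ast$ (which is automatic in 2-GNF, since every rule begins with a terminal).

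For each rule $A \rightarrow ab B_1 \cdots B_p$ with $A \neq S$, I will add a pop transition $\pdatrans{q_\text{main}}{a}{A}{q_{A,a}}{\emptystring}$ that scans $a$ and removes $A$, together with, for every stack symbol $z \in \Gamma$ that might now be exposed underneath, a keep-and-push transition $\pdatrans{q_{A,a}}{b}{z}{q_\text{main}}{z B_p B_{p-1} \cdots B_1}$ that scans $b$ and installs the right-hand side with $B_1$ on top (using $k = 0$ when $p = 0$). Rules $A \rightarrow a$ with $A \neq S$ reduce to the single pop $\pdatrans{q_\text{main}}{a}{A}{q_\text{main}}{\emptystring}$. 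Rules for $S$ must be simulated without removing $\bot$: $S \rightarrow a$ becomes $\pdatrans{q_\text{init}}{a}{\bot}{q_\text{main}}{\bot}$, and $S \rightarrow ab B_1 \cdots B_p$ becomes $\pdatrans{q_\text{init}}{a}{\bot}{q_{S,a}}{\bot}$ followed by $\pdatrans{q_{S,a}}{b}{\bot}{q_\text{main}}{\bot B_p \cdots B_1}$; the rule $S \rightarrow \emptystring$ is absorbed into the choice of $F$.

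After specifying the construction I will verify, by inspection, that every introduced transition has one of the three forms in \eqref{eq:almost-restricted-pda} (in fact only the pop and keep-and-push forms appear; the replace form is unused), and then prove $\langof{P} = \langof{G}$ in both directions. For $(\supseteq)$ I will show, by induction on the depth of a leftmost derivation $A \Rightarrow^* x$, that $P$ admits a run from $\pdaconfig{i}{q_\text{main}}{\beta A}$ to $\pdaconfig{i+|x|}{q_\text{main}}{\beta}$ that scans exactly $x$; combining this with the special $S$-rule transitions yields an accepting run on any $w \in \langof{G}$. For $(\subseteq)$ I will decompose an accepting run of $P$ into one- and two-transition chunks, observe that the alternation between $q_\text{main}$ and the mid-rule states $q_{A,a}$ forces each chunk to correspond to a unique rule of $G$, and thereby extract a leftmost derivation of $w$.

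The hard part will be the mismatch between the single-stack-effect restriction of the intermediate form and the pop-plus-multi-push behavior that a grammar rule conceptually demands; the 2-GNF normal form from \cref{thm:2gnf} is exactly what closes this gap, since its two leading terminals supply the two transitions needed to split the pop from the push, at the cost of introducing $|\Gamma|$ parallel variants of the second transition so that the exposed stack symbol $z$ can be ``rediscovered'' after the pop. A secondary annoyance is the asymmetric handling of $S$, because $\bot$ must never be popped; I will dispose of this with a parallel copy of the rule-simulation pattern out of $q_\text{init}$ and a small case analysis on whether $S \rightarrow \emptystring \in R$ for the empty-string input.
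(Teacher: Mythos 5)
Your construction is correct and is essentially the paper's: a top-down parser in which each 2-GNF rule $A \rightarrow ab B_1 \cdots B_p$ is simulated by two scanning transitions, one per leading terminal, precisely because the restricted forms cannot pop $A$ and push $B_p \cdots B_1$ in a single step. Two implementation choices differ, and the comparison is mildly instructive. First, for the second transition of a rule with $p \geq 1$, the paper \emph{replaces} $A$ with $B_p$ on the first step ($\pdatrans{q_{\mathrm{loop}}}{a}{A}{q}{B_p}$) so that the top symbol after step one is known, and then needs only a single push transition $\pdatrans{q}{b}{B_p}{q_{\mathrm{loop}}}{B_p B_{p-1} \cdots B_1}$; you instead \emph{pop} $A$ first and then enumerate every symbol $z \in \Gamma$ that could be exposed, producing $|\Gamma|$ parallel push transitions per rule. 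Both are sound (your nondeterministic ``crosstalk'' through a shared state $q_{A,a}$ only ever realizes some genuine rule of $G$, so it is harmless), but the paper's replace trick avoids the $|\Gamma|$-fold blowup and handles $p=0$ via replace-then-pop rather than your uniform push-of-zero-symbols. Second, the paper temporarily introduces a non-scanning transition $\pdatrans{q_0}{\emptystring}{\bot}{q_{\mathrm{loop}}}{\bot S}$ and then eliminates it by composing it with each first transition on $S$, whereas you avoid non-scanning transitions entirely by letting the start state itself encode ``unexpanded $S$ above $\bot$''; since $S$ never appears on a right-hand side in 2-GNF, both devices work, and yours is arguably tidier. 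Your outline of the two-directional language-equivalence argument (induction on leftmost derivations one way, chunking the run by its alternation between $q_\text{main}$ and the mid-rule states the other way) is the standard one and would go through.
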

\begin{proof}
We split the rules into four cases:
\begin{align*}
    S &\rightarrow \emptystring \\
    A &\rightarrow a \\
    A &\rightarrow ab \\
    A &\rightarrow a b B_1 \cdots B_p \qquad p \geq 1.
\end{align*}
The PDA has an initial state $q_0$ and a main loop state $q_{\mathrm{loop}}$. It works by maintaining all of the unclosed constituents on the stack, which initially is $S$. The state $q_{\mathrm{loop}}$ is an accept state. For now, we allow the PDA to have one non-scanning transition, $\pdatrans{q_0}{\emptystring}{\bot}{q_{\mathrm{loop}}}{\bot S}$.

If $G$ has the rule $S \rightarrow \emptystring$, we make state $q_0$ an accept state.

For each rule in $G$ of the form $A \rightarrow a$, we add a pop transition $\pdatrans{q_{\mathrm{loop}}}{a}{A}{q_{\mathrm{loop}}}{\emptystring}$.

For each rule in $G$ of the form $A \rightarrow a b$, we add a new state $q$, a replace transition $\pdatrans{q_{\mathrm{loop}}}{a}{A}{q}{A}$, and a pop transition $\pdatrans{q}{b}{A}{q_\text{loop}}{\emptystring}$. This simply scans two symbols while popping $A$.

For each rule in $G$ of the form $A \rightarrow a b B_1 \cdots B_p$ where $p \geq 1$, we add a new state $q$, a replace transition $\pdatrans{q_{\mathrm{loop}}}{a}{A}{q}{B_p}$, and a push transition $\pdatrans{q}{b}{B_p}{q_{\mathrm{loop}}}{B_p B_{p-1} \cdots B_1}$. These two transitions are equivalent to scanning two symbols while replacing $A$ with $B_p B_{p-1} \cdots B_1$ on the stack. Note that we have taken advantage of the fact that 2-GNF affords us \emph{two} scanned input symbols to work around the restriction that push transitions of the form $\pdatrans q a x r {xy_1 \cdots y_k}$ cannot modify the top symbol and push new symbols in the same step. We have split this action into a replace transition followed by a push transition, using the state machine to remember what to scan and push after the replace transition.

Finally, we remove the non-scanning transition $\pdatrans{q_0}{\emptystring}{\bot}{q_{\mathrm{loop}}}{\bot S}$, and for every transition $\pdatrans{q_{\mathrm{loop}}}{a}{S}{r}{\alpha}$, we add a push transition $\pdatrans{q_0}{a}{\bot}{r}{\bot \alpha}$. Now all transitions are scanning.
\end{proof}

\begin{lemma}
For any PDA $P$ in the form (\ref{eq:almost-restricted-pda}), there is a PDA equivalent to $P$ in restricted form.
\end{lemma}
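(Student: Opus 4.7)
The plan is to remove multi-symbol pushes by enlarging the stack alphabet to include ``compound'' symbols that represent the as-yet-unexposed portion of a multi-symbol push. A transition $\pdatrans{q}{a}{x}{r}{xy_1\cdots y_k}$ with $k \geq 2$ will be simulated in the restricted PDA by the single push $\pdatrans{q}{a}{x}{r}{x\,[y_1\cdots y_k]}$, where $[y_1\cdots y_k]$ is one new stack symbol that conceptually stands for the block $y_1\cdots y_k$ with $y_k$ on top. Because only the top of a compound is ever observed or modified, the hidden symbols inside a compound can stay ``packed.''

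First I would define the new alphabet $\Gamma' = \Gamma \cup \{\,[\alpha z] : z \in \Gamma,\ \alpha \in \Gamma^*,\ |\alpha| \geq 1,\ \alpha \text{ is a prefix of some pushed sequence in } \delta\,\}$, with the convention that $[\,z\,] = z$. Since $\delta$ is finite and the lengths of pushed sequences are bounded by some $K$, $\Gamma'$ is finite. Next I would translate each transition of $P$ into one or more transitions of $P'$ according to the ``effective top'' $x$ of a compound symbol: (i) a push $\pdatrans{q}{a}{x}{r}{xy_1\cdots y_k}$ with $k \geq 2$ becomes $\pdatrans{q}{a}{z}{r}{z\,[y_1\cdots y_k]}$ for every $z \in \Gamma'$ whose effective top is $x$; the $k=1$ case becomes a standard single-symbol push; the $k=0$ ``push'' becomes the replace $\pdatrans{q}{a}{z}{r}{z}$; (ii) a pop $\pdatrans{q}{a}{x}{r}{\emptystring}$ stays a pop when $z = x$ is atomic and becomes the replace $\pdatrans{q}{a}{[\alpha x]}{r}{[\alpha]}$ (using the identification $[\,y\,] = y$) when $z = [\alpha x]$ is compound; (iii) a replace $\pdatrans{q}{a}{x}{r}{y}$ stays a replace for atomic $z$ and becomes $\pdatrans{q}{a}{[\alpha x]}{r}{[\alpha y]}$ for compound $z$.

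For correctness I would define the homomorphism $h : (\Gamma')^* \to \Gamma^*$ by $h(y) = y$ for $y \in \Gamma$ and $h([\alpha]) = \alpha$, and prove by induction on the number of transitions that $P$ reaches configuration $\pdaconfig{i}{r}{s}$ iff $P'$ reaches some configuration $\pdaconfig{i}{r}{s'}$ with $h(s') = s$ (and moreover the effective top of $s'$ equals the top of $s$). Each of the three transition families above is designed to be the exact image under $h$ of the corresponding transition in $P$, so the inductive step is a direct case analysis. This bisimulation immediately gives $\langof{P'} \subseteq \langof{P}$ and, in the other direction, supplies, for every accepting run of $P$, a unique accepting run of $P'$ on the same input.

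The main obstacle is the acceptance condition: a restricted PDA must accept with the atomic symbol $\bot$ on top, but under the construction above the top of $P'$'s stack at acceptance time could be a compound $[\alpha\bot]$. I would handle this by a normalization step on $P$ performed before the construction: for every transition whose pushed sequence $y_1\cdots y_k$ contains $\bot$, I rewrite that transition by case analysis so that $\bot$ never appears in the interior or at the right end of any push (for instance, in the PDA produced by the preceding lemma no pushed sequence contains $\bot$ at all, so this preprocessing is vacuous there). With that taken care of, the effective top of $P'$'s stack is $\bot$ exactly when the actual top is the atomic $\bot$, and the acceptance conditions of $P$ and $P'$ coincide, completing the proof.
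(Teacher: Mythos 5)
Your proposal is correct and takes essentially the same approach as the paper: enlarge the stack alphabet to compound symbols encoding strings of original symbols (bounded in length by the maximum push), turn multi-symbol pushes into a single push of one compound symbol, and turn pops into replaces that peel one symbol off the top compound (with a true pop only when the compound is a single symbol). Your additional remarks on the bisimulation via the unpacking homomorphism and on keeping $\bot$ atomic so the acceptance condition is preserved are details the paper leaves implicit, but they do not change the argument.
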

\begin{proof}
At this point, there is a maximum length $k$ such that $\pdatrans{q}{a}{x}{r}{x \alpha}$ is a transition and $k = |\alpha|$. We redefine the stack alphabet of the PDA to be $\Gamma' = \Gamma \cup \Gamma^2 \cdots \cup \Gamma^k$. Stack symbols now represent strings of the original stack symbols. Let $\multisym{\alpha}$ denote a single stack symbol for any string $\alpha$. We replace every push transition $\pdatrans{q}{a}{x}{r}{x \beta}$ with push transitions $\pdatrans{q}{a}{\multisym{\alpha x}}{r}{\multisym{\alpha x}\, \multisym{\beta}}$ for all $\alpha \in \bigcup_{i=0}^{k-1} \Gamma^i$. We replace every replace transition $\pdatrans{q}{a}{x}{r}{y}$ with replace transitions $\pdatrans{q}{a}{\multisym{\alpha x}}{r}{\multisym{\alpha y}}$ for all $\alpha$. And we replace every pop transition $\pdatrans{q}{a}{x}{r}{\emptystring}$ with replace transitions $\pdatrans{q}{a}{\multisym{\alpha x}}{r}{\multisym{\alpha}}$ for all $\alpha$ with $|\alpha| \geq 1$, and a pop transition $\pdatrans{q}{a}{\multisym{x}}{r}{\emptystring}$.
\end{proof}

So for every CFL $L$, there exists a restricted PDA $P$ that recognizes $L$.

\section{Lang's Algorithm}
\label{sec:langs-algorithm}

In this section, we discuss how the differentiable WPDA computes $\stackreadingt{t}$ efficiently given $\nstranst{1}, \ldots, \nstranst{t}$. A naive approach to implementing \cref{eq:ns-rnn-reading-definition} on a real (deterministic) computer would be to maintain a set of PDA configurations and explicitly fork each one into $|Q| (2 |\Gamma| + 1)$ runs at every timestep. However, this would require exponential space and time. Instead, we use a dynamic programming algorithm called Lang's algorithm \citep{lang-1974-deterministic,butoi-etal-2022-algorithms} to simulate all runs in only cubic time and quadratic space. Lang's algorithm can work on arbitrary WPDAs provided they are converted to a normal form \citep{butoi-etal-2022-algorithms}. Here, we give a version of Lang's algorithm for restricted WPDAs.

Intuitively, Lang's algorithm exploits structural similarities in PDA runs. First, multiple runs can result in the same stack. For example, see how the parser in \cref{fig:stack-parsing-example-top-down-nondeterministic} has the opportunity to recombine stacks (a) and (b) at step 6. Second, for $k > 0$, a stack of height $k$ must have been derived from a stack of height~$k-1$, so in principle representing a stack of height $k$ requires only storing its top symbol and a pointer to a stack of height $k-1$. The resulting data structure is a weighted graph where edges represent individual stack symbols, and paths (of which there are exponentially many) represent stacks. In this way, it resembles the graph-structured stack used in GLR parsing for CFGs \citep{tomita-1987-efficient}. One node is designated as the root, and a set of nodes is designated as stack tops. The set of all paths from the root to the stack tops is the set of all stacks.

We may equivalently view this graph as an NFA that encodes the language of stacks. Indeed, the set of stacks at a given timestep $t$ is always a regular language \citep{autebert-etal-1997-context}, and Lang's algorithm gives an explicit construction for the NFA encoding this language. The root node is the start state, and the stack top nodes are the accept states. We call this NFA the \term{stack NFA}. Analogously, we can construct a WFA encoding the distribution over stacks of a WPDA, which we call the \term{stack WFA}.

Each stack NFA state is a tuple of the form $\stackwfastate{i}{q}{x}$, representing all configurations where the PDA is in state $q$ with stack top $x$ at time $i$. We call such a tuple a \term{configuration type}. Crucially, the stack NFA state does not need to remember the entire stack contents, just the top symbol $x$.

The stack NFA has a transition from $\stackwfastate{i}{q}{x}$ to $\stackwfastate{t}{r}{y}$ iff some run of the PDA undergoes a \term{push computation} \citep{butoi-etal-2022-algorithms} from a configuration of type $\stackwfastate{i}{q}{x}$ to a configuration of type $\stackwfastate{t}{r}{y}$. A push computation from configuration $\pdaconfig{i}{q}{\beta x}$ to configuration $\pdaconfig{t}{r}{\beta x y}$ is a partial run starting at $\pdaconfig{i}{q}{\beta x}$ and ending at $\pdaconfig{t}{r}{\beta x y}$, where the run never exposes the $x$ to the top of the stack in between, although it may push and pop any number of symbols on top of it. Upon reaching $\pdaconfig{t}{r}{\beta x y}$, it has the net effect of having pushed a single $y$ on top of the $x$ without modifying $x$. The guarantee that $x$ has not changed is essential for the correctness of Lang's algorithm.

A transition in a stack WFA from $\stackwfastate{i}{q}{x}$ to $\stackwfastate{t}{r}{y}$ with weight $p$ means that the total weight of push computations from $\pdaconfig{i}{q}{\beta x}$ to $\pdaconfig{t}{r}{\beta x y}$ for all $\beta$ is $p$. The set of accept states of the stack NFA/WFA changes from timestep to timestep; at step $t$, the accept states are $\{\stackwfastate{t}{q}{x} \mid q \in Q, x \in \Gamma\}$.

\Cref{fig:langs-algorithm-inference-rules} specifies Lang's algorithm for restricted WPDAs as a set of inference rules, similar to a deductive parser \citep{shieber-etal-1995-principles,goodman-1999-semiring,butoi-etal-2022-algorithms}. Each inference rule is drawn as a fragment of the stack WFA. If the transitions drawn with solid lines are present in the stack WFA, and the side conditions in the right column are met, then the transition drawn with a dashed line can be added to the stack WFA. Conceptually, the algorithm repeatedly applies inference rules to add states and transitions to the stack WFA. This program for building the stack WFA is purely functional; the inference rules by themselves do not specify the order in which they are applied, and no states or transitions are ever deleted. This will make it more amenable to being converted to differentiable tensor operations.

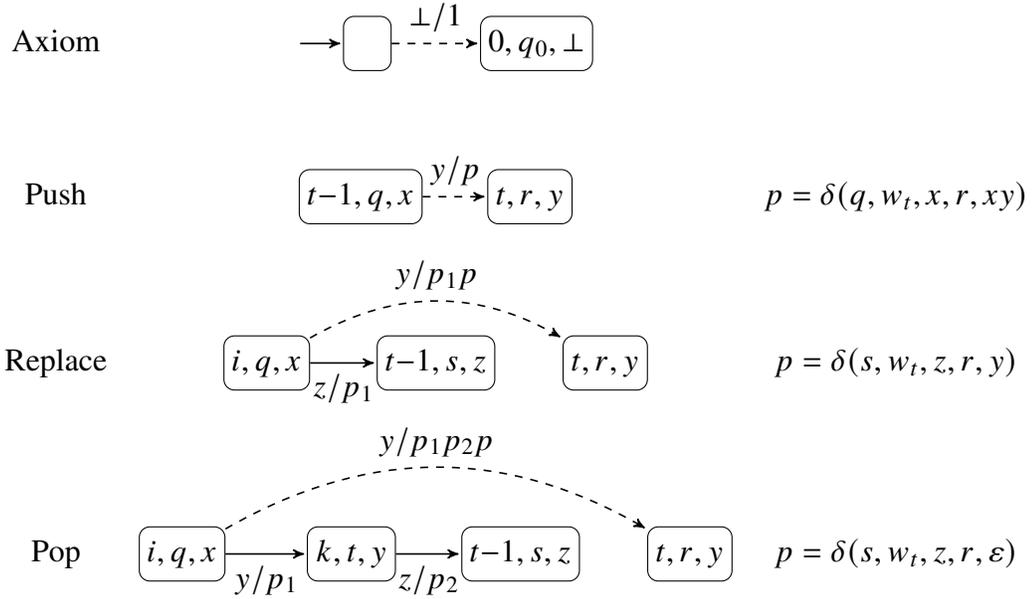
\begin{figure*}
    \tikzset{
        state/.append style={
            rectangle,
            rounded corners,
            inner sep=3pt,
            anchor=base,
            execute at begin node={\strut}
        }
    }
    \tikzset{
        x=2.25cm,
        baseline=0
    }
    \renewcommand{\arraystretch}{4}
    \centering
    \begin{tabular}{ccc}
        Axiom & 
        \begin{tikzpicture}
            \node[initial,state](q) at (0,0) {};
            \node[state](r) at (1,0) {$\stackwfastatecontent{0}{q_0}{\bot}$};
            \draw[dashed] (q) edge node {$\bot/1$} (r);
        \end{tikzpicture}
        & \\
        Push & 
        \begin{tikzpicture}
            \node[state](q1) at (1,0) {$\stackwfastatecontent{t\mathord-1}{q}{x}$};
            \node[state](q2) at (2,0) {$\stackwfastatecontent{t}{r}{y}$};
            \draw[dashed] (q1) edge node {$y / p$} (q2);
        \end{tikzpicture} &
        $p = \wpdaweight{q}{w_t}{x}{r}{xy}$
        \\
        Replace &
        \begin{tikzpicture}
            \node[state](q0) at (0,0) {$\stackwfastatecontent{i}{q}{x}$};
            \node[state](q1) at (1,0) {$\stackwfastatecontent{t\mathord-1}{s}{z}$};
            \node[state](q2) at (2,0) {$\stackwfastatecontent{t}{r}{y}$};
            \draw (q0) edge node[below] {$z / p_1$} (q1);
            \draw[dashed,bend left] (q0) edge node {$y / p_1 p$}(q2);
        \end{tikzpicture} &
        $p = \wpdaweight{s}{w_t}{z}{r}{y}$
        \\
        Pop &
        \begin{tikzpicture}
            \node[state](q0) at (0,0) {$\stackwfastatecontent{i}{q}{x}$};
            \node[state](q1) at (1,0) {$\stackwfastatecontent{k}{t}{y}$};
            \node[state](q2) at (2,0) {$\stackwfastatecontent{t\mathord-1}{s}{z}$};
            \node[state](q3) at (3,0) {$\stackwfastatecontent{t}{r}{y}$};
            \draw (q0) edge node[below] {$y / p_1$} (q1);
            \draw (q1) edge node[below] {$z / p_2$} (q2);
            \draw[dashed,bend left] (q0) edge node {$y / p_1 p_2 p$} (q3);
        \end{tikzpicture} &
        $p = \wpdaweight{s}{w_t}{z}{r}{\emptystring}$
    \end{tabular}
    \caption[Lang's algorithm as inference rules on the stack WFA.]{Lang's algorithm drawn as operations on the stack WFA. Solid edges indicate existing transitions; dashed edges indicate transitions that are added as a result of the WPDA transition shown to the right.}
    \label{fig:langs-algorithm-inference-rules}
\end{figure*}

Here we explain each of the inference rules:
\begin{description}
    \item[Axiom] creates an initial state and places the initial $\bot$ symbol on the stack.
    \item[Push] pushes a $y$ on top of an $x$ by adding a single WFA edge.
    \item[Replace] pops a $z$ and pushes a $y$ by backing up the $z$ transition (without deleting it) and adding a new $y$ transition. Because we can assume that the stack beneath $y$ has not been modified, it is safe for the new transition to share it with the old.
    \item[Pop] pops a $z$ by backing up the $z$ transition as well as the preceding $y$ transition (without deleting them) and adding a new $y$ transition. Again, the new transition shares the contents lower in the stack.
\end{description}

The total running time of the algorithm is proportional to the number of ways that the inference rules can be instantiated. Since the pop rule contains three string positions ($i$, $t$, and $k$), the time complexity is $\bigo{n^3}$. The total space requirement is characterized by the number of possible WFA transitions. Since transitions connect two states, each with a string position ($i$ and $t$), the space complexity is $\bigo{n^2}$. More precisely, the time complexity of this algorithm is $\bigo{{|Q|}^4 {|\Gamma|}^3 n^3}$, and its space complexity is $\bigo{{|Q|}^2 {|\Gamma|}^2 n^2}$. In \cref{sec:langs-algorithm-speedup}, we will show how to reduce the time complexity to $\bigo{{|Q|}^3 {|\Gamma|}^3 n^2 + {|Q|}^3 {|\Gamma|}^2 n^3}$.

\section{Example Run of Lang's Algorithm}
\label{sec:langs-algorithm-example}

As an example, consider the following restricted WPDA for the language $\{w \reverse{w} \mid w \in \{\sym{0}, \sym{1}\}^\ast\}$:
\begin{align*}
P &= (Q, \Sigma, \Gamma, \delta, q_1, F) \\
Q &= \{q_1, q_2\} \\
\Sigma &= \{\sym{0}, \sym{1}\} \\
\Gamma &= \{\sym{0}, \sym{1}, \bot\} \\
F &= Q
\end{align*}
where $\delta$ contains the transitions
\begin{align*}
\wpdaweight{q_1}{a}{x}{q_1}{xa} = 1 && x &\in \Gamma, a \in \Sigma \\
\wpdaweight{q_1}{a}{a}{q_2}{\emptystring} = 1 && a &\in \Sigma \\
\wpdaweight{q_2}{a}{a}{q_2}{\emptystring} = 1 && a &\in \Sigma
\end{align*}
with all other transition weights set to 0. This WPDA has a non-zero-weighted run ending in a configuration with an empty stack ($\bot$) iff the input string read so far is of the form $w \reverse{w}$.

An example run of Lang's algorithm is shown in \cref{fig:langs-algorithm-example}, using our example WPDA and the string $\sym{0110}$. At timestep $t=3$, the PDA reads $\sym{1}$ and either pushes a $\sym{1}$ (path ending in state $\stackwfastate{3}{q_1}{\sym{1}}$) or pops a $\sym{1}$ (path ending in state $\stackwfastate{3}{q_2}{\sym{0}}$). A similar thing happens at timestep $t=4$, and the existence of a state with top stack symbol $\bot$ indicates that the string is of the form~$w \reverse{w}$.

\begin{figure*}
    \def\rightcolx{12.3cm}
    \tikzset{
        state/.append style={
            rectangle,
            rounded corners,
            inner sep=3pt,
            anchor=base,
            execute at begin node={\strut}
        }
    }
    \tikzset{
        label/.style={
            anchor=base,
            execute at begin node={\strut}
        }
    }
    \tikzset{
        x=2cm,
        baseline=0pt,
        node distance=2cm
    }
    \centering
    \scalebox{0.9}{
    \begin{tabular}{ll}
        $t=0$ &
        \begin{tikzpicture}
            \node[initial,state] (start) {};
            \node[accepting,state,right of=start] (0q1bot) {$\stackwfastatecontent{0}{q_1}{\bot}$};
            \draw[dashed] (start) edge node {$\bot$} (0q1bot);
        \end{tikzpicture}
        \\[0.5cm]
        $t=1$ &
        \begin{tikzpicture}
            \node[initial,state] (start) {};
            \node[state,right of=start] (0q1bot) {$\stackwfastatecontent{0}{q_1}{\bot}$};
            \draw (start) edge node {$\bot$} (0q1bot);
            \node[accepting,state,right of=0q1bot](1q10) {$\stackwfastatecontent{1}{q_1}{\sym{0}}$};
            \draw[dashed] (0q1bot) edge node {$\sym{0}$} (1q10);
            \coordinate (p) at (\rightcolx,0);
            \node[label] at (1q10.base -| p) {$\pdatrans{q_1}{\sym{0}}{\bot}{q_1}{\sym{0}}$};
        \end{tikzpicture}
        \\[0.5cm]
        $t=2$ &
        \begin{tikzpicture}
            \node[initial,state] (start) {};
            \node[state,right of=start] (0q1bot) {$\stackwfastatecontent{0}{q_1}{\bot}$};
            \draw (start) edge node {$\bot$} (0q1bot);
            \node[state,right of=0q1bot](1q10) {$\stackwfastatecontent{1}{q_1}{\sym{0}}$};
            \draw (0q1bot) edge node {$\sym{0}$} (1q10);
            \node[accepting,state,right of=1q10](2q11) {$\stackwfastatecontent{2}{q_1}{\sym{1}}$};
            \draw[dashed] (1q10) edge node {$\sym{1}$} (2q11);
            \coordinate (p) at (\rightcolx,0);
            \node[anchor=base] at (2q11.base -| p) {$\pdatrans{q_1}{\sym{1}}{\sym{0}}{q_1}{\sym{1}}$};
        \end{tikzpicture}
        \\[0.5cm]
        $t=3$ &
        \begin{tikzpicture}
            \node[initial,state] (start) {};
            \node[state,right of=start] (0q1bot) {$\stackwfastatecontent{0}{q_1}{\bot}$};
            \draw (start) edge node {$\bot$} (0q1bot);
            \node[state,right of=0q1bot](1q10) {$\stackwfastatecontent{1}{q_1}{\sym{0}}$};
            \draw (0q1bot) edge node {$\sym{0}$} (1q10);
            \node[state,right of=1q10](2q11) {$\stackwfastatecontent{2}{q_1}{\sym{1}}$};
            \draw (1q10) edge node {$\sym{1}$} (2q11);
            \node[accepting,state,right of=2q11](3q11) {$\stackwfastatecontent{3}{q_1}{\sym{1}}$};
            \draw[dashed] (2q11) edge node {$\sym{1}$} (3q11);
            \node[accepting,state,below=0.5cm of 3q11](3q20) {$\stackwfastatecontent{3}{q_2}{\sym{0}}$};
            \draw[dashed,out=-30,in=180] (0q1bot) edge node {$\sym{0}$} (3q20);
            \coordinate (p) at (\rightcolx,0);
            \node[label] at (3q11.base -| p) {$\pdatrans{q_1}{\sym{1}}{\sym{1}}{q_1}{\sym{1}}$};
            \node[label] at (3q20.base -| p) {$\pdatrans{q_1}{\sym{1}}{\sym{1}}{q_2}{\emptystring}$};
        \end{tikzpicture}
        \\[1.7cm]
        $t=4$ &
        \begin{tikzpicture}
            \node[initial,state] (start) {};
            \node[state,right of=start] (0q1bot) {$\stackwfastatecontent{0}{q_1}{\bot}$};
            \draw (start) edge node {$\bot$} (0q1bot);
            \node[state,right of=0q1bot](1q10) {$\stackwfastatecontent{1}{q_1}{\sym{0}}$};
            \draw (0q1bot) edge node {$\sym{0}$} (1q10);
            \node[state,right of=1q10](2q11) {$\stackwfastatecontent{2}{q_1}{\sym{1}}$};
            \draw (1q10) edge node {$\sym{1}$} (2q11);
            \node[state,right of=2q11](3q11) {$\stackwfastatecontent{3}{q_1}{\sym{1}}$};
            \draw (2q11) edge node {$\sym{1}$} (3q11);
            \node[state,below=0.5cm of 3q11](3q20) {$\stackwfastatecontent{3}{q_2}{\sym{0}}$};
            \draw[out=-30,in=180] (0q1bot) edge node {$\sym{0}$} (3q20);
            \node[accepting,state,right of=3q11](4q10) {$\stackwfastatecontent{4}{q_1}{\sym{0}}$};
            \draw[dashed] (3q11) edge node {$\sym{0}$} (4q10);
            \node[accepting,state,below=0.5cm of 4q10](4q2bot) {$\stackwfastatecontent{4}{q_2}{\bot}$};
            \draw[dashed,every edge,rounded corners=5mm] (start) -- ($(start|-4q2bot)+(1.5,-0.75)$) to node {$\bot$} ($(4q2bot)+(-0.5,-0.75)$) -- (4q2bot);
            \coordinate (p) at (\rightcolx,0);
            \node[label] at (4q10.base -| p) {$\pdatrans{q_1}{\sym{0}}{\sym{1}}{q_1}{\sym{0}}$};
            \node[label] at (4q2bot.base -| p) {$\pdatrans{q_2}{\sym{0}}{\sym{0}}{q_2}{\emptystring}$};
        \end{tikzpicture}
    \end{tabular}
    }
    \caption[Example run of Lang's algorithm.]{Run of Lang's algorithm on our example WPDA $P$ and the string $\sym{0110}$. The WPDA transitions used are shown at right. For simplicity, only stack WFA transitions with non-zero weight are shown, and for those that are shown, their weights (which are all 1) are omitted.}
    \label{fig:langs-algorithm-example}
\end{figure*}

\section{Lang's Algorithm as Tensor Operations}
\label{sec:ns-rnn-langs-algorithm}

We now describe how to implement Lang's algorithm efficiently using differentiable tensor operations.

We represent the transition weights of the stack WFA as a tensor $\nsinnerweightletter$ of size $(n-1) \times (n-1) \times |Q| \times |\Gamma| \times |Q| \times |\Gamma|$ called the \term{inner weights}. The element of $\nsinnerweightletter$ that contains the weight of stack WFA transition $\stackwfastate{i}{q}{x} \xrightarrow{y} \stackwfastate{t}{r}{y}$ is denoted as $\nsinnerweight{i}{q}{x}{t}{r}{y}$. For $0 \leq i < t \leq n-1$,
\begin{equation}
    \begin{split}
        &\nsinnerweight{i}{q}{x}{t}{r}{y} = \\
        &\hspace{0.1in} \begin{aligned}
            &\indicator{i=t\!-\!1} \; \nspushweight{q}{t}{x}{r}{y} && \text{push} \\
            & +\! \sum_{s,z} \nsinnerweight{i}{q}{x}{t\!-\!1}{s}{z} \; \nsreplweight{s}{t}{z}{r}{y} && \text{repl.} \\
            & +\! \sum_{\mathclap{k=i+1}}^{t-2} \sum_{u} \sum_{s,z} \nsinnerweight{i}{q}{x}{k}{u}{y} \; \nsinnerweight{k}{u}{y}{t\!-\!1}{s}{z} \; \nspopweight{s}{t}{z}{r} && \text{pop}
        \end{aligned}
    \end{split}
    \label{eq:ns-rnn-gamma}
\end{equation}
Note that these equations are a relatively straightforward conversion of the inference rules given in \cref{fig:langs-algorithm-inference-rules}. Each term tallies the total weight of forming a WFA transition (the dashed edge) through a particular type of transition, multiplying together the weights of the solid edges and summing over free variables. Because these equations are a recurrence relation, we cannot compute $\nsinnerweightletter$ all at once, but (for example) in order of increasing $t$.

Additionally, we compute a tensor $\nsforwardweightletter$ of size $n \times |Q| \times |\Gamma|$ called the \term{forward weights}. The weight $\nsforwardweight{t}{r}{y}$ is the total weight of reaching state $\stackwfastate{t}{r}{y}$ in the stack WFA from the start state.
\begin{align}
    \nsforwardweight{0}{r}{y} &= \indicator{r = q_0 \wedge y = \bot} \label{eq:ns-rnn-alpha-init} \\
    \nsforwardweight{t}{r}{y} &= \sum_{i=0}^{t-1} \sum_{q, x} \nsforwardweight{i}{q}{x} \, \nsinnerweight{i}{q}{x}{t}{r}{y} \qquad (1 \leq t \leq n-1) \label{eq:ns-rnn-alpha-recurrence}
\end{align}

Finally, we normalize the forward weights to get the marginal distribution over top stack symbols $\stackreadingt{t}$.
\begin{align}
    \nsstackreadingnostate{t}{y} &= \frac{
        \sum_r \nsforwardweight{t}{r}{y}
    }{
        \sum_{y'} \sum_r \nsforwardweight{t}{r}{y'}
    }
    \label{eq:ns-rnn-reading}
\end{align}

Note that according to \cref{sec:controller-stack-interface}, since the output logits $\rnnoutputt{t}$ of the RNN depend on $\stackreadingt{t-1}$, not $\stackreadingt{t}$, computing $\stackreadingt{n}$ is unnecessary. So, $\nstranst{t}$, $\nsinnerweightletter$, and $\nsforwardweightletter$ only need to be computed for $t \leq n-1$. This is why we use $t \leq n-1$ for \cref{eq:ns-rnn-gamma,eq:ns-rnn-alpha-recurrence}.

Also note that these equations implement slightly different behavior regarding the initial $\bot$ symbol than that described in \cref{def:restricted-pda}. In this version, the initial $\bot$ can never be replaced, and once a symbol is pushed on top of it after the first timestep, the initial $\bot$ can never be exposed on the top of the stack again. In \cref{sec:rns-rnn-bottom-symbol-fix}, we will modify \cref{eq:ns-rnn-gamma,eq:ns-rnn-alpha-init,eq:ns-rnn-alpha-recurrence} to implement the exact behavior of \cref{def:restricted-pda}.

\section{Implementation Details}
\label{sec:ns-rnn-implementation-details}

We implemented the differentiable WPDA and the NS-RNN using PyTorch \citep{paszke-etal-2019-pytorch}, and doing so efficiently required a few crucial tricks.

First, we implemented a workaround to update the $\nsinnerweightletter$ and $\nsforwardweightletter$ tensors in-place in a way that was compatible with PyTorch's automatic differentiation. This was necessary to achieve the theoretical quadratic space complexity, as the alternative was to create an out-of-place copy of both tensors at each timestep.

Second, since the weight of each run in \cref{eq:ns-rnn-reading-definition} is the product of many probabilities, we compute $\nstranst{t}$, $\nsinnerweightletter$, and $\nsforwardweightletter$ in log space to avoid underflow. All of the summations and multiplications in \cref{sec:ns-rnn-langs-algorithm} are implemented in the log semiring rather than the real semiring.

Third, we had to overcome GPU memory issues when implementing the summations of \cref{sec:ns-rnn-langs-algorithm} in the log semiring. The summations in \cref{eq:ns-rnn-gamma,eq:ns-rnn-alpha-recurrence} can be implemented efficiently using the \texttt{einsum} (short for ``Einstein summation'') function, an operator provided by many tensor libraries that works like a generalization of matrix multiplication for multi-dimensional tensors. PyTorch includes an \texttt{einsum} operator for the real semiring only.

Reimplementing \texttt{einsum} in the log semiring naively involves creating an intermediate tensor of $\bigo{n^3}$ size for the pop rule in \cref{eq:ns-rnn-gamma}, when only $\bigo{n^2}$ should be required if doing the summation in-place. To work around this, we developed a custom implementation of \texttt{einsum}\footnote{\url{https://github.com/bdusell/semiring-einsum}} that supports the log semiring (and other semirings) in a way that achieves $\bigo{n^2}$ space complexity for the pop rule. It does this by splitting the summation into fixed-size blocks, enforcing an upper bound on memory usage that does not incur an extra factor of $n$. It calculates sums on each block in serial, accumulating the results in-place in a single tensor. The multiplication and summation of terms within each block can still be fully parallelized, so the overall runtime is still quite fast compared to fully parallelizing the entire \texttt{einsum} operation. Because the operation now performs in-place operations, PyTorch's auto-differentiation cannot automate the calculation of its gradient, so we also reimplemented its backward pass, using the same blocking trick to achieve $O(n^2)$ space complexity.

\section{Benefits of Nondeterminism for Learning}
\label{sec:ns-rnn-nondeterminism-helps-learning}

In this section, we describe in more detail why we expect nondeterminism to aid training. To make a good prediction at time $t$, the NS-RNN may need a certain top stack symbol $y$, which may in turn require previous WPDA actions to be orchestrated correctly. For example, consider the language $\{v\sym{\#}\reverse{v} \mid v \in \{\sym{0}, \sym{1}\}^\ast\}$, where $n$ is odd and $\inputsymbolt{t} = \inputsymbolt{n-t+1}$ for all $t$. In order to do better than chance when predicting $\inputsymbolt{t}$ (for $t$ in the second half), the model has to push a stack symbol that encodes $\inputsymbolt{t}$ at time $(n-t+1)$, and that same symbol must be on top at time $t$. How does the model learn to do this? Assume that the gradient of the log-likelihood with respect to $\nsstackreadingnostate{t}{y}$ is positive; this gradient flows to the PDA transition probabilities via (among other things) the partial derivatives of $\log \nsforwardweightletter$ with respect to $\log \nstranstensorletter$.

To calculate these derivatives more easily, we express $\nsforwardweightletter$ directly (albeit less efficiently) in terms of $\nstranstensorletter$:
\begin{equation*}
    \nsforwardweight{t}{r}{y} = \sum_{ \pdarunendsinnot{\pdatranst{\pdatransletter}{1} \cdots \pdatranst{\pdatransletter}{t}}{r}{y}} \prod_{i=1}^t \nstransttype{i}{\pdatranst{\pdatransletter}{i}}
\end{equation*}
where each $\pdatranst{\pdatransletter}{i}$ is a PDA transition of the form $\pdatransnoinput{q_1}{x_1}{q_2}{x_2}$, and the summation over $\pdarunendsinnot{\pdatranst{\pdatransletter}{1} \cdots \pdatranst{\pdatransletter}{t}}{r}{y}$ means that after following transitions $\pdatranst{\pdatransletter}{1}, \ldots, \pdatranst{\pdatransletter}{t}$, then the PDA will be in state $r$ and its top stack symbol will be $y$. Then the partial derivatives are:
\begin{align*}
    \frac
        {\partial \log \nsforwardweight{t}{r}{y}}
        {\partial \log \nstransttype{i}{\pdatransletter}} &=
    \frac
        {
            \sum_{\pdarunendsinnot{\pdatranst{\pdatransletter}{1} \cdots \pdatranst{\pdatransletter}{t}}{r}{y}}
                \left(\prod_{i'=1}^{t} \nstransttype{i'}{\pdatranst{\pdatransletter}{i'}}\right) \, \indicator{\pdatranst{\pdatransletter}{i} = \pdatransletter}
        }
        {
            \sum_{\pdarunendsinnot{\pdatranst{\pdatransletter}{1} \cdots \pdatranst{\pdatransletter}{t}}{r}{y}}
                \prod_{i'=1}^{t} \nstransttype{i}{\pdatranst{\pdatransletter}{i'}}
        }.
\end{align*}
This is the posterior probability of having used transition $\pdatransletter$ at time $i$, given that the PDA has read the input up to time $t$ and reached state $r$ and top stack symbol $y$. 

So if a correct prediction at time $t$ depends on a stack action at an earlier time $i$, the gradient flow to that action is proportional to its probability given the correct prediction. As desired, this probability is always nonzero.

\section{Conclusion}

We presented the Nondeterministic Stack RNN (NS-RNN), a stack RNN in which the differentiable stack module is a differentiable WPDA. The differentiable WPDA satisfies all three requirements of a differentiable stack as laid out in \cref{sec:controller-stack-interface}.
\begin{enumerate}
    \item The RNN controller provides the differentiable WPDA with transition weights $\nstranst{t}$, and the differentiable WPDA returns a stack reading in the form of a marginal distribution of top stack symbols over all runs (\cref{sec:ns-rnn-definition}). So, the inputs and outputs of the differentiable WPDA are continuous.
    \item The stack reading is differentiable with respect to the stack actions (\cref{sec:ns-rnn-definition,sec:ns-rnn-nondeterminism-helps-learning}).
    \item It can be implemented with efficient tensor operations (\cref{sec:ns-rnn-langs-algorithm}).
\end{enumerate}
In the next chapter, we will examine how well the NS-RNN works in practice.

\chapter{Learning Context-Free Languages}
\label{chap:learning-cfls}

In this chapter, we train our NS-RNN as a language model on various context-free languages, comparing it against a baseline LSTM and the two prior stack RNNs discussed in \cref{sec:prior-stack-rnns}. We see that the NS-RNN achieves much lower cross-entropy, in much fewer parameter updates, on the deterministic language \MarkedReversal{}, indicating that nondeterminism aids training as claimed in \cref{chap:ns-rnn}. We also see that the NS-RNN achieves lower cross-entropy on two nondeterministic CFLs (although not a third), indicating that nondeterminism empirically improves expressivity. This includes the ``hardest CFL'' \citep{greibach-1973-hardest}, a language with maximal parsing difficulty that inherently requires nondeterminism.

These results appeared in a paper published at CoNLL 2020 \citep{dusell-chiang-2020-learning}. The code used for the experiments in this chapter is publicly available.\footnote{\url{https://github.com/bdusell/nondeterministic-stack-rnn/tree/conll2020}}

\section{Context-Free Language Tasks}
\label{sec:cfl-tasks}

We examine five CFLs, described below.

\begin{description}
    \item[$\bm{\MarkedReversal{}}$] The language $\{ w \sym{\#} \reverse{w} \mid w \in \{\sym{0}, \sym{1}\}^\ast \}$, also called \term{marked reversal}. This task should be easily solvable by a model with a real-time deterministic stack, as the model can push the first $w$ to the stack, change states upon reading $\sym{\#}$, and predict $\reverse{w}$ by popping $w$ from the stack in reverse.
    \item[$\bm{\UnmarkedReversal{}}$] The language $\{ w \reverse{w} \mid w \in \{\sym{0}, \sym{1}\}^\ast \}$, also called \term{unmarked reversal}. When the length of $w$ can vary, a language model reading the string from left to right must use nondeterminism to guess where the boundary between $w$ and $\reverse{w}$ lies. As shown in \cref{sec:langs-algorithm-example}, at each position, it must either push the input symbol to the stack, or else guess that the middle point has been reached and start popping symbols from the stack. An optimal language model will interpolate among all possible split points to produce a final prediction.
    \item[$\bm{\PaddedReversal{}}$] Like \UnmarkedReversal{}, but with a higher tendency to have a long stretch of the same symbol repeated in the middle. Also called \term{padded reversal}. Strings are of the form $w a^p \reverse{w}$, where $w \in \{\sym{0}, \sym{1}\}^\ast$, $a \in \{\sym{0}, \sym{1}\}$, and $p \geq 0$. The purpose of the padding $a^p$ is to confuse a language model attempting to guess where the middle of the palindrome is based on the content of the string. In the general case of \UnmarkedReversal{}, a language model can disregard split points where a valid palindrome does not occur locally. Since all substrings of $a^p$ are palindromes, the language model must deal with a larger number of candidates simultaneously.
    \item[Dyck] The language $D_2$ of strings with two types of balanced brackets.
    \item[Hardest CFL] A language proven by \citet{greibach-1973-hardest} to be at least as difficult to parse as any other CFL. We describe it in more detail in the next section.
\end{description}

The \MarkedReversal{} and Dyck languages are deterministic CFLs that can be solved optimally with a real-time deterministic PDA. On the other hand, the \UnmarkedReversal{}, \PaddedReversal{}, and hardest CFL tasks require nondeterminism, with the hardest CFL requiring the most.

\section{The Hardest CFL}
\label{sec:hardest-cfl}

\Citet{greibach-1973-hardest} describes a CFL, $L_0$, which is the ``hardest'' CFL in the sense that an efficient parser for $L_0$ is also an efficient parser for any other CFL $L$. It is defined as follows. (We deviate from Greibach's original notation for the sake of clarity.) Every string in $L_0$ is of the following form:
\begin{equation*}
    \alpha_1 \sym{;} \alpha_2 \sym{;} \cdots \alpha_n \sym{;}
\end{equation*}
that is, a sequence of strings $\alpha_i$, each terminated by~$\sym{;}$. No $\alpha_i$ can contain $\sym{;}$. Each $\alpha_i$, in turn, is divided into three parts, separated by commas:
\begin{equation*}
    \alpha_i = x_i \sym{,} y_i \sym{,} z_i
\end{equation*}
The middle part, $y_i$, is a substring of a string in the language $\sym{\$}D_2$. The brackets in $y_i$ do not need to be balanced, but all of the $y_i$'s concatenated must form a string in $D_2$, prefixed by $\sym{\$}$. The catch is that $x_i$ and $z_i$ can be any sequence of bracket, comma, and $\sym{\$}$ symbols, so it is impossible to tell, in a single $\alpha_i$, where $y_i$ begins and ends. A parser must nondeterministically guess where each $y_i$ is, and cannot verify a guess until the end of the string is reached. We show examples of strings in $L_0$ in \cref{fig:hardest-cfl-examples}.

\begin{figure*}
    \newcommand{\unmarked}[1]{\sym{#1}}
    \newcommand{\marked}[1]{\textbf{\textcolor[HTML]{00dd00}{\sym{#1}}}}
    \centering
    \unmarked{,],} \marked{\$([} \unmarked{,;],} \marked{[[([} \unmarked{,;,} \marked{]} \unmarked{,;,[((,} \marked{)} \unmarked{,)\$)[,;,} \marked{]]]} \unmarked{,\$(;(,} \marked{)} \unmarked{,;} \\
    \unmarked{,],} \marked{\$} \unmarked{} \unmarked{,\$([;,),],\$,()),} \marked{} \marked{[} \unmarked{,\$],;,} \marked{[} \unmarked{,;,} \marked{]} \unmarked{,(;(,} \marked{]} \unmarked{,\$;} \\
    \unmarked{,(,} \marked{\$} \unmarked{} \unmarked{,[;,} \marked{} \marked{[[} \unmarked{,;,} \marked{(} \unmarked{,(,\$,[;))),} \marked{)} \unmarked{,;],} \marked{[]]]} \unmarked{,((,][,;} \\
    \unmarked{,} \marked{\$([} \unmarked{,;,} \marked{[]]} \unmarked{,\$([,;,} \marked{)} \unmarked{,)))\$[,;,} \marked{[]} \unmarked{,\$,]()][(,;} \\
    \unmarked{,} \marked{\$(([} \unmarked{,(];,} \marked{[]])([]} \unmarked{,\$,;],} \marked{)[} \unmarked{,\$;),)\$),} \marked{])} \unmarked{,(\$\$\$),)\$,;} \\
    \unmarked{,[])[,} \marked{\$} \unmarked{} \unmarked{,;(,\$,\$)),} \marked{} \marked{()[} \unmarked{,[;,} \marked{][]((} \unmarked{,),(;,((],} \marked{))} \unmarked{,;} \\
    \unmarked{,} \marked{\$(([} \unmarked{,][[]\$;,} \marked{]))(([]} \unmarked{,\$\$[\$]],),[,\$((,;,)],} \marked{))} \unmarked{,]\$;} \\
    \unmarked{],} \marked{\$((} \unmarked{,;,} \marked{(} \unmarked{,],)(;,} \marked{))} \unmarked{,]([][\$,(,[\$,)((,;,),),} \marked{)} \unmarked{,;} \\
    \unmarked{,} \marked{\$[(} \unmarked{,;,} \marked{[[]]} \unmarked{,\$[,;,[\$(,} \marked{)[((} \unmarked{,;),} \marked{(} \unmarked{,;,} \marked{)))]()]} \unmarked{,](,);} \\
    \unmarked{,](]],(((,],],} \marked{\$(} \unmarked{,;]\$,} \marked{[](} \unmarked{,;,} \marked{)[]} \unmarked{,(,;(],),} \marked{)} \unmarked{,;} \\
    \caption[Examples of strings in the hardest CFL.]{Examples of strings in the hardest CFL, sampled randomly from the PCFG in \cref{fig:hardest-cfl-pcfg} according to the procedure in \cref{sec:learning-cfls-data-sampling}. The pieces of the string from $D_2$ are shown in bold green text. The coloring is not part of the original language and is for the reader's benefit only.}
    \label{fig:hardest-cfl-examples}
\end{figure*}

The design of $L_0$ is justified as follows. Suppose we have a parser for $L_0$ which, as part of its output, identifies the start and end of each $y_i$. Given a CFG~$G$ in Greibach normal form (GNF), we can adapt the parser for $L_0$ to parse $\langof{G}$ by constructing a string homomorphism $h$, such that $w \in \langof{G}$ iff $h(w) \in L_0$, and the concatenated $y_i$'s encode a leftmost derivation of $w$ under $G$.

The homomorphism $h$ always exists and can be constructed from $G$ as follows. Let the nonterminals of $G$ be $V = \{A_1, \ldots, A_{|V|}\}$. Recall that in GNF, every rule is of the form $A_i \rightarrow a A_{j_1} \cdots A_{j_m}$, and $S$ does not appear on any right-hand side. Define
\begin{align*}
    \funcname{push}(A_i) &= \sym{(}\sym{[}^i\sym{(} \\
    \funcname{pop}(A_i) &= \begin{cases}
        \; \sym{)}\sym{]}^i\sym{)} & A_i \neq S \\
        \; \sym{\$} & A_i = S.
    \end{cases}
\end{align*}
We encode each rule of $G$ as
\begin{equation*}
    \funcname{rule}(A_i \rightarrow a A_{j_1} \cdots A_{j_m}) = \funcname{pop}(A_i) \; \funcname{push}(A_{j_1}) \cdots \funcname{push}(A_{j_m}).
\end{equation*}
Finally, we can define $h$ as
\begin{equation*}
    h(b) = \joinwithcommas_{A \rightarrow b \gamma \in G} \funcname{rule}(A \rightarrow b \gamma) \; \sym{;}
\end{equation*}
where $\joinwithcommas$ concatenates strings together delimited by commas. There is a valid string of $y_i$'s iff there is a valid derivation of $w$ with respect to~$G$.

\section{Grammars}
\label{sec:task-pcfgs}

For each task, we construct a probabilistic context-free grammar (PCFG) for the language. We list here the production rules and weights for the PCFG used for each of our tasks. Let $f(\mu) = 1 - \frac{1}{\mu + 1}$, which is the probability of failure associated with a negative binomial distribution with a mean of $\mu$ failures before one success. For a recursive PCFG rule, a probability of $f(\mu)$ results in an average of $\mu$ applications of the recursive rule.

\subsection{Grammar for \MarkedReversal{}}

We set $\mu = 60$.
\begin{alignat*}{3}
\pcfgrule{S}{\sym{0} S \sym{0}}{\tfrac{1}{2} f(\mu)}
\pcfgrule{S}{\sym{1} S \sym{1}}{\tfrac{1}{2} f(\mu)}
\pcfgrule{S}{\sym{\#}}{1 - f(\mu)}
\end{alignat*}

\subsection{Grammar for \UnmarkedReversal{}}

We set $\mu = 60$.
\begin{alignat*}{3}
\pcfgrule{S}{\sym{0} S \sym{0}}{\tfrac{1}{2} f(\mu)}
\pcfgrule{S}{\sym{1} S \sym{1}}{\tfrac{1}{2} f(\mu)}
\pcfgrule{S}{\emptystring}{1 - f(\mu)}
\end{alignat*}

\subsection{Grammar for \PaddedReversal{}}

Let $\mu_c$ be the mean length of the reversed content, and let $\mu_p$ be the mean padding length. We set $\mu_c = 60$ and $\mu_p = 30$.
\begin{alignat*}{3}
\pcfgrule{S}{\sym{0} S \sym{0}}{\tfrac{1}{2} f(\mu_c)}
\pcfgrule{S}{\sym{1} S \sym{1}}{\tfrac{1}{2} f(\mu_c)}
\pcfgrule{S}{T_0}{\tfrac{1}{2} (1 - f(\mu_c))}
\pcfgrule{S}{T_1}{\tfrac{1}{2} (1 - f(\mu_c))}
\pcfgrule{T_0}{\sym{0} T_0}{f(\mu_p)}
\pcfgrule{T_0}{\emptystring}{1 - f(\mu_p)}
\pcfgrule{T_1}{\sym{1} T_1}{f(\mu_p)}
\pcfgrule{T_1}{\emptystring}{1 - f(\mu_p)}
\end{alignat*}

\subsection{Grammar for Dyck Language}

Let $\mu_s$ be the mean number of splits, and let $\mu_n$ be the mean nesting depth. We set $\mu_s = 1$ and $\mu_n = 40$.
\begin{alignat*}{3}
\pcfgrule{S}{ST}{f(\mu_s)}
\pcfgrule{S}{T}{1 - f(\mu_s)}
\pcfgrule{T}{\sym{(} S \sym{)}}{\tfrac{1}{2} f(\mu_n)}
\pcfgrule{T}{\sym{[} S \sym{]}}{\tfrac{1}{2} f(\mu_n)}
\pcfgrule{T}{\sym{(} \sym{)}}{\tfrac{1}{2} (1 - f(\mu_n))}
\pcfgrule{T}{\sym{[} \sym{]}}{\tfrac{1}{2} (1 - f(\mu_n))}
\end{alignat*}

\subsection{Grammar for Hardest CFL}

Let $\mu_c$ be the mean number of commas, $\mu_{sf}$ be the mean short filler length, $\mu_{lf}$ be the mean long filler length, $p_s$ be the probability of a semicolon, $\mu_s$ be the mean number of bracket splits, and $\mu_n$ be the mean bracket nesting depth. We set $\mu_c = 0.5$, $\mu_{sf} = 0.5$, $\mu_{lf} = 2$, $p_s = 0.25$, $\mu_s = 1.5$, and $\mu_n = 3$. We show the PCFG in \cref{fig:hardest-cfl-pcfg}.

\begin{figure*}
    \centering
    \begin{alignat*}{3}
    \pcfgrule{S'}{R \sym{\$} Q\, S L \sym{;}}{1}
    \pcfgrule{L}{L' \sym{,} U}{1}
    \pcfgrule{L'}{\sym{,} V L'}{f(\mu_c)}
    \pcfgrule{L'}{\emptystring}{1 - f(\mu_c)}
    \pcfgrule{R}{U \sym{,} R'}{1}
    \pcfgrule{R'}{R' V \sym{,}}{f(\mu_c)}
    \pcfgrule{R'}{\emptystring}{1 - f(\mu_c)}
    \pcfgrule{U}{WU}{f(\mu_{sf})}
    \pcfgrule{U}{\emptystring}{1 - f(\mu_{sf})}
    \pcfgrule{V}{WV}{f(\mu_{lf} - 1)}
    \pcfgrule{V}{W}{1 - f(\mu_{lf} - 1)}
    \pcfgrule{W}{\sym{(}}{0.2}
    \pcfgrule{W}{\sym{)}}{0.2}
    \pcfgrule{W}{\sym{[}}{0.2}
    \pcfgrule{W}{\sym{]}}{0.2}
    \pcfgrule{W}{\sym{\$}}{0.2}
    \pcfgrule{Q}{L \sym{;} R}{p_s}
    \pcfgrule{Q}{\emptystring}{1 - p_s}
    \pcfgrule{S}{SQ\,T}{f(\mu_s)}
    \pcfgrule{S}{T}{1 - f(\mu_s)}
    \pcfgrule{T}{\sym{(} Q\,SQ \sym{)}}{\tfrac{1}{2} f(\mu_n)}
    \pcfgrule{T}{\sym{[} Q\,SQ \sym{]}}{\tfrac{1}{2} f(\mu_n)}
    \pcfgrule{T}{\sym{(} Q \sym{)}}{\tfrac{1}{2} (1 - f(\mu_n))}
    \pcfgrule{T}{\sym{[} Q \sym{]}}{\tfrac{1}{2} (1 - f(\mu_n))}
    \end{alignat*}
    \caption{PCFG for the hardest CFL.}
    \label{fig:hardest-cfl-pcfg}
\end{figure*}

\section{Data Sampling}
\label{sec:learning-cfls-data-sampling}

Every time we train a model, we randomly sample a training set of 10,000 examples from the task's PCFG, filtering samples so that the length of a string is in the interval $[40, 80]$. The training set remains the same throughout the training process and is not re-sampled from epoch to epoch, since we want to test how well the model can infer the probability distribution from a finite sample. We also randomly sample a validation set of 1,000 examples from the same distribution every time we train a model.

For each task, we sample a test set once with string lengths varying from 40 to 100, with 100 examples per length; the test set remains the same across all models and random restarts. For simplicity, we do not filter training samples from the validation or test sets, assuming that the chance of overlap is very small.

Let us describe our sampling procedure in more detail. For practical reasons, we restrict strings we sample from PCFGs to those whose lengths lie within a certain interval, say $[\ell_{\mathrm{min}}, \ell_{\mathrm{max}}]$. This is because the lengths of strings sampled randomly from PCFGs tend to have high variance, and we often want datasets to consist of strings of a certain length (e.g.\ longer strings in the test set than in the training set).

If $L$ is a language, let $\stringsoflength{L}{\ell}$ be the set of all strings in $L$ of length $\ell$. To sample a string $w \in L$, we first uniformly sample a length $\ell$ from $[\ell_{\mathrm{min}}, \ell_{\mathrm{max}}]$, then sample from $\stringsoflength{L}{\ell}$ according to its PCFG (we avoid sampling lengths for which $\stringsoflength{L}{\ell}$ is empty). This means that the distribution we are effectively sampling from is as follows. Let $p_G(w)$ be the probability of $w$ under PCFG $G$, and let $p_G(\ell)$ be the probability of all strings of length $\ell$, that is,
\begin{equation}
    p_G(\ell) = \sum_{\mathclap{w \in L_\ell }} p_G(w).    
\end{equation}
Then the distribution we are sampling from is
\begin{equation}
    p_L(w) = \frac{1}{|\{ \ell \in [\ell_{\mathrm{min}}, \ell_{\mathrm{max}}] \mid L_\ell \neq \emptyset \}|} \frac{p_G(w)}{p_G(|w|)}.
    \label{eq:cfg-task-distribution}
\end{equation}

\section{PCFG Sampling Algorithm}

We use a dynamic programming algorithm to sample strings directly from the distribution of strings in the PCFG with length $\ell$. This algorithm is adapted from an algorithm by \citet{aguinaga-etal-2019-learning} for sampling graphs of a specific size from a hyperedge replacement grammar.

The algorithm operates in two phases. The first (\cref{alg:pcfg-sampling-table}) computes a table $T$ such that every entry $T[A, \ell]$ contains the total probability of sampling a string from the PCFG with length $\ell$. The second (\cref{alg:pcfg-sampling}) uses $T$ to randomly sample a string from the PCFG (using $S$ as the nonterminal parameter $X$), restricted to those with a length of exactly $\ell$.

Let $\nonterminalsinstring{\beta}$ be an ordered sequence consisting of the nonterminals in $\beta$. Let $\compositionsoflengthof{n}{\ell}$ be a function that returns a (possibly empty) list of all compositions of $\ell$ that are of length $n$ (that is, all ordered sequences of $n$ positive integers that add up to $\ell$).

\begin{algorithm}[t]
    \caption{Computing the probability table $T$}
    \begin{algorithmic}[1]
        \Require $G$ has no $\emptystring$-rules or unary rules
        
        \Function{ComputeWeights}{$G, T, X, \ell$}
            \ForAll{rules $X \rightarrow \beta ~ / ~ p$ in $G$}
                \State $N \gets \nonterminalsinstring{\beta}$
                \State $\ell' = \ell - |\beta| + |N|$
                \For{$C$ in $\compositionsoflengthof{|N|}{\ell'}$}
                    \State $\displaystyle t[\beta, C] \gets p \times \prod_{i=1}^{|N|} T[N_i, C_{i}]$
                \EndFor
            \EndFor
            \State \Return $t$
        \EndFunction
        \Function{ComputeTable}{$G, n$}
            \For{$\ell$ from $1$ to $n$}
                \ForAll{nonterminals $X$}
                    \State $t \gets \Call{ComputeWeights}{G, T, X, \ell}$
                    \State $\displaystyle T[X, \ell] \gets \sum_{\beta, C} t[\beta, C]$
                \EndFor
            \EndFor
            \State \Return $T$
        \EndFunction
    \end{algorithmic}
    \label{alg:pcfg-sampling-table}
\end{algorithm}

\begin{algorithm}[t]
    \caption{Sampling a string using $T$}
    \begin{algorithmic}[1]
        \Require $T$ is the output of $\Call{ComputeTable}{G, \ell}$
        \Function{SampleSized}{$G, T, X, \ell$}
            \If{$T[X,\ell] = 0$}
                \State \textbf{error} \label{alg:pcfg-sampling:error}
            \EndIf
            \State $t \gets \Call{ComputeWeights}{G, T, X, \ell}$ 
            \State sample $(\beta, C)$ with probability $\displaystyle\frac{t[\beta, C]}{T[X, \ell]}$
            \State $s \gets \emptystring$
            \State $i \gets 1$
            \For{$j$ from $1$ to $|\beta|$}
                \If{$\beta_j$ is a terminal}
                    \State append $\beta_j$ to $s$
                \Else
                    \State $s' \gets \Call{SampleSized}{G, T, \beta_j, C_i}$ \label{alg:sample:expand}
                    \State append $s'$ to $s$
                    \State $i \gets i + 1$
                \EndIf
            \EndFor
            \State \Return $s$
        \EndFunction
    \end{algorithmic}
    \label{alg:pcfg-sampling}
\end{algorithm}

Because this algorithm only works on PCFGs that are free of $\emptystring$-rules and unary rules, we automatically refactor our PCFGs to remove them before providing them to the algorithm. Some of our PCFGs do not generate any strings for certain lengths, which is detected at line \ref{alg:pcfg-sampling:error} of \cref{alg:pcfg-sampling}. In this case, we avoid sampling length $\ell$ again and start over.

\section{Evaluation}
\label{sec:ns-rnn-cfl-evaluation}

In our experiments, we train neural networks as language models on strings belonging to each language. To evaluate a language model's performance, we measure its per-symbol cross-entropy on a set of strings $S$ sampled according to \cref{sec:learning-cfls-data-sampling}. Although prior work has commonly used symbol prediction accuracy, whole-sequence prediction accuracy, or string recognition accuracy to evaluate neural networks on formal languages, training neural networks as language models and measuring their cross-entropy has a number of benefits.
\begin{itemize}
    \item Whereas symbol prediction accuracy requires defining task-specific subsequences of symbols on which to evaluate, using cross-entropy does not; it only requires specifying a PCFG.
    \item For our nondeterministic CFLs, it is not possible to define subsequences of symbols that can be predicted deterministically, so accuracy is not meaningful. Cross-entropy is a soft version of accuracy that does not have this problem.
    \item Cross-entropy is more fine-grained than accuracy in the sense that it not only considers which symbols are assigned the highest probabilities, but also what the values of those probabilities are. Moreover, we can measure how close they are to the true distribution from which the data is sampled.
    \item It is difficult to train models solely as recognizers. Forcing the model to predict the next symbol at every timestep helps stabilize training. Training recognizers also requires generating both positive and negative examples, whereas language modeling only requires a positive sampling procedure.
    \item If a model accumulates just enough error to make an incorrect prediction before the end of the string, it receives no credit according to whole-sequence accuracy or recognition accuracy. According to cross-entropy, the model gets partial credit for the parts of the string it predicts well.
\end{itemize}

Let $p$ be any distribution over strings. The per-symbol cross-entropy of $p$ on a finite set of strings $S$ is defined as
\begin{equation*}
    \crossentropyofdistonset{\probletter}{S} = \frac{-\sum_{w \in S} \log \probletter(w)}{\sum_{w \in S} |w|}.
\end{equation*}
We evaluate each model according to the \term{cross-entropy difference} between its learned distribution and the true distribution. This can be seen as an approximation of the KL divergence of the neural network from the true distribution. Lower values are better. The cross-entropy of the true distribution $p_L$ is the \term{lower bound cross-entropy}.

In this chapter, the RNN models are not required to predict $\eos$. Technically, this means they estimate $\probletter(w \mid |w|)$, not $\probletter(w)$. However, they do not actually use any knowledge of the length, so it seems reasonable to compare the RNN's estimate of $\probletter(w \mid |w|)$ with the true $\probletter(w)$. (This is why, when we bin by length on the test sets in \cref{fig:learning-cfls-results}, some of the differences are negative.) In subsequent chapters, we require models to predict an $\eos$ symbol, so that they do estimate $\probletter(w)$.

When computing the lower-bound cross-entropy of the validation and test sets, we need to compute $p_L(w)$ for each string $w$, using \cref{eq:cfg-task-distribution}. Finding $p_G(w)$ requires re-parsing $w$ with respect to $G$ and summing the probabilities of all valid parses using the Inside algorithm (we actually use a weighted version of Lang's algorithm). We look up the value of $p_G(|w|)$ in the table entry $T[S, |w|]$ produced in \cref{alg:pcfg-sampling-table}.

\section{Models}

We compare our NS-RNN against three baselines: an LSTM, a stratification stack RNN (``Strat.''), and a superposition stack RNN (``Sup.''). We implement these models as described in \cref{sec:prior-stack-rnns}. For all three stack RNNs, we use an LSTM controller, and we use only one stack for the stratification and superposition stack RNNs. (In early experiments, we found that using multiple stacks did not make a meaningful difference in performance.)

We encode all input symbols as one-hot vectors. For all models, we use a single-layer LSTM with 20 hidden units. We selected this number because we found that an LSTM of this size could not completely solve \MarkedReversal{}, indicating that the hidden state is a memory bottleneck. For each task and model, we perform a hyperparameter grid search. We search for the initial learning rate, which has a large impact on performance, from the set $\{0.01, 0.005, 0.001, 0.0005\}$. For Strat.\ and Sup., we search for stack embedding sizes in $\{2, 20, 40\}$. The pushed vector in Sup.\ is learned rather than set to the hidden state. We manually choose a small number of PDA states and stack symbol types for the NS-RNN for each task. For \MarkedReversal{}, \UnmarkedReversal{}, and Dyck, we use 2 states and 2 stack symbol types. For \PaddedReversal{}, we use 3 states and 2 stack symbol types. For the hardest CFL, we use 3 states and 3 stack symbol types.

\section{Training}
\label{sec:learning-cfls-training}

As noted by \citet{grefenstette-etal-2015-learning}, initialization can play a large role in whether a stack RNN converges on algorithmic behavior or becomes trapped in a local optimum. To mitigate this, for each hyperparameter setting in the grid search, we run five random restarts and select the hyperparameter setting with the lowest average cross-entropy difference on the validation set. This gives us a picture not only of the model's performance, but of its rate of success. We initialize all fully-connected layers except for the recurrent LSTM layer with Xavier uniform initialization \citep{glorot-bengio-2010-understanding}, and all other parameters uniformly from $[-0.1, 0.1]$.

We optimize parameters with Adam \citep{kingma-ba-2015-adam} and clip gradients whose magnitude is above~5. We use mini-batches of size~10; to generate a batch, we first select a length and then sample~10 strings of that length. We randomly shuffle batches before each epoch. We train models until convergence, multiplying the learning rate by 0.9 after~5 epochs of no improvement in cross-entropy on the validation set, and stopping early after 10 epochs of no improvement.

\section{Results}
\label{sec:ns-rnn-cfl-results}

We show plots of the cross-entropy difference on the validation set between each model and the source distribution on the left of \cref{fig:learning-cfls-results}. (Note that using the relative difference in cross-entropy is important for comparing results, as the validation set is randomly sampled for every experiment.) For all tasks, stack RNNs outperform the LSTM baseline, indicating that the tasks are effective benchmarks for differentiable stacks. For the \MarkedReversal{}, \UnmarkedReversal{}, and hardest CFL tasks, our model consistently achieves cross-entropy closer to the source distribution than any other model. Even for \MarkedReversal{}, which can be solved deterministically, the NS-RNN, besides achieving lower cross-entropy on average, learns to solve the task in fewer updates and with much higher reliability across random restarts. In the case of the nondeterministic \UnmarkedReversal{} and hardest CFL tasks, the NS-RNN converges on the lowest validation cross-entropy. On the Dyck language, which is a deterministic task, all stack models converge quickly on the source distribution. We hypothesize that this is because the Dyck language represents a case where stack usage is locally advantageous at many positions within the same string, so it is particularly conducive for learning stack-like behavior. On the other hand, we note that our model struggles on \PaddedReversal{}, in which stack-friendly signals are intentionally made very distant. Although the NS-RNN outperforms the LSTM baseline, Sup.\ solves the task most effectively, though still imperfectly. We remedy this shortcoming in the NS-RNN in \cref{chap:rns-rnn}.

On the right of \cref{fig:learning-cfls-results}, we show cross-entropy on separately-sampled test data binned by string length. Recall that the training data consists of strings of length 40 to 80; in order to show how each model performs when evaluated on strings longer than those seen during training, we include strings of length up to 100 in the test set. For each length $\ell$, the lower-bound cross-entropy for that bin is calculated assuming $\ell_{\mathrm{min}} = \ell_{\mathrm{max}} = \ell$. The NS-RNN consistently performs well on string lengths it was trained on, but it is sometimes surpassed by other stack models on longer strings. Specifically, on \MarkedReversal{}, the NS-RNN generalizes poorly to strings longer than 80 symbols, and its variance increases sharply, despite being very small for lengths 40 to 80. On longer strings in the Dyck and hardest CFL tasks, the NS-RNN's performance becomes slightly worse on average than the other stack RNNs.

\begin{figure*}
    \definecolor{color0}{rgb}{0.12156862745098,0.466666666666667,0.705882352941177}
    \definecolor{color1}{rgb}{1,0.498039215686275,0.0549019607843137}
    \definecolor{color2}{rgb}{0.172549019607843,0.627450980392157,0.172549019607843}
    \definecolor{color3}{rgb}{0.83921568627451,0.152941176470588,0.156862745098039}
    \pgfplotsset{
        line0/.style={cedline, color0, mark=triangle*, mark options={rotate=30}},
        line1/.style={cedline, color1, mark=triangle*, mark options={rotate=120}},
        line2/.style={cedline, color2, mark=triangle*, mark options={rotate=210}},
        line3/.style={cedline, color3, mark=triangle*, mark options={rotate=300}},
        every axis/.style={
            width=3.625in,
            height=1.9775in
        },
        title style={yshift=-4.7ex},
        y tick label style={
            /pgf/number format/.cd,
            fixed,
            fixed zerofill
        },
        scaled y ticks=false,
        tick pos=left
    }
    \tikzset{
        bars/.style={opacity=0.12},
        linelabel/.style={black,inner sep=2pt,font={\footnotesize}}
    }
    \centering
    \newcommand{
        \pgfplotsset{
            every axis/.append style={
                xmin=0,
                xmax=160,
                xtick distance=50,
                mark repeat=32,
            },
            line0/.append style={mark phase=0},
            line1/.append style={mark phase=8},
            line2/.append style={mark phase=16},
            line3/.append style={mark phase=24}
        }
        
        \scalebox{0.8}{\input{figures/01-ns-rnn-cfls/train/.tex}}
        &
        \pgfplotsset{
            every axis/.append style={
                xmin=40,
                xmax=100,
                xtick={40,60,80,100},
                mark repeat=8,
            }
        }
        \pgfplotsset{line0/.append style={mark phase=0}}
        \pgfplotsset{line1/.append style={mark phase=2}}
        \pgfplotsset{line2/.append style={mark phase=4}}
        \pgfplotsset{line3/.append style={mark phase=6}}
        
        \pgfplotsset{every axis/.append style={yticklabels={,,}}}
        \scalebox{0.8}{\input{figures/01-ns-rnn-cfls/test/.tex}} \\
    }[2]{
        \pgfplotsset{
            every axis/.append style={
                xmin=0,
                xmax=160,
                xtick distance=50,
                mark repeat=32,
            },
            line0/.append style={mark phase=0},
            line1/.append style={mark phase=8},
            line2/.append style={mark phase=16},
            line3/.append style={mark phase=24}
        }
        #2
        \scalebox{0.8}{\input{figures/01-ns-rnn-cfls/train/#1.tex}}
        &
        \pgfplotsset{
            every axis/.append style={
                xmin=40,
                xmax=100,
                xtick={40,60,80,100},
                mark repeat=8,
            }
        }
        \pgfplotsset{line0/.append style={mark phase=0}}
        \pgfplotsset{line1/.append style={mark phase=2}}
        \pgfplotsset{line2/.append style={mark phase=4}}
        \pgfplotsset{line3/.append style={mark phase=6}}
        #2
        \pgfplotsset{every axis/.append style={yticklabels={,,}}}
        \scalebox{0.8}{\input{figures/01-ns-rnn-cfls/test/#1.tex}} \\
    }
    \newcommand{\row}[1]{
        \pgfplotsset{
            every axis/.append style={
                xmin=0,
                xmax=160,
                xtick distance=50,
                mark repeat=32,
            },
            line0/.append style={mark phase=0},
            line1/.append style={mark phase=8},
            line2/.append style={mark phase=16},
            line3/.append style={mark phase=24}
        }
        \pgfplotsset{every axis/.append style={xticklabels={,,}}}
        \scalebox{0.8}{\input{figures/01-ns-rnn-cfls/train/#1.tex}}
        &
        \pgfplotsset{
            every axis/.append style={
                xmin=40,
                xmax=100,
                xtick={40,60,80,100},
                mark repeat=8,
            }
        }
        \pgfplotsset{line0/.append style={mark phase=0}}
        \pgfplotsset{line1/.append style={mark phase=2}}
        \pgfplotsset{line2/.append style={mark phase=4}}
        \pgfplotsset{line3/.append style={mark phase=6}}
        \pgfplotsset{every axis/.append style={xticklabels={,,}}}
        \pgfplotsset{every axis/.append style={yticklabels={,,}}}
        \scalebox{0.8}{\input{figures/01-ns-rnn-cfls/test/#1.tex}} \\
    }
    \begin{tabular}{@{}l@{\hspace{-0.25in}}l@{}}
        \multicolumn{2}{c}{
        \begin{tikzpicture}
\begin{axis}[
  hide axis,
  height=0.7in,
  legend style={
    draw=none,
    /tikz/every even column/.append style={column sep=0.4cm}
  },
  legend columns=-1,
  xmin=0,
  xmax=1,
  ymin=0,
  ymax=1
]
\addlegendimage{line0}
\addlegendentry{LSTM}
\addlegendimage{line1}
\addlegendentry{Strat.}
\addlegendimage{line2}
\addlegendentry{Sup.}
\addlegendimage{line3}
\addlegendentry{Ours}
\end{axis}
\end{tikzpicture}
        } \\[-1ex]
        \row{marked-reversal}
        \row{unmarked-reversal}
        \row{padded-reversal}
        \row{dyck}
        
        \pgfplotsset{
            every axis/.append style={
                xmin=0,
                xmax=160,
                xtick distance=50,
                mark repeat=32,
            },
            line0/.append style={mark phase=0},
            line1/.append style={mark phase=8},
            line2/.append style={mark phase=16},
            line3/.append style={mark phase=24}
        }
        
        \scalebox{0.8}{\input{figures/01-ns-rnn-cfls/train/hardest-cfl.tex}}
        &
        \pgfplotsset{
            every axis/.append style={
                xmin=40,
                xmax=100,
                xtick={40,60,80,100},
                mark repeat=8,
            }
        }
        \pgfplotsset{line0/.append style={mark phase=0}}
        \pgfplotsset{line1/.append style={mark phase=2}}
        \pgfplotsset{line2/.append style={mark phase=4}}
        \pgfplotsset{line3/.append style={mark phase=6}}
        
        \pgfplotsset{every axis/.append style={yticklabels={,,}}}
        \scalebox{0.8}{\input{figures/01-ns-rnn-cfls/test/hardest-cfl.tex}} \\
    
    \end{tabular}
    \caption[Results of training stack RNNs on CFLs.]{Left: Cross-entropy difference between model and source distribution on validation set, as a function of training time. Lines are averages of five random restarts, and shaded regions show one standard deviation. After a random restart converges, the value of its last epoch is used in the average for later epochs. Right: Cross-entropy difference on the test set, binned by string length. Some models achieve a negative difference, for reasons explained in \cref{sec:ns-rnn-cfl-evaluation}. Each line is the average of the same five random restarts shown to the left.}
    \label{fig:learning-cfls-results}
\end{figure*}

Why does the NS-RNN not always generalize well to longer strings? First, note that it is impossible to sample strings longer than 80 symbols from $p_L$, so if a model were to learn $p_L$ perfectly, it would assign symbols infinitely high cross-entropy after the 80th symbol. Longer strings, then, do not test a model's ability to fit the training data, but rather reveal their inductive biases for learning the algorithm behind the process that generated the training data. When training a model with a stack or PDA on a finite set of strings sampled from a CFL, one would expect the model to learn a single algorithm that uses the stack or PDA, rather than a more complicated solution that learns length-dependent rules.

One possible (but unverified) explanation for cases in which the NS-RNN does not generalize well has to do with the shape of $p_L$. The true data distribution $p_L$ is the product of a uniform distribution over lengths and a length-based marginal distribution over a PCFG. The NS-RNN can use its WPDA to learn the PCFG's distribution, but the WPDA does not lend itself to modeling the length-based marginalization. In order to fit the training data better, the NS-RNN may compensate by using the LSTM controller to keep track of the current length of the input string and readjust the WPDA weights accordingly. The length-tracking may be done in such a way that it does not generalize well to unseen lengths. We do see evidence of length-dependent behavior in \cref{sec:rns-rnn-evolution-of-stack-actions}. Note also that the inherent difficulty of a task\dash{}that is, the cross-entropy of $p_L$\dash{}can vary with length, which may play a role in these trends (although for \MarkedReversal{} and \UnmarkedReversal{}, the per-symbol cross-entropy is almost constant with respect to length).

The interaction between the LSTM controller and the differentiable stack also points to a broader issue that affects training. For instance, on \MarkedReversal{}, Strat.\ and Sup.\ clearly both have the expressive power to fit the training data perfectly. However, they have worse performance than the NS-RNN on average. In many experiments with stack RNNs, we noticed that the LSTM controller becomes an overachiever during training, competing with the stack rather than using it, when only minimal interaction with the stack is sufficient for optimal performance. As a result, the model gets stuck in a local minimum that ignores the stack and fails to outperform a baseline LSTM. Stack RNNs seem to be especially prone to this when the learning rate is not tuned well. We have heard similar anecdotes from other researchers.

\section{Conclusion}

We showed empirically that the NS-RNN offers improved trainability and expressivity over prior stack RNNs on a number of tasks. Despite taking longer in wall-clock time to train, our model often learns to solve the task better and with a higher rate of success. The NS-RNN learns \MarkedReversal{} much more effectively than other stack RNNs, and it achieves the best results on a challenging nondeterministic context-free language, the hardest CFL. However, we note that the NS-RNN struggles on a task where matching symbols are distant, and sometimes does not generalize to longer lengths as well as other stack RNNs. We address these shortcomings in \cref{chap:rns-rnn}.

\chapter{The Renormalizing Nondeterministic Stack RNN}
\label{chap:rns-rnn}

In this chapter we present a new model, the \term{Renormalizing NS-RNN (RNS-RNN)}, which is based on the NS-RNN, but improves its performance on all of the CFL tasks from \cref{chap:learning-cfls}, thanks to two key changes. First, the RNS-RNN now assigns unnormalized positive weights instead of probabilities to WPDA transitions. The transitions therefore define an unnormalized distribution over stacks that is \term{renormalized} whenever the controller queries the stack reading. We provide an analysis of why unnormalized weights improve training. Second, the RNS-RNN can directly observe the state of the underlying WPDA via the stack reading. The RNS-RNN achieves lower cross-entropy than the NS-RNN, stratification stack RNN, and superposition stack RNN on all five CFLs from \cref{chap:learning-cfls} (within 0.05 nats of the information-theoretic lower bound), including \PaddedReversal{}, on which the NS-RNN previously failed to outperform the superposition stack RNN.

This work appeared in a paper published at ICLR 2022 \citep{dusell-chiang-2022-learning}. The code used for the experiments in this chapter is publicly available.\footnote{\url{https://github.com/bdusell/nondeterministic-stack-rnn/tree/iclr2022}}

\section{Model Definition}

Here, we introduce the Renormalizing NS-RNN (RNS-RNN), which differs from the NS-RNN in two ways.

\subsection{Unnormalized Transition Weights}

In \cref{sec:ns-rnn-cfl-results}, why did the NS-RNN fail to learn \PaddedReversal{} when it outperformed the baselines on all other CFLs? One likely explanation is the requirement to learn long-distance dependencies. To see why, consider again \cref{sec:ns-rnn-nondeterminism-helps-learning}, in which we discussed how the NS-RNN learns to orchestrate stack actions for strings in \MarkedReversal{}. The probability of a run in the NS-RNN's differentiable WPDA is the product of individual action probabilities, which are always strictly less than one. If a correct prediction depends on orchestrating many stack actions, then this probability may become very small. This may cause the model to learn slowly, as the amount of reward a good run receives during backpropagation is proportional to its probability. In the case of \MarkedReversal{}, we expect the model to begin by learning to predict the middle of the string, where only a few stack actions must be orchestrated, then working its way outwards, more and more slowly as more and more actions must be orchestrated. In \cref{sec:rns-rnn-evolution-of-stack-actions}, we verify empirically that this is the case.

The solution for this problem we propose is to use unnormalized non-negative weights in $\nstranst{t}$, not probabilities, and to normalize weights only when reading. \Cref{eq:ns-rnn-action-weights} now becomes
\begin{equation*}
    \stackactionsfunc{\rnnhiddent{t}} = \nstranst{t} = \exp(\affine{a}{\rnnhiddent{t}}).
\end{equation*}
The gradient flowing to a transition is still proportional to its posterior probability, but now each transition weight has the ability to ``amplify'' \citep{lafferty-etal-2001-conditional} other transitions in shared runs. The equation for the stack reading is not changed (yet), but its interpretation is. The NS-RNN maintains a probability distribution over stacks and updates it by performing probabilistic operations. Now, the model maintains an unnormalized weight distribution, and when it reads from the stack at each timestep, it renormalizes this distribution and marginalizes it to get a probability distribution over readings. For this reason, we call our new model a Renormalizing NS-RNN.

Note that computing $\nstranst{t}$, $\nsinnerweightletter$, and $\nsforwardweightletter$ in the log semiring, as discussed in \cref{sec:ns-rnn-implementation-details}, is essential for numerical stability when weights are unnormalized, as the weights of runs are liable to overflow in the real semiring. Otherwise, \cref{eq:ns-rnn-gamma,eq:ns-rnn-alpha-init,eq:ns-rnn-alpha-recurrence} are unchanged.

\subsection{PDA States Included in Stack Reading}
\label{sec:pda_states}

In the NS-RNN, the controller can read the distribution over the PDA's current top stack symbol, but it cannot observe its current state. To see why this is a problem, consider the language \UnmarkedReversal{}. While reading $w$, the controller should predict the uniform distribution, but while reading $\reverse{w}$, it should predict based on the top stack symbol. A PDA with two states can nondeterministically guess whether the current position is in $w$ or $\reverse{w}$. The controller should interpolate the two distributions based on the weight of being in each state, but it cannot do this without input from the differentiable WPDA, since the state is entangled with the stack contents. We solve this in the RNS-RNN by computing a joint distribution over top stack symbols \emph{and} PDA states, making $\stackreadingt{t}$ a vector of size $|Q| \cdot |\Gamma|$. \Cref{eq:ns-rnn-reading-definition} becomes
\begin{equation}
    \nsstackreading{t}{r}{y} = \frac{
        \sum_{\pdarunendsin{\pdarunletter}{t}{r}{y}} \wpdarunweight{\pdarunletter}
    }{
        \sum_{r' \in Q} \sum_{y' \in \Gamma} \sum_{\pdarunendsin{\pi}{t}{r'}{y'}} \wpdarunweight{\pi}
    }.
    \label{eq:rns-rnn-reading-definition}
\end{equation}
We compute this from $\nsforwardweightletter$ with
\begin{equation}
    \nsstackreading{t}{r}{y} = \frac{ \nsforwardweight{t}{r}{y} }{ \sum_{r', y'} \nsforwardweight{t}{r'}{y'} }.
\end{equation}

\section{Experiments on CFLs}
\label{sec:rns-rnn-cfl-experiments}

In order to assess the benefits of using unnormalized transition weights and including PDA states in the stack reading, we run the RNS-RNN with and without the two proposed modifications on the same five CFL language modeling tasks used in \cref{chap:learning-cfls}. We use the same experimental setup and PCFG settings, except for one important difference: we require the model to predict $\eos$ at the end of every string. This way, the model defines a proper probability distribution over strings, improving the interpretability of the results.

\subsection{Evaluation}
\label{sec:rns-rnn-cfl-evaluation}

We evaluate models using cross-entropy difference, slightly changing our definition from \cref{sec:ns-rnn-cfl-evaluation} to account for $\eos$. Let the per-symbol cross-entropy of a probability distribution $\probletter$ on a finite set of strings $S$, measured in nats, now be defined as
\begin{equation*}
    \crossentropyofdistonset{\probletter}{S} = \frac{-\sum_{w \in S} \log \probletter(w)}{\sum_{w \in S} (|w|+1)}.
\end{equation*}
The $+1$ in the denominator accounts for the fact that each model must predict $\eos$. Since each model $M$ defines a probability distribution $\modelprobname{M}$, and since we know the exact distribution from which the data is sampled, we can evaluate model $M$ by measuring the \term{cross-entropy difference} between the learned and true distributions, or $\crossentropyofdistonset{\modelprobname{M}}{S} - \crossentropyofdistonset{\sampleprobname{L}}{S}$. Lower is better, and 0 is optimal.

\subsection{Models and Training}
\label{sec:rns-rnn-cfl-models-and-training}

We compare seven architectures on the CFL tasks.
\begin{description}
    \item[LSTM] A baseline LSTM equivalent to the controller used in the stack RNNs.
    \item[Strat.] Stratification stack RNN.
    \item[Sup.] Superposition stack RNN.
    \item[NS] NS-RNN.
    \item[NS+S] NS-RNN with PDA states in the stack reading and normalized action weights.
    \item[NS+U] NS-RNN with no states in the stack reading and unnormalized action weights.
    \item[NS+S+U] RNS-RNN, i.e.\ an NS-RNN with PDA states in the stack reading and unnormalized action weights.
\end{description}

All stack RNNs use an LSTM controller. In all cases, the LSTM has a single layer with 20 hidden units. We grid-search the initial learning rate from $\{0.01, 0.005, 0.001, 0.0005\}$. For Strat.\ and Sup., we search for stack vector element sizes in $\{2, 20, 40\}$ (the pushed vector in Sup.\ is learned). For the NS models, we manually choose a small number of PDA states and stack symbol types based on how we would expect a PDA to solve the task. For \MarkedReversal{}, \UnmarkedReversal{}, and Dyck, we use $|Q| = 2$ and $|\Gamma| = 3$; and for \PaddedReversal{} and hardest CFL, we use $|Q| = 3$ and $|\Gamma| = 3$. For each hyperparameter setting searched, we run five random restarts. For each architecture, when reporting test performance, we select the model with the lowest cross-entropy difference on the validation set. We use the same initialization and optimization settings as in \cref{chap:learning-cfls} and train for a maximum of 200 epochs.

\subsection{Results}
\label{sec:rns-rnn-cfl-results}

We show validation performance as a function of training time and test performance binned by string length in \cref{fig:rns-rnn-cfl-results}. For all tasks, we see that our RNS-RNN (denoted NS+S+U) attains near-optimal cross-entropy (within 0.05 nats) on the validation set. All stack models effectively solve the deterministic \MarkedReversal{} and Dyck tasks, although we note that on \MarkedReversal{} the NS models do not generalize well on held-out lengths. Our new model excels on the three nondeterministic tasks: \UnmarkedReversal{}, \PaddedReversal{}, and hardest CFL. We find that the combination of both enhancements (+S+U) greatly improves performance on \UnmarkedReversal{} and hardest CFL over previous work. For \UnmarkedReversal{}, merely changing the task by adding $\eos$ causes the baseline NS model to perform worse than Strat.\ and Sup.; this may be because it requires the NS-RNN to learn a correlation between the two most distant timesteps. Both enhancements (+S+U) in the RNS-RNN are essential here; without unnormalized weights, the model does not find a good solution during training, and without PDA states, the model does not have enough information to make optimal decisions. For \PaddedReversal{}, we see that the addition of PDA states in the stack reading (+S) proves essential to improving performance. Although NS+S and NS+S+U have comparable performance on \PaddedReversal{}, NS+S+U converges much faster. On hardest CFL, using unnormalized weights by itself (+U) improves performance, but only both modifications together (+S+U) achieve the best performance.

\begin{figure*}
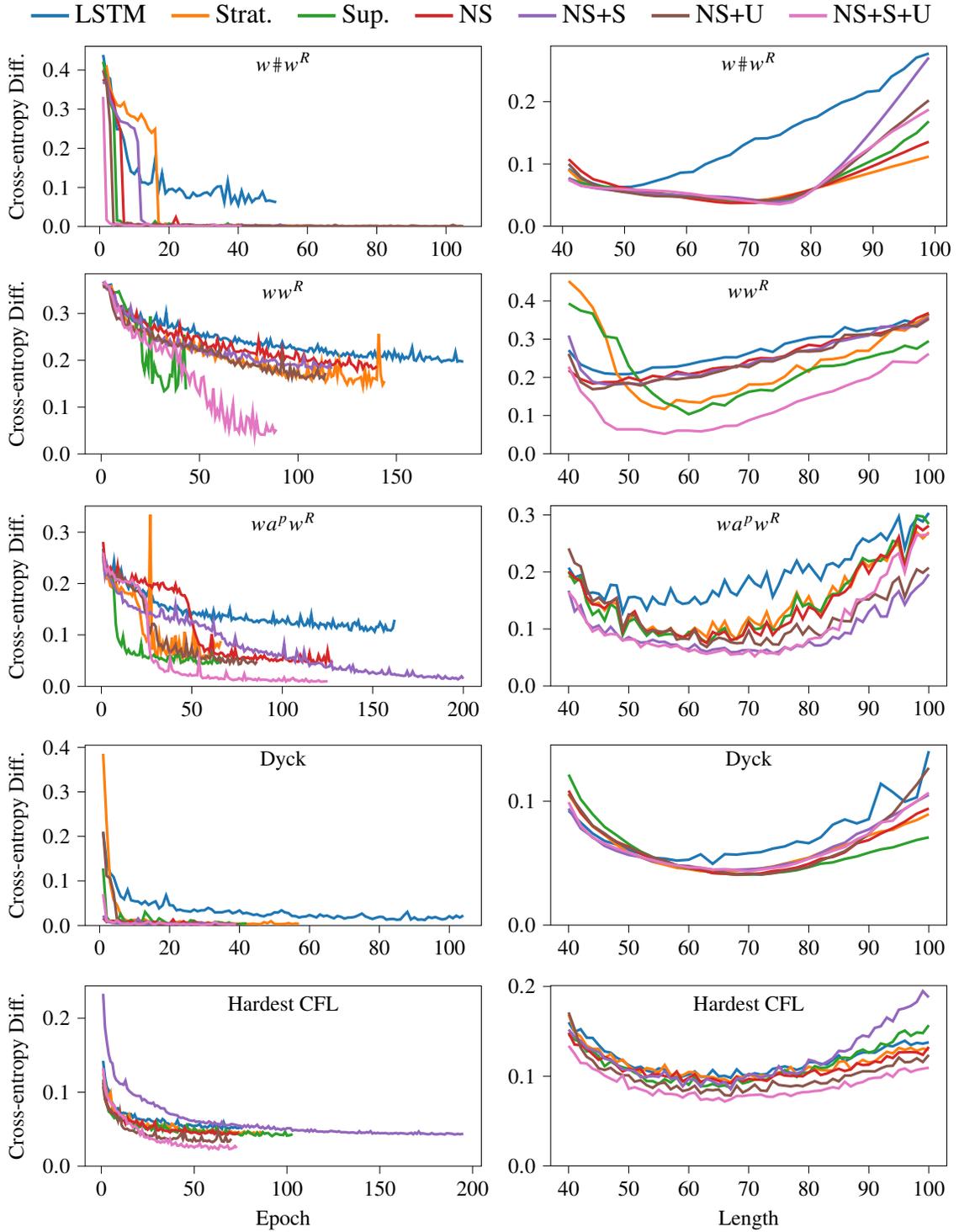

    \pgfplotsset{
        every axis/.style={
            height=2.025in,
            width=3.704in,
            ytick distance=0.1
        },
        title style={yshift=-4.5ex},
        y tick label style={
            /pgf/number format/.cd,
            precision=1,
            fixed,
            fixed zerofill
        }
    }
    \centering
    \begin{tabular}{@{}l@{\hspace{0in}}l@{}}
        \multicolumn{2}{c}{\begin{tikzpicture}

\definecolor{color0}{rgb}{0.12156862745098,0.466666666666667,0.705882352941177}
\definecolor{color1}{rgb}{1,0.498039215686275,0.0549019607843137}
\definecolor{color2}{rgb}{0.172549019607843,0.627450980392157,0.172549019607843}
\definecolor{color3}{rgb}{0.83921568627451,0.152941176470588,0.156862745098039}
\definecolor{color4}{rgb}{0.580392156862745,0.403921568627451,0.741176470588235}
\definecolor{color5}{rgb}{0.549019607843137,0.337254901960784,0.294117647058824}
\definecolor{color6}{rgb}{0.890196078431372,0.466666666666667,0.76078431372549}

\begin{axis}[
    hide axis,
    height=2cm,
    legend style={
        draw=none,
        /tikz/every even column/.append style={column sep=0.3cm}
    },
    legend columns=-1,
    xmin=0,xmax=1,ymin=0,ymax=1
]
\addlegendimage{cedline,color0}
\addlegendentry{LSTM}
\addlegendimage{cedline,color1}
\addlegendentry{Strat.}
\addlegendimage{cedline,color2}
\addlegendentry{Sup.}
\addlegendimage{cedline,color3}
\addlegendentry{NS}
\addlegendimage{cedline,color4}
\addlegendentry{NS+S}
\addlegendimage{cedline,color5}
\addlegendentry{NS+U}
\addlegendimage{cedline,color6}
\addlegendentry{NS+S+U}
\end{axis}

\end{tikzpicture}} \\
        \scalebox{0.8}{\input{figures/02-rns-rnn-cfls/cfl/train/marked-reversal}}
        &\scalebox{0.8}{
\begin{tikzpicture}

\definecolor{color0}{rgb}{0.12156862745098,0.466666666666667,0.705882352941177}
\definecolor{color1}{rgb}{1,0.498039215686275,0.0549019607843137}
\definecolor{color2}{rgb}{0.172549019607843,0.627450980392157,0.172549019607843}
\definecolor{color3}{rgb}{0.83921568627451,0.152941176470588,0.156862745098039}
\definecolor{color4}{rgb}{0.580392156862745,0.403921568627451,0.741176470588235}
\definecolor{color5}{rgb}{0.549019607843137,0.337254901960784,0.294117647058824}
\definecolor{color6}{rgb}{0.890196078431372,0.466666666666667,0.76078431372549}

\begin{axis}[
legend cell align={left},
legend style={
  fill opacity=0.8,
  draw opacity=1,
  text opacity=1,
  at={(0.03,0.97)},
  anchor=north west,
  draw=white!80!black
},
tick align=outside,
tick pos=left,
title={\MarkedReversal{}},
x grid style={white!69.0196078431373!black},
xmin=38.1, xmax=101.9,
xtick style={color=black},
y grid style={white!69.0196078431373!black},
ymin=0, ymax=0.289548975949523,
ytick style={color=black}
]
\addplot [cedline, color0]
table {%
41 0.0922771517422872
43 0.0762958435999411
45 0.0683919691819287
47 0.062378898182213
49 0.0627103494249756
51 0.0631964050793534
53 0.0669109332460219
55 0.0727368905531281
57 0.0783727023670733
59 0.0859701957371723
61 0.0871942510899461
63 0.0994670623825842
65 0.107789353802848
67 0.114590317304089
69 0.129449640926687
71 0.140698310665303
73 0.141296648488151
75 0.14672658335404
77 0.159592277770074
79 0.169392354701637
81 0.175680753194843
83 0.188013204187259
85 0.19891318879994
87 0.205711096697271
89 0.215773118844305
91 0.217748948282499
93 0.240523138369601
95 0.25275337213146
97 0.270814231715536
99 0.277440401056877
};
\addlegendentry{LSTM}
\addplot [cedline, color1]
table {%
41 0.0904587691157991
43 0.0719427607696854
45 0.0639283089832194
47 0.0597993744598823
49 0.0570854714952881
51 0.0554006193221419
53 0.0527349530666237
55 0.0512470659856057
57 0.0495940789834526
59 0.0480043347345681
61 0.0460815998802687
63 0.0442768615769201
65 0.0422692232985864
67 0.0410022758863495
69 0.0403842809936518
71 0.0413898389856158
73 0.0430851765181344
75 0.0466941704736943
77 0.0509344878061317
79 0.0559499413715585
81 0.0616852132272365
83 0.0670425417523634
85 0.0727524136818438
87 0.0782318763936486
89 0.0839765964918742
91 0.0895394013225809
93 0.0955806412785102
95 0.100784530578725
97 0.106427473105842
99 0.111870869806877
};
\addlegendentry{Gref}
\addplot [cedline, color2]
table {%
41 0.0769825553881206
43 0.0694857905548417
45 0.0660382677977574
47 0.0634110281057156
49 0.0605053689562257
51 0.0578273771346418
53 0.0559864540793553
55 0.0533631984493335
57 0.0513322760470301
59 0.0494319063491514
61 0.0473301035345025
63 0.0453473800705726
65 0.0434897414685674
67 0.0414783860081327
69 0.0397181956141874
71 0.0384746981978554
73 0.0383499371283541
75 0.0401707008499277
77 0.0448857473213562
79 0.0529983422504648
81 0.0630402234692334
83 0.0725058567170212
85 0.0826240979114949
87 0.0921034195843446
89 0.101605801135972
91 0.1111107116147
93 0.120296710926117
95 0.137500839172475
97 0.149525617637092
99 0.168520137385002
};
\addlegendentry{JM}
\addplot [cedline, color3]
table {%
41 0.107618512651885
43 0.0887366394983787
45 0.0752517369298906
47 0.0699329397268093
49 0.0634162820421631
51 0.0599081595114447
53 0.0551596763739154
55 0.0534553179387308
57 0.0494897720060819
59 0.0470205293960264
61 0.0439172144846034
63 0.0412669509934241
65 0.0387409473752909
67 0.0372638725660371
69 0.0374573836722232
71 0.0375790072798867
73 0.0410760311422899
75 0.0442213793038752
77 0.0497009203481991
79 0.0552894799457773
81 0.0621117447630598
83 0.0684805009692534
85 0.0769464373464514
87 0.0845452697263901
89 0.0926066149380549
91 0.100940529676861
93 0.110376446215345
95 0.118670221903595
97 0.127579517908138
99 0.135686690119377
};
\addlegendentry{NS}
\addplot [cedline, color4]
table {%
41 0.0773815799881949
43 0.0652966148623701
45 0.0615380395793471
47 0.0590150981333848
49 0.0567897683702882
51 0.0549220332719615
53 0.053133444820096
55 0.0517645569175255
57 0.0504919692717068
59 0.0494234834975889
61 0.0486812643837969
63 0.0481133026169592
65 0.0475599355973554
67 0.0466053750361657
69 0.0446660541521339
71 0.0422686435103555
73 0.0397411097951614
75 0.0384571870238588
77 0.0404032628461959
79 0.0490165307270273
81 0.0629594784235017
83 0.0803801184822742
85 0.0993532380822799
87 0.120572824325112
89 0.143654900528333
91 0.167776865232228
93 0.192782893605638
95 0.218397089579376
97 0.245088536163878
99 0.270952998713127
};
\addlegendentry{NS+S}
\addplot [cedline, color5]
table {%
41 0.100022222426811
43 0.0788609570143586
45 0.0666519364883145
47 0.0600626394338406
49 0.0561915994249757
51 0.0534067181427548
53 0.0503903892645405
55 0.0484191764069228
57 0.048081501531944
59 0.0472564506199847
61 0.0463828772870227
63 0.0448978179978187
65 0.0435606902138327
67 0.042752979585798
69 0.0416785750784731
71 0.0409190544804074
73 0.0415673146621378
75 0.0432120184383324
77 0.046687473833776
79 0.0531655785785898
81 0.0622184520801328
83 0.0738540070611433
85 0.0879078688053904
87 0.104695749129799
89 0.120395406170694
91 0.137634042966603
93 0.154797573210026
95 0.169975611307892
97 0.186611126645383
99 0.202217110041252
};
\addlegendentry{NS+U}
\addplot [cedline, color6]
table {%
41 0.0744238163163199
43 0.0648460644362336
45 0.0622683384923906
47 0.0605987315562364
49 0.0599094216906007
51 0.0587995028482837
53 0.0580379587089849
55 0.0569872044482174
57 0.0559816395976767
59 0.0539487113621722
61 0.0517818109437768
63 0.0486489624069983
65 0.0459376581352341
67 0.0436386356060187
69 0.0415188024780268
71 0.0392349927942097
73 0.0366454726541142
75 0.0352689032039574
77 0.0386708660112601
79 0.0490094811664804
81 0.0631918586159561
83 0.0781729419104736
85 0.0943611016814805
87 0.10870825578818
89 0.122354336292221
91 0.136487271991738
93 0.151617200038484
95 0.163076858459585
97 0.175496251007628
99 0.187324385431877
};
\addlegendentry{NS+S+U}
\legend{}
\end{axis}

\end{tikzpicture}}
        \\ 
        \scalebox{0.8}{\input{figures/02-rns-rnn-cfls/cfl/train/unmarked-reversal}}
        &\scalebox{0.8}{
\begin{tikzpicture}

\definecolor{color0}{rgb}{0.12156862745098,0.466666666666667,0.705882352941177}
\definecolor{color1}{rgb}{1,0.498039215686275,0.0549019607843137}
\definecolor{color2}{rgb}{0.172549019607843,0.627450980392157,0.172549019607843}
\definecolor{color3}{rgb}{0.83921568627451,0.152941176470588,0.156862745098039}
\definecolor{color4}{rgb}{0.580392156862745,0.403921568627451,0.741176470588235}
\definecolor{color5}{rgb}{0.549019607843137,0.337254901960784,0.294117647058824}
\definecolor{color6}{rgb}{0.890196078431372,0.466666666666667,0.76078431372549}

\begin{axis}[
legend cell align={left},
legend style={
  fill opacity=0.8,
  draw opacity=1,
  text opacity=1,
  at={(0.03,0.97)},
  anchor=north west,
  draw=white!80!black
},
tick align=outside,
tick pos=left,
title={\UnmarkedReversal{}},
x grid style={white!69.0196078431373!black},
xmin=37, xmax=103,
xtick style={color=black},
y grid style={white!69.0196078431373!black},
ymin=0, ymax=0.472130997068097,
ytick style={color=black}
]
\addplot [cedline, color0]
table {%
40 0.271155934216185
42 0.233890534821015
44 0.219537930042221
46 0.210450467575053
48 0.208192609824592
50 0.208556691897698
52 0.213393544825546
54 0.225904017256367
56 0.226386659453557
58 0.228104870002845
60 0.233948812327076
62 0.24139374690453
64 0.246400213152219
66 0.25257860562206
68 0.252428070675172
70 0.263028376000378
72 0.274459042377972
74 0.266582091444593
76 0.287959706144118
78 0.295257913829585
80 0.302939772427958
82 0.30649906781075
84 0.306719221719645
86 0.331130661206722
88 0.320913422058145
90 0.326778770172966
92 0.329445829205564
94 0.335399902282185
96 0.348950998521238
98 0.34153529502273
100 0.365558229766116
};
\addlegendentry{LSTM}
\addplot [cedline, color1]
table {%
40 0.452125946601855
42 0.423011231132789
44 0.383319831083888
46 0.315105190397526
48 0.210026006269108
50 0.169384903064978
52 0.139060002667527
54 0.12378705170239
56 0.117054919459039
58 0.140642856628904
60 0.135990428297875
62 0.133862512405522
64 0.148792040075295
66 0.15295020951665
68 0.161027835734049
70 0.181230950824322
72 0.181267288311191
74 0.184501036757093
76 0.20208251204915
78 0.231018272809015
80 0.214652795672557
82 0.246076027995238
84 0.247875012160821
86 0.253199402226837
88 0.269341650200701
90 0.269800533565823
92 0.297761468906505
94 0.325972424897317
96 0.322670443189408
98 0.344832426493689
100 0.360534637563146
};
\addlegendentry{Gref}
\addplot [cedline, color2]
table {%
40 0.393475270153685
42 0.374773529007062
44 0.367160054608194
46 0.307538856870265
48 0.302742892828419
50 0.22997280326412
52 0.187398464489461
54 0.157692635864322
56 0.133866999708491
58 0.122051299756552
60 0.103264978438757
62 0.114655969436772
64 0.132006696024815
66 0.126253796561892
68 0.148760229302889
70 0.161989265359973
72 0.167241342517098
74 0.165020991184177
76 0.183558104327965
78 0.202306037346199
80 0.219198272138607
82 0.229289506910901
84 0.229848587528468
86 0.234352307219653
88 0.24537768425969
90 0.252434193850851
92 0.261675121880295
94 0.267866801953238
96 0.282924903160413
98 0.274936001304295
100 0.295078747247552
};
\addlegendentry{JM}
\addplot [cedline, color3]
table {%
40 0.218366359616337
42 0.195390523465637
44 0.183778435684583
46 0.187228486607633
48 0.187659835191302
50 0.199854610479194
52 0.191476396019532
54 0.203847332242162
56 0.199757984494675
58 0.214692156483247
60 0.208223674647773
62 0.213719787021098
64 0.222922486289238
66 0.22642334676012
68 0.226813119361766
70 0.244863350692279
72 0.250622556397664
74 0.24942278154876
76 0.255770430575936
78 0.273224995020249
80 0.285070487127032
82 0.281624491380027
84 0.296307514458616
86 0.300858963559452
88 0.306233657749859
90 0.312423569709367
92 0.315824724954892
94 0.325960963137449
96 0.333006300148171
98 0.355040266613639
100 0.36845905549322
};
\addlegendentry{NS}
\addplot [cedline, color4]
table {%
40 0.309690673640727
42 0.220217524283225
44 0.188433275094305
46 0.182383957435425
48 0.181349497579694
50 0.183676895252478
52 0.182217477977079
54 0.189395006247844
56 0.197700478669566
58 0.209250055053162
60 0.206368085828613
62 0.204707665245304
64 0.217648260027219
66 0.220933753709747
68 0.229455923936042
70 0.233523843925114
72 0.246227623820609
74 0.247482286757093
76 0.252968774224475
78 0.267250101329585
80 0.270886157941076
82 0.275056637934997
84 0.291615476315233
86 0.296940253968935
88 0.304007534154353
90 0.309604255234779
92 0.332408226298978
94 0.333167634601264
96 0.328757850566985
98 0.336832850657199
100 0.356685095870696
};
\addlegendentry{NS+S}
\addplot [cedline, color5]
table {%
40 0.261599496049264
42 0.187689079061568
44 0.168890631865138
46 0.171738698958032
48 0.18406668308671
50 0.185270176119512
52 0.17851340405314
54 0.190922616333071
56 0.196652130259478
58 0.194945442036743
60 0.198948051889064
62 0.201392041795403
64 0.210933979678661
66 0.219198606205082
68 0.230551619984502
70 0.226595786321681
72 0.242524177758965
74 0.241692443007093
76 0.253293322722851
78 0.268766245514711
80 0.26767393041407
82 0.269304155348401
84 0.278424242399792
86 0.311220684554423
88 0.311190315931039
90 0.312944528243021
92 0.30934717539172
94 0.32731139488087
96 0.338311803128197
98 0.332889116440028
100 0.352997702228987
};
\addlegendentry{NS+U}
\addplot [cedline, color6]
table {%
40 0.227940244905971
42 0.163629588236713
44 0.130952372316527
46 0.0824921325102256
48 0.0638520884597966
50 0.0638488468456153
52 0.0637615062052628
54 0.0572665882790943
56 0.0520249372770218
58 0.0604469234323998
60 0.0604414720991049
62 0.0583716347393515
64 0.0634914466257762
66 0.0725904482643175
68 0.0734049516194655
70 0.0874076398648148
72 0.0981338938869616
74 0.107282221652927
76 0.120410560953371
78 0.136101805987417
80 0.146089729025181
82 0.155976548044184
84 0.163603470340968
86 0.176720302909309
88 0.188879977535673
90 0.197810492357032
92 0.215221408842661
94 0.241352050719685
96 0.240119098150748
98 0.2385509594798
100 0.26173189351921
};
\addlegendentry{NS+S+U}
\legend{}
\end{axis}

\end{tikzpicture}}
        \\ 
        \scalebox{0.8}{\input{figures/02-rns-rnn-cfls/cfl/train/padded-reversal}}
        &\scalebox{0.8}{\input{figures/02-rns-rnn-cfls/cfl/test/padded-reversal}}
        \\ 
        \scalebox{0.8}{
\begin{tikzpicture}

\definecolor{color0}{rgb}{0.12156862745098,0.466666666666667,0.705882352941177}
\definecolor{color1}{rgb}{1,0.498039215686275,0.0549019607843137}
\definecolor{color2}{rgb}{0.172549019607843,0.627450980392157,0.172549019607843}
\definecolor{color3}{rgb}{0.83921568627451,0.152941176470588,0.156862745098039}
\definecolor{color4}{rgb}{0.580392156862745,0.403921568627451,0.741176470588235}
\definecolor{color5}{rgb}{0.549019607843137,0.337254901960784,0.294117647058824}
\definecolor{color6}{rgb}{0.890196078431372,0.466666666666667,0.76078431372549}

\begin{axis}[
legend cell align={left},
legend style={fill opacity=0.8, draw opacity=1, text opacity=1, draw=white!80!black},
tick align=outside,
tick pos=left,
title={Dyck},
x grid style={white!69.0196078431373!black},
xmin=-4.15, xmax=109.15,
xtick style={color=black},
xtick={-20,0,20,40,60,80,100,120},
xticklabels={\ensuremath{-}20,0,20,40,60,80,100,120},
y grid style={white!69.0196078431373!black},
ylabel={Cross-entropy Diff.},
ymin=0, ymax=0.404664113482015,
ytick style={color=black}
]
\addplot [cedline, color0]
table {%
1 0.209556805599031
2 0.141897853521801
3 0.117884389498895
4 0.108825735985305
5 0.0848701418980468
6 0.0654647040373133
7 0.0591804544136966
8 0.0797323066155009
9 0.0565933751454614
10 0.0544991234638644
11 0.0538419269298787
12 0.0480385805404995
13 0.0542579988950231
14 0.0459068313629092
15 0.0506115576554459
16 0.0541980289196248
17 0.0384895379121379
18 0.0451154103040587
19 0.0676555555912056
20 0.045449746511522
21 0.044860850664886
22 0.033898779123858
23 0.0356517435348843
24 0.0329012343217058
25 0.0320145534252402
26 0.0309818420049364
27 0.0356263190666727
28 0.0317993291836411
29 0.033956629496955
30 0.0341037609032989
31 0.0395682697424415
32 0.0362916541105552
33 0.0290678249878653
34 0.0296377109264607
35 0.0294654934840217
36 0.0315497570166114
37 0.0332549154458794
38 0.0273797436866897
39 0.029699167655665
40 0.0338140248646996
41 0.026867658725752
42 0.0285503153317906
43 0.0262461971019979
44 0.0235777029019026
45 0.0251048910318168
46 0.0216755241620127
47 0.0205635902826495
48 0.024117746170351
49 0.0246142965102699
50 0.0293920703429333
51 0.0289044058023246
52 0.0340653439674513
53 0.0293435156045708
54 0.027715255114349
55 0.0212629524261417
56 0.0210029279886039
57 0.017910181253985
58 0.0195499870722003
59 0.0201625496919231
60 0.0233571552013632
61 0.0207841678160778
62 0.0179361182610086
63 0.0245820686762043
64 0.0205269432685207
65 0.0223425430523936
66 0.0189782431804109
67 0.0160222543208919
68 0.0212745134041753
69 0.0192978336266875
70 0.0153734031707704
71 0.0177663896640035
72 0.0179034717883713
73 0.0194201724351873
74 0.0228347588202784
75 0.026655094990964
76 0.0203642992490269
77 0.017356679929209
78 0.0145410298695822
79 0.0172384556140546
80 0.0194273181456678
81 0.0189918083417001
82 0.0131692119139784
83 0.0143637677345413
84 0.0147051557522445
85 0.0127953006677031
86 0.0134462117543479
87 0.0224609233788846
88 0.0303821657260153
89 0.0134138127479692
90 0.0181453040324617
91 0.0129068746939577
92 0.0135626431617875
93 0.0122948348833857
94 0.0191362841318707
95 0.0166435780286682
96 0.0189493478267514
97 0.015795777211423
98 0.0134792925155972
99 0.0151143451232068
100 0.0207404220415887
101 0.0161986923835109
102 0.0218978491202271
103 0.0175892312104778
104 0.0224002584438949
};
\addlegendentry{LSTM}
\addplot [cedline, color1]
table {%
1 0.385461889828256
2 0.228195519152578
3 0.0960535700459086
4 0.0572686899566074
5 0.0456568256097994
6 0.0244569282690615
7 0.0141742188780776
8 0.0103861919060707
9 0.0114809848967909
10 0.0124438184667558
11 0.0126462722989437
12 0.00993792331046772
13 0.0132224261638959
14 0.0085191673721271
15 0.0095685446941296
16 0.00485643759575893
17 0.00511519571281771
18 0.00659149673876558
19 0.00462561644584136
20 0.00414100586247657
21 0.004551065296032
22 0.00681568800204957
23 0.00942108966710919
24 0.00505472125612005
25 0.00695274048519945
26 0.00649829822433778
27 0.00405468897733841
28 0.00320571423263938
29 0.0037290076124562
30 0.00361229056276402
31 0.00694748971050874
32 0.004623409966236
33 0.00511491660682928
34 0.00359669047832167
35 0.00688313811839647
36 0.00319174898291696
37 0.00406620446255646
38 0.00488723427987237
39 0.00375853016031957
40 0.00401620817202952
41 0.00337435643902106
42 0.00328126339780088
43 0.00343917022268969
44 0.0040601014444468
45 0.00309488875712627
46 0.00252968321014302
47 0.00743139382238756
48 0.00487113071011747
49 0.00972414951650791
50 0.00314020989796904
51 0.00277413577713004
52 0.00321529736908932
53 0.00341143972432745
54 0.00538463199712946
55 0.0030970798137625
56 0.00431485655476305
57 0.00349347399921962
};
\addlegendentry{Gref}
\addplot [cedline, color2]
table {%
1 0.128078275953707
2 0.0184789197251395
3 0.00867972538920192
4 0.0105067440182063
5 0.00895262641931482
6 0.0253582889606661
7 0.0103699754146301
8 0.00742040560842294
9 0.0123035321764929
10 0.00757729017728226
11 0.00591785287828583
12 0.00535464368988026
13 0.0303000507130176
14 0.0208107937897462
15 0.0104746641406307
16 0.00604238748994934
17 0.0159464006252601
18 0.011101472586201
19 0.00618799587991103
20 0.00410741127284875
21 0.00530932328898692
22 0.00624360626704268
23 0.00425970023121414
24 0.00416938295737967
25 0.00352833076804526
26 0.0115661282548345
27 0.0105105331737915
28 0.00566836883914945
29 0.00373826256910981
30 0.00325086629103588
31 0.0027320896936327
32 0.00435520152017987
33 0.00765053331077059
34 0.00767652811949016
35 0.00519778939264182
36 0.00405196879861591
37 0.00663214157919145
38 0.00308972193748014
39 0.00295351761584306
40 0.00308298744775093
41 0.00410251956806007
42 0.0043877813524994
};
\addlegendentry{JM}
\addplot [cedline, color3]
table {%
1 0.0165244692686763
2 0.00767900159321122
3 0.010359139252179
4 0.00951870638453001
5 0.00500487066272726
6 0.0064751107896488
7 0.00504780859886189
8 0.00815948999692662
9 0.00868123391948394
10 0.00672250909204186
11 0.0061787688102608
12 0.00590048108934893
13 0.00571414855954644
14 0.00342546007271283
15 0.00582997758288939
16 0.00758099276148316
17 0.00606657276138844
18 0.00573685353890918
19 0.00464182505978661
20 0.00384335287443571
21 0.0133088478636272
22 0.00267767126383456
23 0.00434655317088095
24 0.00945603424168662
25 0.00404778114291793
26 0.00381054450270368
27 0.00329079459277504
28 0.00649989866779022
29 0.00569083668530512
30 0.00501433012432217
31 0.00530962522238898
32 0.00396757571319861
33 0.00460047324761548
};
\addlegendentry{NS}
\addplot [cedline, color4]
table {%
1 0.0221405428213096
2 0.00901463012998394
3 0.00730519857573575
4 0.0050258915113286
5 0.00559423607826393
6 0.00393552822311027
7 0.00531209838441005
8 0.00548379445193892
9 0.00451379581403777
10 0.00477382725494224
11 0.0156623315010284
12 0.00487796684439068
13 0.00317675011979357
14 0.00339345581681882
15 0.00286854688945581
16 0.00307866036420323
17 0.00369675881340381
18 0.0037726977500494
19 0.00282917384549786
20 0.00527187048363364
21 0.00380111616851087
22 0.00427130413086274
23 0.00218844245431271
24 0.00409000054633069
25 0.00327814079762467
26 0.00507848698222435
27 0.00338411974724129
28 0.00393584980728057
29 0.007110450747153
30 0.00447798754326967
31 0.00261453135148104
32 0.00397040769802215
33 0.00307135356670596
};
\addlegendentry{NS+S}
\addplot [cedline, color5]
table {%
1 0.210590616029861
2 0.111213545900025
3 0.105614821813008
4 0.0729710683822986
5 0.0111381489662038
6 0.0120400498899319
7 0.00503343700609149
8 0.00657849325053472
9 0.0041164701267874
10 0.00392563980613336
11 0.00658985459661909
12 0.0040754685469726
13 0.00847423301965333
14 0.00709171717622381
15 0.00291036190640914
16 0.003227870396455
17 0.00369932524586591
18 0.00324624330241741
19 0.00306889741493999
20 0.0029505926319634
21 0.00330666877866448
22 0.00240289276513939
23 0.00359864745822658
24 0.00318258330826282
25 0.00160278201923714
26 0.00502703032936336
27 0.00953368082053063
28 0.002934385025398
29 0.00309437781783883
30 0.00141741675308782
31 0.00271300019085197
32 0.00408915828545287
33 0.00259055668319641
34 0.00359011145849808
35 0.00386582161146309
36 0.00399744725493212
37 0.00345941784354675
38 0.00160262228403574
39 0.00157180730103101
40 0.00193821107201753
};
\addlegendentry{NS+U}
\addplot [cedline, color6]
table {%
1 0.0704156448832237
2 0.0107502404290499
3 0.00942779497563673
4 0.00522976723044744
5 0.0114312335283908
6 0.00453090035426762
7 0.00353270160823282
8 0.00458743775202364
9 0.00300298557736967
10 0.00574264175691275
11 0.00340204336250616
12 0.00373321796975101
13 0.00509561687259275
14 0.00951089239511937
15 0.00359742396381302
16 0.00267555142160458
17 0.00673060738551756
18 0.00505653696323627
19 0.00342175024219216
20 0.00281002436459676
21 0.00310746629517566
22 0.00396327032890309
23 0.00266362806045983
24 0.00323259399792708
25 0.00204607004838719
26 0.00333912850777784
27 0.00244682729395662
28 0.0027075974388352
29 0.00163304842309886
30 0.0016404525200735
31 0.0031139343832105
32 0.00243614562957306
33 0.00826704557861491
34 0.00354209754042401
35 0.00286722280586538
36 0.00254109896020815
37 0.00204902093433934
38 0.00310881193603674
39 0.00227990535833111
};
\addlegendentry{NS+S+U}
\legend{}
\end{axis}

\end{tikzpicture}}
        &\scalebox{0.8}{
\begin{tikzpicture}

\definecolor{color0}{rgb}{0.12156862745098,0.466666666666667,0.705882352941177}
\definecolor{color1}{rgb}{1,0.498039215686275,0.0549019607843137}
\definecolor{color2}{rgb}{0.172549019607843,0.627450980392157,0.172549019607843}
\definecolor{color3}{rgb}{0.83921568627451,0.152941176470588,0.156862745098039}
\definecolor{color4}{rgb}{0.580392156862745,0.403921568627451,0.741176470588235}
\definecolor{color5}{rgb}{0.549019607843137,0.337254901960784,0.294117647058824}
\definecolor{color6}{rgb}{0.890196078431372,0.466666666666667,0.76078431372549}

\begin{axis}[
legend cell align={left},
legend style={
  fill opacity=0.8,
  draw opacity=1,
  text opacity=1,
  at={(0.5,0.91)},
  anchor=north,
  draw=white!80!black
},
tick align=outside,
tick pos=left,
title={Dyck},
x grid style={white!69.0196078431373!black},
xmin=37, xmax=103,
xtick style={color=black},
y grid style={white!69.0196078431373!black},
ymin=0, ymax=0.145530964364126,
ytick style={color=black}
]
\addplot [cedline, color0]
table {%
40 0.0924065642509154
42 0.0828644121141485
44 0.0740360885531545
46 0.0679356678658177
48 0.0650548552171427
50 0.0606064949009424
52 0.0576874390640905
54 0.0544102346302596
56 0.0537328528336118
58 0.0521379075747327
60 0.0528063314086739
62 0.0571899543259274
64 0.0494924530103579
66 0.0573067972581321
68 0.0569305548989001
70 0.0580931959927632
72 0.0589204760853353
74 0.0617770235800885
76 0.0635559703718759
78 0.067577164592131
80 0.0661455986310147
82 0.07096370850119
84 0.0811088096013494
86 0.0853039073483189
88 0.0821260305710678
90 0.0855744636550761
92 0.114191599539693
94 0.107400537071778
96 0.0995999720630929
98 0.103208642478841
100 0.140538000108622
};
\addlegendentry{LSTM}
\addplot [cedline, color1]
table {%
40 0.105490060344665
42 0.0908667172558636
44 0.0796187165913489
46 0.0727687691789827
48 0.0647943222930356
50 0.0586659120271679
52 0.054801512619515
54 0.0510832637495777
56 0.0487513989547741
58 0.0462598702998387
60 0.0452093155275263
62 0.0435462136265227
64 0.0440666717603577
66 0.0438071324960052
68 0.0436366378925597
70 0.044888076604559
72 0.0458585513207806
74 0.047316737121755
76 0.0497081872955772
78 0.0520788086783652
80 0.0544918925160454
82 0.0586148992014913
84 0.061643448847673
86 0.0644965034330889
88 0.0682362394896071
90 0.0724897066152411
92 0.075566820053806
94 0.0769731110849353
96 0.081600163348531
98 0.0848742240760628
100 0.0896070266401817
};
\addlegendentry{Gref}
\addplot [cedline, color2]
table {%
40 0.121551178866007
42 0.101691004138131
44 0.0895676098205156
46 0.0792778699103657
48 0.0725562503064284
50 0.0654957220761876
52 0.0594697116908594
54 0.0550163692183278
56 0.0512953014355855
58 0.0477867423204954
60 0.045804858560313
62 0.0446002810868401
64 0.0420900341401654
66 0.0415542611107441
68 0.0406787149985379
70 0.040848546454911
72 0.0409883137351641
74 0.0422477267050884
76 0.0430064954327526
78 0.0443844232946627
80 0.0467749389087924
82 0.049612287190949
84 0.0506709074873789
86 0.0534427139215946
88 0.0554609915085679
90 0.0585079071646918
92 0.0611672484812252
94 0.0627938861671723
96 0.0658059764123197
98 0.0687584274378052
100 0.0708884023363451
};
\addlegendentry{JM}
\addplot [cedline, color3]
table {%
40 0.108612142566464
42 0.0908525230334799
44 0.0801192048725988
46 0.073198664458238
48 0.0673276549620406
50 0.0632324427410159
52 0.057724244414916
54 0.0537001849143505
56 0.050284602079774
58 0.0483082515233556
60 0.0460983075869524
62 0.0445543206453719
64 0.0420524740440116
66 0.0416366130230575
68 0.0407801218146612
70 0.0413341455966364
72 0.0417767207343079
74 0.0436438204550883
76 0.0446424278597007
78 0.0465266182115931
80 0.0494497556526194
82 0.0535538934597745
84 0.0549983431491435
86 0.0587894497333764
88 0.0660672283414331
90 0.068676498339142
92 0.0739213122413059
94 0.0782935777790142
96 0.0835715611417011
98 0.0899395748494213
100 0.0941990459007015
};
\addlegendentry{NS}
\addplot [cedline, color4]
table {%
40 0.0941575884506105
42 0.0781370547376951
44 0.0710468849941266
46 0.0639465658877593
48 0.0598599413140815
50 0.0566572091319474
52 0.0549795049544207
54 0.0514947516757142
56 0.0509713653747302
58 0.0481372703263216
60 0.0477865600203541
62 0.0461374369648162
64 0.0447805389478577
66 0.0447820188066396
68 0.0447915291969075
70 0.0450869652489251
72 0.0456243769843081
74 0.0469477918092551
76 0.0492511687282072
78 0.0505936301775742
80 0.0537322233416627
82 0.0583363435727262
84 0.063286026972673
86 0.0666140164539223
88 0.073583248355478
90 0.0772877942697741
92 0.0832325206586446
94 0.0884374408546722
96 0.0937654641700517
98 0.100014888533133
100 0.105164634158746
};
\addlegendentry{NS+S}
\addplot [cedline, color5]
table {%
40 0.106087133039025
42 0.0930974812456892
44 0.0803804353413489
46 0.0740144577594348
48 0.0667379806157397
50 0.0624199331668738
52 0.0589943745004112
54 0.0547408454257141
56 0.0522537461313092
58 0.0483724729216607
60 0.0471207365060098
62 0.0456783750848561
64 0.0446556891882424
66 0.0423894406816396
68 0.0411009721316901
70 0.0418379693202279
72 0.0407645739021162
74 0.0418845756634216
76 0.0436533095561292
78 0.045243860102416
80 0.0471609825044712
82 0.0520933442316118
84 0.0558126813844377
86 0.0597220669208762
88 0.0663291994615174
90 0.0738866739057631
92 0.0825733934745048
94 0.0901072085355933
96 0.102057033516057
98 0.113555371487679
100 0.126998637871617
};
\addlegendentry{NS+U}
\addplot [cedline, color6]
table {%
40 0.099218444967379
42 0.0802556843706892
44 0.0704924145080156
46 0.0657512845546475
48 0.0618024536705866
50 0.0587049745271682
52 0.0548860405528877
54 0.0524817456387823
56 0.0499600663752785
58 0.0481779466372327
60 0.0465869890674854
62 0.0464906270689829
64 0.0448203526497809
66 0.0451910090118635
68 0.0444530582956394
70 0.0434340300596645
72 0.0448596482813969
74 0.0461295626425884
76 0.0473527692760968
78 0.0500076926775741
80 0.0536979833725268
82 0.0564008672926056
84 0.0599936326712024
86 0.0631127592700143
88 0.0693712465120789
90 0.0736028268934005
92 0.0829682712466824
94 0.0843720625651987
96 0.0940532480779126
98 0.0999668988708859
100 0.106910843935974
};
\addlegendentry{NS+S+U}
\legend{}
\end{axis}

\end{tikzpicture}}
        \\ 
        \scalebox{0.8}{\input{figures/02-rns-rnn-cfls/cfl/train/hardest-cfl}} 
        &\scalebox{0.8}{\input{figures/02-rns-rnn-cfls/cfl/test/hardest-cfl}}
    \end{tabular}
    \caption[Results of training stack RNNs on CFLs, including the RNS-RNN.]{Left: Cross-entropy difference between model and source distribution on validation set vs.\ training time. Each line corresponds to the model which attains the lowest cross-entropy difference out of all random restarts. Right: Cross-entropy difference on the test set, binned by string length. These models are the same as those shown to the left.}
    \label{fig:rns-rnn-cfl-results}
\end{figure*}

\section{Wall-Clock Training Time}
\label{sec:rns-rnn-wall-clock-time}

We report wall-clock execution time for each model on the \MarkedReversal{} task in \cref{tab:rns-rnn-wall-clock-time}. We run the LSTM, Strat., and Sup.\ models in CPU mode, as this is faster than running on GPU due to the small model size. We run experiments for the NS models in GPU mode on a pool of the following NVIDIA GPU models, automatically selected based on availability: GeForce GTX TITAN X, TITAN X (Pascal), and GeForce GTX 1080 Ti.

\begin{table}
    \caption{Wall-clock execution time on \MarkedReversal{}}
    \label{tab:rns-rnn-wall-clock-time}
    \begin{center}
        \begin{tabular}{@{}lc@{}}
            \toprule
            Model & Time per epoch (s) \\
            \midrule
            LSTM & 51 \\
            Strat. & 801 \\
            Sup. & 169 \\
            NS & 1022 \\
            NS+S & 980 \\
            NS+U & 960 \\
            NS+S+U & 1060 \\
            \bottomrule
        \end{tabular}
    \end{center}
    \generalnote{Times are measured in seconds per epoch of training (averaged over all epochs). The speed of the NS models is roughly the same; there is some variation here due to differences in training data and GPU model.}
\end{table}

\section{Evolution of Stack Actions}
\label{sec:rns-rnn-evolution-of-stack-actions}

In \cref{fig:rns-rnn-evolution-of-stack-actions}, we show the evolution of stack actions for the NS+S (normalized) and NS+S+U (unnormalized) models over training time on the simplest of the CFL tasks: \MarkedReversal{}. We see that the normalized model begins solving the task by learning to push and pop symbols close to the middle marker. It then gradually learns to push and pop matching pairs of symbols further and further away from the middle marker. On the other hand, the unnormalized model learns the correct actions for all timesteps almost immediately. Oddly, the normalized model never learns to push the first 5 symbols with the same confidence as the other symbols in the first half; the controller may be memorizing those symbols, keeping track of string position to determine when to switch to using the stack.

\begin{figure*}
\pgfplotsset{every axis/.style={width=2.8in}}
    \centering
    \begin{tabular}{@{}cc@{}}
\begin{tikzpicture}

\begin{axis}[title={NS+S},axis line style={draw=none},
tick align=outside,
tick pos=left,
xlabel={\(\displaystyle t\)},
xmin=0, xmax=43,
y dir=reverse,
y grid style={white!69.0196078431373!black},
ylabel={$\leftarrow$ Epochs Elapsed},
ymin=0, ymax=30,
ytick style={color=black}, height=5cm
]
\addplot graphics [includegraphics cmd=\pgfimage,xmin=0, xmax=43, ymin=30, ymax=0] {};
\end{axis}

\end{tikzpicture} &
\begin{tikzpicture}

\begin{axis}[title={NS+S+U},axis line style={draw=none},
tick align=outside,
tick pos=left,
xlabel={\(\displaystyle t\)},
xmin=0, xmax=43,
y dir=reverse,
y grid style={white!69.0196078431373!black},
ymin=0, ymax=30,
ytick style={color=black},
height=5cm,
]
\addplot graphics [includegraphics cmd=\pgfimage,xmin=0, xmax=43, ymin=30, ymax=0] {};
\end{axis}

\end{tikzpicture}
    \end{tabular}
    \caption[Evolution of stack actions during training on the \MarkedReversal{} task.]{Visualization of the first 30 epochs of training (top to bottom) on a single string sampled from the \MarkedReversal{} task. In each plot, the horizontal axis is the string position (timestep). Darkness indicates the weight assigned to the correct stack action type, normalized by the weight of all actions at time~$t$ (black = correct, white = incorrect). The white band in the middle occurs because ``replace'' is considered the correct action type for the middle timestep, but the models apparently learn to perform a different action without affecting the results. Both models are trained with learning rate 0.005.}
    \label{fig:rns-rnn-evolution-of-stack-actions}
\end{figure*}

\section{Conclusion}

The Renormalizing NS-RNN (RNS-RNN) builds upon the strengths of the NS-RNN by letting stack action weights remain unnormalized and providing information about PDA states to the controller. We showed that both of these changes substantially improve learning, allowing the RNS-RNN to surpass other stack RNNs on a range of CFL modeling tasks.

\chapter{Incremental Execution on Natural Language}
\label{chap:incremental-execution}

In this chapter, we develop \term{limited-memory} versions of the NS-RNN and RNS-RNN that require only $\bigo{n}$ time and space, rather than $\bigo{n^3}$ time and $\bigo{n^2}$ space. The limited-memory (R)NS-RNN can be run incrementally on arbitrarily long sequences, which is a necessity for language modeling on natural language. We provide experimental results on the Penn Treebank language modeling benchmark \citep{marcus-etal-1994-penn,mikolov-etal-2011-empirical} and analyze the models' performance on a comprehensive syntactic generalization benchmark \citep{hu-etal-2020-systematic,gauthier-etal-2020-syntaxgym}.

This work appeared in a paper published at ICLR 2022 \citep{dusell-chiang-2022-learning}. The code used for the experiments in this chapter is publicly available.\footnote{\url{https://github.com/bdusell/nondeterministic-stack-rnn/tree/iclr2022}}

\section{Motivation}

For standard language modeling benchmarks, during both training and evaluation, RNN language models customarily process the entire dataset in order as if it were one long sequence, since being able to retain contextual knowledge of past sentences significantly improves predictions for future sentences. Running a full forward and backward pass during training on such a long sequence would be infeasible, so the dataset is processed incrementally using a technique called truncated backpropagation through time (BPTT). This technique is feasible for models whose time and space complexity is linear with respect to sequence length, but for memory-augmented models such as stack RNNs, something must be done to limit the time and storage requirements. \Citet{yogatama-etal-2018-memory} did this for the superposition stack by limiting the stack to 10 elements. In this section, we propose a technique for limiting the space and time requirements of the NS-RNN and RNS-RNN, allowing us to use truncated BPTT and retain contextual information.

\section{Limited-Memory NS-RNN}
\label{sec:limited-ns-rnn}

To limit the space and time requirements of the NS-RNN or RNS-RNN, we introduce the constraint that the stack WFA can only contain transitions $\nsinnerweight{i}{q}{x}{t}{r}{y}$ where $t - i$ does not exceed a hyperparameter $D$; all other transitions are treated as having zero weight. It may be easier to get an intuition for this constraint in terms of CFGs. The stack WFA formulation, which is based on Lang's algorithm, can be thought of as converting a PDA to a CFG \citep{hopcroft-ullman-1979-introduction,sipser-2013-introduction} and then parsing with a CKY-style algorithm. The equation for $\nsinnerweightletter$ (\cref{eq:ns-rnn-gamma}) has three terms, corresponding to rules of the form $A \rightarrow b$ (push), $A \rightarrow Bc$ (replace), and $A \rightarrow BCd$ (pop). The constraint $t-i \leq D$ on $\nsinnerweightletter$ means that these rules can only be used when they span at most $D$ positions. Note that this effectively bounds the depth of the stack, since at most one symbol can be pushed per timestep.

The equations for $\nsforwardweightletter$ (\cref{eq:ns-rnn-alpha-init,eq:ns-rnn-alpha-recurrence}) have two cases, which correspond to rules of the form $A \rightarrow \emptystring$ and $A \rightarrow AB$. The definition of $\nsforwardweightletter$ allows these rules to be used for spans starting at 0 and ending anywhere. This is essentially equivalent to the constraint used in the Hiero machine translation system~\citep{chiang-2005-hierarchical}, which uses synchronous CFGs under the constraint that no nonterminal spans more than 10 symbols, with the exception of so-called glue rules $S \rightarrow X$ and $S \rightarrow SX$.

\begin{figure*}
    \def\D{3}
    \def\T{4}
    \def\numchunks{3}
    \edef\Dminone{\number\numexpr\D-1\relax}
    \centering
    \begin{tikzpicture}
        \pgfmathsetmacro{\size}{0.7}
        \pgfmathsetmacro{\gap}{0.1}
        \pgfmathsetmacro{\xmax}{\T*\numchunks-1}
        \pgfmathsetmacro{\alphay}{\size*-(\T*\numchunks+1)}
        \pgfmathsetmacro{\alphaoffset}{0.1}
        \pgfmathsetmacro{\arrowlength}{2}
        \foreach \x in {0,...,\xmax} {
            \pgfmathsetmacro{\ymin}{\x-\D+1<0 ? 0 : \x-\D+1}
            \foreach \y in {\ymin,...,\x} {
                \draw (\size*\x+\gap,-\size*\y-\gap) rectangle ++(\size-2*\gap,-\size+2*\gap);
            }
        }
        \pgfmathsetmacro{\imax}{\numchunks-1}
        \foreach \i in {0,...,\imax} {
            \pgfmathsetmacro{\chunkno}{int(\i+1)}
            \draw[dashed] ({\size*(-(\D-1)+\i*\T)},{\size*(\D-1-\i*\T)}) rectangle ++({\size*(\T+\D-1)},{-(\size*(\T+\D-1))});
            \node[anchor=north east] at ({\size*((\i+1)*\T)},{\size*(\D-1-\i*\T)}) {\footnotesize Batch \chunkno};
            \draw[dashed] ({\size*(-(\D-1)+\i*\T)},{\alphay+mod(\i,2)*\alphaoffset}) rectangle ++({\size*(\T+\D-1)},{-\size-mod(\i,2)*2*\alphaoffset});
        }
        \pgfmathsetmacro{\imax}{\T*\numchunks}
        \foreach \i in {0,...,\imax} {
            \node[anchor=south] at ({-\size/2+\size*\i},0) {\footnotesize \i};
        }
        \pgfmathsetmacro{\imax}{\T*\numchunks-1}
        \foreach \i in {0,...,\imax} {
            \node[anchor=east] at ({-\size},{\size*-(\i+0.5)}) {\footnotesize \i};
        }
        \node[anchor=south] at (\T*\numchunks*\size/2,{\size}) {$t$};
        \node[anchor=east] at ({\size*-2},-\T*\numchunks*\size/2) {$i$};
        \draw[->] (\T*\numchunks*\size/2-\arrowlength/2, \size) -- ++(\arrowlength, 0);
        \draw[->] (-2*\size,-\T*\numchunks*\size/2+\arrowlength/2) -- ++(0,-\arrowlength);
        \node[anchor=east] at ({-3*\size},{-\T*\numchunks*\size/2}) {$\nsinnerweightletter$};
        \node[anchor=east] at ({-3*\size},{\alphay-0.5*\size}) {$\nsforwardweightletter$};
        \pgfmathsetmacro{\imax}{\T*\numchunks}
        \foreach \i in {0,...,\imax} {
            \draw ({\size*(\i-1)+\gap},{\alphay-\gap}) rectangle ++(\size-2*\gap,-\size+2*\gap);
        }
    \end{tikzpicture}
    \caption[Visualization of incremental execution in the NS-RNN or RNS-RNN.]{Visualization of incremental execution in the NS-RNN or RNS-RNN when $D = \D$ and the length of batches in truncated BPTT is \T{}. Squares with solid edges represent non-zero entries of $\nsinnerweightletter$ or $\nsforwardweightletter$. Rectangles with dashed edges represent batches used in truncated BPTT. Each batch computes \T{} new columns of $\nsinnerweightletter$ and entries of $\nsforwardweightletter$, requiring $D-1 = \Dminone$ previous columns of $\nsinnerweightletter$ and entries of $\nsforwardweightletter$. Batch 1 is padded with zero entries back in time. Places where two dashed rectangles intersect represent slices of $\nsinnerweightletter$ and $\nsforwardweightletter$ that are forwarded from one batch to the next. The hidden state of the LSTM controller is also forwarded (not shown). Crucially, processing the sequence in \numchunks{} batches is equivalent to processing it as one larger batch.}
    \label{fig:incremental-execution}
\end{figure*}

If we consider the tensor $\nsinnerweightletter$, which contains the weights of the stack WFA, as a matrix with axes for the variables $i$ and $t$, then the only non-zero entries in $\nsinnerweightletter$ lie in a band of height $D$ along the diagonal. Crucially, column $t$ of $\nsinnerweightletter$ depends only on $\nsinnerweightit{i}{t'}$ for $t-D \leq i \leq t-2$ and $t-D+1 \leq t' \leq t-1$. Similarly, $\nsforwardweightt{t}$ depends only on $\nsforwardweightt{i}$ for $t-D \leq i \leq t-1$ and $\nsinnerweightit{i}{t}$ for $t-i \leq D$. So, just as truncated BPTT for an RNN involves freezing the hidden state and forwarding it to the next forward-backward pass, truncated BPTT for the (R)NS-RNN involves forwarding the hidden state of the controller \emph{and} forwarding a slice of $\nsinnerweightletter$ and $\nsforwardweightletter$. This reduces the time complexity of the (R)NS-RNN to $\bigo{{|Q|}^4 {|\Gamma|}^3 D^2 n}$ and its space complexity to $\bigo{{|Q|}^2 {|\Gamma|}^2 D n}$. \Cref{fig:incremental-execution} explains this visually.

\section{Experiments}
\label{sec:limited-ns-rnn-ptb-experiments}

Limiting the memory of the (R)NS-RNN now makes it feasible to run experiments on natural language modeling benchmarks, although the high computational cost of increasing $|Q|$ and $|\Gamma|$ still limits us to settings with little information bandwidth in the stack. We believe this will make it difficult for the (R)NS-RNN to store lexical information on the stack, but it might succeed in using $\Gamma$ as a small set of syntactic categories. To this end, we run exploratory experiments with the NS-RNN, RNS-RNN, and other language models on the Penn Treebank (PTB) \citep{marcus-etal-1994-penn} as preprocessed by \citet{mikolov-etal-2011-empirical}.

\subsection{Models and Training}
\label{sec:limited-ns-rnn-ptb-models-and-training}

We compare four types of model: LSTM, superposition (``Sup.'') with a maximum stack depth of 10, and memory-limited NS-RNNs (``NS'') and RNS-RNNs (``RNS'') with $D = 35$.

The hyperparameters for our baseline LSTM, initialization, and training schedule are based on the unregularized LSTM experiments of \citet{semeniuta-etal-2016-recurrent}. We train all models using simple stochastic gradient descent (SGD) as recommended by prior language modeling work \citep{merity-etal-2018-regularizing} and truncated BPTT with a sequence length of 35. For all models, we use a minibatch size of 32. We randomly initialize all parameters uniformly from the interval $[-0.05, 0.05]$. We divide the learning rate by 1.5 whenever the validation perplexity does not improve, and we stop training after 2 epochs of no improvement in validation perplexity.

We test two variants of Sup., pushing either the hidden state or a learned vector. Unless otherwise noted, the LSTM controller has 256 units, one layer, and no dropout. We include an LSTM with 258 units, which has more parameters than the largest Sup.\ model, and an LSTM with 267 units, which has more parameters than the largest (R)NS-RNN.

We use a word embedding layer of the same size as the hidden state. For each model, we randomly search \citep{bergstra-bengio-2012-random} for a good initial learning rate and gradient clipping threshold. Like \citet{yogatama-etal-2018-memory}, we report results for the model with the best validation perplexity out of 10 randomly searched models. The learning rate, which is divided by batch size and sequence length, is drawn from a log-uniform distribution over $[1, 100]$, and the gradient clipping threshold, which is multiplied by batch size and sequence length, is drawn from a log-uniform distribution over $[1 \times 10^{-5}, 1 \times 10^{-3}]$. We scale the learning rate and gradient clipping threshold this way because, under our implementation, sequence length and batch size can vary when the dataset is not evenly divisible by the prescribed values. Other language modeling work follows a different scaling convention for these two hyperparameters, typically scaling the learning rate by sequence length but not by batch size, and not rescaling the gradient clipping threshold. Under this convention the learning rate would be drawn from $[0.03125, 3.125]$ and the gradient clipping threshold from $[0.0112, 1.12]$.

\subsection{Evaluation}

In addition to perplexity, we also report the recently proposed Syntactic Generalization (SG) score metric \citep{hu-etal-2020-systematic,gauthier-etal-2020-syntaxgym}. This score, which ranges from 0 to 1, puts a language model through a battery of psycholinguistically-motivated tests that test how well a model generalizes to non-linear, nested syntactic patterns. It does this by measuring the \term{surprisal}, or log-probability, that the model assigns to certain words at carefully-selected points in a sentence. \Citet{hu-etal-2020-systematic} noted that perplexity does not, in general, agree with SG score, so we hypothesized the SG score would provide crucial insight into the stack's effectiveness.

\subsection{Results}

We show the results of our experiments on the Penn Treebank in \cref{tab:rns-ptb-results}.

\begin{longtable}{llccc}
    \caption{Language modeling results on the Penn Treebank \label{tab:rns-ptb-results}\/}\\
    \toprule
    Model & \# Params & Val & Test & SG Score \\
    \midrule
    \endfirsthead
    \caption[]{\textit{Continued}}\\
    \midrule
    Model & \# Params & Val & Test & SG Score \\
    \midrule
    \endhead
    \endfoot
    \bottomrule
    \endlastfoot
    LSTM, 256 units & 5,656,336 & 125.78 & 120.95 & 0.433 \\
    LSTM, 258 units & 5,704,576 & 122.08 & 118.20 & 0.420 \\
    LSTM, 267 units & 5,922,448 & 125.20 & 120.22 & 0.437 \\
    Sup.\ (push hidden state), 247 units & 5,684,828 & \textbf{121.24} & \textbf{115.35} & 0.387 \\
    Sup.\ (push learned), $|\pushedstackvectort{t}| = 22$ & 5,685,289 & 122.87 & 117.93 & 0.431 \\
    NS, $|Q| = 1$, $|\Gamma| = 2$ & 5,660,954 & 126.10 & 122.62 & 0.414 \\
    NS, $|Q| = 1$, $|\Gamma| = 3$ & 5,664,805 & 123.41 & 119.25 & 0.430 \\
    NS, $|Q| = 1$, $|\Gamma| = 4$ & 5,669,684 & 121.66 & 117.91 & 0.432 \\
    NS, $|Q| = 1$, $|\Gamma| = 5$ & 5,675,591 & 123.01 & 119.54 & 0.452 \\
    NS, $|Q| = 1$, $|\Gamma| = 6$ & 5,682,526 & 129.94 & 125.45 & 0.432 \\
    NS, $|Q| = 1$, $|\Gamma| = 7$ & 5,690,489 & 126.11 & 121.94 & 0.443 \\
    NS, $|Q| = 1$, $|\Gamma| = 11$ & 5,732,621 & 129.11 & 124.98 & 0.431 \\
    NS, $|Q| = 2$, $|\Gamma| = 2$ & 5,668,664 & 128.16 & 123.52 & 0.412 \\
    NS, $|Q| = 2$, $|\Gamma| = 3$ & 5,680,996 & 129.51 & 126.00 & \textbf{0.471} \\
    NS, $|Q| = 2$, $|\Gamma| = 4$ & 5,697,440 & 124.28 & 120.18 & 0.433 \\
    NS, $|Q| = 2$, $|\Gamma| = 5$ & 5,717,996 & 124.24 & 119.34 & 0.429 \\
    NS, $|Q| = 3$, $|\Gamma| = 2$ & 5,681,514 & 125.32 & 120.62 & 0.470 \\
    NS, $|Q| = 3$, $|\Gamma| = 3$ & 5,707,981 & 122.96 & 118.89 & 0.420 \\
    NS, $|Q| = 3$, $|\Gamma| = 4$ & 5,743,700 & 126.71 & 122.53 & 0.447 \\
    RNS, $|Q| = 1$, $|\Gamma| = 2$ & 5,660,954 & 122.64 & 117.56 & 0.435 \\
    RNS, $|Q| = 1$, $|\Gamma| = 3$ & 5,664,805 & 121.83 & 116.46 & 0.430 \\
    RNS, $|Q| = 1$, $|\Gamma| = 4$ & 5,669,684 & 127.99 & 123.06 & 0.437 \\
    RNS, $|Q| = 1$, $|\Gamma| = 5$ & 5,675,591 & 126.41 & 122.25 & 0.441 \\
    RNS, $|Q| = 1$, $|\Gamma| = 6$ & 5,682,526 & 122.57 & 117.79 & 0.416 \\
    RNS, $|Q| = 1$, $|\Gamma| = 7$ & 5,690,489 & 123.51 & 120.48 & 0.430 \\
    RNS, $|Q| = 1$, $|\Gamma| = 11$ & 5,732,621 & 127.21 & 121.84 & 0.386 \\
    RNS, $|Q| = 2$, $|\Gamma| = 2$ & 5,670,712 & 122.11 & 117.22 & 0.399 \\
    RNS, $|Q| = 2$, $|\Gamma| = 3$ & 5,684,068 & 131.46 & 127.57 & 0.463 \\
    RNS, $|Q| = 2$, $|\Gamma| = 4$ & 5,701,536 & 124.96 & 121.61 & 0.431 \\
    RNS, $|Q| = 2$, $|\Gamma| = 5$ & 5,723,116 & 122.92 & 117.87 & 0.423 \\
    RNS, $|Q| = 3$, $|\Gamma| = 2$ & 5,685,610 & 129.48 & 124.66 & 0.433 \\
    RNS, $|Q| = 3$, $|\Gamma| = 3$ & 5,714,125 & 127.57 & 123.00 & 0.434 \\
    RNS, $|Q| = 3$, $|\Gamma| = 4$ & 5,751,892 & 122.67 & 118.09 & 0.408 \\
\end{longtable}

We reproduce the finding of \citet{yogatama-etal-2018-memory} that Sup.\ can achieve lower perplexity than an LSTM with a comparable number of parameters, but this does not translate into a better SG score. We also show results for various sizes of NS and RNS. The setting $|Q| = 1$, $|\Gamma| = 2$ represents minimal capacity in the (R)NS-RNN models and is meant to serve as a baseline for the other settings. The other two settings are meant to test the upper limits of model capacity before computational cost becomes too great. The setting $|Q| = 1$, $|\Gamma| = 11$ represents the greatest number of stack symbol types we can afford to use, using only one PDA state. We selected the setting  $|Q| = 3$, $|\Gamma| = 4$ by increasing the number of PDA states, and then the number of stack symbol types, until computational cost became too great (recall that the time complexity is $\bigo{{|Q|}^4 {|\Gamma|}^3}$, so adding states is more expensive than adding stack symbol types).

The results for NS and RNS do not show a clear trend in perplexity or SG score as the number of states or stack symbols increases, or as the modifications in RNS are applied, even when breaking down SG score by type of syntactic test. We hypothesize that this is due to the information bottleneck caused by using a small discrete set of symbols $\Gamma$ in both models, a limitation we will address in \cref{chap:vrns-rnn}. This limitation in the (R)NS-RNN may explain why Sup.\ achieves lower perplexity; instead of discrete symbols, Sup.\ uses a stack of vectors, which can be used as word embeddings to encode lexical information. Another advantage that Sup.\ has is that only the depth of its stack is limited, whereas the (R)NS-RNN is limited in the number of timesteps for which it can retain information on the stack. Sup.\ can retain information on the stack indefinitely (likely with some decay). The effect of this advantage could be measured by training (R)NS and Sup.\ on a task with sequences no longer than $D$.

In \cref{tab:rns-ptb-circuits}, we show the same experiments with SG score broken down by syntactic ``circuit'' as defined by \citet{hu-etal-2020-systematic}, offering a more fine-grained look at the classes of errors the models make (Agr.\ = Agreement, Lic.\ = Licensing, GPE = Garden-Path Effects, GSE = Gross Syntactic Expectation, CE = Center Embedding, and LDD = Long-Distance Dependencies). Note that these tests are of quite different character; Garden-Path Effects and Center Embedding are arguably the most syntax-oriented. Note also that SG score tests how well a model's probability distribution corresponds to \emph{human} sentence processing difficulty, not just grammaticality. For example, in the case of Garden-Path Effects, a model is rewarded if it has \emph{more} difficulty on sentences with reduced relative clauses than those with explicit relative clauses, so improved nondeterministic processing may not contribute to a higher score.

\begin{longtable}{lcccccc}
    \caption{SG scores on the Penn Treebank broken down by circuit \label{tab:rns-ptb-circuits}\/}\\
    \toprule
    Model & Agr. & Lic. & GPE & GSE & CE & LDD \\
    \midrule
    \endfirsthead
    \caption[]{\textit{Continued}}\\
    \midrule
    Model & Agr. & Lic. & GPE & GSE & CE & LDD \\
    \midrule
    \endhead
    \endfoot
    \bottomrule
    \endlastfoot
    LSTM, 256 units & 0.667 & 0.446 & 0.330 & 0.397 & 0.482 & 0.414 \\
    LSTM, 258 units & 0.658 & 0.447 & 0.335 & 0.375 & 0.518 & 0.357 \\
    LSTM, 267 units & 0.667 & 0.497 & 0.343 & 0.446 & 0.411 & 0.350 \\
    Sup.\ (push hidden state) & 0.640 & 0.408 & 0.296 & 0.310 & 0.464 & 0.352 \\
    Sup.\ (push learned) & 0.684 & 0.439 & 0.340 & 0.408 & 0.482 & 0.395 \\
    NS, $|Q| = 1$, $|\Gamma| = 2$ & 0.588 & 0.452 & 0.298 & 0.391 & 0.339 & 0.418 \\
    NS, $|Q| = 1$, $|\Gamma| = 3$ & 0.623 & 0.467 & 0.400 & 0.413 & 0.393 & 0.354 \\
    NS, $|Q| = 1$, $|\Gamma| = 4$ & 0.640 & 0.497 & 0.331 & 0.375 & 0.571 & 0.340 \\
    NS, $|Q| = 1$, $|\Gamma| = 5$ & 0.605 & 0.514 & 0.394 & 0.413 & 0.589 & 0.344 \\
    NS, $|Q| = 1$, $|\Gamma| = 6$ & 0.632 & 0.424 & 0.408 & 0.391 & 0.464 & 0.399 \\
    NS, $|Q| = 1$, $|\Gamma| = 7$ & 0.719 & 0.470 & 0.351 & 0.473 & 0.500 & 0.344 \\
    NS, $|Q| = 1$, $|\Gamma| = 11$ & 0.640 & 0.432 & 0.329 & 0.424 & 0.500 & 0.413 \\
    NS, $|Q| = 2$, $|\Gamma| = 2$ & 0.702 & 0.388 & 0.329 & 0.446 & 0.446 & 0.371 \\
    NS, $|Q| = 2$, $|\Gamma| = 3$ & 0.658 & 0.527 & 0.367 & 0.446 & 0.518 & 0.411 \\
    NS, $|Q| = 2$, $|\Gamma| = 4$ & 0.632 & 0.464 & 0.345 & 0.386 & 0.518 & 0.387 \\
    NS, $|Q| = 2$, $|\Gamma| = 5$ & 0.711 & 0.464 & 0.307 & 0.413 & 0.518 & 0.355 \\
    NS, $|Q| = 3$, $|\Gamma| = 2$ & 0.711 & 0.528 & 0.349 & 0.435 & 0.518 & 0.406 \\
    NS, $|Q| = 3$, $|\Gamma| = 3$ & 0.746 & 0.439 & 0.316 & 0.375 & 0.411 & 0.376 \\
    NS, $|Q| = 3$, $|\Gamma| = 4$ & 0.702 & 0.450 & 0.364 & 0.484 & 0.536 & 0.369 \\
    RNS, $|Q| = 1$, $|\Gamma| = 2$ & 0.702 & 0.460 & 0.280 & 0.451 & 0.464 & 0.404 \\
    RNS, $|Q| = 1$, $|\Gamma| = 3$ & 0.649 & 0.427 & 0.438 & 0.418 & 0.446 & 0.347 \\
    RNS, $|Q| = 1$, $|\Gamma| = 4$ & 0.658 & 0.412 & 0.342 & 0.565 & 0.339 & 0.418 \\
    RNS, $|Q| = 1$, $|\Gamma| = 5$ & 0.728 & 0.449 & 0.370 & 0.429 & 0.482 & 0.371 \\
    RNS, $|Q| = 1$, $|\Gamma| = 6$ & 0.614 & 0.422 & 0.314 & 0.435 & 0.518 & 0.377 \\
    RNS, $|Q| = 1$, $|\Gamma| = 7$ & 0.649 & 0.460 & 0.374 & 0.337 & 0.411 & 0.404 \\
    RNS, $|Q| = 1$, $|\Gamma| = 11$ & 0.614 & 0.447 & 0.291 & 0.266 & 0.446 & 0.338 \\
    RNS, $|Q| = 2$, $|\Gamma| = 2$ & 0.649 & 0.417 & 0.365 & 0.375 & 0.339 & 0.334 \\
    RNS, $|Q| = 2$, $|\Gamma| = 3$ & 0.640 & 0.474 & 0.411 & 0.446 & 0.554 & 0.408 \\
    RNS, $|Q| = 2$, $|\Gamma| = 4$ & 0.658 & 0.469 & 0.336 & 0.326 & 0.500 & 0.403 \\
    RNS, $|Q| = 2$, $|\Gamma| = 5$ & 0.693 & 0.420 & 0.339 & 0.370 & 0.607 & 0.376 \\
    RNS, $|Q| = 3$, $|\Gamma| = 2$ & 0.579 & 0.435 & 0.295 & 0.440 & 0.554 & 0.445 \\
    RNS, $|Q| = 3$, $|\Gamma| = 3$ & 0.632 & 0.444 & 0.356 & 0.418 & 0.482 & 0.403 \\
    RNS, $|Q| = 3$, $|\Gamma| = 4$ & 0.588 & 0.427 & 0.342 & 0.353 & 0.482 & 0.373 \\
\end{longtable}

No single model performs best on all circuits. For all models, we find that SG scores are highly variable and uncorrelated to perplexity, corroborating findings by \citet{hu-etal-2020-systematic}. In fact, when we inspect all randomly searched LSTMs, we find that they are sometimes able to attain SG scores higher than any of those shown in \cref{tab:rns-ptb-results}. We show this in \cref{fig:sg-score-vs-perplexity}, where we plot SG score vs.\ test perplexity for all 10 random restarts of an LSTM and an RNS-RNN. We see that many of the models that were not selected actually have a much higher SG score (even above 0.48), suggesting that the standard validation perplexity criterion is a poor choice for syntactic generalization. The fact that SG score measures surprisal only at a specific part of a sentence, rather than the whole sentence, may explain some of the lack of correlation between perplexity and SG score. The fact that the training data in the Penn Treebank has a high degree of token masking likely exacerbates this issue; it only includes 10,000 token types.

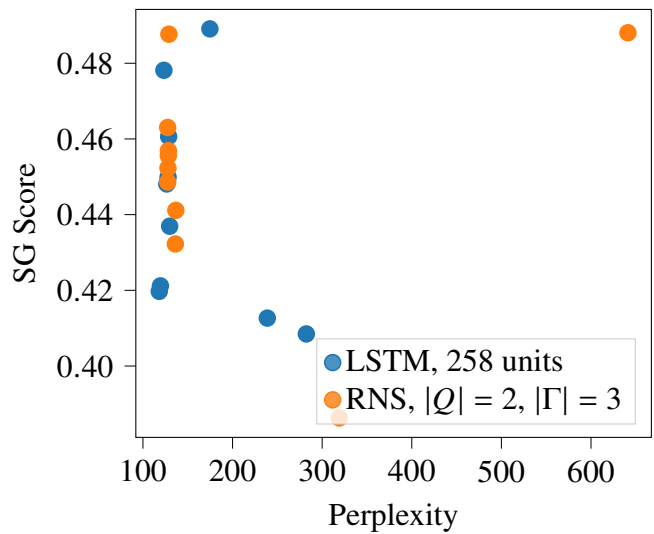
\begin{figure*}
    \centering
    \pgfplotsset{
        y tick label style={
            /pgf/number format/.cd,
            precision=2,
            fixed,
            fixed zerofill
        }
    }
\begin{tikzpicture}

\definecolor{color0}{rgb}{0.12156862745098,0.466666666666667,0.705882352941177}
\definecolor{color1}{rgb}{1,0.498039215686275,0.0549019607843137}

\begin{axis}[
legend cell align={left},
legend style={
  fill opacity=0.8,
  draw opacity=1,
  text opacity=1,
  at={(0.97,0.03)},
  anchor=south east,
  draw=white!80!black
},
tick align=outside,
tick pos=left,
x grid style={white!69.0196078431373!black},
xlabel={Perplexity},
xmin=92.0524, xmax=667.3436,
xtick style={color=black},
y grid style={white!69.0196078431373!black},
ylabel={SG Score},
ymin=0.381148801406238, ymax=0.494218855769199,
ytick style={color=black}
]
\addplot [semithick, color0, mark=*, mark size=3, mark options={solid}, only marks]
table {%
238.776 0.412661443353663
129.812 0.436939048323488
123.39 0.4781457514009
119.519 0.421174807902497
282.34 0.408479050612918
174.717 0.48907930784361
128.046 0.449913871195336
128.695 0.460627592807227
118.202 0.419725449988608
126.619 0.448103171301112
};
\addlegendentry{LSTM, 258 units}
\addplot [semithick, color1, mark=*, mark size=3, mark options={solid}, only marks]
table {%
319.083 0.386288349331828
136.137 0.432247326154649
127.572 0.463023913538788
127.655 0.448621815298017
127.915 0.452302665975435
136.72 0.441142794403664
128.344 0.456925994168557
128.958 0.487675105638492
128.308 0.455557784447945
641.194 0.488048183700358
};
\addlegendentry{RNS, $|Q| = 2$, $|\Gamma| = 3$}
\end{axis}

\end{tikzpicture}
    \caption[SG score vs.\ test perplexity, shown on all 10 random restarts for an LSTM and an RNS-RNN.]{SG score vs.\ test perplexity, shown on all 10 random restarts for an LSTM and an RNS-RNN. SG score is uncorrelated with perplexity, and models that narrowly miss out on having the best perplexity often have much higher SG scores.}
    \label{fig:sg-score-vs-perplexity}
\end{figure*}

From this we conclude that improving syntactic generalization on natural language remains elusive for all stack RNNs we tested, and that we may need to look beyond cross-entropy/perplexity as a training criterion. Indeed, \citet{hao-etal-2020-probabilistic} and \citet{kuribayashi-etal-2021-lower} noted that perplexity does not always correspond to human reading times.

\section{Conclusion}

Our memory-limited version of the RNS-RNN is a crucial modification towards practical use on natural language. We tested this model on the Penn Treebank, although we did not see performance improvements with the model sizes we were able to test, and in fact no stack RNNs excel in terms of syntactic generalization. We see that there is a discrepancy between the perplexity metric used as the validation criterion during training and syntactic generalization. Later, in \cref{chap:vrns-rnn}, we will demonstrate perplexity improvements with larger RNS-RNNs.

\chapter{Learning Non-Context-Free Languages}
\label{chap:learning-non-cfls}

Previously, we showed that the NS-RNN and RNS-RNN are highly effective at learning CFLs. In this chapter, we show that the controller and differentiable WPDA in the RNS-RNN interact to produce a surprising effect: the ability to learn many non-context-free languages (non-CFLs), such as \MarkedCopy{}. We explain how this is possible theoretically, and we show compelling results on several non-CFL tasks.

This work will appear in a paper published at ICLR 2023 \citep{dusell-chiang-2023-surprising}. The code used for the experiments in this chapter is publicly available.\footnote{\url{https://github.com/bdusell/nondeterministic-stack-rnn}}

\section{Motivation}

In \cref{chap:learning-cfls,chap:rns-rnn}, the RNS-RNN proved more effective than other stack RNNs at learning both nondeterministic and deterministic CFLs. We argued that, in practical terms, giving RNNs the ability to recognize context-free patterns may be beneficial for modeling natural language, as syntax exhibits hierarchical structure. Nondeterminism in particular is necessary for handling the very common phenomenon of syntactic ambiguity.

However, the RNS-RNN's reliance on a PDA may still render it inadequate for practical use, as not all phenomena in human language are context-free. For example, some languages, including Dutch \citep{bresnan-etal-1982-cross}, Swiss German \citep{shieber-1985-evidence}, and Bambara \citep{culy-1985-complexity} contain syntactic constructions of the form $w \cdots w$, known in linguistics as \term{cross-serial dependencies}. Formal languages like $\{ w \sym{\#} w \mid w \in \{\sym{0}, \sym{1}\}^\ast \}$ and $\{ w w \mid w \in \{\sym{0}, \sym{1}\}^\ast \}$ are classic examples of languages that cannot be recognized by any PDA \citep{hopcroft-ullman-1979-introduction,sipser-2013-introduction}, so a nondeterministic PDA should be of no use in learning these constructions.

We show, rather surprisingly, that this is not the case. Whereas an ordinary WPDA must use the same transition weights for all timesteps, the RNS-RNN can update them based on the status of ongoing nondeterministic runs of the WPDA. This means it can coordinate multiple runs in a way a PDA cannot\dash{}namely, to recognize intersections of CFLs, or to simulate multiple stacks.

In this chapter, we prove that the RNS-RNN can recognize all CFLs and intersections of CFLs. We show empirically that the RNS-RNN can model some non-CFLs; in fact it is the only stack RNN able to learn $\{ w \sym{\#} w \mid w \in \{ \sym{0}, \sym{1} \}^\ast \}$, whereas a deterministic multi-stack RNN cannot.

\section{Updates to the RNS-RNN}
\label{sec:rns-rnn}

First, we present an updated version of the RNS-RNN, which we use in all subsequent proofs and experiments. The changes we make are not necessary for achieving the results in this chapter, but they bring the implementation of the differentiable WPDA to parity with \cref{def:restricted-pda}, and they improve its time complexity by a factor of $|Q|$, allowing us to train larger models.

\subsection{Definition}
\label{sec:updated-rns-rnn}

The only changes we make are to the tensor of inner weights $\nsinnerweightletter$ and the tensor of forward weights $\nsforwardweightletter$. The tensor $\nsinnerweightletter$ now has size $n \times n \times |Q| \times |\Gamma| \times |Q| \times |\Gamma|$. For $1 \leq t \leq n-1$ and $-1 \leq i \leq t-1$,
\begin{align}
    \nsinnerweight{-1}{q}{x}{0}{r}{y} &= \indicator{q = q_0 \wedge x = \bot \wedge r = q_0 \wedge y = \bot} \label{eq:new-rns-rnn-gamma-init} \\
    \nsinnerweight{i}{q}{x}{t}{r}{y} &= 
    \begin{aligned}[t]
    &\indicator{i=t\!-\!1} \; \nspushweight{q}{t}{x}{r}{y} && \text{push} \\
                      & +\! \sum_{s,z} \nsinnerweight{i}{q}{x}{t\!-\!1}{s}{z} \; \nsreplweight{s}{t}{z}{r}{y} && \text{repl.} \\
                      & +\! \sum_{\mathclap{k=i+1}}^{t-2} \sum_{u} \nsinnerweight{i}{q}{x}{k}{u}{y} \; \nsinnerweightaux{k}{u}{y}{t}{r} && \! \text{pop}
    \end{aligned}
    \label{eq:new-rns-rnn-gamma} \\
    \nsinnerweightaux{k}{u}{y}{t}{r} &= \sum_{s, z} \nsinnerweight{k}{u}{y}{t\!-\!1}{s}{z} \; \nspopweight{s}{t}{z}{r} \; (0 \leq k \leq t\!-\!2). \label{eq:new-rns-rnn-gamma-prime}
\end{align}
The tensor $\nsforwardweightletter$ now has size $(n+1) \times |Q| \times |\Gamma|$.
\begin{align}
    \nsforwardweight{-1}{r}{y} &= \indicator{r = q_0 \wedge y = \bot} \label{eq:new-rns-rnn-alpha-init} \\
    \nsforwardweight{t}{r}{y} &=
    \sum_{i=-1}^{t-1} \sum_{q, x} \nsforwardweight{i}{q}{x} \, \nsinnerweight{i}{q}{x}{t}{r}{y} \quad (0 \leq t \leq n-1) \label{eq:new-rns-rnn-alpha-recurrence}
\end{align}

Next, we describe the two changes we have made to \cref{eq:ns-rnn-gamma,eq:ns-rnn-alpha-init,eq:ns-rnn-alpha-recurrence} to arrive at \cref{eq:new-rns-rnn-gamma-init,eq:new-rns-rnn-gamma-prime,eq:new-rns-rnn-gamma,eq:new-rns-rnn-alpha-init,eq:new-rns-rnn-alpha-recurrence}.

\subsection{Asymptotic Speedup}
\label{sec:langs-algorithm-speedup}

As mentioned in \cref{sec:langs-algorithm}, the time complexity of Lang's algorithm, and therefore the differentiable WPDA and RNS-RNN too, is $\bigo{{|Q|}^4 {|\Gamma|}^3 n^3}$, and its space complexity is $\bigo{{|Q|}^2 {|\Gamma|}^2 n^2}$. However, as noted by \citet{butoi-etal-2022-algorithms}, it is possible to improve its asymptotic time complexity with respect to $|Q|$ with a simple change: precomputing the product $\nsinnerweight{k}{u}{y}{t-1}{s}{z} \; \nspopweight{s}{t}{z}{r}$ used in the pop rule in \cref{eq:ns-rnn-gamma}, which we store in a tensor $\nsinnerweightauxletter$ in \cref{eq:new-rns-rnn-gamma-prime}. This reduces the time complexity of the RNS-RNN to $\bigo{{|Q|}^3 {|\Gamma|}^3 n^2 + {|Q|}^3 {|\Gamma|}^2 n^3}$.

\subsection{Bottom Symbol Fix}
\label{sec:rns-rnn-bottom-symbol-fix}

As mentioned in \cref{sec:ns-rnn-langs-algorithm}, in the previous implementation of the differentiable WPDA, the initial $\bot$ can never be replaced, and once a symbol is pushed on top of it after the first timestep, the initial $\bot$ can never be exposed on the top of the stack again. We rectify this by allowing the symbol above the initial $\bot$ to be popped, and allowing the initial $\bot$ symbol to be replaced with a different symbol type at any time, in conformance with \cref{def:restricted-pda}. We do this by simulating an extra push action at $t = -1$ in \cref{eq:new-rns-rnn-gamma-init}. Although it would be possible to prevent the initial $\bot$ from being replaced by masking out replace weights, this would be slightly more costly to implement.

\section{Recognition Power of RNS-RNNs}
\label{sec:proofs}

In this section, we investigate the power of RNS-RNNs as language recognition devices, proving that they can recognize all CFLs and all intersections of CFLs. These results hold true even when the RNS-RNN is run in real time (one timestep per input, with one extra timestep to read $\eos$). Although \citet{siegelmann-sontag-1992-computational} showed that even simple RNNs are as powerful as Turing machines, this result relies on assumptions of infinite precision and unlimited extra timesteps, which generally do not hold true in practice. The same limitation applies to the neural Turing machine \citep{graves-etal-2014-neural}, which, when implemented with finite precision, is no more powerful than a finite automaton, as its tape does not grow with input length. Previously, \citet{stogin-etal-2020-provably} showed that a variant of the superposition stack is at least as powerful as real-time DPDAs. Here we show that RNS-RNNs recognize a much larger superset of languages.

For this section only, we allow parameters to have values of $\pm\infty$, to enable the controller to emit probabilities of exactly zero. Because we use RNS-RNNs here for accepting or rejecting strings (whereas in the rest of the chapter, we only use them as language models for predicting the next symbol), we start by providing a formal definition of language recognition for RNNs \citep[cf.][]{chen-etal-2018-recurrent}.
\begin{definition}
Let $N$ be an RNN controller, possibly augmented with a differentiable stack. Let $\rnnhiddent{t} \in \realset^d$ be the hidden state of $N$ after reading $t$ symbols, and let $\logisticletter$ be the logistic sigmoid function. We say that $N$ \defterm{recognizes} language $L$ if there is a multi-layer perceptron (MLP) $\rnnrecognizeroutput = \logistic{\affine{2}{\,\logistic{\affine{1}{\rnnhiddent{|w|+1}}}}}$ such that, after reading $w \concatop \eos$, we have $\rnnrecognizeroutput > \tfrac{1}{2}$ iff $w \in L$.
\end{definition}

\subsection{Context-Free Languages}

Now, we show that RNS-RNNs recognize all CFLs under this definition.

\begin{proposition} \label{thm:cfl}
For every context-free language $L$, there exists an RNS-RNN that recognizes $L$.
\end{proposition}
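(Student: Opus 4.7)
The plan is to simulate a restricted PDA for $L$ inside the differentiable WPDA of an RNS-RNN, then read off the accept decision from the final hidden state.

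First, I would invoke \Cref{thm:restricted-pdas-recognize-all-cfls} (together with the fact that every CFL has an equivalent PDA) to obtain a restricted PDA $P = (Q, \Sigma, \Gamma, \delta, q_0, F)$ for $L$, for which $w \in L$ iff some run of $P$ on $w$ ends in a state in $F$ with $\bot$ on top. To ensure that the stack-reading normalizer (the denominator in \cref{eq:rns-rnn-reading-definition}) is always positive, I would first augment $P$ with a non-accepting sink state $q_{\text{sink}}$: from every $(q, x, a)$ add a weight-$1$ transition to $q_{\text{sink}}$ (say, as a replace leaving $x$ intact), and equip $q_{\text{sink}}$ with weight-$1$ self-replace transitions on every $(x, a)$. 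Since $q_{\text{sink}} \notin F$, the set of $w$ admitting an accepting run is unchanged.

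Next, I would construct an RNS-RNN whose differentiable WPDA uses states $Q \cup \{q_{\text{sink}}\}$, stack alphabet $\Gamma$, and bottom symbol $\bot$. The LSTM controller is configured (with possibly $\pm\infty$ parameters, as permitted in this section) so that $\rnnhiddent{t}$ transmits a one-hot encoding of $\rnninputt{t}$ into the action layer. The affine map producing the action weights, exponentiated as in \cref{eq:ns-rnn-action-weights}, then outputs weight $1$ for exactly those restricted transitions of the augmented $P$ that are available on input $w_t$ and weight $0$ for all others. Because every configuration has at least one valid transition in the augmented $P$, every prefix of every input yields at least one run of positive weight, so the stack reading is well-defined at every timestep, and the marginals $\stackreadingt{t}[r, y]$ faithfully reflect whether any run of $P$ can be in state $r$ with top symbol $y$ after reading $w_1 \cdots w_t$.

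Finally, I would read out acceptance from $\rnnhiddent{|w|+1}$. At the $\eos$ step the LSTM receives $\stackreadingt{|w|}$ as part of its input, so with enough units it can forward the entries $\stackreadingt{|w|}[r, \bot]$ for $r \in F$ essentially unchanged into $\rnnhiddent{|w|+1}$. The output MLP then computes an arbitrarily sharp threshold on $\sum_{r \in F} \stackreadingt{|w|}[r, \bot]$: the first layer forms this sum minus a tiny positive offset, and the second layer applies a steep logistic, so $\rnnrecognizeroutput > \tfrac12$ iff the sum is positive iff $w \in L$.

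The main obstacle is the zero-denominator issue in the stack-reading normalization: without the sink-state augmentation, a restricted PDA that dead-ends on some prefix would leave $\stackreadingt{t}$ as $0/0$ and collapse the simulation. The remaining steps are essentially bookkeeping\dash{}verifying that the augmentation preserves $L$, that the LSTM controller can isolate $\rnninputt{t}$ to drive the action weights, and that a two-layer MLP with unbounded parameters can sharply threshold a nonnegative sum\dash{}all of which are immediate given the infinite-precision assumption of this section.
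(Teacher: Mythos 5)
Your overall strategy is the same as the paper's: obtain a restricted PDA for $L$ via \cref{thm:restricted-pdas-recognize-all-cfls}, have the controller emit weight $1$ for the transitions compatible with the current input symbol and weight $0$ for all others, pad the automaton with a non-accepting trap/sink so the stack-reading normalizer never vanishes, and then threshold the total reading mass on $(f, \bot)$ for $f \in F$. All of that matches the paper's construction.

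There is one genuine flaw, in the readout. You threshold ``this sum minus a tiny positive offset'' with a steep logistic and conclude $\rnnrecognizeroutput > \tfrac12$ iff the sum is positive. But the accept mass $p = \sum_{f \in F} \stackreadingt{|w|}[f,\bot]$ can be positive yet arbitrarily small: it is at least $1/(\text{total weight of all runs})$, which decays geometrically in $|w|$. Any \emph{fixed} positive offset $\varepsilon$ will therefore exceed $p$ for sufficiently long strings in $L$, making the pre-activation negative and the network reject. No fixed offset works, and ``arbitrarily sharp'' steepness does not rescue a sign error. The fix is to drop the offset entirely: since $\logistic{0} = \tfrac12$ exactly and $\logisticletter$ is strictly increasing, passing (a sign-preserving transform of) $p$ through the first MLP layer gives a hidden unit equal to $\tfrac12$ when $p = 0$ and strictly greater when $p > 0$; a second layer with bias $-\tfrac12$ then yields $\rnnrecognizeroutput > \tfrac12$ iff $p > 0$, which is exactly what the recognition definition requires. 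This is what the paper does. (A length-dependent offset, computed by a memory cell that multiplies itself by $1/b$ each step, is only needed in the intersection construction, where one must force a strictly \emph{negative} activation when $p = 0$.) Your sink-state augmentation is also slightly heavier than the paper's trap state but serves the same purpose and is harmless.
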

\begin{proof}[Proof sketch]
We know from \cref{thm:restricted-pdas-recognize-all-cfls} that there is a restricted PDA $P$ that recognizes $L$. We construct an RNS-RNN that emits, at every timestep, weight 1 for transitions of $P$ and 0 for all others. Then $\rnnrecognizeroutput > \frac{1}{2}$ iff the PDA ends in an accept configuration.
\end{proof}

Since we already proved that there is a restricted PDA for every CFL (\cref{thm:restricted-pdas-recognize-all-cfls}), all we need to do is show that a restricted PDA can be converted to an equivalent RNS-RNN.

\begin{lemma} \label{thm:constantpda}
For any restricted PDA $P$ that recognizes language $L$, there is an RNS-RNN that recognizes $L$.
\end{lemma}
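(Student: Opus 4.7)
The plan is to have the differentiable WPDA of the RNS-RNN directly simulate $P$ by hard-coding parameters that emit $0/1$ transition weights. Given $P = (Q, \Sigma, \Gamma, \delta, q_0, F)$, I would first modify $P$ into a \emph{live} PDA $P'$ by adjoining a fresh non-accept sink state $q_{\text{sink}}$ together with replace transitions $\pdatrans{q}{a}{x}{q_{\text{sink}}}{x}$ for every $q \in Q \cup \{q_{\text{sink}}\}$, $a \in \Sigma$, $x \in \Gamma$. Since $q_{\text{sink}} \notin F$, this leaves $\langof{P}$ unchanged, but it guarantees that at every prefix of every input some run of $P'$ survives, so the denominator of the stack reading in \cref{eq:rns-rnn-reading-definition} is never zero.

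Next, I would build an RNS-RNN whose differentiable WPDA has exactly the state set, stack alphabet, start state, and bottom symbol of $P'$, with one-hot encodings over $\Sigma \cup \{\eos\}$ as inputs. The LSTM controller is configured via standard pass-through gate settings so that the current input symbol $w_t$ (and the incoming stack reading $\stackreadingt{t-1}$) can be recovered linearly from $\rnnhiddent{t}$. The affine layer in the updated \cref{eq:ns-rnn-action-weights} is then chosen to output $0$ in the coordinate of each transition $\pdatrans{q}{w_t}{x}{r}{v} \in \delta'$ and $-\infty$ in every other coordinate; after the exponential this gives $\nstransweight{q}{t}{x}{r}{v} = 1$ exactly when the corresponding transition of $P'$ scans $w_t$, and $0$ otherwise. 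Hence the nonzero-weight runs of the differentiable WPDA on input $w$ correspond bijectively, each with weight one, to the nondeterministic runs of $P'$ on $w$.

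Acceptance is checked at timestep $|w|+1$, after reading $w \concatop \eos$. By construction, the scalar $s := \sum_{r \in F} \nsstackreading{|w|}{r}{\bot}$ is strictly positive iff some run of $P'$ has scanned $w$ and ends in an accept state with $\bot$ on top, which, by the acceptance condition of \cref{def:restricted-pda}, is equivalent to $w \in L$. I would have the LSTM carry $\stackreadingt{|w|}$ into $\rnnhiddent{|w|+1}$ (again through pass-through gates, triggered by the one-hot $\eos$ input) and design the two-layer MLP to threshold $s$: the first logistic layer computes $\logistic{Ms - c}$ for very large $M$ and a small fixed $c > 0$, yielding a value strictly below $1/2$ when $s=0$ and arbitrarily close to $1$ when $s > 0$; a second affine-plus-logistic layer shifts this by $-3/4$ and reapplies another large-weight logistic, giving $\rnnrecognizeroutput > 1/2$ iff $s > 0$.

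The main obstacle I anticipate is pinning down the LSTM pass-through construction rigorously: because the LSTM involves several gates and the $\tanh$ nonlinearity, writing down explicit weights that faithfully route $(\rnninputt{t}, \stackreadingt{t-1})$ through $\rnnhiddent{t}$ and into $\rnnhiddent{|w|+1}$ is routine but notationally finicky. A subtler issue is that with $0/1$ transition weights the renormalized ratio $s$ can take exponentially small positive values in $|w|$, so separating $s = 0$ from $s > 0$ inside a fixed MLP crucially uses the section preamble's allowance of $\pm\infty$ parameters; the sink-state modification is what keeps this ratio well-defined, after which arbitrarily large MLP weights suffice to detect any strictly positive value.
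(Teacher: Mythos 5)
Your construction matches the paper's proof in every essential simulation step: a non-accepting sink/trap state to keep the run distribution nonempty, hard-coded logits of $0$ and $-\infty$ so that $\nstranst{t}$ assigns weight $1$ exactly to the transitions of $P$ that scan $\inputsymbolt{t}$, and reading off $s = \sum_{f \in F} \nsstackreading{|w|}{f}{\bot}$ after $\eos$, which is positive iff $w \in L$.

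The gap is in the final thresholding. With any finite $M$, the unit $\logistic{Ms - c}$ with a fixed $c > 0$ misclassifies long accepting strings: $s$ can be as small as $1/\pdanumruns{w}$, which shrinks exponentially in $|w|$, so eventually $Ms < c$ and the first layer falls below your $3/4$ cutoff. Setting $M = \infty$ does not rescue this, because on rejected strings the product $\infty \cdot 0$ is indeterminate, so $\logistic{Ms - c}$ is simply not defined there; the $\pm\infty$ allowance is only safe where an infinite parameter never multiplies a quantity that can be zero. The paper's proof of this lemma avoids thresholds and infinite MLP weights entirely: since recognition only demands $\rnnrecognizeroutput > \tfrac{1}{2}$ iff $w \in L$ (so $\rnnrecognizeroutput = \tfrac{1}{2}$ exactly on rejection is permitted), it suffices to pass $s$ through a chain of strictly increasing functions fixing zero, namely $\logistic{\logistic{\tanh(\tanh(s))} - \tfrac{1}{2}}$. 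If you do want a hard separation (which the paper genuinely needs only in the intersection case), the correct repair is the one used in the proof of \cref{thm:intersect}: have the LSTM maintain a memory cell equal to $1/b^t$ via its forget gate, subtract this dynamically computed offset (which is below the least possible positive value of $s$) before applying the infinite weight, so the scaled quantity is strictly negative rather than zero on rejection.
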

\begin{proof}
Let $P = (Q, \Sigma, \Gamma, \delta, q_0, F)$.

We write $\pdanumruns{\inputstring}$ for the total number of runs of $P$ that read $\inputstring$ and end with $\bot$ on top of the stack. We assume that $\pdanumruns{\inputstring} > 0$; this can always be ensured by adding to $P$ an extra non-accepting state $q_{\mathrm{trap}}$ and transitions $\pdatrans{q_0}{a}{\bot}{q_{\mathrm{trap}}}{\bot}$ and $\pdatrans{q_{\mathrm{trap}}}{a}{\bot}{q_{\mathrm{trap}}}{\bot}$ for all $a \in \Sigma$.

For any state set $X \subseteq Q$, we write $\pdanumrunsstateset{\inputstring}{X}$ for the total number of runs of $P$ that read $\inputstring$ and end in a state in $X$ with $\bot$ on top of the stack. Then for any string $\inputstring$, if $\inputstring \in L$, then $\pdanumrunsstateset{\inputstring}{F} \geq 1$; otherwise, $\pdanumrunsstateset{\inputstring}{F} = 0$.

We use the following definition for the LSTM controller of the RNS-RNN, where $\lstminputgatet{t}$, $\lstmforgetgatet{t}$, and $\lstmoutputgatet{t}$ are the input, forget, and output gates, respectively, and $\lstmcandidatet{t}$ is the candidate memory cell.
\begin{align*}
    \lstminputgatet{t} &= \logistic{\affine{i}{
        \begin{bmatrix}
            \rnninputt{t} \\
            \rnnhiddent{t-1}
        \end{bmatrix}
    }} \\
    \lstmforgetgatet{t} &= \logistic{\affine{f}{
        \begin{bmatrix}
            \rnninputt{t} \\
            \rnnhiddent{t-1}
        \end{bmatrix}
    }} \\
    \lstmcandidatet{t} &= \tanh(\affine{g}{
        \begin{bmatrix}
            \rnninputt{t} \\
            \rnnhiddent{t-1}
        \end{bmatrix}
    }) \\
    \lstmoutputgatet{t} &= \logistic{\affine{o}{
        \begin{bmatrix}
            \rnninputt{t} \\
            \rnnhiddent{t-1}
        \end{bmatrix}
    }} \\
    \lstmmemorycellt{t} &= \lstmforgetgatet{t} \elementwisemulop \lstmmemorycellt{t-1} + \lstminputgatet{t} \elementwisemulop \lstmcandidatet{t} \\
    \rnnhiddent{t} &= \lstmoutputgatet{t} \elementwisemulop \tanh(\lstmmemorycellt{t})
\end{align*}

Construct an RNS-RNN as follows. At each timestep $t$, upon reading input embedding $\rnninputt{t}$, make the controller emit a weight of 1 for all transitions of $P$ that scan input symbol $\inputsymbolt{t}$, and a weight of 0 for all other transitions (by setting the corresponding weights of $\nstranst{t}$ to $-\infty$).

Let $n = |\inputstring|$. After reading $\inputstring$, the stack reading $\stackreadingt{n}$ is the probability distribution of states and top stack symbols of $P$ after reading $\inputstring$, which the controller can use to compute $\rnnhiddent{n+1}$ and the MLP output $\rnnrecognizeroutput$ as follows. Let
\begin{equation*}
    p = \sum_{f \in F} \nsstackreading{n}{f}{\bot} = \frac{\pdanumrunsstateset{\inputstring}{F}}{\pdanumruns{\inputstring}}.
\end{equation*}
Note that $p$ is positive if $\inputstring \in L$, and zero otherwise. An affine layer connects $\rnnhiddent{n}$, $\rnninputt{n+1}$, and $\stackreadingt{n}$ to the candidate memory cell $\lstmcandidatet{n+1}$ (recall that the input at $n+1$ is $\eos$). Designate a unit $\lstmcandidateunit{n+1}{accept}$ in $\lstmcandidatet{n+1}$. For each $f \in F$, set the incoming weight from $\nsstackreading{n}{f}{\bot}$ to 1 and all other incoming weights to 0, so that $\lstmcandidateunit{n+1}{accept} = \tanh(p)$. Set this unit's input gate to 1 and forget gate to 0, so that memory cell $\lstmmemorycellunit{n+1}{accept} = \tanh(p)$. Set its output gate to 1, so that hidden unit $\rnnhiddenunit{n+1}{accept} = \tanh(\tanh(p))$, which is positive if $\inputstring \in L$ and zero otherwise.

Finally, to compute $\rnnrecognizeroutput$, use one hidden unit $\rnnrecognizermlpunit{1}$ in the output MLP layer, and set the incoming weight from $\rnnhiddenunit{n+1}{accept}$ to 1, and all other weights to 0. So $\rnnrecognizermlpunit{1} = \logistic{\tanh(\tanh(p))}$, which is greater than $\frac{1}{2}$ if $\inputstring \in L$ and equal to $\frac{1}{2}$ otherwise. Set the weight connecting $\rnnrecognizermlpunit{1}$ to $\rnnrecognizeroutput$ to 1, and set the bias term to $-\frac{1}{2}$, so that $\rnnrecognizeroutput = \logistic{\logistic{\tanh(\tanh(p)}} - \frac{1}{2})$, which is greater than $\frac{1}{2}$ if $\inputstring \in L$, and equal to $\frac{1}{2}$ otherwise.
\end{proof}

\subsection{Intersections of Context-Free Languages}

Next, we show that RNS-RNNs recognize a large class of non-CFLs: intersections of CFLs. Since the class of languages formed by the intersection of $k$ CFLs is a proper superset of the class formed by the intersection of $(k-1)$ CFLs \citep{liu-weiner-1973-infinite}, this means that RNS-RNNs are considerably more powerful than nondeterministic PDAs.

\begin{proposition} \label{thm:intersect}
For every finite set of context-free languages $L_1, \ldots, L_k$ over the same alphabet $\Sigma$, there exists an RNS-RNN that recognizes $L_1 \cap \cdots \cap L_k$.
\end{proposition}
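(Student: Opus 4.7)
The plan is to package the $k$ context-free languages into a single restricted PDA that nondeterministically commits, at its first transition, to simulating one of them, and then to read off the $k$ acceptance probabilities from the stack reading and combine them with the output MLP. First, using \cref{thm:restricted-pdas-recognize-all-cfls} I would obtain a restricted PDA $P_i = (Q_i, \Sigma, \Gamma_i, \delta_i, q_{0,i}, F_i)$ for each $L_i$, and I would assume without loss of generality that the $Q_i$ are pairwise disjoint and that the $\Gamma_i$ share only the bottom symbol $\bot$. Taking a fresh start state $q_0$, I would define
\[
    P = \bigl(\{q_0\} \cup \textstyle\bigcup_i Q_i,\; \Sigma,\; \textstyle\bigcup_i \Gamma_i,\; \delta,\; q_0,\; \textstyle\bigcup_i F_i\bigr),
\]
where $\delta$ contains every transition of every $\delta_i$ together with, for each initial transition $\pdatrans{q_{0,i}}{a}{\bot}{r}{v} \in \delta_i$, the branching copy $\pdatrans{q_0}{a}{\bot}{r}{v}$. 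Because the $Q_i$ are pairwise disjoint, any run of $P$ must, after its first transition, remain inside a single $Q_i$ and hence faithfully simulate a run of $P_i$ on $w$; so $P$ has an accepting run whose final state lies in $F_i$ with $\bot$ on top iff $w \in L_i$.

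Next I would instantiate the RNS-RNN construction of \cref{thm:constantpda} on this combined $P$, making the controller emit weight $1$ for each transition of $P$ and $0$ for all others. The stack reading after reading $w$ is then exactly the marginal distribution $\nsstackreading{n}{r}{y}$ over $P$'s configurations, and setting
\[
    p_i = \sum_{r \in F_i} \nsstackreading{n}{r}{\bot}
\]
gives $p_i > 0$ iff $w \in L_i$ (the trivial case $w = \emptystring$ is handled by replacing $F_i$ here with $F_i \cup \{q_0\}$ whenever $q_{0,i} \in F_i$). Following the same LSTM wiring as in \cref{thm:constantpda} but allocating $k$ copies of the acceptance unit, I would arrange for $\rnnhiddent{n+1}$ to contain, as distinguished units, the values $h^{(i)} = \tanh(\tanh(p_i))$, each strictly positive iff $w \in L_i$ and zero otherwise.

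The main obstacle is the final step: since $p_i$, and hence $h^{(i)}$, can be arbitrarily small when positive (it is a ratio of run counts, which can be exponential in $|w|$), no finite-weight threshold can cleanly separate $h^{(i)} > 0$ from $h^{(i)} = 0$. I would resolve this using the infinite-weight convention adopted in this section. In the hidden layer of the output MLP, set $z_i = \sigma(\infty \cdot h^{(i)} - 1)$, so that $z_i = 1$ whenever $p_i > 0$ and $z_i = \sigma(-1) < \tfrac{1}{2}$ when $p_i = 0$; in the output layer, set
\[
    \rnnrecognizeroutput = \sigma\Bigl(\textstyle\sum_{i=1}^{k} z_i - (k - \tfrac{1}{2})\Bigr).
\]
A short case analysis (all $z_i = 1$ gives $\sigma(\tfrac{1}{2}) > \tfrac{1}{2}$; if even one $z_i = \sigma(-1)$ then the argument is strictly negative) shows $\rnnrecognizeroutput > \tfrac{1}{2}$ iff every $p_i > 0$, iff $w \in L_1 \cap \cdots \cap L_k$. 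Crucially, the construction stays real-time, because all coordination among the $k$ PDAs is absorbed, for free, into the single nondeterministic stack via the disjoint state encoding.
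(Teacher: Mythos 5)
Your overall architecture matches the paper's: build a union PDA with disjoint state sets that nondeterministically commits to simulating one $P_i$, run the \cref{thm:constantpda} construction on it with 0/1 transition weights, read off $p_i = \sum_{f \in F_i} \stackreadingt{n}[f, \bot]$, and AND the $k$ acceptance signals together in the output MLP with bias $-k+\tfrac{1}{2}$. The union construction, the disjointness argument, and the final case analysis are all sound and essentially identical to the paper's proof.

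The gap is in the thresholding step you flag as ``the main obstacle.'' Your resolution $z_i = \logistic{\infty \cdot h^{(i)} - 1}$ requires evaluating $\infty \cdot h^{(i)}$ at $h^{(i)} = 0$, and $\infty \cdot 0$ is indeterminate; the paper's infinite-weight convention is introduced only to let the controller emit transition probabilities of exactly zero, not to license $\infty \cdot 0 = 0$ inside an affine layer. You correctly observe that no \emph{finite} weight can separate ``arbitrarily small positive'' from ``zero,'' but the infinite weight only helps if the quantity being scaled is bounded away from zero in sign. The paper closes exactly this hole with an extra idea you are missing: a dedicated memory cell whose forget gate multiplies it by $\tfrac{1}{b}$ at every step (where $b = |Q|(2|\Gamma|+1)$ bounds the branching factor), so that after $n$ symbols a hidden unit holds $\tanh(\tfrac{1}{b^n})$. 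Since the smallest possible \emph{nonzero} value of $p_i$ is $\tfrac{1}{\pdanumruns{w}} \geq \tfrac{1}{b^n} > \tanh(\tfrac{1}{b^n})$, the unit $\tanh\bigl(p_i - \tanh(\tfrac{1}{b^n})\bigr)$ is strictly positive when $p_i > 0$ and strictly \emph{negative} when $p_i = 0$; only then is $\logistic{\infty \cdot (\cdot)}$ well-defined as $1$ or $0$. Without this offset (or an explicit, justified convention that $\infty \cdot 0 = 0$ in every layer), your $z_i$ is undefined on rejected inputs and the final conjunction does not go through.
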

\begin{proof}[Proof sketch]
Without loss of generality, assume $k=2$. Let $P_1$ and $P_2$ be PDAs recognizing $L_1$ and $L_2$, respectively. We construct a PDA $P$ that uses nondeterminism to simulate $P_1$ \emph{or} $P_2$, but the controller can query $P_1$ and $P_2$'s configurations, and it can set $\rnnrecognizeroutput > \frac{1}{2}$ iff $P_1$ \emph{and} $P_2$ both end in accept configurations.
\end{proof}

\begin{proof}
Without loss of generality, assume $k=2$. Let $P_1 = (Q_1, \Sigma, \Gamma_1, \delta_1, s_1, F_1)$ and $P_2 = (Q_2, \Sigma, \Gamma_2, \delta_2, s_2, F_2)$ be restricted PDAs recognizing $L_1$ and $L_2$, respectively. We can construct both so that $Q_1 \cap Q_2 = \emptyset$, and $s_1$ and $s_2$ each have no incoming transitions.
Construct a new PDA 
\begin{align*}
P &= (Q, \Sigma, \Gamma_1 \cup \Gamma_2, \delta, s, F) \\
Q &= (Q_1 \setminus \{s_1\}) \cup (Q_2 \setminus \{s_2\}) \cup \{s\} \\
\delta(q, x, a) &= \begin{cases}
\delta_1(q, x, a) & q \in Q_1 \setminus \{s_1\} \\
\delta_2(q, x, a) & q \in Q_2 \setminus \{s_2\} \\
\delta_1(s_1, x, a) \cup \delta_2(s_2, x, a) & q = s
\end{cases} \\
F &= \begin{cases}
(F_1 \setminus \{s_1\}) \cup (F_2 \setminus \{s_2\}) \cup \{s\} & s_1 \in F_1 \wedge s_2 \in F_2 \\
(F_1 \setminus \{s_1\}) \cup (F_2 \setminus \{s_2\}) & \text{otherwise.}
\end{cases}
\end{align*}
So far, this is just the standard union construction for PDAs.

Construct an RNS-RNN that sets $\nstranst{t}$ according to $\delta$, and assume $\pdanumruns{\inputstring} > 0$, as in \cref{thm:constantpda}. Let
\begin{align*}
    p_1 &= \sum_{f \in F_1} \nsstackreading{n}{f}{\bot} = \frac{\pdanumrunsstateset{\inputstring}{F_1}}{\pdanumruns{\inputstring}} \\
    p_2 &= \sum_{f \in F_2} \nsstackreading{n}{f}{\bot} = \frac{\pdanumrunsstateset{\inputstring}{F_2}}{\pdanumruns{\inputstring}}.
\end{align*}
Let $n = |\inputstring|$. For each $p_i$, designate a unit $\lstmcandidateuniti{n+1}{accept}{i}$ in $\lstmcandidatet{n+1}$ that will be positive if $p_1 > 0$ and negative if $p_1 = 0$. In order to ensure that $\lstmcandidateuniti{n+1}{accept}{i} \not= 0$, we subtract a small value from $p_i$ that is smaller than the smallest possible non-zero value of $p_i$. The RNS-RNN computes this value as follows. Let $b = |Q| (2 |\Gamma| + 1)$, which is the maximum number of choices $P$ can make from any given configuration. Designate one memory cell $\lstmcandidateunit{t}{offset}$ in $\lstmcandidatet{t}$. At the first timestep, $\lstmcandidateunit{t}{offset}$ is initialized to $\frac{1}{b}$, and at each subsequent timestep, the forget gate is used to multiply this cell by $\frac{1}{b}$ (the input gate is set to 0). So after reading $t$ symbols, $\lstmcandidateunit{t}{offset} = \frac{1}{b^t}$. Set the output gate for that cell to 1, so there is a hidden unit $\rnnhiddenunit{n}{offset}$ in $\rnnhiddent{n}$ that contains the value $\tanh(\frac{1}{b^n})$. The smallest non-zero value of $p_i$ is $\frac{1}{\pdanumruns{\inputstring}}$, and since $\pdanumruns{\inputstring} \leq b^n$ and $\tanh(x) < x$ when $x > 0$, $\rnnhiddenunit{n}{offset}$ is smaller than it.

As for the incoming weights to $\lstmcandidateuniti{n+1}{accept}{i}$, for each $f \in F_i$, set the weight for $\nsstackreading{n}{f}{\bot}$ to 1, and set the weight for $\rnnhiddenunit{n}{offset}$ to $-1$; set all other weights to 0. So $\lstmcandidateuniti{n+1}{accept}{i} = \tanh(p_i - \rnnhiddenunit{n}{offset}$), which is positive if $p_i > 0$ and negative otherwise. Set this unit's input gate to 1 and forget gate to 0, so that memory cell $\lstmmemorycelluniti{n+1}{accept}{i} = \lstmcandidateuniti{n+1}{accept}{i}$. Set its output gate to 1, so that hidden unit $\rnnhiddenuniti{n+1}{accept}{i} = \tanh(\lstmcandidateuniti{n+1}{accept}{i})$, which is positive if $p_i > 0$ and negative otherwise.

In the output MLP's hidden layer, for each $\rnnhiddenuniti{n+1}{accept}{i}$, include a unit $\rnnrecognizermlpunit{i}$, and set its incoming weight from $\rnnhiddenuniti{n+1}{accept}{i}$ to $\infty$, and all other weights to 0. So $\rnnrecognizermlpunit{i} = \logistic{\infty \cdot \rnnhiddenuniti{n+1}{accept}{i}}$, which is 1 if $p_i > 0$ and 0 otherwise. Finally, to compute $\rnnrecognizeroutput$, set the incoming weights from each $\rnnrecognizermlpunit{i}$ to 1, and set the bias term to $-\frac{3}{2}$. So $\rnnrecognizeroutput = \logistic{\rnnrecognizermlpunit{1} + \rnnrecognizermlpunit{2} - \frac{3}{2}}$, which is greater than $\frac{1}{2}$ if both $p_1 > 0$ and $p_2 > 0$, meaning $\inputstring \in L_1 \cap L_2$, and less than $\frac{1}{2}$ otherwise. This proof can be extended to $k$ CFLs by simulating $k$ PDAs in parallel, designating units for $1 \leq i \leq k$, and setting the bias term for $\rnnrecognizeroutput$ to $-k + \frac{1}{2}$.
\end{proof}

\section{Experiments}
\label{sec:non-cfl-experiments}

We now explore the ability of stack RNNs to recognize non-context-free phenomena with a language modeling task on several non-CFLs.

\subsection{Non-Context-Free Language Tasks}
\label{sec:non-cfl-tasks}

Each non-CFL, which we describe below, can be recognized by a real-time three-stack automaton.

\begin{description}
\item[$\bm{\CountThree{}}$] The language $\{ \sym{a}^n \sym{b}^n \sym{c}^n \mid n \geq 0 \}$, a classic example of a non-CFL \citep{sipser-2013-introduction}. A two-stack automaton can recognize this language as follows. While reading the $\sym{a}$'s, push them to stack 1. While reading the $\sym{b}$'s, match them with $\sym{a}$'s popped from stack 1 while pushing $\sym{b}$'s to stack 2. While reading the $\sym{c}$'s, match them with $\sym{b}$'s popped from stack 2. As the stacks are only needed to remember the \emph{count} of each symbol type, this language is also an example of a counting language; \Citet{weiss-etal-2018-practical} showed that LSTMs can learn this language by using their memory cells as counters.
\item[$\bm{\MarkedReverseAndCopy{}}$] The language $\{ w \sym{\#} \reverse{w} \sym{\#} w \mid w \in \{\sym{0}, \sym{1}\}^\ast \}$. A two-stack automaton can recognize it as follows. While reading the first $w$, push it to stack 1. While reading the middle $\reverse{w}$, match it with symbols popped from stack 1 while pushing $\reverse{w}$ to stack 2. While reading the last $w$, match it with symbols popped from stack 2. The explicit $\sym{\#}$ symbols are meant to make it easier for a model to learn when to transition between these three phases.
\item[$\bm{\CountAndCopy{}}$] The language $\{w \sym{\#}^n w \mid \text{$w \in \{\sym{0}, \sym{1}\}^\ast$, $n \geq 0$, and $|w| = n$}\}$. A two-stack automaton can recognize it as follows. While reading the first $w$, push it to stack 1. While reading $\sym{\#}^n$, move the symbols from stack 1 to stack 2 in reverse. While reading the final $w$, match it with symbols popped from stack 2. Whereas the middle $\reverse{w}$ in \MarkedReverseAndCopy{} rewards a model for learning to push the first $w$, in \CountAndCopy{}, the middle $\sym{\#}^n$ section does not; a model must coordinate both phases of pushing and popping described above to receive a reward.
\item[$\bm{\MarkedCopy{}}$] The language $\{ w \sym{\#} w \mid w \in \{\sym{0}, \sym{1}\}^\ast \}$. A three-stack automaton can recognize this language as follows. Let $w = uv$ where $|u| = |v|$ (for simplicity assume $|w|$ is even). While reading the first $u$, push it to stack 1. While reading the first $v$, push it to stack 2, and move the symbols from stack 1 to stack 3 in reverse. While reading the second $u$, match it with symbols popped from stack 3, and move the symbols from stack 2 to stack 1 in reverse. While reading the second $v$, match it with symbols popped from stack 1. The explicit $\sym{\#}$ symbol is meant to make learning this task easier.
\item[$\bm{\UnmarkedCopyDifferentAlphabets{}}$] The language $\{ w w' \mid \text{$w \in \{\sym{0}, \sym{1}\}^\ast$ and $w' = \phi(w)$} \}$, where $\phi$ is the homomorphism $\phi(\sym{0}) = \sym{2}$, $\phi(\sym{1}) = \sym{3}$. A three-stack automaton can recognize this language using a similar strategy to \MarkedCopy{}. In this case, a switch to a different alphabet, rather than a $\sym{\#}$ symbol, marks the second half of the string.
\item[$\bm{\UnmarkedReverseAndCopy{}}$] The language $\{ w \reverse{w} w \mid w \in \{\sym{0}, \sym{1}\}^\ast \}$. A two-stack automaton can recognize it using a similar strategy to \MarkedReverseAndCopy{}, but it must nondeterministically guess $|w|$.
\item[$\bm{\UnmarkedCopy{}}$] The language $\{ ww \mid w \in \{ \sym{0}, \sym{1} \}^\ast \}$, another classic example of a non-CFL \citep{sipser-2013-introduction}. A three-stack automaton can recognize it using a similar strategy to \MarkedCopy{}, except it must nondeterministically guess $|w|$.
\end{description}

The above languages all include patterns like $w \cdots w$, resembling cross-serial dependencies found in some human languages. In Swiss German \citep{shieber-1985-evidence}, the two $w$'s are distinguished by part-of-speech (a sequence of nouns and verbs, respectively), analogous to \UnmarkedCopyDifferentAlphabets{}. In Bambara \citep{culy-1985-complexity}, the two $w$'s are the same, but separated by a morpheme \textit{o}, analogous to \MarkedCopy{}.

\subsection{Data Sampling and Evaluation}

As in \cref{sec:learning-cfls-data-sampling}, before each training run, we sample a training set of 10,000 examples and a validation set of 1,000 examples from $\sampleprobname{L}$. To sample a string $w \in L$, we first uniformly sample a length $\ell$ from $[\ell_{\mathrm{min}}, \ell_{\mathrm{max}}]$ (as before, we use $\ell_{\mathrm{min}} = 40$ and $\ell_{\mathrm{max}} = 80$), then sample uniformly from $\stringsoflength{L}{\ell}$ (we avoid sampling lengths for which $\stringsoflength{L}{\ell}$ is empty). So, the distribution from which $w$ is sampled (cf. \cref{eq:cfg-task-distribution}) is
\begin{equation*}
    \sampleprob{L}{w} = \frac{1}{\big|\{ \ell \in [\ell_{\mathrm{min}}, \ell_{\mathrm{max}}] \mid \stringsoflength{L}{\ell} \not= \emptyset \}\big|} \frac{1}{\left|\stringsoflength{L}{|w|}\right|}.
\end{equation*}
For each language, we sample a single test set that is reused across all training runs. Examples in the test set vary in length from 40 to 100, with 100 examples sampled uniformly from $\stringsoflength{L}{\ell}$ for each length $\ell$.

We evaluate models using cross-entropy difference as in \cref{sec:rns-rnn-cfl-evaluation}, using $\sampleprob{L}{w}$ to compute the lower bound. For each non-CFL in \cref{sec:non-cfl-tasks}, $|\stringsoflength{L}{|w|}|$ can be computed directly from $|w|$, so computing $\sampleprob{L}{w}$ is straightforward.

\subsection{Models}
\label{sec:learning-non-cfls-models}

We compare five architectures, each of which consists of an LSTM controller connected to a different type of data structure. We include a bare LSTM baseline (``LSTM''). We also include a model that pushes learned vectors of size 10 to a superposition stack (``Sup.\ 10''), and another that pushes the controller's hidden state (``Sup.\ h''). Since each of these languages can be recognized by a three-stack automaton, we also test a model that is connected to three instances of the superposition stack, each of which has vectors of size 3 (``Sup.\ 3-3-3''). In this model, the controller computes a separate set of actions for each stack, and the stack reading returned to it is the concatenation of the stack readings of all three stacks. This allows Sup.\ 3-3-3 to transfer data among stacks. Finally, we include an RNS-RNN with $|Q| = 3$ and $|\Gamma| = 3$ (``RNS 3-3''). The RNS-RNN can effectively simulate multiple PDAs in parallel by partitioning $Q$ or $\Gamma$, and $|Q| = 3$ is sufficient to simulate three stacks (more precisely, three single-state PDAs). In all cases, the LSTM controller has one layer and 20 hidden units. We encode all input symbols as one-hot vectors.

\subsection{Training}
\label{sec:learning-non-cfls-training}

For each language and architecture, we train 10 models and report results for the model with the lowest cross-entropy difference on the validation set. We train each model by minimizing its cross-entropy (summed over the timestep dimension of each batch) on the training set, and we use per-symbol cross-entropy on the validation set as the early stopping criterion. For each training run, we randomly sample the initial learning rate from a log-uniform distribution over $[5\times{10}^{-4}, 1\times{10}^{-2}]$, and we use a gradient clipping threshold of 5. All other training details are the same as \cref{sec:rns-rnn-cfl-models-and-training}.

\subsection{Results}

We show cross-entropy difference on the validation and test sets in \cref{fig:non-cfl-results-1,fig:non-cfl-results-2}.

{
\newcommand{
    \scalebox{0.8}{\input{figures/04-non-cfls/train/}}
    &\scalebox{0.8}{\input{figures/04-non-cfls/test/}} \\}[1]{
    \scalebox{0.8}{\input{figures/04-non-cfls/train/#1}}
    &\scalebox{0.8}{\input{figures/04-non-cfls/test/#1}} \\}
\newcommand{
    \begin{figure*}
        \pgfplotsset{
            every axis/.style={
                width=3.55in,
                height=
            },
            title style={yshift=-4.5ex},
            y tick label style={
                /pgf/number format/.cd,
                fixed,
                fixed zerofill,
                precision=1,
                /tikz/.cd
            }
        }
        \centering
        \begin{tabular}{@{}l@{\hspace{0in}}l@{}}    
            \multicolumn{2}{c}{\begin{tikzpicture}
\definecolor{color0}{rgb}{0.12156862745098,0.466666666666667,0.705882352941177}
\definecolor{color1}{rgb}{1,0.498039215686275,0.0549019607843137}
\definecolor{color2}{rgb}{0.172549019607843,0.627450980392157,0.172549019607843}
\definecolor{color3}{rgb}{0.83921568627451,0.152941176470588,0.156862745098039}
\definecolor{color4}{rgb}{0.580392156862745,0.403921568627451,0.741176470588235}
\begin{axis}[
  hide axis,
  height=0.7in,
  legend style={
    draw=none,
    /tikz/every even column/.append style={column sep=0.4cm}
  },
  legend columns=-1,
  xmin=0,
  xmax=1,
  ymin=0,
  ymax=1
]
\addlegendimage{cedline,color0}
\addlegendentry{LSTM}
\addlegendimage{cedline,color1}
\addlegendentry{Sup. 10}
\addlegendimage{cedline,color2}
\addlegendentry{Sup. 3-3-3}
\addlegendimage{cedline,color3}
\addlegendentry{Sup. h}
\addlegendimage{cedline,color4}
\addlegendentry{RNS 3-3}
\end{axis}
\end{tikzpicture}} \\
            
        \end{tabular}
        
    \end{figure*}}[3]{
    \begin{figure*}
        \pgfplotsset{
            every axis/.style={
                width=3.55in,
                height=#1
            },
            title style={yshift=-4.5ex},
            y tick label style={
                /pgf/number format/.cd,
                fixed,
                fixed zerofill,
                precision=1,
                /tikz/.cd
            }
        }
        \centering
        \begin{tabular}{@{}l@{\hspace{0in}}l@{}}    
            \multicolumn{2}{c}{\begin{tikzpicture}
\definecolor{color0}{rgb}{0.12156862745098,0.466666666666667,0.705882352941177}
\definecolor{color1}{rgb}{1,0.498039215686275,0.0549019607843137}
\definecolor{color2}{rgb}{0.172549019607843,0.627450980392157,0.172549019607843}
\definecolor{color3}{rgb}{0.83921568627451,0.152941176470588,0.156862745098039}
\definecolor{color4}{rgb}{0.580392156862745,0.403921568627451,0.741176470588235}
\begin{axis}[
  hide axis,
  height=0.7in,
  legend style={
    draw=none,
    /tikz/every even column/.append style={column sep=0.4cm}
  },
  legend columns=-1,
  xmin=0,
  xmax=1,
  ymin=0,
  ymax=1
]
\addlegendimage{cedline,color0}
\addlegendentry{LSTM}
\addlegendimage{cedline,color1}
\addlegendentry{Sup. 10}
\addlegendimage{cedline,color2}
\addlegendentry{Sup. 3-3-3}
\addlegendimage{cedline,color3}
\addlegendentry{Sup. h}
\addlegendimage{cedline,color4}
\addlegendentry{RNS 3-3}
\end{axis}
\end{tikzpicture}} \\
            #2
        \end{tabular}
        #3
    \end{figure*}}

    \begin{figure*}
        \pgfplotsset{
            every axis/.style={
                width=3.55in,
                height=2.485in
            },
            title style={yshift=-4.5ex},
            y tick label style={
                /pgf/number format/.cd,
                fixed,
                fixed zerofill,
                precision=1,
                /tikz/.cd
            }
        }
        \centering
        \begin{tabular}{@{}l@{\hspace{0in}}l@{}}    
            \multicolumn{2}{c}{\begin{tikzpicture}
\definecolor{color0}{rgb}{0.12156862745098,0.466666666666667,0.705882352941177}
\definecolor{color1}{rgb}{1,0.498039215686275,0.0549019607843137}
\definecolor{color2}{rgb}{0.172549019607843,0.627450980392157,0.172549019607843}
\definecolor{color3}{rgb}{0.83921568627451,0.152941176470588,0.156862745098039}
\definecolor{color4}{rgb}{0.580392156862745,0.403921568627451,0.741176470588235}
\begin{axis}[
  hide axis,
  height=0.7in,
  legend style={
    draw=none,
    /tikz/every even column/.append style={column sep=0.4cm}
  },
  legend columns=-1,
  xmin=0,
  xmax=1,
  ymin=0,
  ymax=1
]
\addlegendimage{cedline,color0}
\addlegendentry{LSTM}
\addlegendimage{cedline,color1}
\addlegendentry{Sup. 10}
\addlegendimage{cedline,color2}
\addlegendentry{Sup. 3-3-3}
\addlegendimage{cedline,color3}
\addlegendentry{Sup. h}
\addlegendimage{cedline,color4}
\addlegendentry{RNS 3-3}
\end{axis}
\end{tikzpicture}} \\

    \scalebox{0.8}{
\begin{tikzpicture}

\definecolor{color0}{rgb}{0.12156862745098,0.466666666666667,0.705882352941177}
\definecolor{color1}{rgb}{1,0.498039215686275,0.0549019607843137}
\definecolor{color2}{rgb}{0.172549019607843,0.627450980392157,0.172549019607843}
\definecolor{color3}{rgb}{0.83921568627451,0.152941176470588,0.156862745098039}
\definecolor{color4}{rgb}{0.580392156862745,0.403921568627451,0.741176470588235}

\begin{axis}[
tick align=outside,
tick pos=left,
title={\CountThree{}},
x grid style={white!69.0196078431373!black},
xmin=-1.85, xmax=60.85,
xtick style={color=black},
y grid style={white!69.0196078431373!black},
ylabel={Cross-entropy Diff.},
ymin=0, ymax=0.0559694579721582,
ytick style={color=black}
]
\addplot [cedline, color0]
table {%
1 0.00494046773853783
2 0.00394965408060176
3 0.00531642908160949
4 0.00141109877461564
5 0.0114419989048834
6 0.00213732634222782
7 0.00105005427547334
8 0.000154491377826416
9 0.00369665988109343
10 -0.00019751287920506
11 0.000695660994578157
12 0.000491120896055033
13 0.000185203266068516
14 -4.01315574218505e-05
15 0.000548649942071711
16 0.000518822814311738
17 -0.000325880131833829
18 0.000457295978680647
19 0.000116069176238652
20 -0.000291206993726097
21 -0.000247682281940137
22 -0.000606190647010375
23 -0.00018932712272178
24 0.000903024821890717
25 -0.000354337543831941
26 -0.00054239096685977
27 -0.000655193828238296
28 -0.000418894226546417
29 -0.000763749025851426
30 -0.00018265264892578
31 0.00043652742850666
32 -0.000140339357725359
33 0.000553464037540308
34 0.000644746036651066
35 0.000423859058331033
36 0.00120914453571103
37 2.94688650544217e-07
38 -0.000446549338500593
39 2.12482664022073e-05
};
\addplot [cedline, color1]
table {%
1 0.0303719719443933
2 0.00708086440776286
3 0.0031188660249815
4 0.00540201744104979
5 0.00336085457355435
6 0.00355762672155489
7 0.00218877111582068
8 0.00311529916256063
9 0.00136719340578061
10 0.00594134185780892
11 0.0012230849416716
12 0.000510010195387378
13 0.000339394150744257
14 0.00294444536513504
15 0.00351643157370574
16 -6.00786197944847e-05
17 0.00202469034002203
18 0.000757977375172726
19 0.000643605383000699
20 0.000856962935569798
21 0.000106258508855067
22 0.00212037633946635
23 0.000706745375826164
24 0.000687946387909991
25 0.00098095571979593
26 0.00782711811126564
};
\addplot [cedline, color2]
table {%
1 0.0122666258699821
2 0.0119545915515549
3 0.00653119323715111
4 0.0058410957168198
5 0.00238249944719617
6 0.00470409817335654
7 0.000918799176743791
8 0.00571758514773615
9 0.0018989829143622
10 0.00365025426640905
11 2.02489668647762e-05
12 0.0001997532972835
13 0.000684706720791656
14 0.00381689380277777
15 0.000707944766620276
16 0.00107248836085247
17 0.00027099847473689
18 -2.39673494225046e-05
19 0.000276926418118807
20 -0.000414170873458894
21 0.0028546719358815
22 7.58827682368624e-05
23 7.62186859902605e-05
24 -0.000383819987987651
25 -0.000231865561502641
26 -0.000117140721066021
27 7.86174894447031e-05
28 -0.000257307572718683
29 0.000350692084663699
30 0.000589020008606791
};
\addplot [cedline, color3]
table {%
1 0.0532678766865387
2 0.00772202436722916
3 0.0121362091230667
4 0.00432620780060436
5 0.0021270802869297
6 0.00191210379766384
7 0.00230090887447124
8 0.00333649317238859
9 0.00300770296009147
10 0.00163262070774656
11 0.000922486188813895
12 0.0093421002144406
13 0.00233522119957982
14 0.00242054036028014
15 0.000763770067086805
16 0.00121483184194374
17 0.000958032127548442
18 0.00209299726299441
19 0.00171500835238284
20 0.0013145396036027
21 0.000775518464222698
22 0.00225305802738583
23 0.00067267139223897
24 0.000751877792836503
25 0.00761666451273108
26 0.000647419354352051
27 0.00144294644975283
28 0.00156279891352769
29 0.00128008260718061
30 0.000890161807665941
31 0.000463555584271844
32 0.00107057128804618
33 0.0018052628888585
34 0.00204185274504114
35 0.000954958354536052
36 0.00119769998188281
37 0.000535028137268335
38 0.000402542665597423
39 0.00286753338945322
40 0.000858023237346829
41 0.000536200788169051
42 0.000983846299580322
43 0.00479814421011426
44 0.00389892806457919
45 0.000379735013795499
46 0.00160758939368197
47 0.000493804119499203
48 0.00124278291792525
49 0.000464627117739279
50 0.000563198984033231
51 0.000588910894985167
52 0.000554240449604421
53 0.00116637625987199
54 0.000596679592043861
55 0.000717998433326143
};
\addplot [cedline, color4]
table {%
1 0.0405847145029292
2 0.0105632247201447
3 0.00303090219280517
4 0.00274602200118275
5 0.00558594174020104
6 0.00258502338761677
7 0.0018168578614678
8 0.00122284259999034
9 0.00361689716218405
10 0.00227559179194658
11 0.00083319165462252
12 0.000704260238821842
13 0.00182968949494368
14 0.00086716598321581
15 0.0030154232341027
16 0.00103854356038301
17 0.000681801504263009
18 0.000811240778862674
19 0.00212308227997106
20 0.000707322420154588
21 0.000432450177042083
22 0.00064319406093092
23 0.00257355546174069
24 0.000506459317828389
25 0.000426215356176651
26 0.000675360515501322
27 0.00293400987793783
28 0.000382176076419348
29 0.000795610974708684
30 0.000321174959963935
31 0.00765898921634275
32 0.00171385512344846
33 0.000474434187523735
34 0.000318792329922206
35 0.000409281025125989
36 0.0016026475503498
37 0.00537154554815756
38 0.000349235823265993
39 0.000863709855047301
40 0.00082898680654931
41 0.000876970657523193
42 0.000407902911287575
43 0.000212962266483596
44 0.00107717502448391
45 0.000502660987045121
46 0.0015395478525583
47 0.000143100753311545
48 0.000137394171561084
49 0.000524191966071091
50 0.000460463807795533
51 0.00066299448667366
52 0.000561889918366497
53 0.000309706442903269
54 0.00037005825183381
55 0.000316938779099898
56 0.00071888960283558
57 0.00194722405149601
58 0.000738402619453842
};
\end{axis}

\end{tikzpicture}}
    &\scalebox{0.8}{
\begin{tikzpicture}

\definecolor{color0}{rgb}{0.12156862745098,0.466666666666667,0.705882352941177}
\definecolor{color1}{rgb}{1,0.498039215686275,0.0549019607843137}
\definecolor{color2}{rgb}{0.172549019607843,0.627450980392157,0.172549019607843}
\definecolor{color3}{rgb}{0.83921568627451,0.152941176470588,0.156862745098039}
\definecolor{color4}{rgb}{0.580392156862745,0.403921568627451,0.741176470588235}

\begin{axis}[
tick align=outside,
tick pos=left,
title={\CountThree{}},
x grid style={white!69.0196078431373!black},
xmin=39.15, xmax=101.85,
xtick style={color=black},
y grid style={white!69.0196078431373!black},
ylabel={Cross-entropy Diff.},
ymin=0, ymax=0.319055304914202,
ytick style={color=black}
]
\addplot [cedline, color0]
table {%
42 0.0497764055119005
45 0.0513301650337551
48 0.0494675429986447
51 0.0478247569157528
54 0.0464521650834517
57 0.0453080012880523
60 0.0431612158603356
63 0.0399940252304076
66 0.0358739186756647
69 0.0325066702706474
72 0.0355582751966502
75 0.045108827289782
78 0.0573848791967464
81 0.0706991651581556
84 0.0846118451286764
87 0.098983119617809
90 0.113739335615556
93 0.128829319730718
96 0.144203812117429
99 0.159815112304688
};
\addplot [cedline, color1]
table {%
42 0.0617047828851744
45 0.0524986963686735
48 0.049953351702009
51 0.0466054036067082
54 0.0441416487260298
57 0.0422230950717268
60 0.0397627258300782
63 0.0372953057289123
66 0.0356016768270464
69 0.0354598105294364
72 0.0378201377555116
75 0.043476189061215
78 0.0524484368819225
81 0.0638304696431974
84 0.0764042250689338
87 0.089249794699929
90 0.101856260194883
93 0.1139888812126
96 0.125560881624517
99 0.136545861816406
};
\addplot [cedline, color2]
table {%
42 0.0620632934570312
45 0.0549881246815556
48 0.0509512484803492
51 0.0480582339947041
54 0.0454848216663707
57 0.0430486665923019
60 0.040668505058914
63 0.0382928895950318
66 0.0359614631311218
69 0.0340840672084263
72 0.0341657664677867
75 0.0390292759945519
78 0.0487423667424841
81 0.0598736869998094
84 0.070718355066636
87 0.0809233301336115
90 0.0904747730297046
93 0.0994203218500665
96 0.107813368335213
99 0.115703308105469
};
\addplot [cedline, color3]
table {%
42 0.056797999892124
45 0.0512018485691236
48 0.0506156360859773
51 0.0481511834951548
54 0.0454560990767046
57 0.0429084067509093
60 0.0405856798515946
63 0.0384130549430848
66 0.0365150952695021
69 0.0355088195800782
72 0.0367588116371469
75 0.0413166688617907
78 0.0487613571746441
81 0.0580410617735328
84 0.0682343031939338
87 0.0788094676624646
90 0.0895059371780564
93 0.100198007137217
96 0.110821885571037
99 0.121341931152344
};
\addplot [cedline, color4]
table {%
42 0.0631415859488555
45 0.0544665593686311
48 0.0509851416762994
51 0.0486787091768705
54 0.046880215731534
57 0.0420231628417969
60 0.0405265432889344
63 0.0405061292648316
66 0.0400559018263177
69 0.0339120526994977
72 0.0331853579168451
75 0.0331161780106393
78 0.043655318248121
81 0.0707571857731518
84 0.108453720990349
87 0.146771073774858
90 0.18604991200206
93 0.226176861702128
96 0.266211365375322
99 0.305410131835938
};
\end{axis}

\end{tikzpicture}} \\
    
    \scalebox{0.8}{
\begin{tikzpicture}

\definecolor{color0}{rgb}{0.12156862745098,0.466666666666667,0.705882352941177}
\definecolor{color1}{rgb}{1,0.498039215686275,0.0549019607843137}
\definecolor{color2}{rgb}{0.172549019607843,0.627450980392157,0.172549019607843}
\definecolor{color3}{rgb}{0.83921568627451,0.152941176470588,0.156862745098039}
\definecolor{color4}{rgb}{0.580392156862745,0.403921568627451,0.741176470588235}

\begin{axis}[
tick align=outside,
tick pos=left,
title={\MarkedReverseAndCopy{}},
x grid style={white!69.0196078431373!black},
xmin=-6.3, xmax=154.3,
xtick style={color=black},
y grid style={white!69.0196078431373!black},
ylabel={Cross-entropy Diff.},
ymin=0, ymax=0.605522149951104,
ytick style={color=black}
]
\addplot [cedline, color0]
table {%
1 0.576681625747656
2 0.575142565357808
3 0.572720050941885
4 0.553191682991705
5 0.527180912137986
6 0.488852046355131
7 0.445605890972762
8 0.44708042002263
9 0.398474814053017
10 0.389965947412564
11 0.335297963831778
12 0.325003495131864
13 0.312490339029813
14 0.315838195485478
15 0.29440700353656
16 0.322740739889051
17 0.283513417097242
18 0.256496943238763
19 0.252634814082082
20 0.245982454097845
21 0.242810722687728
22 0.230471946027747
23 0.296674203287387
24 0.236111862215688
25 0.2133013026101
26 0.216222907580297
27 0.20482470487454
28 0.225640275106138
29 0.23445603998932
30 0.202944373903542
31 0.185821374387523
32 0.191699658391471
33 0.194146385808224
34 0.199699895494937
35 0.193723398837657
36 0.173011686165294
37 0.162193716830025
38 0.209420339101081
39 0.213547854233261
40 0.173745409466693
41 0.163624273593278
42 0.159361269521719
43 0.187044655481452
44 0.1562863719497
45 0.176992336199263
46 0.153499586045103
47 0.166771138054461
48 0.1568630322453
49 0.138441371674584
50 0.155398809099017
51 0.184729675963066
52 0.138041484276087
53 0.164546548944885
54 0.138434407425897
55 0.14051833025811
56 0.1264254973183
57 0.121341214804677
58 0.181289817121497
59 0.12706044658626
60 0.124608563107231
61 0.129297348607953
62 0.123031785738
63 0.115999893360897
64 0.110944883112127
65 0.126894622716362
66 0.107204145311553
67 0.11775185528981
68 0.124314268917897
69 0.1020589730879
70 0.11556833140874
71 0.0983985151309307
72 0.131452819658218
73 0.0978310446973478
74 0.102409194148238
75 0.101585811774969
76 0.113432431895461
77 0.0999372183128252
78 0.108821166866001
79 0.0994115998112977
80 0.0870814230167764
81 0.0844089155384767
82 0.118379120543145
83 0.0911034897639155
84 0.0927962486115049
85 0.085538507742213
86 0.0902622519697523
87 0.101141313818591
88 0.0917325016160829
89 0.0888031453203921
90 0.0775465397975744
91 0.113164276345235
92 0.0777996202016981
93 0.0899567785774176
94 0.100349503975114
95 0.101391915416008
96 0.0838754887273929
97 0.0719174688710997
98 0.0802894084089096
99 0.0941007398325154
100 0.0832143265419728
101 0.0709379326242196
102 0.0849844201027358
103 0.0784842149053674
104 0.0745964578510092
105 0.09256419731705
106 0.0676725485299416
107 0.0758576481522164
108 0.0958610873767681
109 0.088006709938746
110 0.0719134557249712
111 0.0623940196657274
112 0.0900442941527294
113 0.0700363225801023
114 0.0761636113835314
115 0.0865372379133519
116 0.0619531287507207
117 0.069468758458749
118 0.0621995498579838
119 0.0617461662449
120 0.0621861241392022
121 0.0616628684855471
122 0.0923720976979205
123 0.0654807867193003
124 0.0636977549147683
125 0.07204564170387
126 0.100933269984201
127 0.0687792149754404
128 0.0569419739587962
129 0.0709661756922104
130 0.0922814988867005
131 0.0773054837770953
132 0.0727096053153707
133 0.0694945652714509
134 0.064858281003774
135 0.0683783787549808
136 0.0574174214772811
137 0.0454053705276171
138 0.0527344147576805
139 0.0547310586601849
140 0.0465981464853554
141 0.0759555498815936
142 0.0597346022228034
143 0.0850503283916605
144 0.0574126853303106
145 0.0478088011304054
146 0.0528225755717407
147 0.0530415893357856
};
\addplot [cedline, color1]
table {%
1 0.569824479315553
2 0.545737610655281
3 0.480106534441665
4 0.416783048234882
5 0.375402536950376
6 0.328960996778969
7 0.340385052188132
8 0.287760626420149
9 0.263441714452501
10 0.267651181061862
11 0.24827108028833
12 0.223913118864975
13 0.204727865446117
14 0.180920605500339
15 0.172969451745783
16 0.192082518401998
17 0.160828758814341
18 0.149946754696471
19 0.156125456513077
20 0.153800666493834
21 0.154914723929817
22 0.153744291638717
23 0.151251041908582
24 0.147699326434418
25 0.127561375504338
26 0.18187604762178
27 0.123783733889428
28 0.114752517111927
29 0.0925535323957533
30 0.103695810729535
31 0.0964041373637642
32 0.0948590415092445
33 0.162127845246675
34 0.106269042451265
35 0.08362461232487
36 0.111407026219594
37 0.142717961142082
38 0.108081666640011
39 0.0761663159402545
40 0.0869781590792154
41 0.0870892781264472
42 0.0896139375531068
43 0.0800204986089315
44 0.0866280562001922
45 0.0717681517535809
46 0.0684501488012429
47 0.0858085822510405
48 0.0749570782683448
49 0.0777239384870377
50 0.0623366272430109
51 0.072578576098179
52 0.0580639174274818
53 0.082201122775108
54 0.0904400560021213
55 0.056340274781203
56 0.0621327357189053
57 0.0682622147953741
58 0.0940806421675425
59 0.0538309010664197
60 0.0555883760832619
61 0.0878493537758629
62 0.0563139843376258
63 0.0636645447224685
64 0.0665689101153973
65 0.0531351762504797
66 0.0619910807658113
67 0.0735733687558076
68 0.0571355860738476
69 0.0742777043217979
70 0.0646659452244857
71 0.0443002399302906
72 0.0475184483545092
73 0.0562663270390264
74 0.0558991229909758
75 0.0408277258125231
76 0.0956461868470564
77 0.0454659573162418
78 0.0427426224394348
79 0.0496349921793288
80 0.068678303468414
81 0.057017105217045
82 0.0416963676450001
83 0.0486096483216579
84 0.0374082215773548
85 0.0433428536783441
86 0.059936828242418
87 0.0593671336677599
88 0.0340364409998336
89 0.0435157416232981
90 0.0486532338290804
91 0.0570881177405368
92 0.0470078151020221
93 0.0929956202909488
94 0.042684040671983
95 0.0427610432055285
96 0.0367641074160902
97 0.0413256330428069
98 0.0425045664967302
};
\addplot [cedline, color2]
table {%
1 0.519862061501115
2 0.28586935846048
3 0.222909133982428
4 0.194366008093016
5 0.00608731154628228
6 0.00634938740417695
7 0.0040485404680124
8 0.00462036626183765
9 0.00383803906843205
10 0.00547205727392081
11 0.00277074870404764
12 0.00353279265154899
13 0.00424013290757219
14 0.00303392762191773
15 0.00361231593862482
16 0.0022307703782436
17 0.00217213932069077
18 0.00200201538798528
19 0.0134440476835917
20 0.00213938415991211
21 0.00338512413482561
22 0.00160949690532625
23 0.000984507475799534
24 0.00358970731295949
25 0.00550833377024224
26 0.00266238257488832
27 0.00255563770761175
28 0.00139963707129415
29 0.010043234562993
30 0.0026550110999049
31 0.00104955033372117
32 0.00667101295593081
33 0.00160915342712425
};
\addplot [cedline, color3]
table {%
1 0.508372684835715
2 0.442354766932004
3 0.417292800060203
4 0.236132546558403
5 0.138756172011142
6 0.115695353960701
7 0.0918237935001088
8 0.10832398756964
9 0.0901456004218059
10 0.0865337596413257
11 0.0605908031177624
12 0.0645274263042337
13 0.0554359060586193
14 0.0560503644210402
15 0.0505074762648184
16 0.0633130585385749
17 0.0543389128765402
18 0.0521380161397723
19 0.0505875467348904
20 0.0390485971540147
21 0.0411535616592524
22 0.0367004056105786
23 0.0401957257736772
24 0.0369857916922589
25 0.0385846980497035
26 0.104589251746261
27 0.0595133182566462
28 0.0486857900550541
29 0.0515293910214196
30 0.0338851940093723
31 0.0377280859476087
32 0.0262877390140167
33 0.0384176120114589
34 0.0628757972048284
35 0.0514131879124751
36 0.0378077061353095
37 0.0276009080771915
38 0.0268832866658463
39 0.0247578790600815
40 0.0241696713811398
41 0.0255291873564177
42 0.0252114136268152
43 0.0331982124813723
44 0.0186103957297468
45 0.0318440529130045
46 0.0349594801676775
47 0.0327863308851633
48 0.0308711321211539
49 0.0237921710845431
50 0.0209914174157747
51 0.032291690657382
52 0.0191052315437465
53 0.0382369703928605
54 0.0171076147643732
55 0.0170948696156876
56 0.0177779809111707
57 0.0157230918399116
58 0.0159253785461441
59 0.0160525123968955
60 0.016624662089824
61 0.0153778959521941
62 0.0171812200375166
63 0.0272721347931374
64 0.01560642759688
65 0.0221648001659323
66 0.019032292948954
67 0.0251597223085228
68 0.0157338903184531
69 0.0334277358452135
70 0.0190906436460538
71 0.0155349336846261
};
\addplot [cedline, color4]
table {%
1 0.546330808282923
2 0.431221563024824
3 0.199314519107498
4 0.00745269017610128
5 0.00387041838089403
6 0.00427701946345943
7 0.00547326288000921
8 0.00290802006217056
9 0.00276531726874696
10 0.00407817956717438
11 0.00262299511396097
12 0.00238927365656538
13 0.000811764321981434
14 0.00122846568979906
15 0.000720200591663167
16 0.00141477468594942
17 0.000709476478425231
18 0.000937494109944426
19 0.000851780349259035
20 0.00050461937188323
21 0.000529104524434443
22 0.000208639395628707
23 0.000503511594537376
24 0.000436864927190028
25 0.000343059303060611
26 0.000192969409435029
27 0.00128171303322
28 0.000885789650783675
29 0.000659115058650295
30 8.98453627578011e-05
31 0.00180641642711432
32 0.00016170487433248
33 0.00200894980957045
34 -1.37929360529143e-05
35 -4.20142799500423e-05
36 0.000464209536407856
37 0.000470003375586503
38 -0.000128858321306602
39 0.000972993617489815
40 0.000490139376889465
41 0.00136696163993688
42 -4.90451627365718e-05
43 0.000111326449203641
44 0.000152345408787391
45 0.000754962092417377
46 0.000491473977528356
47 0.00745011571893872
48 0.000239842430165338
};
\end{axis}

\end{tikzpicture}}
    &\scalebox{0.8}{
\begin{tikzpicture}

\definecolor{color0}{rgb}{0.12156862745098,0.466666666666667,0.705882352941177}
\definecolor{color1}{rgb}{1,0.498039215686275,0.0549019607843137}
\definecolor{color2}{rgb}{0.172549019607843,0.627450980392157,0.172549019607843}
\definecolor{color3}{rgb}{0.83921568627451,0.152941176470588,0.156862745098039}
\definecolor{color4}{rgb}{0.580392156862745,0.403921568627451,0.741176470588235}

\begin{axis}[
tick align=outside,
tick pos=left,
title={\MarkedReverseAndCopy{}},
x grid style={white!69.0196078431373!black},
xmin=38.15, xmax=100.85,
xtick style={color=black},
y grid style={white!69.0196078431373!black},
ylabel={Cross-entropy Diff.},
ymin=0, ymax=1.0881054286947,
ytick style={color=black}
]
\addplot [cedline, color0]
table {%
41 0.0618929555521302
44 0.0475366649195444
47 0.0469285442423353
50 0.0501584825694289
53 0.0547080056570545
56 0.0601702973875941
59 0.0666775261234651
62 0.0719464904534396
65 0.0828516813038434
68 0.0919165437662132
71 0.105428332566354
74 0.117085285033317
77 0.14195540587021
80 0.160947741425203
83 0.191219008694645
86 0.240938387201253
89 0.31599323270499
92 0.413713863125824
95 0.536373859662935
98 0.700620000664967
};
\addplot [cedline, color1]
table {%
41 0.0674761028735589
44 0.0391500546764887
47 0.0404133719197441
50 0.0430716786095636
53 0.0475925723526565
56 0.0539954455852366
59 0.0568019157719025
62 0.0651805399326064
65 0.0746962377077261
68 0.0834961336099632
71 0.0963527439161803
74 0.114061749877067
77 0.137364654417886
80 0.165689042785157
83 0.18484333440149
86 0.234778326362532
89 0.284150513520962
92 0.358667975188997
95 0.431395069379732
98 0.510882784755876
};
\addplot [cedline, color2]
table {%
41 0.0622318750554785
44 0.0557612200410722
47 0.0527303427449393
50 0.0518923118433259
53 0.049985715617992
56 0.0484504123478462
59 0.0462013501794546
62 0.0454163681506123
65 0.0431460927029912
68 0.0408253236292114
71 0.0375302646681334
74 0.0357166815176921
77 0.0332326995400817
80 0.0353791348954961
83 0.0427391909863117
86 0.0578300561128908
89 0.0797060419497815
92 0.106425372912451
95 0.136015990196789
98 0.171728793673174
};
\addplot [cedline, color3]
table {%
41 0.0558531780688714
44 0.0418274906573917
47 0.0389771638305514
50 0.0394979864373822
53 0.0412644925638833
56 0.0431238849698309
59 0.0471618807784129
62 0.049667263332904
65 0.0547101463251503
68 0.0623149175773546
71 0.0710925985168746
74 0.0811253566479006
77 0.0905249609082709
80 0.101904260281299
83 0.118921936056996
86 0.157035139850319
89 0.283503378104295
92 0.648337414033083
95 0.834020852664237
98 0.946829063954108
};
\addplot [cedline, color4]
table {%
41 0.0641356778363564
44 0.053930138226836
47 0.0517539836954602
50 0.0487690830596249
53 0.0482803481411402
56 0.0440075241214208
59 0.041165095296642
62 0.0406672710834
65 0.0414233476745821
68 0.0419621874482785
71 0.0400216180209979
74 0.0363377752676922
77 0.0374898390883589
80 0.0360594432963449
83 0.108979628811833
86 0.309462430843494
89 0.537776381576518
92 0.725800556674212
95 0.867337706505383
98 1.03787339397305
};
\end{axis}

\end{tikzpicture}} \\
    
    \scalebox{0.8}{\input{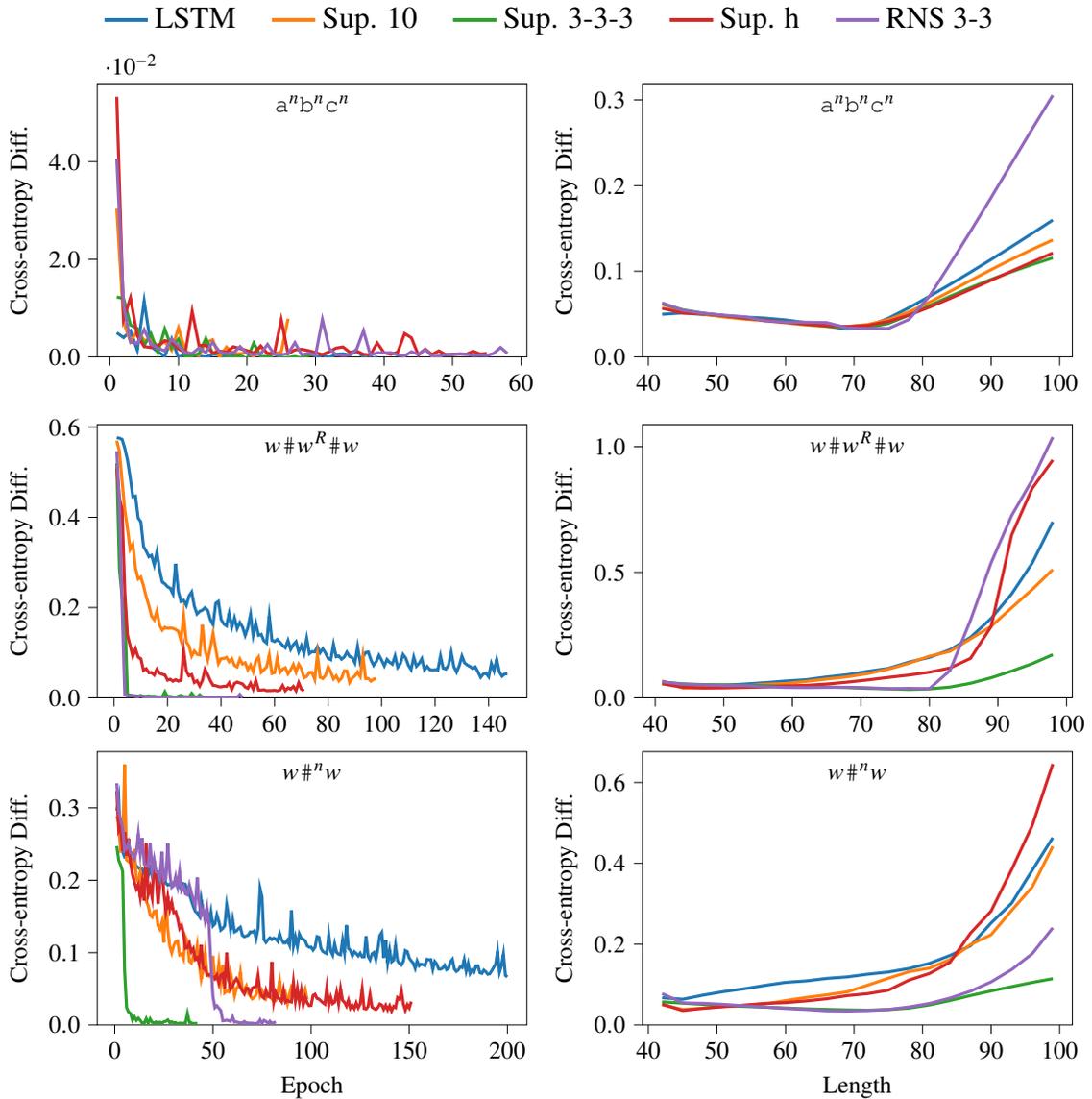}}
    &\scalebox{0.8}{
\begin{tikzpicture}

\definecolor{color0}{rgb}{0.12156862745098,0.466666666666667,0.705882352941177}
\definecolor{color1}{rgb}{1,0.498039215686275,0.0549019607843137}
\definecolor{color2}{rgb}{0.172549019607843,0.627450980392157,0.172549019607843}
\definecolor{color3}{rgb}{0.83921568627451,0.152941176470588,0.156862745098039}
\definecolor{color4}{rgb}{0.580392156862745,0.403921568627451,0.741176470588235}

\begin{axis}[
tick align=outside,
tick pos=left,
title={\CountAndCopy{}},
x grid style={white!69.0196078431373!black},
xmin=39.15, xmax=101.85,
xtick style={color=black},
y grid style={white!69.0196078431373!black},
ylabel={Cross-entropy Diff.},
ymin=0, ymax=0.676213187340937,
ytick style={color=black},
xlabel={Length}
]
\addplot [cedline, color0]
table {%
42 0.0672096320742324
45 0.0634421823326948
48 0.0737470671975815
51 0.0824467844954268
54 0.089445599572427
57 0.0977446618235737
60 0.105220578927579
63 0.108877762286971
66 0.115014434415018
69 0.118752686990125
72 0.125488647669076
75 0.130861201912847
78 0.139547066168562
81 0.152360890480642
84 0.171868899804517
87 0.195901781406382
90 0.252266736920486
93 0.30124585376813
96 0.38254886021476
99 0.463302494868343
};
\addplot [cedline, color1]
table {%
42 0.0498621064110784
45 0.0383766624193118
48 0.0412034955697115
51 0.0460410978968691
54 0.0483902018451544
57 0.0518666184844035
60 0.0600928453530401
63 0.0678153118780356
66 0.0743753872193559
69 0.0818013407289641
72 0.0983598417508394
75 0.115518152210956
78 0.131503219748468
81 0.140616803447487
84 0.161780607065547
87 0.19880020226043
90 0.223174135186282
93 0.283307699263476
96 0.341351111342337
99 0.441636772212093
};
\addplot [cedline, color2]
table {%
42 0.0570814298576628
45 0.0538372390157791
48 0.0508128207450942
51 0.048495908406184
54 0.0457432957727112
57 0.0439771510477116
60 0.0418482764813701
63 0.0397945675543053
66 0.0383171469266787
69 0.0371842578606604
72 0.0364526352546921
75 0.0378878582142451
78 0.0416785065364423
81 0.0494288211208863
84 0.060260530089444
87 0.0725675972799611
90 0.0840763018672574
93 0.094645330164407
96 0.105019462663214
99 0.114368388423031
};
\addplot [cedline, color3]
table {%
42 0.0514207739710349
45 0.0356089303795698
48 0.0412556121194563
51 0.0451629663238523
54 0.0489027417954384
57 0.0532146746199099
60 0.0551032713584192
63 0.059303310840438
66 0.0647030093625461
69 0.0724431864320891
72 0.0773390998308908
75 0.0857896622849687
78 0.110740295747281
81 0.127226583363645
84 0.155030923012238
87 0.227251019021793
90 0.28172587153587
93 0.384511156710285
96 0.493136156848136
99 0.645646781977718
};
\addplot [cedline, color4]
table {%
42 0.0774052852355696
45 0.0543706597443796
48 0.0533795608200304
51 0.0507377766453667
54 0.0472215450624839
57 0.0440370917804704
60 0.0407737775699972
63 0.0381967816045982
66 0.0348839102778914
69 0.034318674713339
72 0.0354776613409507
75 0.038203795457255
78 0.0443371052310625
81 0.0530245957016791
84 0.0663082954846647
87 0.0835296223154725
90 0.106448463397134
93 0.136920487037745
96 0.176088597247621
99 0.240250053462093
};
\end{axis}

\end{tikzpicture}} \\

        \end{tabular}
        
    \caption[Results on three non-CFLs.]{Performance on three non-CFLs. Left: Cross-entropy difference in nats, on the validation set by epoch. Right: Cross-entropy difference on the test set, binned by string length. Each line is the best of 10 runs, selected by validation performance. All models easily solve \CountThree{}. Only multi-stack models (Sup.\ 3-3-3 and RNS 3-3) achieve optimal cross-entropy on \MarkedReverseAndCopy{} and \CountAndCopy{}.}
    \label{fig:non-cfl-results-1}

    \end{figure*}

    \begin{figure*}
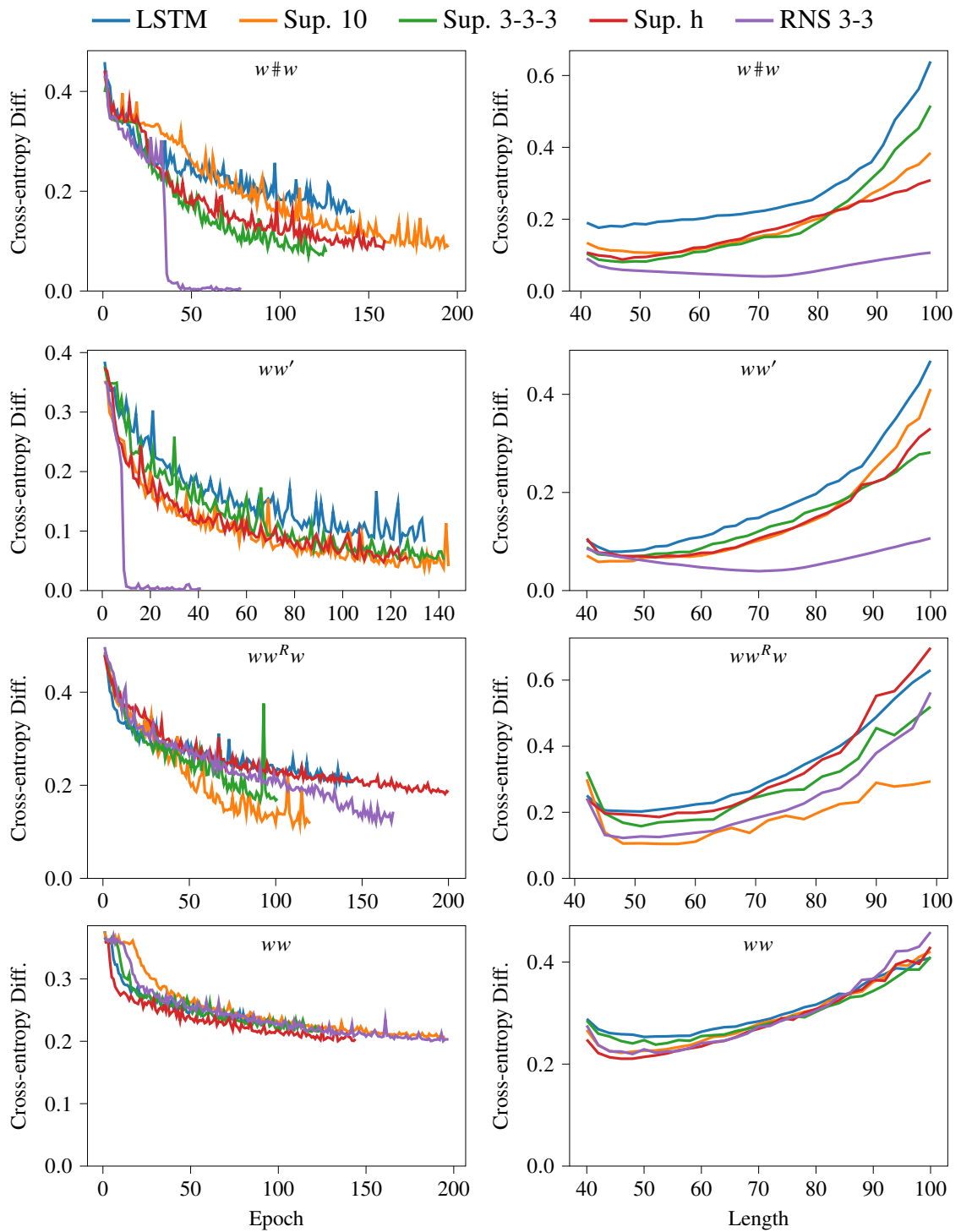

        \pgfplotsset{
            every axis/.style={
                width=3.55in,
                height=2.485in
            },
            title style={yshift=-4.5ex},
            y tick label style={
                /pgf/number format/.cd,
                fixed,
                fixed zerofill,
                precision=1,
                /tikz/.cd
            }
        }
        \centering
        \begin{tabular}{@{}l@{\hspace{0in}}l@{}}    
            \multicolumn{2}{c}{\begin{tikzpicture}
\definecolor{color0}{rgb}{0.12156862745098,0.466666666666667,0.705882352941177}
\definecolor{color1}{rgb}{1,0.498039215686275,0.0549019607843137}
\definecolor{color2}{rgb}{0.172549019607843,0.627450980392157,0.172549019607843}
\definecolor{color3}{rgb}{0.83921568627451,0.152941176470588,0.156862745098039}
\definecolor{color4}{rgb}{0.580392156862745,0.403921568627451,0.741176470588235}
\begin{axis}[
  hide axis,
  height=0.7in,
  legend style={
    draw=none,
    /tikz/every even column/.append style={column sep=0.4cm}
  },
  legend columns=-1,
  xmin=0,
  xmax=1,
  ymin=0,
  ymax=1
]
\addlegendimage{cedline,color0}
\addlegendentry{LSTM}
\addlegendimage{cedline,color1}
\addlegendentry{Sup. 10}
\addlegendimage{cedline,color2}
\addlegendentry{Sup. 3-3-3}
\addlegendimage{cedline,color3}
\addlegendentry{Sup. h}
\addlegendimage{cedline,color4}
\addlegendentry{RNS 3-3}
\end{axis}
\end{tikzpicture}} \\

    \scalebox{0.8}{\input{figures/04-non-cfls/train/marked-copy}}
    &\scalebox{0.8}{
\begin{tikzpicture}

\definecolor{color0}{rgb}{0.12156862745098,0.466666666666667,0.705882352941177}
\definecolor{color1}{rgb}{1,0.498039215686275,0.0549019607843137}
\definecolor{color2}{rgb}{0.172549019607843,0.627450980392157,0.172549019607843}
\definecolor{color3}{rgb}{0.83921568627451,0.152941176470588,0.156862745098039}
\definecolor{color4}{rgb}{0.580392156862745,0.403921568627451,0.741176470588235}

\begin{axis}[
tick align=outside,
tick pos=left,
title={\MarkedCopy{}},
x grid style={white!69.0196078431373!black},
xmin=38.1, xmax=101.9,
xtick style={color=black},
y grid style={white!69.0196078431373!black},
ylabel={Cross-entropy Diff.},
ymin=0, ymax=0.669086592761786,
ytick style={color=black}
]
\addplot [cedline, color0]
table {%
41 0.189997110819668
43 0.176174605585026
45 0.181108988967934
47 0.179562278309166
49 0.187531882628101
51 0.185834297582358
53 0.192848071556207
55 0.194819235698218
57 0.198869023420414
59 0.199130270606964
61 0.203215349565249
63 0.210118536381609
65 0.211014340486087
67 0.214046350449769
69 0.219464463750349
71 0.223765157724766
73 0.230553834140601
75 0.238769140919829
77 0.24565576059259
79 0.253358022426246
81 0.273717922116337
83 0.295465887408985
85 0.311512548356626
87 0.340576375461475
89 0.358531365372083
91 0.406384573707092
93 0.476549983449388
95 0.518192469218048
97 0.562287545650485
99 0.639166279963127
};
\addplot [cedline, color1]
table {%
41 0.133926902951365
43 0.119245450755481
45 0.113066412965896
47 0.111720939197838
49 0.107636033018726
51 0.106607049610603
53 0.106487756162689
55 0.105472093887392
57 0.108566794163948
59 0.11527378460436
61 0.119639555714443
63 0.128809637455828
65 0.131204252299534
67 0.139612254057306
69 0.147488982444545
71 0.15501112262277
73 0.157770499047696
75 0.168060811869665
77 0.184321625977206
79 0.196464589809059
81 0.205368193648349
83 0.222004205861366
85 0.235281171176393
87 0.248120431746987
89 0.270803121014444
91 0.286727166692825
93 0.308130975803111
95 0.338522872863881
97 0.35226567264347
99 0.384564375666252
};
\addplot [cedline, color2]
table {%
41 0.104447532834177
43 0.0873483671262195
45 0.083405768434646
47 0.0807109853811064
49 0.0826601541124756
51 0.0821010587752669
53 0.0888589493773876
55 0.0933928005001148
57 0.0971603336978922
59 0.107935975359568
61 0.110275778307688
63 0.120407194523699
65 0.127528974127185
67 0.131057817878538
69 0.141351217377581
71 0.150589023827197
73 0.151097409942966
75 0.152887504149681
77 0.160909228641469
79 0.181360524867652
81 0.200734821411307
83 0.229670860902289
85 0.250496298810841
87 0.280633304616305
89 0.312183437854722
91 0.343473355976114
93 0.393021839749255
95 0.424834976135366
97 0.453498981396659
99 0.516197529963127
};
\addplot [cedline, color3]
table {%
41 0.106879493167139
43 0.0991041821120149
45 0.0961162866496595
47 0.0874897532847523
49 0.0936075662218506
51 0.0948978774351225
53 0.099944380511647
55 0.10462204854699
57 0.109999015675263
59 0.120250550554881
61 0.122189210767366
63 0.132226995854266
65 0.139712072197735
67 0.145237541281571
69 0.15946041799142
71 0.167443509233013
73 0.173292564081479
75 0.182985578008645
77 0.192515548753648
79 0.20620561764109
81 0.211977020820605
83 0.224088527382944
85 0.230257551990347
87 0.249609689559487
89 0.250785542889444
91 0.26169373004201
93 0.274027127731303
95 0.280593592264923
97 0.297628894103929
99 0.308464912775627
};
\addplot [cedline, color4]
table {%
41 0.0905288723524065
43 0.0704429605211058
45 0.0628324360212629
47 0.0589808421519397
49 0.0570733377062256
51 0.0554127559282115
53 0.0539327340996099
55 0.0519698966217666
57 0.0505189931133017
59 0.0487248344090472
61 0.0471584175465997
63 0.0457929748581719
65 0.0441762205168633
67 0.0427128040917732
69 0.0414470600115092
71 0.0407600239899574
73 0.0414639837922057
75 0.0437044115304377
77 0.0482269995749622
79 0.0532148034321056
81 0.0593214556052852
83 0.0650420999740896
85 0.0715055080223235
87 0.0770704605999697
89 0.0824532403720826
91 0.0881448541503663
93 0.0927725565876857
95 0.0978102874146627
97 0.102985738013368
99 0.106792403010002
};
\end{axis}

\end{tikzpicture}} \\
    
    \scalebox{0.8}{\input{figures/04-non-cfls/train/unmarked-copy-different-alphabets}}
    &\scalebox{0.8}{
\begin{tikzpicture}

\definecolor{color0}{rgb}{0.12156862745098,0.466666666666667,0.705882352941177}
\definecolor{color1}{rgb}{1,0.498039215686275,0.0549019607843137}
\definecolor{color2}{rgb}{0.172549019607843,0.627450980392157,0.172549019607843}
\definecolor{color3}{rgb}{0.83921568627451,0.152941176470588,0.156862745098039}
\definecolor{color4}{rgb}{0.580392156862745,0.403921568627451,0.741176470588235}

\begin{axis}[
tick align=outside,
tick pos=left,
title={\UnmarkedCopyDifferentAlphabets{}},
x grid style={white!69.0196078431373!black},
xmin=37, xmax=103,
xtick style={color=black},
y grid style={white!69.0196078431373!black},
ylabel={Cross-entropy Diff.},
ymin=0, ymax=0.489936480958028,
ytick style={color=black}
]
\addplot [cedline, color0]
table {%
40 0.102215861807648
42 0.0888453971938351
44 0.0792204115960409
46 0.0791880267921405
48 0.0808247447097967
50 0.0829523241740466
52 0.0896975782129282
54 0.0911019043302305
56 0.0995257082474166
58 0.106029674690345
60 0.109180745920827
62 0.115816412455127
64 0.129252226373373
66 0.132289244323087
68 0.145535308560136
70 0.148207561464728
72 0.159230285956739
74 0.166866791965427
76 0.177274241706212
78 0.187452645336705
80 0.196965138940304
82 0.215322604143582
84 0.22374139543288
86 0.242629628519941
88 0.253264798023735
90 0.285173558977911
92 0.320892244276129
94 0.351648257545343
96 0.38712643747098
98 0.420861121648361
100 0.468487530508691
};
\addplot [cedline, color1]
table {%
40 0.0712368876746451
42 0.0584117920884573
44 0.0598718601637493
46 0.0597650038117548
48 0.0601325562133043
50 0.06709194388912
52 0.0690319495277866
54 0.0677024014893215
56 0.069414003203557
58 0.069172261091298
60 0.0715372632748226
62 0.076301981031516
64 0.0809671828036613
66 0.0869726267274712
68 0.0952336002834152
70 0.102175706271066
72 0.10872708871787
74 0.11668196123626
76 0.127572885933079
78 0.135307916301895
80 0.145342628571863
82 0.155543389629274
84 0.176216292032145
86 0.193501406986176
88 0.21858415590646
90 0.245780691103598
92 0.269068645065779
94 0.292470523170343
96 0.335108879481289
98 0.350964203245583
100 0.411071118457329
};
\addplot [cedline, color2]
table {%
40 0.0878548552337152
42 0.0740547335855503
44 0.0720711331672215
46 0.0690252836904117
48 0.0698369218462762
50 0.0710722977494141
52 0.0751104983190602
54 0.0755980424694351
56 0.0786120226662763
58 0.078653896750726
60 0.0861161807312776
62 0.0951279746451072
64 0.0986223060007765
66 0.106685889029244
68 0.111401724717292
70 0.120686551616576
72 0.130644228058623
74 0.136335932590427
76 0.141590647956212
78 0.157900597027765
80 0.165354835603113
82 0.171939168055479
84 0.180600253429204
86 0.192417141989768
88 0.208313955327106
90 0.219867712480659
92 0.227966389521424
94 0.241977102117712
96 0.261532043141083
98 0.277383720685609
100 0.281559351362651
};
\addplot [cedline, color3]
table {%
40 0.105991109521063
42 0.0774891960219601
44 0.0756050144606243
46 0.070653649714348
48 0.0700008448373476
50 0.0697802193946347
52 0.067598014902197
54 0.0710807306796624
56 0.0703214781898508
58 0.0724890976908743
60 0.0769030339587367
62 0.077254361983897
64 0.0841066810007766
66 0.0879482782549899
68 0.0971880344432884
70 0.106375406425115
72 0.113919123713589
74 0.120173237277927
76 0.127254393390465
78 0.138166988443904
80 0.149169306812604
82 0.1584889021481
84 0.170592498374057
86 0.18374044030155
88 0.21541186394719
90 0.219788246048653
92 0.228545606591317
94 0.248202379667054
96 0.284518965257707
98 0.312336865414144
100 0.33027086899879
};
\addplot [cedline, color4]
table {%
40 0.0856366887893555
42 0.076745674265056
44 0.0721232978807632
46 0.0681498057981246
48 0.0645105456903451
50 0.061315816055174
52 0.0580173379417017
54 0.0548409845859124
56 0.0529319839464298
58 0.0501203543116371
60 0.0473038648373226
62 0.0454767469665358
64 0.0431773691017381
66 0.0415865930287773
68 0.0405934423350457
70 0.0395085215219462
72 0.0404771422281435
74 0.0415507763404268
76 0.0437858209472831
78 0.0472999492825119
80 0.0519351488264926
82 0.0569743772516389
84 0.0616295790266302
86 0.0670224199172108
88 0.0723650986733137
90 0.0778932855404113
92 0.083988052407849
94 0.089398206147317
96 0.0955333821391501
98 0.100455552282831
100 0.106616398082948
};
\end{axis}

\end{tikzpicture}} \\
    
    \scalebox{0.8}{\input{figures/04-non-cfls/train/unmarked-reverse-and-copy}}
    &\scalebox{0.8}{
\begin{tikzpicture}

\definecolor{color0}{rgb}{0.12156862745098,0.466666666666667,0.705882352941177}
\definecolor{color1}{rgb}{1,0.498039215686275,0.0549019607843137}
\definecolor{color2}{rgb}{0.172549019607843,0.627450980392157,0.172549019607843}
\definecolor{color3}{rgb}{0.83921568627451,0.152941176470588,0.156862745098039}
\definecolor{color4}{rgb}{0.580392156862745,0.403921568627451,0.741176470588235}

\begin{axis}[
tick align=outside,
tick pos=left,
title={\UnmarkedReverseAndCopy{}},
x grid style={white!69.0196078431373!black},
xmin=39.15, xmax=101.85,
xtick style={color=black},
y grid style={white!69.0196078431373!black},
ylabel={Cross-entropy Diff.},
ymin=0, ymax=0.72748146242306,
ytick style={color=black}
]
\addplot [cedline, color0]
table {%
42 0.250451592466674
45 0.205637351532763
48 0.203519642731701
51 0.202232363296508
54 0.208380968890609
57 0.21466453318672
60 0.223645591734956
63 0.22897568556578
66 0.252122133225652
69 0.262978761208875
72 0.2911718066481
75 0.312963573931762
78 0.343890976868721
81 0.370500747330871
84 0.400748035833929
87 0.439897142734507
90 0.488418919698645
93 0.543106257275444
96 0.592468721683832
99 0.630112602290218
};
\addplot [cedline, color1]
table {%
42 0.299676190486297
45 0.138892940230113
48 0.105508766017735
51 0.106056121935331
54 0.104409932846575
57 0.10419750900838
60 0.111019399047968
63 0.137351158588241
66 0.152655015688339
69 0.137705986376285
72 0.175030231840737
75 0.189337892907913
78 0.179131347220778
81 0.204121055067152
84 0.225239131881723
87 0.230986842219592
90 0.28941882311554
93 0.277982316949646
96 0.28362071306592
99 0.293443461665218
};
\addplot [cedline, color2]
table {%
42 0.322580101278448
45 0.196267818159801
48 0.16844398405026
51 0.157840789904081
54 0.169675580041177
57 0.172965230250298
60 0.177144607167845
63 0.178360298602889
66 0.212493554925746
69 0.240095983586107
72 0.254690408157089
75 0.266838535383242
78 0.268901885937154
81 0.3078412818202
84 0.323833686109665
87 0.362432265692603
90 0.454497162744661
93 0.433534698099912
96 0.476636076306899
99 0.519414360102718
};
\addplot [cedline, color3]
table {%
42 0.240254832155991
45 0.196712711374814
48 0.193859994692798
51 0.190397740399874
54 0.185489322938904
57 0.198268343464199
60 0.198400970833317
63 0.204436348407577
66 0.218745668381903
69 0.242152066175393
72 0.27420078915024
75 0.292920174723291
78 0.317668234087866
81 0.359304041442914
84 0.380501654859665
87 0.446474069225984
90 0.552400880121035
93 0.56658344311321
96 0.626657772102647
99 0.697801274165218
};
\addplot [cedline, color4]
table {%
42 0.241910332701049
45 0.131275566970942
48 0.122285337486166
51 0.126813732549814
54 0.125112680538336
57 0.131783589625298
60 0.138051529555141
63 0.143621414630721
66 0.162769470270895
69 0.177428991398607
72 0.191765804133117
75 0.205448186647633
78 0.226455844105667
81 0.259023398817152
84 0.272789855686871
87 0.314952795699705
90 0.379351000313343
93 0.417620261597253
96 0.454435337341049
99 0.562941801508968
};
\end{axis}

\end{tikzpicture}} \\
    
    \scalebox{0.8}{\input{figures/04-non-cfls/train/unmarked-copy}}
    &\scalebox{0.8}{
\begin{tikzpicture}

\definecolor{color0}{rgb}{0.12156862745098,0.466666666666667,0.705882352941177}
\definecolor{color1}{rgb}{1,0.498039215686275,0.0549019607843137}
\definecolor{color2}{rgb}{0.172549019607843,0.627450980392157,0.172549019607843}
\definecolor{color3}{rgb}{0.83921568627451,0.152941176470588,0.156862745098039}
\definecolor{color4}{rgb}{0.580392156862745,0.403921568627451,0.741176470588235}

\begin{axis}[
tick align=outside,
tick pos=left,
title={\UnmarkedCopy{}},
x grid style={white!69.0196078431373!black},
xlabel={Length},
xmin=37, xmax=103,
xtick style={color=black},
y grid style={white!69.0196078431373!black},
ylabel={Cross-entropy Diff.},
ymin=0, ymax=0.471343491775262,
ytick style={color=black}
]
\addplot [cedline, color0]
table {%
40 0.288578702413593
42 0.268315157822469
44 0.260748813288749
46 0.258606166960026
48 0.257725055615409
50 0.253299674051498
52 0.254139463531324
54 0.254398690551821
56 0.255391764990838
58 0.255593200908565
60 0.263477500851667
62 0.268538735433798
64 0.2723134868902
66 0.273574298544169
68 0.280022754778433
70 0.283830188830485
72 0.288982867827459
74 0.296943810194593
76 0.302987227450936
78 0.311591292488604
80 0.316481439094625
82 0.326412000792678
84 0.337886386241704
86 0.33647447855586
88 0.354717078132583
90 0.367971517854972
92 0.376278143973709
94 0.38802567325258
96 0.385833649938764
98 0.403108990966543
100 0.40782593358728
};
\addplot [cedline, color1]
table {%
40 0.26668839896466
42 0.23671592317494
44 0.225616326309583
46 0.222067665464015
48 0.224303250369873
50 0.22630825248287
52 0.226677144103258
54 0.229278706531935
56 0.233401419253447
58 0.237126859211001
60 0.244761920677486
62 0.253377602621298
64 0.255254292178661
66 0.260408159026912
68 0.269661780480426
70 0.278612800516224
72 0.284166073627972
74 0.288318419569593
76 0.296213498250449
78 0.30242637704161
80 0.306926317566847
82 0.322429449493732
84 0.330809812252733
86 0.33811196059609
88 0.342246100692274
90 0.361305834905109
92 0.370246452945483
94 0.394816945867712
96 0.392832834458738
98 0.410287317197604
100 0.420455352290869
};
\addplot [cedline, color2]
table {%
40 0.285254876670453
42 0.259647711419852
44 0.254002177004027
46 0.245164823043803
48 0.240542388603292
50 0.246942970237159
52 0.238133005320829
54 0.241111714344435
56 0.247023052682724
58 0.245650925683459
60 0.254894957306585
62 0.256616341150564
64 0.2579629009527
66 0.264591235344542
68 0.269065157405245
70 0.274335938170274
72 0.280487911170781
74 0.284800450819594
76 0.294057197519929
78 0.291799646424521
80 0.302438108654811
82 0.312816191775359
84 0.319480423465968
86 0.330866371964625
88 0.33344804065014
90 0.343482603449065
92 0.354922979743198
94 0.370129548663764
96 0.385387300470336
98 0.38541925177778
100 0.410137437686909
};
\addplot [cedline, color3]
table {%
40 0.248050405919691
42 0.221620367669853
44 0.21329693611861
46 0.210517632219335
48 0.210600683405588
50 0.214111369823556
52 0.217022027284272
54 0.220824915693867
56 0.227028021158601
58 0.230694953755493
60 0.234634367206687
62 0.242404450339552
64 0.245490056902219
66 0.252862974791837
68 0.262886276630788
70 0.269354781700203
72 0.276311300511534
74 0.290102338840427
76 0.286911042893306
78 0.301094172729901
80 0.307906014711909
82 0.315739643394335
84 0.324969221719645
86 0.339900697575257
88 0.346113727096769
90 0.365221367614587
92 0.363175720418602
94 0.395432385834817
96 0.403288194478068
98 0.396677537762629
100 0.429791676548295
};
\addplot [cedline, color4]
table {%
40 0.275848495668166
42 0.237613395467818
44 0.224920905302638
46 0.224669425354308
48 0.219789140038241
50 0.228966225828458
52 0.222475483947008
54 0.224679052767731
56 0.225585064401474
58 0.231936264004434
60 0.240899215759454
62 0.243447589600464
64 0.245909603176257
66 0.251989133548833
68 0.259778614153433
70 0.273445340679077
72 0.274908528679342
74 0.285415880507094
76 0.292255186054832
78 0.295288446859648
80 0.304636037377804
82 0.314955687257286
84 0.332699690469645
86 0.340256301023533
88 0.365031553202808
90 0.367257929685329
92 0.386710929171962
94 0.421047106229554
96 0.422380947780258
98 0.430390398794826
100 0.45892321274879
};
\end{axis}

\end{tikzpicture}} \\

        \end{tabular}
        
    \caption[Results on four more non-CFLs.]{Performance on four non-CFLs. Continued from \cref{fig:non-cfl-results-1}. On \MarkedCopy{} and \UnmarkedCopyDifferentAlphabets{}, which have no extra timesteps, only RNS 3-3 achieves optimal cross-entropy. No models solve \UnmarkedReverseAndCopy{} or \UnmarkedCopy{} optimally.}
    \label{fig:non-cfl-results-2}

    \end{figure*}
}

All models easily learn \CountThree{}, likely because the LSTM controller by itself can solve it using a counting mechanism \citep{weiss-etal-2018-practical}. The input and forget gates of the LSTM controller are not tied, so the values of its memory cells are not bounded to $(0, 1)$ and can be incremented and decremented. Note also that because there is at most one string in \CountThree{} for a given length, all strings in the test set between lengths 40 and 80 are in the training set.

Strings in \MarkedReverseAndCopy{} and \CountAndCopy{} contain two hints to facilitate learning: explicit boundaries for $w$, and extra timesteps in the middle, which simplify the task of transferring symbols between stacks (for details, compare the solutions for \MarkedReverseAndCopy{} and \MarkedCopy{} described in \cref{sec:non-cfl-tasks}). Only the models capable of simulating multiple stacks, Sup.\ 3-3-3 and RNS 3-3, achieve optimal cross-entropy on these two tasks. However, we note that RNS 3-3 does not generalize as well on lengths not seen during training.

Only RNS 3-3 learns \MarkedCopy{} and \UnmarkedCopyDifferentAlphabets{}. This suggests that although multiple deterministic stacks (Sup.\ 3-3-3) are enough to learn to copy a string given enough hints in the input (explicit boundaries, extra timesteps for computation), only the nondeterministic stack succeeds when the number of hints is reduced (no extra timesteps). A deterministic three-stack automaton could recognize \MarkedCopy{} if it were allowed to execute $|w|$ non-scanning transitions after reading $\sym{\#}$ to transfer symbols from one stack to another, but Sup.\ 3-3-3 is real-time, which likely explains why it can learn \CountAndCopy{} but not \MarkedCopy{}. Note that the three-stack construction for recognizing \MarkedCopy{} described in \cref{sec:non-cfl-tasks} requires nondeterministically guessing where the middle of $w$ is.  As noted in \cref{sec:cfls}, nondeterminism overcomes the restrictions imposed by real-time execution in PDAs. This likely explains why RNS 3-3 can solve \MarkedCopy{} when Sup.\ 3-3-3 cannot.

No stack models learn to copy a string without explicit boundaries (\UnmarkedReverseAndCopy{} and \UnmarkedCopy{}), although we note that Sup.\ 10 achieves the best performance on the test set for \UnmarkedReverseAndCopy{}. Although RNS 3-3 can use nondeterminism to guess the midpoint of $w$ in \MarkedCopy{}, a nondeterministic PDA is evidently not enough to guess both the midpoint of $w$ and $ww$ in the \UnmarkedCopy{} task. Solving \UnmarkedReverseAndCopy{} and \UnmarkedCopy{} would likely require a nondeterministic multi-stack automaton such as a nondeterministic embedded PDA \citep{vijayashanker-1987-study}, as opposed to a model that uses nondeterminism to simulate multiple stacks.

\section{Conclusion}

We showed that the RNS-RNN can recognize all CFLs and a large class of non-CFLs, allaying concerns that its reliance on a restricted PDA is insufficient for representing certain kinds of syntax. We tested stack RNNs on a variety of non-CFLs, and we showed that only the RNS-RNN can learn cross-serial dependencies (provided the boundary is explicitly marked), unlike a deterministic multi-stack architecture.

\chapter{The Vector Nondeterministic Stack RNN}
\label{chap:vrns-rnn}

So far, the computational complexity of the RNS-RNN has limited us to relatively small sizes for $Q$ and $\Gamma$; the largest stack alphabet used in \cref{chap:incremental-execution} contained only 11 symbol types. However, natural languages have very large vocabularies, and a stack alphabet of size 11 would appear to be grossly insufficient for storing detailed lexical information. In this chapter, we show that the RNS-RNN is actually surprisingly effective at encoding large numbers of token types with a small stack alphabet. For example, on \MarkedReversal{}, we show that an RNS-RNN with only 3 stack symbol types can learn to simulate a stack of no fewer than 200 symbol types. We show that it does this by encoding them in the \emph{distribution} over stacks in the stack WFA, representing symbol types as clusters of points in the space of stack readings.

We also propose a more deliberate solution to increasing the information capacity of the nondeterministic stack. We present a new version of the RNS-RNN that simulates a stack of discrete symbols tagged with \emph{vectors}, combining the benefits of nondeterminism and neural networks' ability to embed information in compact vector representations. We call this new model the \term{Vector RNS-RNN (VRNS-RNN)}, and we show that it is a generalization of both the RNS-RNN and the superposition stack RNN. We demonstrate perplexity improvements with this new model on the Penn Treebank language modeling benchmark.

This work will appear in a paper published at ICLR 2023 \citep{dusell-chiang-2023-surprising}. The code used for the experiments in this chapter is publicly available.\footnote{\url{https://github.com/bdusell/nondeterministic-stack-rnn}}

\section{Vector RNS-RNN}
\label{sec:vrns-rnn-definition}

The Vector RNS-RNN (VRNS-RNN) is an extension of the RNS-RNN that uses a stack whose elements are symbols drawn from $\Gamma$ and augmented with vectors of size $\stackvectorsizeletter$. We call a WPDA with such a stack a \term{vector WPDA}, and we call the corresponding differentiable stack module the \term{differentiable vector WPDA}. Whereas its time complexity is cubic in $|\Gamma|$, its time and space complexity scale only linearly with $\stackvectorsizeletter$. Each run of the vector WPDA involves a stack of elements in $\Gamma \times \realset^{\stackvectorsizeletter}$. We assume the initial stack consists of the element $(\bot, \vrnsbottomvector)$. In the VRNS-RNN, we set $\vrnsbottomvector = \logistic{\vectorparam{w}{v}}$, where $\vectorparam{w}{v}$ is a learned parameter. The stack operations in a vector WPDA have the following semantics:
\begin{description}
\item[Push $q, x \rightarrow r, y$] If $q$ is the current state and $(x, \vecvar{u})$ is on top of the stack, go to state $r$ and push $(y, \pushedstackvectort{t})$ with weight $\nspushweight{q}{t}{x}{r}{y}$, where $\pushedstackvectort{t} = \logistic{\affine{v}{\rnnhiddent{t}}}$.
\item[Replace $q, x \rightarrow r, y$] If $q$ is the current state and $(x, \vecvar{u})$ is on top of the stack, go to state $r$ and replace $(x, \vecvar{u})$ with $(y, \vecvar{u})$ with weight $\nsreplweight{q}{t}{x}{r}{y}$. Note that we do \emph{not} replace $\vecvar{u}$ with $\pushedstackvectort{t}$; we replace the discrete symbol only and keep the vector the same. When $x = y$, this is a no-op.
\item[Pop $q, x \rightarrow r$] If $q$ is the current state and $(x, \vecvar{u})$ is on top of the stack, go to state $r$ and remove $(x, \vecvar{u})$ with weight $\nspopweight{q}{t}{x}{r}$, uncovering the stack element beneath.
\end{description}
Let $\topvectorofrun{\pdarunletter}$ denote the top stack vector at the end of run $\pdarunletter$. The stack reading $\stackreadingt{t} \in \realset^{|Q| \cdot |\Gamma| \cdot \stackvectorsizeletter}$ now includes, for each $(r, y) \in Q \times \Gamma$, an interpolation of $\topvectorofrun{\pdarunletter}$ for every run $\pdarunendsin{\pdarunletter}{t}{r}{y}$, normalized by the weight of all runs.
\begin{equation}
    \vrnsstackreadingvec{t}{r}{y} = \frac{
        \sum_{\pdarunendsin{\pdarunletter}{t}{r}{y}} \wpdarunweight{\pdarunletter} \; \topvectorofrun{\pdarunletter}
        }{
            \sum_{r' \in Q} \sum_{y' \in \Gamma} \sum_{\pdarunendsin{\pdarunletter}{t}{r'}{y'}} \wpdarunweight{\pdarunletter}
        }
    \label{eq:vrns-reading-inefficient}
\end{equation}
We compute the denominator using $\nsinnerweightletter$, $\nsinnerweightauxletter$, and $\nsforwardweightletter$ as in \cref{sec:updated-rns-rnn}. To compute the numerator, we compute a new tensor $\vrnsinnervectorletter$ which stores $\wpdarunweight{\pdarunletter} \; \topvectorofrun{\pdarunletter}$. For $1 \leq t \leq n-1$ and $-1 \leq i \leq t-1$,
\begin{align}
    \vrnsinnervector{-1}{q}{x}{0}{r}{y} &=
        \indicator{q = q_0 \wedge x = \bot \wedge r = q_0 \wedge y = \bot} \; \vrnsbottomvector && \text{init.\footnotemark} \notag \\
    \vrnsinnervector{i}{q}{x}{t}{r}{y} &=
        \indicator{i=t-1} \; \nspushweight{q}{t}{x}{r}{y} \; \pushedstackvectort{t} && \text{push} \notag \\
        & +\! \sum_{s,z} \vrnsinnervector{i}{q}{x}{t-1}{s}{z} \; \nsreplweight{s}{t}{z}{r}{y} && \text{repl.} \notag \\
        & +\! \sum_{\mathclap{k=i+1}}^{t-2} \sum_{u} \vrnsinnervector{i}{q}{x}{k}{u}{y} \; \nsinnerweightaux{k}{u}{y}{t}{r}. && \text{pop}
\end{align}
\footnotetext{Our code and experiments for this chapter implement $\bm{\zeta}[-1 \rightarrow 0][q, x \rightarrow r, y] = \mathbf{v}_0$ instead due to a mistake found late in the publication process. Consequently, in \cref{eq:vrns-reading}, runs can start with \emph{any} $(r, y) \in Q \times \Gamma$ in the numerator, but only $(q_0, \bot)$ in the denominator. Empirically, the VRNS-RNN still appears to work as expected.}
We compute the normalized stack reading $\stackreadingt{t}$ as follows.
\begin{align}
    \vrnsstackreadingvec{t}{r}{y} &= \frac{ \vrnsstackreadingnumer{t}{r}{y} }{ \sum_{r', y'} \nsforwardweight{t}{r'}{y'} } \\
    \vrnsstackreadingnumer{t}{r}{y} &= \sum_{i=-1}^{t-1} \sum_{q, x} \nsforwardweight{i}{q}{x} \, \vrnsinnervector{i}{q}{x}{t}{r}{y}
    \label{eq:vrns-reading}
\end{align}

\section{Relationship of VRNS-RNN to Other Differentiable Stacks}
\label{sec:relationship-of-vrns-rnn}

The VRNS-RNN is a generalization of both the RNS-RNN and the superposition stack of \citet{joulin-mikolov-2015-inferring}. Clearly, the RNS-RNN is a special case of VRNS-RNN where $m = 1$ and $\pushedstackvectort{t} = 1$.

On the other hand, the superposition stack is a special case of the differentiable vector WPDA with $|Q| = 1$, $|\Gamma| = 1$, normalized transition weights, and $\pushedstackvectort{0} = \veczero$. To see why, let us unroll the superposition stack's formula for $\stackreadingt{t}$ for the first few timesteps.
\begin{align*}
\mathbf{r}_3 &= \suppusht{3} \pushedstackvectort{3} \\
    &+ \supnoopt{3} \suppusht{2} \pushedstackvectort{2} \\
    &+ \supnoopt{3} \supnoopt{2} \suppusht{1} \pushedstackvectort{1} \\
    &+ \suppopt{3} \suppusht{2} \suppusht{1} \pushedstackvectort{0}
\end{align*}
This is a summation whose terms enumerate all sequences of actions. In the general case, $\stackreadingt{t}$ can be expressed as
\begin{equation*}
    \mathbf{r}_t = \sum_{\pi \leadsto t} \psi(\pi) \; \mathbf{v}(\pi)
\end{equation*}
where $\pi$ is a run of a restricted vector WPDA where $|Q| = |\Gamma| = 1$, and $\pi \leadsto t$ means that $\pi$ ends at timestep $t$. So, like the VRNS-RNN, the stack reading of the superposition stack is a weighted sum of an exponential number of runs, but only for a PDA with one state and one stack symbol type. In this way, it is capable of a kind of structural nondeterminism, which may explain how it was able to outperform the stratification stack and sometimes the NS-RNN in \cref{sec:ns-rnn-cfl-results}.

Another way of viewing the superposition stack, when taking the finite precision of the vectors on the stack into account, is that it is like a restricted WPDA with an extremely large stack alphabet, but none of the transitions can be directly conditioned on the top stack symbol like they are in a restricted WPDA with $|\Gamma| > 1$. This means that all transition weights are synchronized to have the same weight across all runs at the same timestep.

It is interesting that when transitions are not conditioned on stack symbols, the time complexity of the algorithm can be lowered from $\bigo{n^3}$ to $\bigo{n^2}$. Remember that Lang's algorithm simulates a pop transition by looking back in time to figure out the type of the uncovered stack symbol, by rewinding its corresponding push computation. This alone is responsible for a factor of $n$. Intuitively, we can save a factor of $n$ if all operations are synchronized because we no longer need to look back in time.

\section{Capacity Experiments}
\label{sec:capacity}

In this section, we examine how much information each model can transmit through its stack. Consider the language $\{w\sym{\#}\reverse{w} \mid w \in \Sigma^\ast \}$. It can be recognized by a real-time PDA with $|\Gamma| = 3$ when $|\Sigma| = 2$, but not when $|\Sigma| > 2$, as there is always a sufficiently long $w$ such that there are more possible $w$'s than possible PDA configurations upon reading the $\sym{\#}$. Similarly, because all the neural stack models we consider here are also real-time, we expect that they will be unable to model context-free languages with sufficiently large alphabets. This is especially relevant to natural languages, which have very large vocabulary sizes. 

Neural networks can encode large sets of distinct types in compact vector representations. However, since the RNS-RNN simulates a \emph{discrete} stack, it might struggle on tasks that require it to store strings over alphabets with sizes greater than $|\Gamma|$. On the other hand, a model that uses a stack of \emph{vectors}, like the superposition stack, might model such languages more easily by representing each symbol type as a different cluster of points in a vector space. Here, we make the surprising finding that the RNS-RNN can vastly outperform the superposition stack even for large alphabets, though not always. In addition, to see if we can combine the benefits of nondeterminism with the benefits of vector representations, we test the VRNS-RNN on the same tasks.

\subsection{Tasks, Data Sampling, and Evaluation}

We test the information capacity of each model with a language modeling task on three CFLs, varying their alphabet size $k$ from very small to very large.
\begin{description}
    \item[$\bm{\MarkedReversal{}}$] The language $\{ w \sym{\#} \reverse{w} \mid w \in \{ \sym{0}, \sym{1}, \cdots, k-1 \}^\ast \}$. This is a simple deterministic CFL.
    \item[Dyck] The language of strings $D_k$ over the alphabet $\{ \sym{(}_1, \sym{)}_1, \sym{(}_2, \sym{)}_2, \cdots, \sym{(}_k, \sym{)}_k \}$ where all brackets are properly balanced and nested in pairs of $\sym{(}_i$ and $\sym{)}_i$. This is a more complicated but still deterministic CFL.
    \item[$\bm{\UnmarkedReversal{}}$] The language $\{ w \reverse{w} \mid w \in \{ \sym{0}, \sym{1}, \cdots, k-1 \}^\ast \}$. This is a nondeterministic CFL which requires a model to guess $|w|$.
\end{description}
We express each language $L$ as a PCFG, using the same PCFG definitions as in \cref{sec:task-pcfgs}, but modified to include~$k$ symbol types rather than~2. We sample training, validation, and test sets in the same way as \cref{sec:learning-cfls-data-sampling}, and we evaluate models using cross-entropy difference as in \cref{sec:rns-rnn-cfl-evaluation}.

\subsection{Models and Training}

We compare four types of architecture, each of which uses an LSTM controller. We include an LSTM baseline (``LSTM''). We also include a superposition stack that pushes learned vectors of size 3 (``Sup.\ 3''). We use the notation ``RNS $|Q|$-$|\Gamma|$'' for RNS-RNNs, and ``VRNS $|Q|$-$|\Gamma|$-$m$'' for VRNS-RNNs. All details of the controller and training procedure are the same as in \cref{sec:learning-non-cfls-models,sec:learning-non-cfls-training}. We vary the alphabet size $k$ from 2 to 200 in increments of 40. For each task, architecture, and alphabet size, we run 10 random restarts.

\subsection{Results}

In \cref{fig:capacity-results-mean}, for each task, we show the mean cross-entropy difference on the validation set (over all random restarts) as a function of alphabet size. We include standard deviations in \cref{fig:capacity-results-mean-std}, and we plot the best performance of all random restarts in \cref{fig:capacity-results-best}.

\begin{figure*}
    \pgfplotsset{
        every axis/.style={
            height=2.4in,
            width=5.5in
        },
        title style={yshift=-4.5ex},
        y tick label style={
            /pgf/number format/.cd,
            fixed,
            fixed zerofill,
            precision=1,
            /tikz/.cd
        },
    }
    \centering
    \begin{tikzpicture}
\definecolor{color0}{rgb}{0.12156862745098,0.466666666666667,0.705882352941177}
\definecolor{color1}{rgb}{1,0.498039215686275,0.0549019607843137}
\definecolor{color2}{rgb}{0.172549019607843,0.627450980392157,0.172549019607843}
\definecolor{color3}{rgb}{0.83921568627451,0.152941176470588,0.156862745098039}
\definecolor{color4}{rgb}{0.580392156862745,0.403921568627451,0.741176470588235}
\definecolor{color5}{rgb}{0.549019607843137,0.337254901960784,0.294117647058824}
\definecolor{color6}{rgb}{0.890196078431372,0.466666666666667,0.76078431372549}
\begin{axis}[
  hide axis,
  height=0.7in,
  legend style={
    draw=none,
    /tikz/every even column/.append style={column sep=0.4cm}
  },
  legend columns=4,
  xmin=0,
  xmax=1,
  ymin=0,
  ymax=1
]
\addlegendimage{cedline,color0}
\addlegendentry{LSTM}
\addlegendimage{cedline,color1}
\addlegendentry{Sup. 3}
\addlegendimage{cedline,color2}
\addlegendentry{RNS 1-3}
\addlegendimage{cedline,color3}
\addlegendentry{RNS 2-3}
\addlegendimage{cedline,color4}
\addlegendentry{VRNS 1-1-3}
\addlegendimage{cedline,color5}
\addlegendentry{VRNS 2-1-3}
\addlegendimage{cedline,color6}
\addlegendentry{VRNS 2-3-3}
\end{axis}
\end{tikzpicture}
\begin{tikzpicture}

\definecolor{color0}{rgb}{0.12156862745098,0.466666666666667,0.705882352941177}
\definecolor{color1}{rgb}{1,0.498039215686275,0.0549019607843137}
\definecolor{color2}{rgb}{0.172549019607843,0.627450980392157,0.172549019607843}
\definecolor{color3}{rgb}{0.83921568627451,0.152941176470588,0.156862745098039}
\definecolor{color4}{rgb}{0.580392156862745,0.403921568627451,0.741176470588235}
\definecolor{color5}{rgb}{0.549019607843137,0.337254901960784,0.294117647058824}
\definecolor{color6}{rgb}{0.890196078431372,0.466666666666667,0.76078431372549}

\begin{axis}[
minor xtick={},
minor ytick={},
tick align=outside,
tick pos=left,
title={\MarkedReversal{}},
x grid style={white!69.0196078431373!black},
xmin=-7.9, xmax=209.9,
xtick style={color=black},
xtick={2,40,80,120,160,200},
y grid style={white!69.0196078431373!black},
ylabel={Cross-entropy Diff.},
ymin=0, ymax=2.17536413488503,
ytick style={color=black},
ytick distance=0.5
]
\addplot [cedline, color0]
table {%
2 0.0844621502446901
40 1.30403231897221
80 1.62824465191144
120 1.78393472841537
160 1.92263385037656
200 2.07226766676239
};
\addplot [cedline, color1]
table {%
2 0.0452832781430136
40 1.25081753307636
80 1.72107683082494
120 1.75945476870676
160 1.98337317638042
200 1.82560822355089
};
\addplot [cedline, color2]
table {%
2 0.0103383043095524
40 0.649685872685266
80 1.40660017803251
120 1.36824696873149
160 1.92329461780351
200 1.89585358200635
};
\addplot [cedline, color3]
table {%
2 0.0367293659653556
40 0.0183580021576418
80 0.467725639193064
120 0.713481686213718
160 1.09398545041145
200 1.22543610134087
};
\addplot [cedline, color4]
table {%
2 0.0708443296676283
40 1.12133464329931
80 1.39604232160344
120 1.76861138643572
160 1.88847699006814
200 2.00875890641409
};
\addplot [cedline, color5]
table {%
2 0.114722633684762
40 1.25221788711567
80 1.59278534253012
120 1.77864232881105
160 1.58061825033271
200 1.41862347438357
};
\addplot [cedline, color6]
table {%
2 0.0804758942797908
40 0.426488398623111
80 1.32692700955627
120 0.906087794041393
160 1.53037083906924
200 1.51739776271667
};
\end{axis}

\end{tikzpicture}
\begin{tikzpicture}

\definecolor{color0}{rgb}{0.12156862745098,0.466666666666667,0.705882352941177}
\definecolor{color1}{rgb}{1,0.498039215686275,0.0549019607843137}
\definecolor{color2}{rgb}{0.172549019607843,0.627450980392157,0.172549019607843}
\definecolor{color3}{rgb}{0.83921568627451,0.152941176470588,0.156862745098039}
\definecolor{color4}{rgb}{0.580392156862745,0.403921568627451,0.741176470588235}
\definecolor{color5}{rgb}{0.549019607843137,0.337254901960784,0.294117647058824}
\definecolor{color6}{rgb}{0.890196078431372,0.466666666666667,0.76078431372549}

\begin{axis}[
minor xtick={},
minor ytick={},
tick align=outside,
tick pos=left,
title={Dyck},
x grid style={white!69.0196078431373!black},
xmin=-7.9, xmax=209.9,
xtick style={color=black},
xtick={2,40,80,120,160,200},
y grid style={white!69.0196078431373!black},
ylabel={Cross-entropy Diff.},
ymin=0, ymax=1.02617976589939,
ytick style={color=black},
ytick distance=0.2
]
\addplot [cedline, color0]
table {%
2 0.0251192774203126
40 0.521498471569604
80 0.690431971811708
120 0.800196256378832
160 0.884151761733237
200 0.977504761253839
};
\addplot [cedline, color1]
table {%
2 0.0163972533649358
40 0.170639027804095
80 0.226639232058391
120 0.242026596592649
160 0.294878143329298
200 0.288001127444746
};
\addplot [cedline, color2]
table {%
2 0.0076437443249564
40 0.357980366832814
80 0.496181938748609
120 0.572294280652724
160 0.714699105876919
200 0.524920598674954
};
\addplot [cedline, color3]
table {%
2 0.0100370889704429
40 0.136959966617515
80 0.306345981429443
120 0.420040179000139
160 0.562125958548897
200 0.6858814832448
};
\addplot [cedline, color4]
table {%
2 0.00792530298043484
40 0.215065640562366
80 0.127957233516401
120 0.328233273758757
160 0.337439884499223
200 0.34172277654537
};
\addplot [cedline, color5]
table {%
2 0.0122961105064872
40 0.132126246885118
80 0.303970081097207
120 0.242305153965907
160 0.221387377957823
200 0.246146494311592
};
\addplot [cedline, color6]
table {%
2 0.00400466834276353
40 0.123154737135094
80 0.273665419037822
120 0.142081953095784
160 0.206719532891375
200 0.0740158407162075
};
\end{axis}

\end{tikzpicture}
\begin{tikzpicture}

\definecolor{color0}{rgb}{0.12156862745098,0.466666666666667,0.705882352941177}
\definecolor{color1}{rgb}{1,0.498039215686275,0.0549019607843137}
\definecolor{color2}{rgb}{0.172549019607843,0.627450980392157,0.172549019607843}
\definecolor{color3}{rgb}{0.83921568627451,0.152941176470588,0.156862745098039}
\definecolor{color4}{rgb}{0.580392156862745,0.403921568627451,0.741176470588235}
\definecolor{color5}{rgb}{0.549019607843137,0.337254901960784,0.294117647058824}
\definecolor{color6}{rgb}{0.890196078431372,0.466666666666667,0.76078431372549}

\begin{axis}[
minor xtick={},
minor ytick={},
tick align=outside,
tick pos=left,
title={\UnmarkedReversal{}},
x grid style={white!69.0196078431373!black},
xlabel={Alphabet Size \(\displaystyle k\)},
xmin=-7.9, xmax=209.9,
xtick style={color=black},
xtick={2,40,80,120,160,200},
y grid style={white!69.0196078431373!black},
ylabel={Cross-entropy Diff.},
ymin=0, ymax=2.63395774867206,
ytick style={color=black},
ytick distance=0.5
]
\addplot [cedline, color0]
table {%
2 0.236372391596767
40 1.67271889751888
80 1.98275516974917
120 2.23587414872505
160 2.39649473003947
200 2.50736967727028
};
\addplot [cedline, color1]
table {%
2 0.239924793353405
40 1.6616065165807
80 2.01426674298643
120 2.23734164182812
160 2.39133165491664
200 2.510651284126
};
\addplot [cedline, color2]
table {%
2 0.209834612374862
40 1.6016210379441
80 2.00570476537838
120 2.23525759217177
160 2.36490736592109
200 2.49747567342952
};
\addplot [cedline, color3]
table {%
2 0.204121748020882
40 1.59391083572476
80 1.99721713568466
120 2.21017985044476
160 2.36285360133949
200 2.49532306773215
};
\addplot [cedline, color4]
table {%
2 0.23732545497859
40 1.67042757293603
80 1.98936829074126
120 2.1921218114213
160 2.3643853030297
200 2.51825127245058
};
\addplot [cedline, color5]
table {%
2 0.240508783672552
40 1.63247779813883
80 1.99156755236591
120 2.18394797571719
160 2.38355426683041
200 2.50518225394337
};
\addplot [cedline, color6]
table {%
2 0.214788444535496
40 1.56858868085321
80 1.98277291972093
120 2.2187844877212
160 2.36360199644449
200 2.47829857263227
};
\end{axis}

\end{tikzpicture}
    \caption[Results of capacity experiments (means).]{Mean cross-entropy difference on the validation set vs.~input alphabet size. Contrary to expectation, RNS 2-3, which models a discrete stack of only 3 symbol types, learns to solve \MarkedReversal{} with 200 symbol types more reliably than models with stacks of vectors. On the more complicated Dyck language, vector stacks perform best, with our newly proposed VRNS-RNN performing best. On \UnmarkedReversal{}, no models perform substantially better than the LSTM baseline.}
    \label{fig:capacity-results-mean}
\end{figure*}
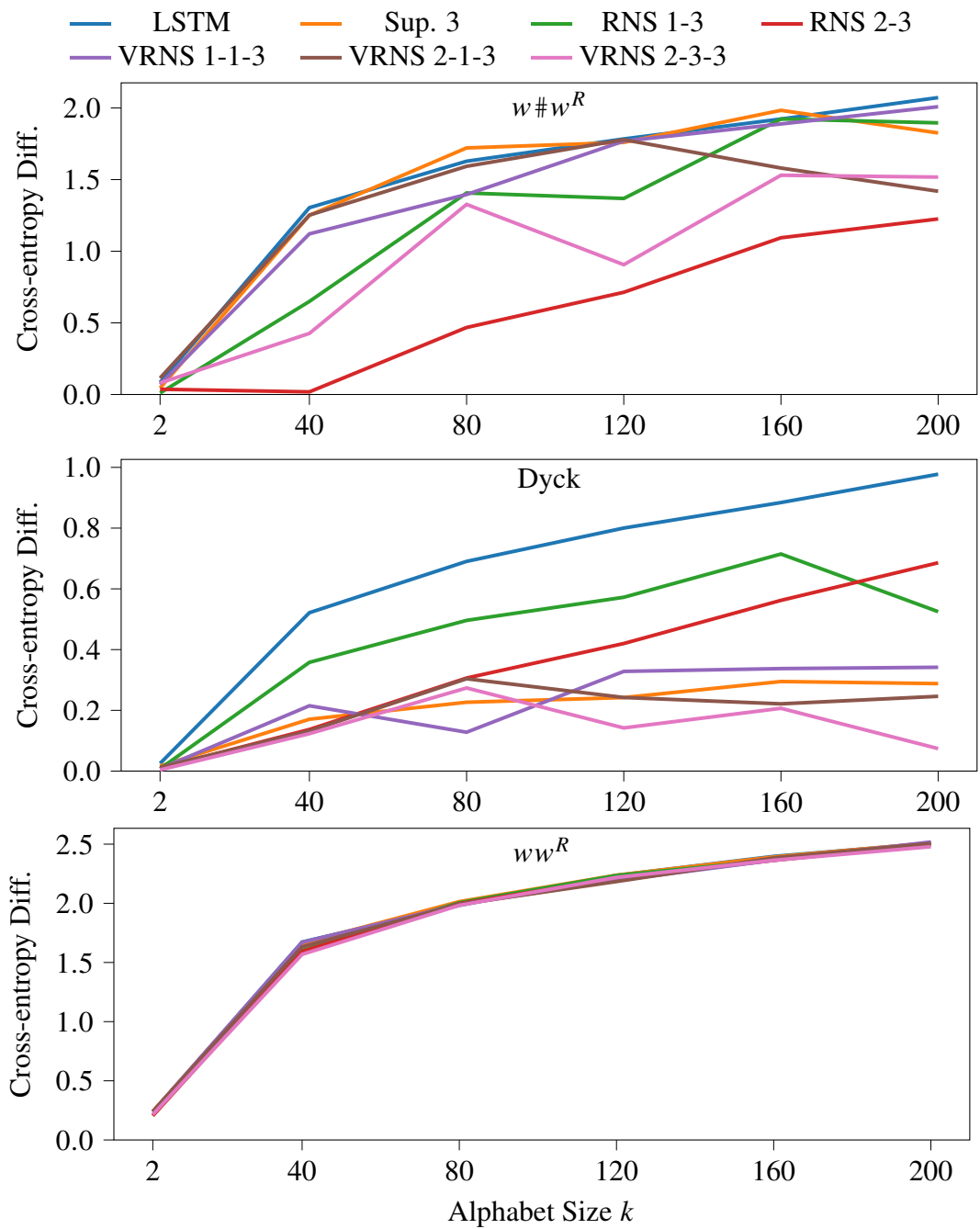

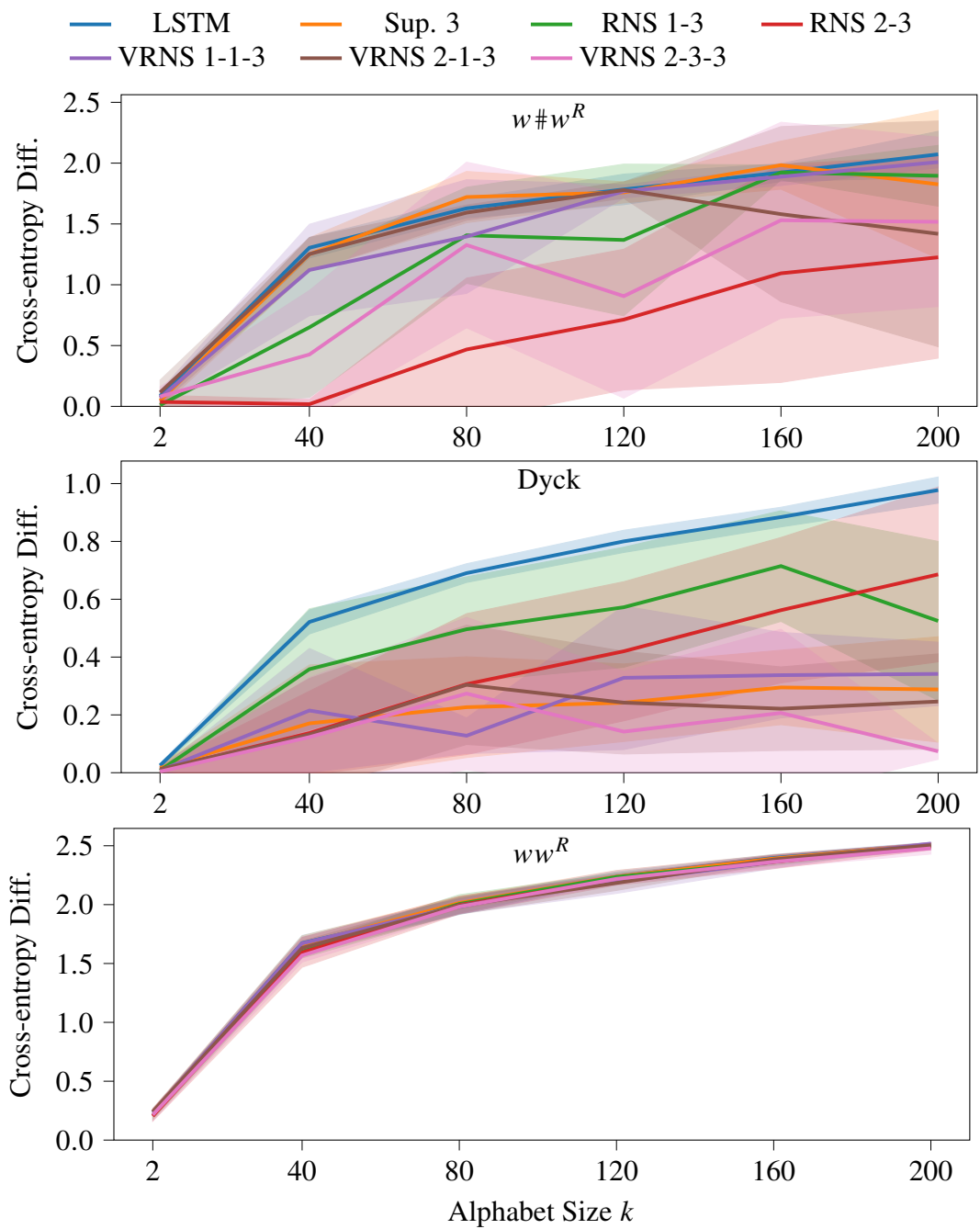
\begin{figure*}
    \pgfplotsset{
        every axis/.style={
            height=2.4in,
            width=5.5in
        },
        title style={yshift=-4.5ex},
        y tick label style={
            /pgf/number format/.cd,
            fixed,
            fixed zerofill,
            precision=1,
            /tikz/.cd
        },
    }
    \centering
    \begin{tikzpicture}
\definecolor{color0}{rgb}{0.12156862745098,0.466666666666667,0.705882352941177}
\definecolor{color1}{rgb}{1,0.498039215686275,0.0549019607843137}
\definecolor{color2}{rgb}{0.172549019607843,0.627450980392157,0.172549019607843}
\definecolor{color3}{rgb}{0.83921568627451,0.152941176470588,0.156862745098039}
\definecolor{color4}{rgb}{0.580392156862745,0.403921568627451,0.741176470588235}
\definecolor{color5}{rgb}{0.549019607843137,0.337254901960784,0.294117647058824}
\definecolor{color6}{rgb}{0.890196078431372,0.466666666666667,0.76078431372549}
\begin{axis}[
  hide axis,
  height=0.7in,
  legend style={
    draw=none,
    /tikz/every even column/.append style={column sep=0.4cm}
  },
  legend columns=4,
  xmin=0,
  xmax=1,
  ymin=0,
  ymax=1
]
\addlegendimage{cedline,color0}
\addlegendentry{LSTM}
\addlegendimage{cedline,color1}
\addlegendentry{Sup. 3}
\addlegendimage{cedline,color2}
\addlegendentry{RNS 1-3}
\addlegendimage{cedline,color3}
\addlegendentry{RNS 2-3}
\addlegendimage{cedline,color4}
\addlegendentry{VRNS 1-1-3}
\addlegendimage{cedline,color5}
\addlegendentry{VRNS 2-1-3}
\addlegendimage{cedline,color6}
\addlegendentry{VRNS 2-3-3}
\end{axis}
\end{tikzpicture}
\begin{tikzpicture}

\definecolor{color0}{rgb}{0.12156862745098,0.466666666666667,0.705882352941177}
\definecolor{color1}{rgb}{1,0.498039215686275,0.0549019607843137}
\definecolor{color2}{rgb}{0.172549019607843,0.627450980392157,0.172549019607843}
\definecolor{color3}{rgb}{0.83921568627451,0.152941176470588,0.156862745098039}
\definecolor{color4}{rgb}{0.580392156862745,0.403921568627451,0.741176470588235}
\definecolor{color5}{rgb}{0.549019607843137,0.337254901960784,0.294117647058824}
\definecolor{color6}{rgb}{0.890196078431372,0.466666666666667,0.76078431372549}

\begin{axis}[
minor xtick={},
minor ytick={},
tick align=outside,
tick pos=left,
title={\MarkedReversal{}},
x grid style={white!69.0196078431373!black},
xmin=-7.9, xmax=209.9,
xtick style={color=black},
xtick={2,40,80,120,160,200},
y grid style={white!69.0196078431373!black},
ylabel={Cross-entropy Diff.},
ymin=0, ymax=2.5625961176694,
ytick style={color=black},
ytick distance=0.5
]
\path [draw=color0, fill=color0, opacity=0.2]
(axis cs:2,0.111469618219867)
--(axis cs:2,0.057454682269513)
--(axis cs:40,1.22035605750996)
--(axis cs:80,1.55003179671772)
--(axis cs:120,1.65831361184835)
--(axis cs:160,1.84911425575425)
--(axis cs:200,1.88318247501491)
--(axis cs:200,2.26135285850988)
--(axis cs:200,2.26135285850988)
--(axis cs:160,1.99615344499888)
--(axis cs:120,1.9095558449824)
--(axis cs:80,1.70645750710515)
--(axis cs:40,1.38770858043445)
--(axis cs:2,0.111469618219867)
--cycle;

\path [draw=color1, fill=color1, opacity=0.2]
(axis cs:2,0.103708660921943)
--(axis cs:2,-0.0131421046359158)
--(axis cs:40,1.11486738576908)
--(axis cs:80,1.51056694122223)
--(axis cs:120,1.67318064811784)
--(axis cs:160,1.78581796643377)
--(axis cs:200,1.2162991176876)
--(axis cs:200,2.43491732941419)
--(axis cs:200,2.43491732941419)
--(axis cs:160,2.18092838632708)
--(axis cs:120,1.84572888929567)
--(axis cs:80,1.93158672042765)
--(axis cs:40,1.38676768038363)
--(axis cs:2,0.103708660921943)
--cycle;

\path [draw=color2, fill=color2, opacity=0.2]
(axis cs:2,0.0331647897381157)
--(axis cs:2,-0.012488181119011)
--(axis cs:40,0.0729520913165839)
--(axis cs:80,1.01173730265439)
--(axis cs:120,0.744804794233437)
--(axis cs:160,1.86217374991325)
--(axis cs:200,1.64601492160671)
--(axis cs:200,2.14569224240599)
--(axis cs:200,2.14569224240599)
--(axis cs:160,1.98441548569378)
--(axis cs:120,1.99168914322955)
--(axis cs:80,1.80146305341062)
--(axis cs:40,1.22641965405395)
--(axis cs:2,0.0331647897381157)
--cycle;

\path [draw=color3, fill=color3, opacity=0.2]
(axis cs:2,0.0910480198261472)
--(axis cs:2,-0.0175892878954361)
--(axis cs:40,-0.0199724568344494)
--(axis cs:80,-0.11865843569006)
--(axis cs:120,0.135123716429187)
--(axis cs:160,0.197029616284575)
--(axis cs:200,0.3966214470597)
--(axis cs:200,2.05425075562203)
--(axis cs:200,2.05425075562203)
--(axis cs:160,1.99094128453833)
--(axis cs:120,1.29183965599825)
--(axis cs:80,1.05410971407619)
--(axis cs:40,0.0566884611497331)
--(axis cs:2,0.0910480198261472)
--cycle;

\path [draw=color4, fill=color4, opacity=0.2]
(axis cs:2,0.127564309050267)
--(axis cs:2,0.0141243502849896)
--(axis cs:40,0.745503796460557)
--(axis cs:80,0.928646886473609)
--(axis cs:120,1.71493336799646)
--(axis cs:160,1.81569511561411)
--(axis cs:200,1.96211767898691)
--(axis cs:200,2.05540013384128)
--(axis cs:200,2.05540013384128)
--(axis cs:160,1.96125886452217)
--(axis cs:120,1.82228940487498)
--(axis cs:80,1.86343775673326)
--(axis cs:40,1.49716549013806)
--(axis cs:2,0.127564309050267)
--cycle;

\path [draw=color5, fill=color5, opacity=0.2]
(axis cs:2,0.215728558093176)
--(axis cs:2,0.0137167092763473)
--(axis cs:40,1.11510123697872)
--(axis cs:80,1.53046064844109)
--(axis cs:120,1.71147778840327)
--(axis cs:160,0.861200197107784)
--(axis cs:200,0.490935692152639)
--(axis cs:200,2.34631125661451)
--(axis cs:200,2.34631125661451)
--(axis cs:160,2.30003630355764)
--(axis cs:120,1.84580686921882)
--(axis cs:80,1.65511003661914)
--(axis cs:40,1.38933453725261)
--(axis cs:2,0.215728558093176)
--cycle;

\path [draw=color6, fill=color6, opacity=0.2]
(axis cs:2,0.159700615553539)
--(axis cs:2,0.00125117300604254)
--(axis cs:40,-0.0998445419210768)
--(axis cs:80,0.646241511667048)
--(axis cs:120,0.0681332777775993)
--(axis cs:160,0.7249456949896)
--(axis cs:200,0.821456377138521)
--(axis cs:200,2.21333914829483)
--(axis cs:200,2.21333914829483)
--(axis cs:160,2.33579598314887)
--(axis cs:120,1.74404231030519)
--(axis cs:80,2.00761250744549)
--(axis cs:40,0.952821339167298)
--(axis cs:2,0.159700615553539)
--cycle;

\addplot [cedline, color0]
table {%
2 0.0844621502446901
40 1.30403231897221
80 1.62824465191144
120 1.78393472841537
160 1.92263385037656
200 2.07226766676239
};
\addplot [cedline, color1]
table {%
2 0.0452832781430136
40 1.25081753307636
80 1.72107683082494
120 1.75945476870676
160 1.98337317638042
200 1.82560822355089
};
\addplot [cedline, color2]
table {%
2 0.0103383043095524
40 0.649685872685266
80 1.40660017803251
120 1.36824696873149
160 1.92329461780351
200 1.89585358200635
};
\addplot [cedline, color3]
table {%
2 0.0367293659653556
40 0.0183580021576418
80 0.467725639193064
120 0.713481686213718
160 1.09398545041145
200 1.22543610134087
};
\addplot [cedline, color4]
table {%
2 0.0708443296676283
40 1.12133464329931
80 1.39604232160344
120 1.76861138643572
160 1.88847699006814
200 2.00875890641409
};
\addplot [cedline, color5]
table {%
2 0.114722633684762
40 1.25221788711567
80 1.59278534253012
120 1.77864232881105
160 1.58061825033271
200 1.41862347438357
};
\addplot [cedline, color6]
table {%
2 0.0804758942797908
40 0.426488398623111
80 1.32692700955627
120 0.906087794041393
160 1.53037083906924
200 1.51739776271667
};
\end{axis}

\end{tikzpicture}
\begin{tikzpicture}

\definecolor{color0}{rgb}{0.12156862745098,0.466666666666667,0.705882352941177}
\definecolor{color1}{rgb}{1,0.498039215686275,0.0549019607843137}
\definecolor{color2}{rgb}{0.172549019607843,0.627450980392157,0.172549019607843}
\definecolor{color3}{rgb}{0.83921568627451,0.152941176470588,0.156862745098039}
\definecolor{color4}{rgb}{0.580392156862745,0.403921568627451,0.741176470588235}
\definecolor{color5}{rgb}{0.549019607843137,0.337254901960784,0.294117647058824}
\definecolor{color6}{rgb}{0.890196078431372,0.466666666666667,0.76078431372549}

\begin{axis}[
minor xtick={},
minor ytick={},
tick align=outside,
tick pos=left,
title={Dyck},
x grid style={white!69.0196078431373!black},
xmin=-7.9, xmax=209.9,
xtick style={color=black},
xtick={2,40,80,120,160,200},
y grid style={white!69.0196078431373!black},
ylabel={Cross-entropy Diff.},
ymin=0, ymax=1.07801755454908,
ytick style={color=black},
ytick distance=0.2
]
\path [draw=color0, fill=color0, opacity=0.2]
(axis cs:2,0.0315383749783663)
--(axis cs:2,0.0187001798622588)
--(axis cs:40,0.479954626502015)
--(axis cs:80,0.658054383142328)
--(axis cs:120,0.761810793405562)
--(axis cs:160,0.850168819344584)
--(axis cs:200,0.932269232260881)
--(axis cs:200,1.0227402902468)
--(axis cs:200,1.0227402902468)
--(axis cs:160,0.918134704121891)
--(axis cs:120,0.838581719352101)
--(axis cs:80,0.722809560481088)
--(axis cs:40,0.563042316637193)
--(axis cs:2,0.0315383749783663)
--cycle;

\path [draw=color1, fill=color1, opacity=0.2]
(axis cs:2,0.029241542525223)
--(axis cs:2,0.0035529642046486)
--(axis cs:40,-0.031507619780909)
--(axis cs:80,0.0523758350759852)
--(axis cs:120,0.107890957919429)
--(axis cs:160,0.166199488329443)
--(axis cs:200,0.1058541058703)
--(axis cs:200,0.470148149019192)
--(axis cs:200,0.470148149019192)
--(axis cs:160,0.423556798329152)
--(axis cs:120,0.376162235265869)
--(axis cs:80,0.400902629040796)
--(axis cs:40,0.372785675389098)
--(axis cs:2,0.029241542525223)
--cycle;

\path [draw=color2, fill=color2, opacity=0.2]
(axis cs:2,0.0151576377058689)
--(axis cs:2,0.000129850944043865)
--(axis cs:40,0.148833008699898)
--(axis cs:80,0.314961659257805)
--(axis cs:120,0.364481847545778)
--(axis cs:160,0.523356842360708)
--(axis cs:200,0.249814285706078)
--(axis cs:200,0.80002691164383)
--(axis cs:200,0.80002691164383)
--(axis cs:160,0.906041369393131)
--(axis cs:120,0.78010671375967)
--(axis cs:80,0.677402218239414)
--(axis cs:40,0.56712772496573)
--(axis cs:2,0.0151576377058689)
--cycle;

\path [draw=color3, fill=color3, opacity=0.2]
(axis cs:2,0.0231998873047773)
--(axis cs:2,-0.00312570936389152)
--(axis cs:40,-0.00808742504874926)
--(axis cs:80,0.0626531508153921)
--(axis cs:120,0.179481084214762)
--(axis cs:160,0.310718820753013)
--(axis cs:200,0.383439623820001)
--(axis cs:200,0.988323342669599)
--(axis cs:200,0.988323342669599)
--(axis cs:160,0.813533096344782)
--(axis cs:120,0.660599273785516)
--(axis cs:80,0.550038812043493)
--(axis cs:40,0.282007358283779)
--(axis cs:2,0.0231998873047773)
--cycle;

\path [draw=color4, fill=color4, opacity=0.2]
(axis cs:2,0.012517290550947)
--(axis cs:2,0.00333331540992274)
--(axis cs:40,0.000532725754329638)
--(axis cs:80,0.0664465286709502)
--(axis cs:120,0.080031265459275)
--(axis cs:160,0.189728905923111)
--(axis cs:200,0.232033690246737)
--(axis cs:200,0.451411862844004)
--(axis cs:200,0.451411862844004)
--(axis cs:160,0.485150863075335)
--(axis cs:120,0.57643528205824)
--(axis cs:80,0.189467938361853)
--(axis cs:40,0.429598555370402)
--(axis cs:2,0.012517290550947)
--cycle;

\path [draw=color5, fill=color5, opacity=0.2]
(axis cs:2,0.0305030943938295)
--(axis cs:2,-0.00591087338085511)
--(axis cs:40,-0.0614238312107399)
--(axis cs:80,0.0976182657196369)
--(axis cs:120,0.0646891198905431)
--(axis cs:160,0.0769258119908996)
--(axis cs:200,0.0814103821904747)
--(axis cs:200,0.410882606432708)
--(axis cs:200,0.410882606432708)
--(axis cs:160,0.365848943924746)
--(axis cs:120,0.419921188041271)
--(axis cs:80,0.510321896474776)
--(axis cs:40,0.325676324980976)
--(axis cs:2,0.0305030943938295)
--cycle;

\path [draw=color6, fill=color6, opacity=0.2]
(axis cs:2,0.00504953229109272)
--(axis cs:2,0.00295980439443434)
--(axis cs:40,-0.08280499579879)
--(axis cs:80,0.0101172170639606)
--(axis cs:120,-0.0664066225530115)
--(axis cs:160,-0.0825244201261771)
--(axis cs:200,0.0461646072159429)
--(axis cs:200,0.101867074216472)
--(axis cs:200,0.101867074216472)
--(axis cs:160,0.495963485908928)
--(axis cs:120,0.350570528744579)
--(axis cs:80,0.537213621011683)
--(axis cs:40,0.329114470068979)
--(axis cs:2,0.00504953229109272)
--cycle;

\addplot [cedline, color0]
table {%
2 0.0251192774203126
40 0.521498471569604
80 0.690431971811708
120 0.800196256378832
160 0.884151761733237
200 0.977504761253839
};
\addplot [cedline, color1]
table {%
2 0.0163972533649358
40 0.170639027804095
80 0.226639232058391
120 0.242026596592649
160 0.294878143329298
200 0.288001127444746
};
\addplot [cedline, color2]
table {%
2 0.0076437443249564
40 0.357980366832814
80 0.496181938748609
120 0.572294280652724
160 0.714699105876919
200 0.524920598674954
};
\addplot [cedline, color3]
table {%
2 0.0100370889704429
40 0.136959966617515
80 0.306345981429443
120 0.420040179000139
160 0.562125958548897
200 0.6858814832448
};
\addplot [cedline, color4]
table {%
2 0.00792530298043484
40 0.215065640562366
80 0.127957233516401
120 0.328233273758757
160 0.337439884499223
200 0.34172277654537
};
\addplot [cedline, color5]
table {%
2 0.0122961105064872
40 0.132126246885118
80 0.303970081097207
120 0.242305153965907
160 0.221387377957823
200 0.246146494311592
};
\addplot [cedline, color6]
table {%
2 0.00400466834276353
40 0.123154737135094
80 0.273665419037822
120 0.142081953095784
160 0.206719532891375
200 0.0740158407162075
};
\end{axis}

\end{tikzpicture}
\begin{tikzpicture}

\definecolor{color0}{rgb}{0.12156862745098,0.466666666666667,0.705882352941177}
\definecolor{color1}{rgb}{1,0.498039215686275,0.0549019607843137}
\definecolor{color2}{rgb}{0.172549019607843,0.627450980392157,0.172549019607843}
\definecolor{color3}{rgb}{0.83921568627451,0.152941176470588,0.156862745098039}
\definecolor{color4}{rgb}{0.580392156862745,0.403921568627451,0.741176470588235}
\definecolor{color5}{rgb}{0.549019607843137,0.337254901960784,0.294117647058824}
\definecolor{color6}{rgb}{0.890196078431372,0.466666666666667,0.76078431372549}

\begin{axis}[
minor xtick={},
minor ytick={},
tick align=outside,
tick pos=left,
title={\UnmarkedReversal{}},
x grid style={white!69.0196078431373!black},
xlabel={Alphabet Size \(\displaystyle k\)},
xmin=-7.9, xmax=209.9,
xtick style={color=black},
xtick={2,40,80,120,160,200},
y grid style={white!69.0196078431373!black},
ylabel={Cross-entropy Diff.},
ymin=0, ymax=2.64738746815816,
ytick style={color=black},
ytick distance=0.5
]
\path [draw=color0, fill=color0, opacity=0.2]
(axis cs:2,0.255835333187225)
--(axis cs:2,0.216909450006309)
--(axis cs:40,1.60756741517434)
--(axis cs:80,1.91621032410497)
--(axis cs:120,2.19394389718996)
--(axis cs:160,2.36705320425083)
--(axis cs:200,2.48573685679309)
--(axis cs:200,2.52900249774747)
--(axis cs:200,2.52900249774747)
--(axis cs:160,2.4259362558281)
--(axis cs:120,2.27780440026014)
--(axis cs:80,2.04930001539337)
--(axis cs:40,1.73787037986342)
--(axis cs:2,0.255835333187225)
--cycle;

\path [draw=color1, fill=color1, opacity=0.2]
(axis cs:2,0.265621453086222)
--(axis cs:2,0.214228133620589)
--(axis cs:40,1.58863395242426)
--(axis cs:80,1.96178319146367)
--(axis cs:120,2.18495552393471)
--(axis cs:160,2.35805325131758)
--(axis cs:200,2.49965265089513)
--(axis cs:200,2.52164991735686)
--(axis cs:200,2.52164991735686)
--(axis cs:160,2.42461005851569)
--(axis cs:120,2.28972775972154)
--(axis cs:80,2.06675029450919)
--(axis cs:40,1.73457908073713)
--(axis cs:2,0.265621453086222)
--cycle;

\path [draw=color2, fill=color2, opacity=0.2]
(axis cs:2,0.246189340499012)
--(axis cs:2,0.173479884250713)
--(axis cs:40,1.54708735277045)
--(axis cs:80,1.92721606856544)
--(axis cs:120,2.20306888351354)
--(axis cs:160,2.33330148927802)
--(axis cs:200,2.47372651908479)
--(axis cs:200,2.52122482777425)
--(axis cs:200,2.52122482777425)
--(axis cs:160,2.39651324256415)
--(axis cs:120,2.26744630083)
--(axis cs:80,2.08419346219131)
--(axis cs:40,1.65615472311775)
--(axis cs:2,0.246189340499012)
--cycle;

\path [draw=color3, fill=color3, opacity=0.2]
(axis cs:2,0.246940406508118)
--(axis cs:2,0.161303089533645)
--(axis cs:40,1.46804719726822)
--(axis cs:80,1.92150503251727)
--(axis cs:120,2.16570030698365)
--(axis cs:160,2.30544850170594)
--(axis cs:200,2.46484368030077)
--(axis cs:200,2.52580245516353)
--(axis cs:200,2.52580245516353)
--(axis cs:160,2.42025870097303)
--(axis cs:120,2.25465939390586)
--(axis cs:80,2.07292923885206)
--(axis cs:40,1.7197744741813)
--(axis cs:2,0.246940406508118)
--cycle;

\path [draw=color4, fill=color4, opacity=0.2]
(axis cs:2,0.266277665627535)
--(axis cs:2,0.208373244329645)
--(axis cs:40,1.6173644151163)
--(axis cs:80,1.92283664750125)
--(axis cs:120,2.0950937414138)
--(axis cs:160,2.30707489118931)
--(axis cs:200,2.51023444265283)
--(axis cs:200,2.52626810224833)
--(axis cs:200,2.52626810224833)
--(axis cs:160,2.4216957148701)
--(axis cs:120,2.2891498814288)
--(axis cs:80,2.05589993398127)
--(axis cs:40,1.72349073075575)
--(axis cs:2,0.266277665627535)
--cycle;

\path [draw=color5, fill=color5, opacity=0.2]
(axis cs:2,0.267234110917442)
--(axis cs:2,0.213783456427663)
--(axis cs:40,1.57496581637068)
--(axis cs:80,1.91871370613366)
--(axis cs:120,2.12092855260299)
--(axis cs:160,2.34825183608804)
--(axis cs:200,2.48499728565625)
--(axis cs:200,2.52536722223048)
--(axis cs:200,2.52536722223048)
--(axis cs:160,2.41885669757278)
--(axis cs:120,2.24696739883138)
--(axis cs:80,2.06442139859816)
--(axis cs:40,1.68998977990697)
--(axis cs:2,0.267234110917442)
--cycle;

\path [draw=color6, fill=color6, opacity=0.2]
(axis cs:2,0.266314616536063)
--(axis cs:2,0.163262272534928)
--(axis cs:40,1.51207993622442)
--(axis cs:80,1.9247822915536)
--(axis cs:120,2.17760887738257)
--(axis cs:160,2.31513736355282)
--(axis cs:200,2.42936173103389)
--(axis cs:200,2.52723541423064)
--(axis cs:200,2.52723541423064)
--(axis cs:160,2.41206662933616)
--(axis cs:120,2.25996009805983)
--(axis cs:80,2.04076354788826)
--(axis cs:40,1.62509742548201)
--(axis cs:2,0.266314616536063)
--cycle;

\addplot [cedline, color0]
table {%
2 0.236372391596767
40 1.67271889751888
80 1.98275516974917
120 2.23587414872505
160 2.39649473003947
200 2.50736967727028
};
\addplot [cedline, color1]
table {%
2 0.239924793353405
40 1.6616065165807
80 2.01426674298643
120 2.23734164182812
160 2.39133165491664
200 2.510651284126
};
\addplot [cedline, color2]
table {%
2 0.209834612374862
40 1.6016210379441
80 2.00570476537838
120 2.23525759217177
160 2.36490736592109
200 2.49747567342952
};
\addplot [cedline, color3]
table {%
2 0.204121748020882
40 1.59391083572476
80 1.99721713568466
120 2.21017985044476
160 2.36285360133949
200 2.49532306773215
};
\addplot [cedline, color4]
table {%
2 0.23732545497859
40 1.67042757293603
80 1.98936829074126
120 2.1921218114213
160 2.3643853030297
200 2.51825127245058
};
\addplot [cedline, color5]
table {%
2 0.240508783672552
40 1.63247779813883
80 1.99156755236591
120 2.18394797571719
160 2.38355426683041
200 2.50518225394337
};
\addplot [cedline, color6]
table {%
2 0.214788444535496
40 1.56858868085321
80 1.98277291972093
120 2.2187844877212
160 2.36360199644449
200 2.47829857263227
};
\end{axis}

\end{tikzpicture}
    \caption[Results of capacity experiments (means and standard deviations).]{The same results shown in \cref{fig:capacity-results-mean}, but with standard deviations shown. Shaded regions indicate one standard deviation.}
    \label{fig:capacity-results-mean-std}
\end{figure*}

\begin{figure*}
    \pgfplotsset{
        every axis/.style={
            height=2.4in,
            width=5.5in
        },
        title style={yshift=-4.5ex},
        y tick label style={
            /pgf/number format/.cd,
            fixed,
            fixed zerofill,
            precision=1,
            /tikz/.cd
        },
    }
    \centering
    \begin{tikzpicture}
\definecolor{color0}{rgb}{0.12156862745098,0.466666666666667,0.705882352941177}
\definecolor{color1}{rgb}{1,0.498039215686275,0.0549019607843137}
\definecolor{color2}{rgb}{0.172549019607843,0.627450980392157,0.172549019607843}
\definecolor{color3}{rgb}{0.83921568627451,0.152941176470588,0.156862745098039}
\definecolor{color4}{rgb}{0.580392156862745,0.403921568627451,0.741176470588235}
\definecolor{color5}{rgb}{0.549019607843137,0.337254901960784,0.294117647058824}
\definecolor{color6}{rgb}{0.890196078431372,0.466666666666667,0.76078431372549}
\begin{axis}[
  hide axis,
  height=0.7in,
  legend style={
    draw=none,
    /tikz/every even column/.append style={column sep=0.4cm}
  },
  legend columns=4,
  xmin=0,
  xmax=1,
  ymin=0,
  ymax=1
]
\addlegendimage{cedline,color0}
\addlegendentry{LSTM}
\addlegendimage{cedline,color1}
\addlegendentry{Sup. 3}
\addlegendimage{cedline,color2}
\addlegendentry{RNS 1-3}
\addlegendimage{cedline,color3}
\addlegendentry{RNS 2-3}
\addlegendimage{cedline,color4}
\addlegendentry{VRNS 1-1-3}
\addlegendimage{cedline,color5}
\addlegendentry{VRNS 2-1-3}
\addlegendimage{cedline,color6}
\addlegendentry{VRNS 2-3-3}
\end{axis}
\end{tikzpicture}
\begin{tikzpicture}

\definecolor{color0}{rgb}{0.12156862745098,0.466666666666667,0.705882352941177}
\definecolor{color1}{rgb}{1,0.498039215686275,0.0549019607843137}
\definecolor{color2}{rgb}{0.172549019607843,0.627450980392157,0.172549019607843}
\definecolor{color3}{rgb}{0.83921568627451,0.152941176470588,0.156862745098039}
\definecolor{color4}{rgb}{0.580392156862745,0.403921568627451,0.741176470588235}
\definecolor{color5}{rgb}{0.549019607843137,0.337254901960784,0.294117647058824}
\definecolor{color6}{rgb}{0.890196078431372,0.466666666666667,0.76078431372549}

\begin{axis}[
minor xtick={},
minor ytick={},
tick align=outside,
tick pos=left,
title={\MarkedReversal{}},
x grid style={white!69.0196078431373!black},
xmin=-7.9, xmax=209.9,
xtick style={color=black},
xtick={2,40,80,120,160,200},
y grid style={white!69.0196078431373!black},
ylabel={Cross-entropy Diff.},
ymin=0, ymax=2.04385225997956,
ytick style={color=black},
ytick distance=0.5
]
\addplot [cedline, color0]
table {%
2 0.0546463585653895
40 1.13633128522587
80 1.51699268459551
120 1.5680809045234
160 1.81721006027914
200 1.8753264317012
};
\addplot [cedline, color1]
table {%
2 0.000649277694679817
40 1.13305000465618
80 1.49468192922585
120 1.61070102426083
160 1.76677729288102
200 0.0122619990720145
};
\addplot [cedline, color2]
table {%
2 0.00144695156879743
40 0.00608271297970675
80 0.390453855481036
120 0.00949907881602741
160 1.85745371350706
200 1.20719888742436
};
\addplot [cedline, color3]
table {%
2 0.000800532655235053
40 0.00503407684651758
80 0.00676570135016963
120 0.00754430148025742
160 0.0103202790638761
200 0.0157001467133404
};
\addplot [cedline, color4]
table {%
2 0.000836634687288784
40 0.00893825335010079
80 0.00677733655015045
120 1.70724881021218
160 1.77287897809946
200 1.94655687987076
};
\addplot [cedline, color5]
table {%
2 0.00181450527751947
40 1.09915500387136
80 1.47127329872728
120 1.70569168407059
160 0.0106271210805544
200 0.0106707666013022
};
\addplot [cedline, color6]
table {%
2 0.00292018955833645
40 0.00365275139714827
80 0.00647323533198563
120 0.0073341063555632
160 0.011676126166229
200 0.106179210262532
};
\end{axis}

\end{tikzpicture}
\begin{tikzpicture}

\definecolor{color0}{rgb}{0.12156862745098,0.466666666666667,0.705882352941177}
\definecolor{color1}{rgb}{1,0.498039215686275,0.0549019607843137}
\definecolor{color2}{rgb}{0.172549019607843,0.627450980392157,0.172549019607843}
\definecolor{color3}{rgb}{0.83921568627451,0.152941176470588,0.156862745098039}
\definecolor{color4}{rgb}{0.580392156862745,0.403921568627451,0.741176470588235}
\definecolor{color5}{rgb}{0.549019607843137,0.337254901960784,0.294117647058824}
\definecolor{color6}{rgb}{0.890196078431372,0.466666666666667,0.76078431372549}

\begin{axis}[
minor xtick={},
minor ytick={},
tick align=outside,
tick pos=left,
title={Dyck},
x grid style={white!69.0196078431373!black},
xmin=-7.9, xmax=209.9,
xtick style={color=black},
xtick={2,40,80,120,160,200},
y grid style={white!69.0196078431373!black},
ylabel={Cross-entropy Diff.},
ymin=0, ymax=0.984353833519514,
ytick style={color=black},
ytick distance=0.2
]
\addplot [cedline, color0]
table {%
2 0.0137521703322442
40 0.448560495477739
80 0.633717943172086
120 0.73065842853355
160 0.830802491409513
200 0.937551151998066
};
\addplot [cedline, color1]
table {%
2 0.00261729319862236
40 0.0192677354453643
80 0.0434309140817946
120 0.104173908682112
160 0.11292395908756
200 0.135596787937269
};
\addplot [cedline, color2]
table {%
2 0.00203292299901059
40 0.0317206153889922
80 0.186232624800468
120 0.22266930264733
160 0.39311056672131
200 0.218315118835356
};
\addplot [cedline, color3]
table {%
2 0.00222379813166018
40 0.0252473909546143
80 0.0764198323781042
120 0.138708215524523
160 0.115844608590836
200 0.179842944415813
};
\addplot [cedline, color4]
table {%
2 0.00432020327407157
40 0.0215167173902953
80 0.042823335500211
120 0.0691638231721829
160 0.12739178992594
200 0.174133551361263
};
\addplot [cedline, color5]
table {%
2 0.00193366969840969
40 0.0208613535065236
80 0.0306107132196205
120 0.10555108605003
160 0.088620234415052
200 0.0602674597724553
};
\addplot [cedline, color6]
table {%
2 0.00149752156910599
40 0.0174174301344912
80 0.0276885453172024
120 0.03119147145259
160 0.0468693937521021
200 0.0527337836811332
};
\end{axis}

\end{tikzpicture}
\begin{tikzpicture}

\definecolor{color0}{rgb}{0.12156862745098,0.466666666666667,0.705882352941177}
\definecolor{color1}{rgb}{1,0.498039215686275,0.0549019607843137}
\definecolor{color2}{rgb}{0.172549019607843,0.627450980392157,0.172549019607843}
\definecolor{color3}{rgb}{0.83921568627451,0.152941176470588,0.156862745098039}
\definecolor{color4}{rgb}{0.580392156862745,0.403921568627451,0.741176470588235}
\definecolor{color5}{rgb}{0.549019607843137,0.337254901960784,0.294117647058824}
\definecolor{color6}{rgb}{0.890196078431372,0.466666666666667,0.76078431372549}

\begin{axis}[
minor xtick={},
minor ytick={},
tick align=outside,
tick pos=left,
title={\UnmarkedReversal{}},
x grid style={white!69.0196078431373!black},
xlabel={Alphabet Size \(\displaystyle k\)},
xmin=-7.9, xmax=209.9,
xtick style={color=black},
xtick={2,40,80,120,160,200},
y grid style={white!69.0196078431373!black},
ylabel={Cross-entropy Diff.},
ymin=0, ymax=2.62726247398385,
ytick style={color=black},
ytick distance=0.5
]
\addplot [cedline, color0]
table {%
2 0.212360449652591
40 1.55611169768235
80 1.85942075973022
120 2.13806909252101
160 2.31331302499004
200 2.45210233728601
};
\addplot [cedline, color1]
table {%
2 0.208620349994855
40 1.49942019040013
80 1.88677157974952
120 2.08157571274763
160 2.2967314275141
200 2.49516861251276
};
\addplot [cedline, color2]
table {%
2 0.144113408711615
40 1.52267777710199
80 1.78909516578817
120 2.1663921381762
160 2.32265467816737
200 2.44686356911081
};
\addplot [cedline, color3]
table {%
2 0.141484228874645
40 1.27742880303313
80 1.85452098029852
120 2.13569658261605
160 2.25252823733202
200 2.41986057491533
};
\addplot [cedline, color4]
table {%
2 0.200249913554156
40 1.55501545148885
80 1.8627985068156
120 1.99512410674596
160 2.22626710337352
200 2.50752572599649
};
\addplot [cedline, color5]
table {%
2 0.200817285384686
40 1.56176563206159
80 1.86586825561551
120 2.06172586300997
160 2.30986426250289
200 2.46527466093296
};
\addplot [cedline, color6]
table {%
2 0.112790766249333
40 1.50159342542992
80 1.90857637405824
120 2.14676442949922
160 2.25247447709312
200 2.36253966272009
};
\end{axis}

\end{tikzpicture}
    \caption[Results of capacity experiments (best).]{Best cross-entropy difference on the validation set vs.\ input alphabet size. On \MarkedReversal{}, surprisingly, only RNS 2-3 achieves optimal cross-entropy for all alphabet sizes. On the more complicated Dyck language, our new VRNS-RNN (VRNS 2-1-3, VRNS 2-3-3) achieves the best performance for large alphabet sizes. No models perform much better than the LSTM baseline on \UnmarkedReversal{}, although RNS 2-3 performs well for $k = 40$.}
    \label{fig:capacity-results-best}
\end{figure*}
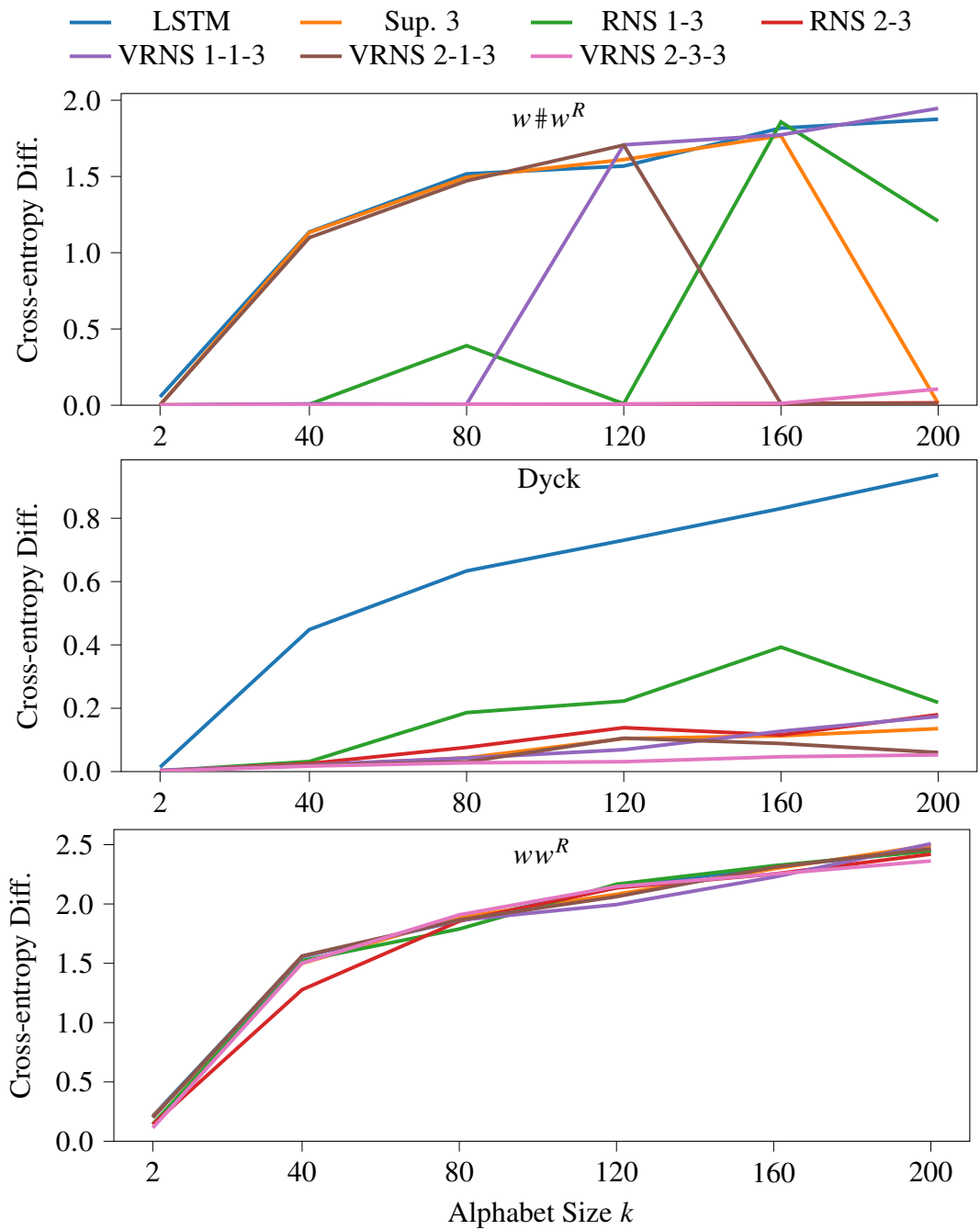

On \MarkedReversal{}, the single-state RNS 1-3, and even VRNS 1-1-3 and Sup.\ 3, struggles for large $k$. Only the multi-state models RNS 2-3, VRNS 2-1-3, and VRNS 2-3-3 show a clear advantage over the LSTM. Surprisingly, RNS 2-3, which models a discrete stack alphabet of only size 3, attains the best performance on large alphabets; in \cref{fig:capacity-results-best}, it is the only model capable of achieving optimal cross-entropy on all alphabet sizes. On the Dyck language, a more complicated DCFL, the model rankings are as expected: vector stacks (Sup.\ 3 and VRNS) perform best, with the largest VRNS model performing best. RNS-RNNs still show a clear advantage over the LSTM, but not as much as vector stack RNNs.

None of the models perform substantially better than the LSTM on the nondeterministic \UnmarkedReversal{} language, suggesting that they cannot simultaneously combine the tricks of encoding symbol types as points in high-dimensional space and nondeterministically guessing $|w|$. Although the VRNS-RNN does combine both nondeterminism and vectors, the nondeterminism only applies to the discrete symbols of $\Gamma$, of which there are no more than 3, far fewer than $k$ when $k \geq 40$. The fact that all models learn a weak baseline that does not outperform the LSTM likely explains the low variance in \cref{fig:capacity-results-mean-std}.

If RNS 2-3 has only 3 symbol types at its disposal, how can it succeed on \MarkedReversal{} for large $k$? Recall that $\stackreadingt{t}$ is a vector that represents a probability distribution over $Q \times \Gamma$. Perhaps the RNS-RNN, via $\stackreadingt{t}$, represents symbol types as different clusters of points in $\realset^{|Q| \cdot |\Gamma|}$. To test this hypothesis, we select the RNS 2-3 model with the best validation performance on \MarkedReversal{} for $k = 40$ and evaluate it on 100 samples drawn from $\sampleprobname{L}$. For each symbol between $\sym{\#}$ and $\eos$, we extract the stack reading vector computed just prior to predicting that symbol. Aggregating over all 100 samples, we reduce the stack readings to 2 dimensions using principal component analysis. We plot them in \cref{fig:capacity-reading-pca}, labeling each point according to the symbol type to be predicted just after the corresponding stack reading. Indeed, we see that stack readings corresponding to the same symbol cluster together, suggesting that the model is orchestrating the weights of different runs in a way that causes the stack reading to encode different symbol types as points in the 5-dimensional simplex.

\begin{figure*}
    \pgfplotsset{
        every axis/.style={
            height=3.5in,
            width=3.5in
        }
    }
    \centering
    \input{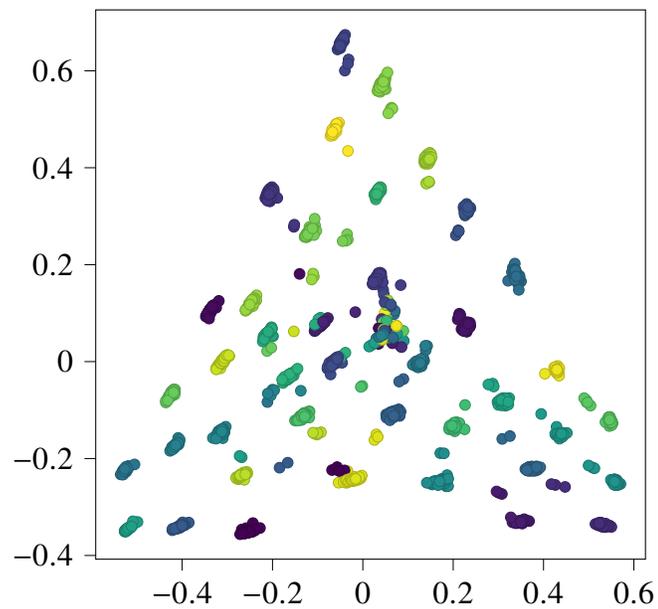}
    \caption[PCA plot of stack readings when the RNS-RNN is run on \MarkedReversal{} with 40 symbol types.]{When RNS 2-3 is run on \MarkedReversal{} with 40 symbol types, the stack readings are as visualized above. The readings are 6-dimensional vectors, projected down to 2 dimensions using PCA. The color of each point represents the top stack symbol. Points corresponding to the same symbol type cluster together, indicating the RNS-RNN has learned to encode symbols as points in the 5-dimensional simplex. The disorganized points in the middle are from the first and last timesteps of the second half of the string, which appear to be irrelevant for prediction.}
    \label{fig:capacity-reading-pca}
\end{figure*}

In \cref{fig:capacity-reading-heatmap}, we show heatmaps of actual stack reading vectors across time on an example string in \MarkedReversal{} when $k = 40$.

\begin{figure*}
    \pgfplotsset{
      every axis/.style={
        height=7.6in,
        width=1.5in
      },
      yticklabel style={font=\scriptsize},
      xticklabel style={font=\scriptsize}
    }
    \centering
    \begin{tabular}{@{}lll@{}}    
\begin{tikzpicture}

\begin{axis}[
tick align=outside,
tick pos=left,
x grid style={white!69.0196078431373!black},
xlabel={Reading Element},
xmin=0, xmax=6,
xtick style={color=black},
xtick={0.5,1.5,2.5,3.5,4.5,5.5},
xticklabel style={rotate=45.0},
xticklabels={
  {\(\displaystyle (q_{0}, \bot)\)},
  {\(\displaystyle (q_{0}, \sym{0})\)},
  {\(\displaystyle (q_{0}, \sym{1})\)},
  {\(\displaystyle (q_{1}, \bot)\)},
  {\(\displaystyle (q_{1}, \sym{0})\)},
  {\(\displaystyle (q_{1}, \sym{1})\)}
},
y dir=reverse,
y grid style={white!69.0196078431373!black},
ylabel={$\leftarrow$ Input Symbol},
ymin=0, ymax=83,
ytick style={color=black},
ytick={0,1,2,3,4,5,6,7,8,9,10,11,12,13,14,15,16,17,18,19,20,21,22,23,24,25,26,27,28,29,30,31,32,33,34,35,36,37,38,39,40,41,42,43,44,45,46,47,48,49,50,51,52,53,54,55,56,57,58,59,60,61,62,63,64,65,66,67,68,69,70,71,72,73,74,75,76,77,78,79,80,81,82,83},
yticklabels={
  \(\displaystyle \sym{0}\),
  \(\displaystyle \sym{1}\),
  \(\displaystyle \sym{2}\),
  \(\displaystyle \sym{3}\),
  \(\displaystyle \sym{0}\),
  \(\displaystyle \sym{1}\),
  \(\displaystyle \sym{2}\),
  \(\displaystyle \sym{3}\),
  \(\displaystyle \sym{0}\),
  \(\displaystyle \sym{1}\),
  \(\displaystyle \sym{2}\),
  \(\displaystyle \sym{3}\),
  \(\displaystyle \sym{0}\),
  \(\displaystyle \sym{1}\),
  \(\displaystyle \sym{2}\),
  \(\displaystyle \sym{3}\),
  \(\displaystyle \sym{0}\),
  \(\displaystyle \sym{1}\),
  \(\displaystyle \sym{2}\),
  \(\displaystyle \sym{3}\),
  \(\displaystyle \sym{0}\),
  \(\displaystyle \sym{1}\),
  \(\displaystyle \sym{2}\),
  \(\displaystyle \sym{3}\),
  \(\displaystyle \sym{0}\),
  \(\displaystyle \sym{1}\),
  \(\displaystyle \sym{2}\),
  \(\displaystyle \sym{3}\),
  \(\displaystyle \sym{0}\),
  \(\displaystyle \sym{1}\),
  \(\displaystyle \sym{2}\),
  \(\displaystyle \sym{3}\),
  \(\displaystyle \sym{0}\),
  \(\displaystyle \sym{1}\),
  \(\displaystyle \sym{2}\),
  \(\displaystyle \sym{3}\),
  \(\displaystyle \sym{0}\),
  \(\displaystyle \sym{1}\),
  \(\displaystyle \sym{2}\),
  \(\displaystyle \sym{3}\),
  \(\displaystyle \sym{0}\),
  \(\displaystyle \sym{\#}\),
  \(\displaystyle \sym{0}\),
  \(\displaystyle \sym{3}\),
  \(\displaystyle \sym{2}\),
  \(\displaystyle \sym{1}\),
  \(\displaystyle \sym{0}\),
  \(\displaystyle \sym{3}\),
  \(\displaystyle \sym{2}\),
  \(\displaystyle \sym{1}\),
  \(\displaystyle \sym{0}\),
  \(\displaystyle \sym{3}\),
  \(\displaystyle \sym{2}\),
  \(\displaystyle \sym{1}\),
  \(\displaystyle \sym{0}\),
  \(\displaystyle \sym{3}\),
  \(\displaystyle \sym{2}\),
  \(\displaystyle \sym{1}\),
  \(\displaystyle \sym{0}\),
  \(\displaystyle \sym{3}\),
  \(\displaystyle \sym{2}\),
  \(\displaystyle \sym{1}\),
  \(\displaystyle \sym{0}\),
  \(\displaystyle \sym{3}\),
  \(\displaystyle \sym{2}\),
  \(\displaystyle \sym{1}\),
  \(\displaystyle \sym{0}\),
  \(\displaystyle \sym{3}\),
  \(\displaystyle \sym{2}\),
  \(\displaystyle \sym{1}\),
  \(\displaystyle \sym{0}\),
  \(\displaystyle \sym{3}\),
  \(\displaystyle \sym{2}\),
  \(\displaystyle \sym{1}\),
  \(\displaystyle \sym{0}\),
  \(\displaystyle \sym{3}\),
  \(\displaystyle \sym{2}\),
  \(\displaystyle \sym{1}\),
  \(\displaystyle \sym{0}\),
  \(\displaystyle \sym{3}\),
  \(\displaystyle \sym{2}\),
  \(\displaystyle \sym{1}\),
  \(\displaystyle \sym{0}\),
  \(\displaystyle \eos\)
}
]
\addplot graphics [includegraphics cmd=\pgfimage,xmin=0, xmax=6, ymin=83, ymax=0] {};
\end{axis}

\end{tikzpicture} &
\begin{tikzpicture}

\begin{axis}[
tick align=outside,
tick pos=left,
x grid style={white!69.0196078431373!black},
xlabel={Reading Element},
xmin=0, xmax=6,
xtick style={color=black},
xtick={0.5,1.5,2.5,3.5,4.5,5.5},
xticklabel style={rotate=45.0},
xticklabels={
  {\(\displaystyle (q_{0}, \bot)\)},
  {\(\displaystyle (q_{0}, \sym{0})\)},
  {\(\displaystyle (q_{0}, \sym{1})\)},
  {\(\displaystyle (q_{1}, \bot)\)},
  {\(\displaystyle (q_{1}, \sym{0})\)},
  {\(\displaystyle (q_{1}, \sym{1})\)}
},
y dir=reverse,
y grid style={white!69.0196078431373!black},
ymin=0, ymax=83,
ytick style={color=black},
ytick={0,1,2,3,4,5,6,7,8,9,10,11,12,13,14,15,16,17,18,19,20,21,22,23,24,25,26,27,28,29,30,31,32,33,34,35,36,37,38,39,40,41,42,43,44,45,46,47,48,49,50,51,52,53,54,55,56,57,58,59,60,61,62,63,64,65,66,67,68,69,70,71,72,73,74,75,76,77,78,79,80,81,82,83},
yticklabels={
  \(\displaystyle \sym{0}\),
  \(\displaystyle \sym{1}\),
  \(\displaystyle \sym{2}\),
  \(\displaystyle \sym{3}\),
  \(\displaystyle \sym{0}\),
  \(\displaystyle \sym{1}\),
  \(\displaystyle \sym{2}\),
  \(\displaystyle \sym{3}\),
  \(\displaystyle \sym{0}\),
  \(\displaystyle \sym{1}\),
  \(\displaystyle \sym{2}\),
  \(\displaystyle \sym{3}\),
  \(\displaystyle \sym{0}\),
  \(\displaystyle \sym{1}\),
  \(\displaystyle \sym{2}\),
  \(\displaystyle \sym{3}\),
  \(\displaystyle \sym{0}\),
  \(\displaystyle \sym{1}\),
  \(\displaystyle \sym{2}\),
  \(\displaystyle \sym{3}\),
  \(\displaystyle \sym{0}\),
  \(\displaystyle \sym{1}\),
  \(\displaystyle \sym{2}\),
  \(\displaystyle \sym{3}\),
  \(\displaystyle \sym{0}\),
  \(\displaystyle \sym{1}\),
  \(\displaystyle \sym{2}\),
  \(\displaystyle \sym{3}\),
  \(\displaystyle \sym{0}\),
  \(\displaystyle \sym{1}\),
  \(\displaystyle \sym{2}\),
  \(\displaystyle \sym{3}\),
  \(\displaystyle \sym{0}\),
  \(\displaystyle \sym{1}\),
  \(\displaystyle \sym{2}\),
  \(\displaystyle \sym{3}\),
  \(\displaystyle \sym{0}\),
  \(\displaystyle \sym{1}\),
  \(\displaystyle \sym{2}\),
  \(\displaystyle \sym{3}\),
  \(\displaystyle \sym{0}\),
  \(\displaystyle \sym{\#}\),
  \(\displaystyle \sym{0}\),
  \(\displaystyle \sym{3}\),
  \(\displaystyle \sym{2}\),
  \(\displaystyle \sym{1}\),
  \(\displaystyle \sym{0}\),
  \(\displaystyle \sym{3}\),
  \(\displaystyle \sym{2}\),
  \(\displaystyle \sym{1}\),
  \(\displaystyle \sym{0}\),
  \(\displaystyle \sym{3}\),
  \(\displaystyle \sym{2}\),
  \(\displaystyle \sym{1}\),
  \(\displaystyle \sym{0}\),
  \(\displaystyle \sym{3}\),
  \(\displaystyle \sym{2}\),
  \(\displaystyle \sym{1}\),
  \(\displaystyle \sym{0}\),
  \(\displaystyle \sym{3}\),
  \(\displaystyle \sym{2}\),
  \(\displaystyle \sym{1}\),
  \(\displaystyle \sym{0}\),
  \(\displaystyle \sym{3}\),
  \(\displaystyle \sym{2}\),
  \(\displaystyle \sym{1}\),
  \(\displaystyle \sym{0}\),
  \(\displaystyle \sym{3}\),
  \(\displaystyle \sym{2}\),
  \(\displaystyle \sym{1}\),
  \(\displaystyle \sym{0}\),
  \(\displaystyle \sym{3}\),
  \(\displaystyle \sym{2}\),
  \(\displaystyle \sym{1}\),
  \(\displaystyle \sym{0}\),
  \(\displaystyle \sym{3}\),
  \(\displaystyle \sym{2}\),
  \(\displaystyle \sym{1}\),
  \(\displaystyle \sym{0}\),
  \(\displaystyle \sym{3}\),
  \(\displaystyle \sym{2}\),
  \(\displaystyle \sym{1}\),
  \(\displaystyle \sym{0}\),
  \(\displaystyle \eos\)
}
]
\addplot graphics [includegraphics cmd=\pgfimage,xmin=0, xmax=6, ymin=83, ymax=0] {};
\end{axis}

\end{tikzpicture} &
\begin{tikzpicture}

\begin{axis}[
tick align=outside,
tick pos=left,
x grid style={white!69.0196078431373!black},
xlabel={Reading Element},
xmin=0, xmax=6,
xtick style={color=black},
xtick={0.5,1.5,2.5,3.5,4.5,5.5},
xticklabel style={rotate=45.0},
xticklabels={
  {\(\displaystyle (q_{0}, \bot)\)},
  {\(\displaystyle (q_{0}, \sym{0})\)},
  {\(\displaystyle (q_{0}, \sym{1})\)},
  {\(\displaystyle (q_{1}, \bot)\)},
  {\(\displaystyle (q_{1}, \sym{0})\)},
  {\(\displaystyle (q_{1}, \sym{1})\)}
},
y dir=reverse,
y grid style={white!69.0196078431373!black},
ymin=0, ymax=83,
ytick style={color=black},
ytick={0,1,2,3,4,5,6,7,8,9,10,11,12,13,14,15,16,17,18,19,20,21,22,23,24,25,26,27,28,29,30,31,32,33,34,35,36,37,38,39,40,41,42,43,44,45,46,47,48,49,50,51,52,53,54,55,56,57,58,59,60,61,62,63,64,65,66,67,68,69,70,71,72,73,74,75,76,77,78,79,80,81,82,83},
yticklabels={
  \(\displaystyle \sym{0}\),
  \(\displaystyle \sym{1}\),
  \(\displaystyle \sym{2}\),
  \(\displaystyle \sym{3}\),
  \(\displaystyle \sym{0}\),
  \(\displaystyle \sym{1}\),
  \(\displaystyle \sym{2}\),
  \(\displaystyle \sym{3}\),
  \(\displaystyle \sym{0}\),
  \(\displaystyle \sym{1}\),
  \(\displaystyle \sym{2}\),
  \(\displaystyle \sym{3}\),
  \(\displaystyle \sym{0}\),
  \(\displaystyle \sym{1}\),
  \(\displaystyle \sym{2}\),
  \(\displaystyle \sym{3}\),
  \(\displaystyle \sym{0}\),
  \(\displaystyle \sym{1}\),
  \(\displaystyle \sym{2}\),
  \(\displaystyle \sym{3}\),
  \(\displaystyle \sym{0}\),
  \(\displaystyle \sym{1}\),
  \(\displaystyle \sym{2}\),
  \(\displaystyle \sym{3}\),
  \(\displaystyle \sym{0}\),
  \(\displaystyle \sym{1}\),
  \(\displaystyle \sym{2}\),
  \(\displaystyle \sym{3}\),
  \(\displaystyle \sym{0}\),
  \(\displaystyle \sym{1}\),
  \(\displaystyle \sym{2}\),
  \(\displaystyle \sym{3}\),
  \(\displaystyle \sym{0}\),
  \(\displaystyle \sym{1}\),
  \(\displaystyle \sym{2}\),
  \(\displaystyle \sym{3}\),
  \(\displaystyle \sym{0}\),
  \(\displaystyle \sym{1}\),
  \(\displaystyle \sym{2}\),
  \(\displaystyle \sym{3}\),
  \(\displaystyle \sym{0}\),
  \(\displaystyle \sym{\#}\),
  \(\displaystyle \sym{0}\),
  \(\displaystyle \sym{3}\),
  \(\displaystyle \sym{2}\),
  \(\displaystyle \sym{1}\),
  \(\displaystyle \sym{0}\),
  \(\displaystyle \sym{3}\),
  \(\displaystyle \sym{2}\),
  \(\displaystyle \sym{1}\),
  \(\displaystyle \sym{0}\),
  \(\displaystyle \sym{3}\),
  \(\displaystyle \sym{2}\),
  \(\displaystyle \sym{1}\),
  \(\displaystyle \sym{0}\),
  \(\displaystyle \sym{3}\),
  \(\displaystyle \sym{2}\),
  \(\displaystyle \sym{1}\),
  \(\displaystyle \sym{0}\),
  \(\displaystyle \sym{3}\),
  \(\displaystyle \sym{2}\),
  \(\displaystyle \sym{1}\),
  \(\displaystyle \sym{0}\),
  \(\displaystyle \sym{3}\),
  \(\displaystyle \sym{2}\),
  \(\displaystyle \sym{1}\),
  \(\displaystyle \sym{0}\),
  \(\displaystyle \sym{3}\),
  \(\displaystyle \sym{2}\),
  \(\displaystyle \sym{1}\),
  \(\displaystyle \sym{0}\),
  \(\displaystyle \sym{3}\),
  \(\displaystyle \sym{2}\),
  \(\displaystyle \sym{1}\),
  \(\displaystyle \sym{0}\),
  \(\displaystyle \sym{3}\),
  \(\displaystyle \sym{2}\),
  \(\displaystyle \sym{1}\),
  \(\displaystyle \sym{0}\),
  \(\displaystyle \sym{3}\),
  \(\displaystyle \sym{2}\),
  \(\displaystyle \sym{1}\),
  \(\displaystyle \sym{0}\),
  \(\displaystyle \eos\)
}
]
\addplot graphics [includegraphics cmd=\pgfimage,xmin=0, xmax=6, ymin=83, ymax=0] {};
\end{axis}

\end{tikzpicture}
    \end{tabular}
    \caption[Heatmaps of stack readings when the RNS-RNN is run on a string in \MarkedReversal{} with 40 symbol types.]{Heatmaps of $\mathbf{r}_t$ over time on a string from \MarkedReversal{} when $k = 40$, generated from the best RNS 2-3 model (left) and two other random restarts (middle, right). The $w$ string repeats the pattern $\sym{0}\sym{1}\sym{2}\sym{3}$, which is clearly seen in the reading vectors. Black~=~1, white~=~0.}
    \label{fig:capacity-reading-heatmap}
\end{figure*}

\section{Natural Language Modeling}
\label{sec:vrns-rnn-ptb}

Having examined these stack RNNs on formal languages, we now examine how they fare on natural language modeling, as the combination of nondeterminism and vector representations in the VRNS-RNN may prove beneficial. Following \cref{sec:limited-ns-rnn-ptb-experiments}, we measure perplexity (lower is better) on the Penn Treebank.

\subsection{Models and Training}

We use the same LSTM and superposition stack baselines, and various sizes of RNS-RNN and VRNS-RNN (we are able to train larger RNS-RNNs thanks to the speedup in \cref{sec:langs-algorithm-speedup}). In order to preserve context across batches, we train all models using truncated BPTT. We apply the same memory-limiting technique used in \cref{sec:limited-ns-rnn} to the VRNS-RNN, limiting non-zero entries in $\nsinnerweightletter$ and $\vrnsinnervectorletter$ to those where $t - i \leq D$, where $D = 35$.

We train each model by minimizing its cross-entropy (averaged over the timestep dimension of each batch) on the training set, using per-symbol perplexity on the validation set as the early stopping criterion. For each training run, we randomly sample the initial learning rate from a log-uniform distribution over $[1, 100]$, and the gradient clipping threshold from a log-uniform distribution over $[0.0112, 1.12]$. We initialize all parameters uniformly from $[-0.05, 0.05]$.

All other details of the models and training procedure are the same as in \cref{sec:limited-ns-rnn-ptb-models-and-training}.

\subsection{Results}

We show results in \cref{tab:vrns-rnn-ptb-results}. Most stack RNNs achieve better test perplexity than the LSTM baseline. The best models are those that simulate more nondeterminism (VRNS when $|Q| = 3$ and $|\Gamma| = 3$, and RNS when $|Q| = 4$ and $|\Gamma| = 5$). Although the superposition stack RNNs outperform the LSTM baseline, it is the combination of both nondeterminism and vector embeddings (VRNS 3-3-5) that achieves the best performance, combining the ability to process syntax nondeterministically with the ability to pack lexical information into a vector space on the stack.

\begin{table}
    \caption[Language modeling results on the Penn Treebank, with the VRNS-RNN included]{Language modeling results on the Penn Treebank}
    \label{tab:vrns-rnn-ptb-results}
    \begin{center}
        \begin{tabular}{@{}lcc@{}}
\toprule
Model & Val. & Test \\
\midrule
LSTM, 256 units & 129.99 & 125.90 \\
Sup. (push hidden), 247 units & 124.99 & 121.05 \\
Sup. (push learned), $|\pushedstackvectort{t}| = 22$ & 125.68 & 120.74 \\
RNS 1-29 & 131.17 & 128.11 \\
RNS 2-13 & 128.97 & 122.76 \\
RNS 4-5 & 126.06 & 120.19 \\
VRNS 1-1-256 & 130.60 & 126.70 \\
VRNS 1-1-32 & \textbf{124.49} & 120.45 \\
VRNS 1-5-20 & 128.35 & 124.63 \\
VRNS 2-3-10 & 129.30 & 124.03 \\
VRNS 3-3-5 & 124.71 & \textbf{120.12} \\
\bottomrule
\end{tabular}

    \end{center}
    \generalnote{Validation and test perplexity on the Penn Treebank of the best of 10 random restarts for each architecture. The model with the best test perplexity is our new VRNS-RNN when it combines a modest amount of nondeterminism (3 states and 3 stack symbols) with vectors of size~5.}
\end{table}

\section{Conclusion}

We showed that the RNS-RNN can far exceed the amount of information it seemingly should be able to encode in its stack given its finite stack alphabet. Our newly proposed VRNS-RNN combines the benefits of nondeterminism and vector embeddings, and we showed that it has better performance than other stack RNNs on the Dyck language and a natural language modeling benchmark.

\chapter{Stack Attention}
\label{chap:stack-attention}

The transformer \citep{vaswani-etal-2017-attention} has become the dominant neural network architecture in NLP, achieving state-of-the-art performance across a wide range of tasks. In this chapter, we shift focus from RNNs and show how to incorporate differentiable stacks, including our differentiable vector WPDA, into the transformer architecture by using them as a stack-structured attention mechanism. We show that transformer language models with nondeterministic stack attention learn CFLs very effectively, outperforming baseline transformers in every case, and outperforming even the RNS-RNN on the hardest CFL.

\section{Background}

Attention has become a cornerstone of modern neural network architectures \citep{cho-etal-2015-describing,bahdanau-etal-2015-neural,vaswani-etal-2017-attention}. Loosely defined, attention refers to an operation that, given a sequence of input vectors $\sublayerinputt{1}, \ldots, \sublayerinputt{n} \in \realset^d$, produces an output $\sublayeroutputletter$ that represents a linear interpolation, or ``soft-selection,'' over $\sublayerinputt{1}, \ldots, \sublayerinputt{n}$.

The transformer uses an attention operator called \term{scaled dot-product attention}. Given a \term{query vector} $\vecvar{q} \in \realset^d$, scaled dot-product attention computes the dot-product of $\vecvar{q}$ with each $\sublayerinputt{i}$, divides it by $\sqrt{d}$, and normalizes the results with a softmax. The output $\sublayeroutputletter$ is the weighted sum of $\sublayerinputt{1}, \ldots, \sublayerinputt{n}$ according to the softmax weights.
\begin{equation}
    \begin{aligned}
        \vecvar{a}'[i] &= \frac{ \vecvar{q} \cdot \sublayerinputt{i} }{ \sqrt{d} } \hspace{4em} (1 \leq i \leq n) \\
        \vecvar{a} &= \softmax(\vecvar{a}') \\
        \sublayeroutputletter &= \sum_{i=1}^n \vecvar{a}[i] \; \sublayerinputt{i}
    \end{aligned}
    \label{eq:scaled-dot-product-attention}
\end{equation}
In a transformer sublayer, this is done multiple times for $\sublayeroutputt{1}, \ldots, \sublayeroutputt{n}$, producing a new sequence of the same size. Standard practice is to give each sublayer multiple attention \term{heads}, so the layer performs a fixed number of soft-selections (typically 8) over the input at the same time.

As we saw in \cref{sec:vrns-rnn-definition}, the VRNS-RNN, too, produces a weighted sum of input vectors. Given a sequence of action tensors $\nstranst{1}, \ldots, \nstranst{t}$, and a sequence of pushed vectors $\pushedstackvectort{0}, \pushedstackvectort{1}, \ldots, \pushedstackvectort{t}$, the stack reading $\stackreadingt{t}$ is the weighted sum of all $\pushedstackvectort{i}$ that end up on top of the stack of a vector WPDA run at timestep $t$, where the run weights are dictated by the WPDA transition weights in $\nstranst{1}, \ldots, \nstranst{t}$. Likewise, the superposition stack is a simpler case of this, where the vector WPDA has only one state and one stack symbol type (\cref{sec:relationship-of-vrns-rnn}). Based on this insight, we adapt these two differentiable stacks into a new type of attention sublayer that replaces scaled dot-product attention, where the action weights are the ``queries,'' the pushed vectors are the sublayer inputs $\sublayerinputt{1}, \ldots, \sublayerinputt{n}$, and the stack readings $\stackreadingt{1}, \ldots, \stackreadingt{n}$ are the sublayer outputs. Although multi-head attention can express multiple interpretations of the input at once, it represents only a fixed number, whereas stack attention sums over an exponential number of WPDA runs.

\citet{kim-etal-2017-structured} proposed a framework for attention mechanisms that sum over an exponential number of latent structures, similar to ours. Specifically, they developed structured attention operators based on the forward-backward algorithm for linear-chain CRFs, and the inside-outside algorithm for projective dependency trees. We, on the other hand, develop an attention mechanism based on Lang's algorithm for WPDAs. A key difference in their work is that, at each timestep, their attention mechanism marginalizes over all latent structures backward \emph{and forward} in time. This precludes it from being used as a language model or decoder. In our stack attention, we marginalize backward in time only.

\section{Stack Attention Sublayer}

Transformers consist of multiple layers, each of which contains multiple \term{sublayers}. Every sublayer is supplemented with layer normalization, dropout, and residual connections. In a transformer encoder, each layer consists of a self-attention sublayer followed by a feedforward sublayer. Similarly, in a transformer decoder, each layer consists of a self-attention sublayer, a cross-attention sublayer that attends to the output of an encoder, and a feedforward sublayer, in that order.

Our method consists of replacing scaled dot-product attention in the self-attention sublayer with a differentiable stack module. Whereas scaled dot-product attention answers the question, ``which input is most important?'' stack attention answers the question, ``which input vector is most likely on top of the stack, given the stack actions up to timestep $t$?'' In this way, stack attention gives transformers the ability to simulate WPDAs with unlimited stack memory and recognize arbitrary CFLs.

We start by giving the general formula that is common to all sublayer types. Let $\layerinputt{t} \in \realset^{\dmodel}$ and $\layeroutputt{t} \in \realset^{\dmodel}$ be the input and output, respectively, of the sublayer at timestep $t$.
\begin{equation}
    \layeroutputt{t} = \layerinputt{t} + \funcname{Dropout}(\funcname{Sublayer}_t(\funcname{LayerNorm}(\layerinputt{t})))
\end{equation}
Here, $\funcname{Sublayer}_t$ is called the \term{sublayer function} that customizes the behavior of the sublayer. This equation uses pre-norm instead of post-norm, i.e.\ layer normalization occurs before the sublayer function rather than after \citep{nguyen-salazar-2019-transformers}. Layer normalization is also applied to the output of the last layer of the transformer.

For a single head of scaled dot-product self-attention, $\funcname{Sublayer}_t$ follows \cref{eq:scaled-dot-product-attention}, where $\vecvar{q}$ is a linear projection of $\layerinputt{t}$, and $\layerinputt{t}$ and $\layeroutputt{t}$ are also passed through linear projections. When $i$ in \cref{eq:scaled-dot-product-attention} is limited to at most $t$ instead of $n$, we say that the attention layer is \term{causally masked}, since it cannot attend to future timesteps. This is necessary when using transformers as language models, as we do in this chapter.

We define a stack attention sublayer as follows. Let $\stackreadingt{t}$ and $\stackobjectfunc{\stackobjectt{t-1}}{\stackactionst{t}}$ have the same definitions as in \cref{sec:controller-stack-interface}. The actions $\stackactionst{t}$ for both the superposition stack and the differentiable vector WPDA have the form $(\stackactionvect{t}, \pushedstackvectort{t})$, where $\stackactionvect{t}$ is a vector of stack action weights (for the differentiable vector WPDA, $\stackactionvect{t}$ is $\nstranst{t}$). We map $\sublayerinputt{t}$ to $\stackactionvect{t}$ with a learned linear projection, similarly to how the scaled dot-product attention sublayer generates queries. For the superposition stack, we use a softmax to ensure $\stackactionvect{t}$ sums to one as in \cref{sec:superposition-stack}. We set $\pushedstackvectort{t} = \sublayerinputt{t}$, passing $\pushedstackvectort{t}$ through a learned linear projection if $\dmodel \neq \stackvectorsizeletter$. Likewise, we set $\sublayeroutputt{t} = \stackreadingt{t}$, passing $\stackreadingt{t}$ through a learned linear projection if the size of $\stackreadingt{t}$ differs from $\dmodel$. Note that unlike scaled dot-product attention, the sublayer function contains a recurrence on past timesteps. We illustrate this architecture in \cref{fig:stack-attention-diagram}.

\begin{figure*}
    \newcommand{\dropout}{\scalebox{0.8}{$\funcname{Dropout}$}}
    \newcommand{\layernorm}{\scalebox{0.8}{$\funcname{LayerNorm}$}}
    \def\xsep{1in}
    \newcommand{\mycolumn}[3]{
        \node (h#1) at (#2*\xsep, -1in) {$\layerinputt{#3}$};
        \node (ln#1) at (#2*\xsep, -0.5in) {\layernorm};
        \draw[->] (h#1) edge (ln#1);
        \node (s#1) at (#2*\xsep, 0) {$\stackobjectt{#3}$};
        \draw[->] (ln#1) edge node[right] {$\stackactionvect{#3}, \pushedstackvectort{#3}$} (s#1);
        \node (dr#1) at (#2*\xsep, 0.5in) {\dropout};
        \draw[->] (s#1) edge node[right] {$\stackreadingt{#3}$} (dr#1);
        \node (o#1) at (#2*\xsep, 1in) {$\layeroutputt{#3}$};
        \draw[->] (dr#1) edge (o#1);
        \draw[->, bend left, dashed] (h#1) edge (o#1);}
    \centering
    \begin{tikzpicture}[x=2.5cm,y=2.3cm]
        \node (leftdots) at (-2*\xsep, 0) {$\cdots$};
        \mycolumn{1}{-1}{t-1}
        \mycolumn{2}{0}{t}
        \mycolumn{3}{1}{t+1}
        \node (rightdots) at (2*\xsep, 0) {$\cdots$};
        \draw[->] (leftdots) edge (s1);
        \draw[->] (s1) edge (s2);
        \draw[->] (s2) edge (s3);
        \draw[->] (s3) edge (rightdots);
        \node[anchor=east] at (-2.2in, -1in) {input \big\{};
        \node[anchor=east] at (-2.2in, 0) {stack \big\{};
        \node[anchor=east] at (-2.2in, 1in) {output \big\{};
    \end{tikzpicture}
    \caption[Conceptual diagram of a stack attention sublayer.]{Conceptual diagram of a stack attention sublayer, unrolled across a portion of time. Dashed arrows indicate residual connections.}
    \label{fig:stack-attention-diagram}
\end{figure*}

\section{Experiments}

In this section, we test the performance of transformers with stack attention on the same five CFL tasks from \cref{chap:learning-cfls}.

\subsection{Models and Training}

As baselines, we include the same LSTM, superposition stack RNN (``Sup.''), and RNS-RNN (``RNS'') architectures used in \cref{sec:rns-rnn-cfl-models-and-training}. We also test a baseline transformer (``Trans.''), a transformer with a superposition stack attention sublayer (``Trans.\ + Sup.''), and a transformer with a differentiable vector WPDA attention sublayer (``Trans.\ + VRNS''). We also refer to Trans.\ + VRNS as nondeterministic stack attention. For all transformer models, we set $\dmodel = 32$ and use a dropout rate of 0.1. Like \citet{vaswani-etal-2017-attention}, we map inputs to vectors of size $\dmodel$ with a scaled embedding layer and apply sinusoidal positional encodings. The first input is always a special $\bos$ (beginning of sequence) symbol. We map the outputs of the final layer to logits for predicting the next word via a learned affine transformation. Since the task is language modeling, all scaled dot-product attention layers are causally masked, so the transformer is trained not to attend to inputs at future timesteps. We use 4 attention heads in all scaled dot-product attention layers. We set the size of all feedforward sublayers to 64.

The baseline transformer has 5 layers. For Trans.\ + Sup.\ and Trans.\ + VRNS, in the third (middle) layer, we simply replace the scaled dot-product attention sublayer with the corresponding stack attention sublayer. For Trans.\ + Sup.\ we set $m = 32$. For Trans.\ + VRNS, we set $m = 5$, and we use the same sizes for $|Q|$ and $|\Gamma|$ as in \cref{sec:rns-rnn-cfl-models-and-training}.

As in \cref{sec:learning-non-cfls-training}, for each language and architecture, we train 10 models and report results for the model with the lowest cross-entropy difference on the validation set. For layer normalization layers, we initialize weights to 1, and biases to 0. All other details regarding initialization and training are the same as \cref{sec:learning-non-cfls-training}.

\subsection{Results}

We show cross-entropy difference on the validation and test sets in \cref{fig:stack-attention-cfls}.

\begin{figure*}
    \pgfplotsset{
        every axis/.style={
            width=3.45in,
            height=2in
        },
        title style={yshift=-4.7ex},
        y tick label style={
            /pgf/number format/.cd,
            fixed,
            fixed zerofill,
            precision=1,
            /tikz/.cd
        }
    }
    \centering
    \begin{tabular}{@{}l@{\hspace{0.05in}}l@{}}
        \multicolumn{2}{c}{\begin{tikzpicture}
\definecolor{color0}{rgb}{0.12156862745098,0.466666666666667,0.705882352941177}
\definecolor{color1}{rgb}{1,0.498039215686275,0.0549019607843137}
\definecolor{color2}{rgb}{0.172549019607843,0.627450980392157,0.172549019607843}
\definecolor{color3}{rgb}{0.83921568627451,0.152941176470588,0.156862745098039}
\definecolor{color4}{rgb}{0.580392156862745,0.403921568627451,0.741176470588235}
\definecolor{color5}{rgb}{0.549019607843137,0.337254901960784,0.294117647058824}
\begin{axis}[
  hide axis,
  height=0.7in,
  legend style={
    draw=none,
    /tikz/every even column/.append style={column sep=0.4cm}
  },
  legend columns=3,
  xmin=0,
  xmax=1,
  ymin=0,
  ymax=1
]
\addlegendimage{cedline,color0}
\addlegendentry{LSTM}
\addlegendimage{cedline,color1}
\addlegendentry{Sup.}
\addlegendimage{cedline,color2}
\addlegendentry{RNS}
\addlegendimage{cedline,color3}
\addlegendentry{Trans.}
\addlegendimage{cedline,color4}
\addlegendentry{Trans. + Sup.}
\addlegendimage{cedline,color5}
\addlegendentry{Trans. + VRNS}
\end{axis}
\end{tikzpicture}} \\
        \input{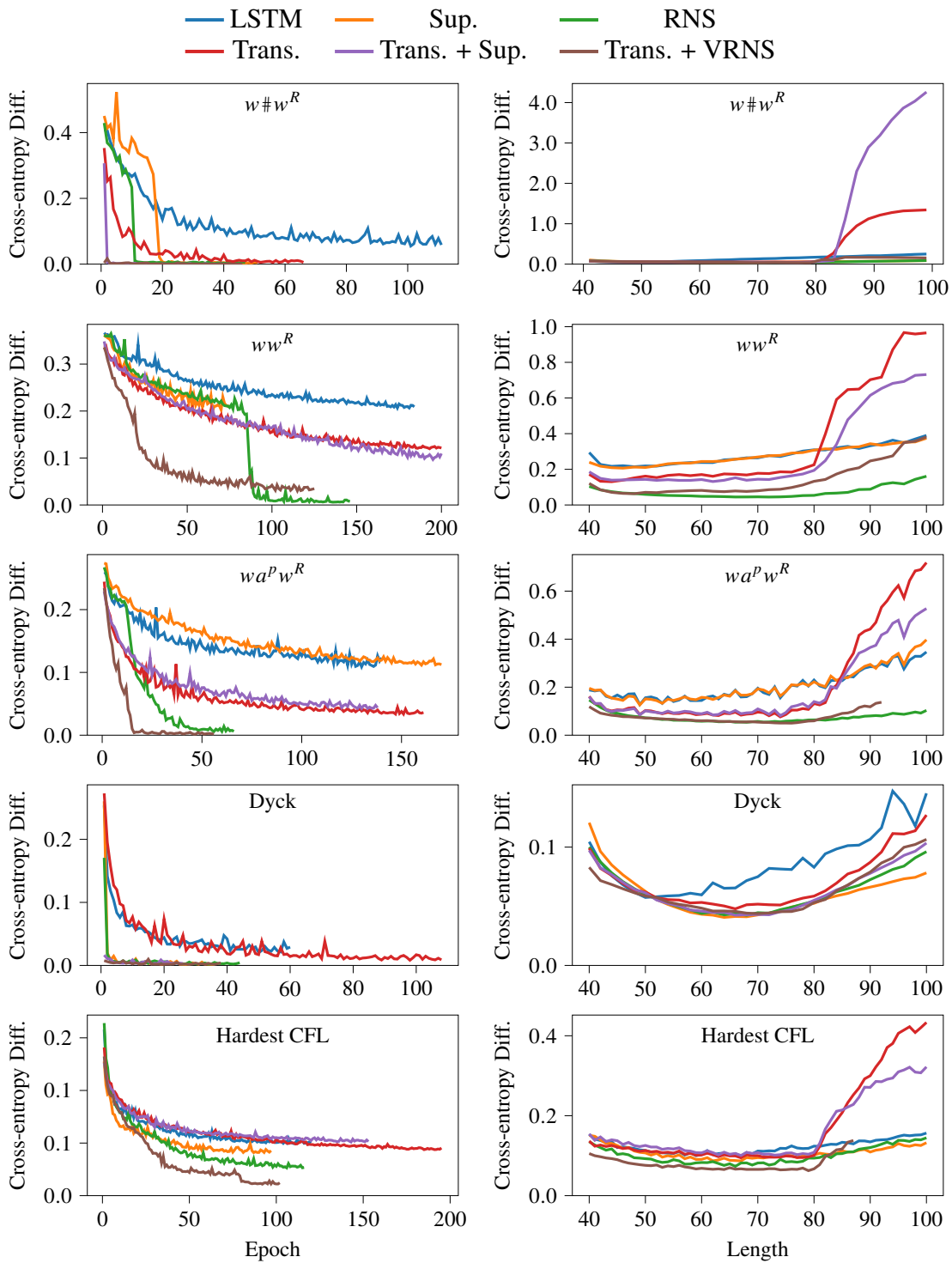}
    \end{tabular}
    \caption[Results of using transformers and stack attention on CFLs.]{Performance of stack RNNs and transformers with stack attention on CFLs. Some test results for Trans.\ + VRNS on long strings are missing due to GPU memory limitations.}
    \label{fig:stack-attention-cfls}
\end{figure*}

Nondeterministic stack attention (Trans.\ + VRNS) achieves strong results across all tasks, even outperforming the RNS-RNN (RNS) on the hardest CFL. It also outperforms the baseline transformer (Trans.) and superposition stack attention (Trans.\ + Sup.) in all cases (there is some slight overlap with Trans.\ + Sup.\ in the test set of Dyck, where the performance of all models is near-optimal). On strings within the same length distribution as the training data (40 to 80), Trans.\ + VRNS achieves the best performance on all tasks except \UnmarkedReversal{}. Stack attention (Trans.\ + Sup.\ and Trans.\ + VRNS) converges extremely quickly (in terms of parameter updates) on the deterministic \MarkedReversal{} and Dyck tasks, even faster than stack RNNs.

How do transformers compare to their LSTM counterparts? In general, they often fit the training and validation data better, but they generalize poorly to strings in the test set that are longer than those seen in training. The baseline transformer generally outperforms the LSTM on strings within the training length distribution. However, on all tasks except Dyck, Trans.\ and Trans.\ + Sup.\ both fail catastrophically on strings longer than 80. Trans.\ + VRNS appears not to suffer from the same issue, although cross-entropy does generally start to increase more rapidly on strings longer than 80. Trans.\ + Sup.\ generally improves upon Sup., except on the hardest CFL, and Sup.\ still generalizes to longer strings better. Trans.\ + VRNS has comparable performance to RNS; it outperforms RNS on the hardest CFL, but underperforms on \UnmarkedReversal{}, and does not generalize to longer strings as well on \MarkedReversal{} and \PaddedReversal{}. This may have to do with interference from the scaled dot-product attention layers.

\section{Conclusion}

We showed that differentiable stacks, including the differentiable vector WPDA from \cref{chap:vrns-rnn}, can be incorporated into transformers as an attention mechanism. This demonstrates that nondeterministic stacks can be adapted to multiple neural network paradigms (RNNs and transformers). On CFL language modeling tasks, we showed that nondeterministic stack attention improves upon the standard transformer architecture by a wide margin, and even improves upon the RNS-RNN on the hardest CFL, a challenging ambiguous language.

\chapter{Conclusion}
\label{chap:conclusion}

Although neural networks have led to remarkable improvements in natural language processing systems, the predominant architectures still have problems learning compositional syntax in terms of expressivity, generalization, and training data efficiency. Prior work attempted to alleviate these issues by augmenting RNNs with differentiable stacks, inspired by the theoretical connection between syntax and pushdown automata. However, these architectures were deterministic in design, whereas natural language is full of syntactic ambiguity, which deterministic stacks cannot handle.

In this dissertation, we corrected this discrepancy by proposing a new type of differentiable stack that simulates a \emph{nondeterministic} pushdown automaton. We proved that it is expressive enough to model arbitrary CFLs, and we showed how to incorporate it into multiple types of neural network architecture: namely, RNNs and transformers. In both cases, we showed empirically that it is much more effective than prior differentiable stacks at learning context-free languages, which capture the essence of hierarchical syntax. We also showed that RNNs with nondeterministic stacks can achieve lower perplexity on natural language than prior stack RNNs.

\section{Summary of Contributions}

In \cref{chap:ns-rnn}, we proposed a new type of differentiable stack, the differentiable WPDA, which can be incorporated into a neural network to allow it to simulate all runs of a nondeterministic WPDA. We did this by developing a normal form for PDAs that is restricted enough that it can be simulated efficiently with tensor operations, yet is still powerful enough to recognize all context-free languages. We also showed that because the output of the differentiable WPDA is a weighted sum over PDA runs, training becomes more reliable, because during backpropagation, all runs receive a reward that is directly proportional to their usefulness.

In \cref{chap:learning-cfls}, we validated our approach by showing that the NS-RNN, which consists of an LSTM augmented with a differentiable WPDA, outperforms prior stack RNNs on a variety of CFLs. Notably, this includes the hardest CFL, a nondeterministic language with theoretically maximal parsing difficulty.

In \cref{chap:rns-rnn}, we presented the RNS-RNN, showing that two changes to the NS-RNN significantly improve its ability to learn CFLs. We removed local normalization on the WPDA's transition weights, and we showed that this allows the model to learn much faster. The success of using unnormalized weights together with nondeterminism points towards the effectiveness of representing a space of structures in a neural network as a weighted sum free from local normalization, since all possibilities receive gradient proportionally to their contribution to the objective, and the magnitude of the contribution is not limited. We also refrained from marginalizing PDA states out of the stack reading distribution, which proved essential to improving performance on nondeterministic CFLs.

In \cref{chap:incremental-execution}, we developed a memory-limiting technique that allows the NS-RNN and RNS-RNN to be trained using truncated BPTT on arbitrarily long sequences. We presented results on a natural language modeling task and a comprehensive syntactic generalization benchmark. However, the NS-RNN and RNS-RNN did not yet improve over the superposition stack for the sizes we tested. We also showed that there appears to be a lack of correlation between perplexity and syntactic generalization, corroborating others' findings.

In \cref{chap:learning-non-cfls}, we addressed a potential shortcoming of the RNS-RNN: the differentiable WPDA is designed to recognize CFLs, but not all phenomena in natural language, such as cross-serial dependencies, are context-free. We showed theoretically and empirically that, thanks to the interaction between the controller and the differentiable WPDA, the RNS-RNN can learn many non-CFLs, including cross-serial dependencies, provided the boundary is explicitly marked. On the other hand, merely attaching multiple deterministic stacks to an RNN does not allow it to achieve the same effect.

In \cref{chap:vrns-rnn}, we addressed another possible shortcoming: the RNS-RNN is limited to small stack alphabet sizes due to computational cost, which would seem to be inadequate for natural language. However, we showed that on two DCFLs, the RNS-RNN is surprisingly good at embedding information in the distribution over WPDA runs, allowing it to handle very large alphabets. We also presented the VRNS-RNN, an extension of the RNS-RNN that uses a stack of vectors. The VRNS-RNN is a generalization of both the RNS-RNN and the superposition stack designed to combine the benefits of nondeterminism and embedding vectors. We showed that it can achieve lower perplexity than both on a natural language modeling task.

In \cref{chap:stack-attention}, we demonstrated that nondeterministic stacks are not limited to RNNs, but can be incorporated into the transformer architecture as well, yielding strong results on CFLs.

\section{Directions for Future Work}

We conclude by discussing some ways this work can contribute to future research, and its consequences for the field.

\subsection{Natural Language}

Overall, the differentiable WPDA has strong results on formal languages, but improvements on natural language and syntactic generalization remain relatively small. One barrier to stronger results on natural language is the computational cost of the differentiable WPDA, as large WPDA sizes are prohibitively expensive in terms of time and memory. Although information capacity in the stack turns out not to be a serious problem, and the VRNS-RNN alleviates this issue, we are still restricted to small values of $|Q|$ and $|\Gamma|$, which are what allow the network to represent ambiguous structures. This likely explains why the VRNS-RNN did not improve performance on \UnmarkedReversal{} for large alphabet sizes. Decreasing the computational cost further, perhaps using approximations of the stack reading, will be an important consideration in future work.

The experiments in \cref{chap:incremental-execution,chap:vrns-rnn} also lack certain features used in state-of-the-art language models. Training on a task with a larger vocabulary size, with regularization techniques like dropout, and supplementing it with a large pretrained language model may lead to more conclusive results. It is also worth exploring to what degree syntactic ambiguity is actually a factor in the performance of downstream NLP tasks, and why optimizing neural networks using perplexity does not correspond to better syntactic generalization. To facilitate analysis, it should be possible to recover parses of a sentence from a trained RNS-RNN or stack attention layer by running the Viterbi algorithm on the differentiable WPDA. Another straightforward continuation of this work is to incorporate stack attention into the encoder and decoder of a machine translation system.

Successfully combining nondeterministic stacks with state-of-the-art NLP systems using the suggestions above would allow them to learn compositional syntactic rules more robustly.

\subsection{Generalization Issues}

On our formal language tasks, networks with differentiable WPDAs sometimes generalize poorly on strings that are longer than those seen in the training data. \citet{deletang-etal-2023-neural}, too, reimplemented our NS-RNN and showed that it does not generalize well on longer strings for some formal language tasks that it should solve perfectly. This suggests that regularization techniques may be necessary for better syntactic generalization. We also believe that certain changes to the RNS-RNN architecture, inspired by details of the proof of \cref{thm:constantpda}, would improve generalization. Namely, these changes include (1) adding more layers between the stack reading and the controller, (2) concatenating the stack reading to the hidden state when predicting $\rnnoutputt{t}$, fixing a ``lag'' between updating the WPDA and predicting the next output, and (3) removing normalization in the stack reading. We have already completed preliminary experiments showing that (1) and (2) solve generalization issues on some tasks, including \MarkedReverseAndCopy{}. However, we have seen that simply removing stack reading normalization destabilizes training.

On the linguistics side, several papers have studied the inductive bias of neural network architectures with regard to syntax \citep{frank-mathis-2007-transformational,mccoy-etal-2018-revisiting,mccoy-etal-2020-does,warstadt-bowman-2020-neural,warstadt-etal-2020-learning,mulligan-etal-2021-structure,petty-frank-2021-transformers,mueller-etal-2022-coloring}, using an experimental design based on Chomsky's argument from the poverty of the stimulus \citep{wilson-2006-learning}. In this framework, a model is trained on a sequence transduction task, such as turning a declarative sentence into a question. For example, ``The dog can see the cat that can meow'' becomes ``Can the dog see the cat that can meow?'' The linguistically correct rule for this transformation is to front the main auxiliary verb (which requires a syntactic parse). However, if the training data does not contain the right disambiguating examples, it may also be consistent with a linguistically incorrect rule, such as fronting the \emph{first} auxiliary verb. How then would a model generalize to a sentence like ``The dog that can bark can see the cat that can meow''? The answer depends on the architecture's inductive bias. If the model simply fronts the first verb, it will incorrectly produce ``Can the dog that bark can see the cat that can meow?'' To date, no one has yet demonstrated a neural architecture that consistently generalizes in human-like fashion without syntactic supervision or pretraining on massive amounts of data. We have already completed preliminary work on the RNS-RNN, with inconclusive results. These experiments, however, use greedy decoding, limiting the quality of the outputs, and beam search may provide clearer answers.

Developing an architecture with a hierarchical inductive bias would likely provide the field with models that can generalize more predictably to syntactic patterns not encountered in training, and that can master syntax from fewer training examples.

\backmatter
\bibliographystyle{acl_natbib}
\bibliography{main}

\end{document}